\documentclass[colorlinks]{article}
\usepackage[T1]{fontenc}
\usepackage[utf8]{inputenc}
\usepackage{color}
\usepackage{array}
\usepackage{wrapfig}
\usepackage{booktabs}
\usepackage{amsmath}
\usepackage{amsthm}
\usepackage{graphicx}
\usepackage[numbers]{natbib}
\usepackage{microtype}
\usepackage[unicode=true,
 bookmarks=true,bookmarksnumbered=true,bookmarksopen=true,bookmarksopenlevel=1,
 breaklinks=false,pdfborder={0 0 0},pdfborderstyle={},backref=false,colorlinks=true]
 {hyperref}

\makeatletter

\providecommand{\tabularnewline}{\\}

\theoremstyle{definition}
\newtheorem*{defn*}{\protect\definitionname}
\theoremstyle{plain}
\newtheorem{thm}{\protect\theoremname}
\theoremstyle{definition}
\newtheorem{defn}[thm]{\protect\definitionname}

\usepackage{xcolor}
\definecolor{mycol}{rgb}{0,0,0.65}
\hypersetup{
    colorlinks,
    linkcolor={mycol},
    citecolor={mycol},
    urlcolor={mycol}
}

%

\usepackage[final]{neurips_2019}



\usepackage{url}
\usepackage{amsfonts,bm,amsmath}
\usepackage{nicefrac}


%

\author{Justin Domke$^1$ and Daniel Sheldon$^{1,2}$\\
$^1$ College of Information and Computer Sciences, University of Massachusetts Amherst \\
$^2$ Department of Computer Science, Mount Holyoke College
}

\usepackage{algorithm}
\usepackage{wrapfig}

\usepackage{thmtools}
\usepackage{thm-restate}


\usepackage[shortlabels]{enumitem}
\setlist[itemize]{leftmargin=18pt}

\DeclareMathAlphabet{\mathbfsf}{\encodingdefault}{\sfdefault}{bx}{n}
\newcommand{\upgreektemplate}[2]{#2{
\renewcommand{\alpha}{\upalpha}
\renewcommand{\beta}{\upbeta}
\renewcommand{\theta}{\uptheta}
\renewcommand{\gamma}{\upgamma}
\renewcommand{\lambda}{\uplambda}
\renewcommand{\delta}{\updelta}
\renewcommand{\nu}{\bm \upnu}
\renewcommand{\phi}{\upphi}
\renewcommand{\zeta}{\upzeta}
\renewcommand{\omega}{\bm \upomega}
\renewcommand{\Lambda}{\Uplambda}
\renewcommand{\Gamma}{\Upgamma}
\renewcommand{\Delta}{\Updelta}
\renewcommand{\Theta}{\Uptheta}
#1
}}
\newcommand{\upgreek}[1]{\upgreektemplate{#1}{\mathsf}}
\newcommand{\bupgreek}[1]{\upgreektemplate{#1}{\mathbfsf}}
\usepackage{upgreek}

\usepackage[nameinlink]{cleveref}
\AtBeginDocument{%
\let\ref\Cref 
}
\Crefname{equation}{Eq.}{Eqs.}
\Crefname{figure}{Fig.}{Figs.}
\Crefname{lem1}{Lem.}{Lems.}
\Crefname{thm1}{Thm.}{Thms.}
\Crefname{clm1}{Claim}{Claims}
\Crefname{section}{Sec.}{Secs.}

\definecolor{darkgreen}{rgb}{0.0,0.5,0.0}
\definecolor{darkred}{rgb}{0.5,0.0,0.0}

\usepackage{cancel}

\let\mid\vert 

\makeatother

\providecommand{\definitionname}{Definition}
\providecommand{\theoremname}{Theorem}

\begin{document}
\global\long\def\argmin{\operatornamewithlimits{argmin}}%

\global\long\def\argmax{\operatornamewithlimits{argmax}}%

\global\long\def\prox{\operatornamewithlimits{prox}}%

\global\long\def\diag{\operatorname{diag}}%

\global\long\def\lse{\operatorname{lse}}%

\global\long\def\R{\mathbb{R}}%

\global\long\def\E{\operatornamewithlimits{\mathbb{E}}}%

\global\long\def\P{P}%

\global\long\def\V{\operatornamewithlimits{\mathbb{V}}}%

\global\long\def\N{\mathcal{N}}%

\global\long\def\L{\mathcal{L}}%

\global\long\def\C{\mathbb{C}}%

\global\long\def\tr{\operatorname{tr}}%

\global\long\def\norm#1{\left\Vert #1\right\Vert }%

\global\long\def\norms#1{\left\Vert #1\right\Vert ^{2}}%

\global\long\def\pars#1{\left(#1\right)}%

\global\long\def\pp#1{(#1)}%

\global\long\def\bracs#1{\left[#1\right]}%

\global\long\def\bb#1{[#1]}%

\global\long\def\verts#1{\left\vert #1\right\vert }%

\global\long\def\Verts#1{\left\Vert #1\right\Vert }%

\global\long\def\angs#1{\left\langle #1\right\rangle }%

\global\long\def\KL#1{[#1]}%

\global\long\def\KL#1#2{\mathrm{KL}\bracs{#1\middle\Vert#2}}%

\global\long\def\div{\text{div}}%

\global\long\def\erf{\text{erf}}%

\global\long\def\vvec{\text{vec}}%

\global\long\def\b#1{\bm{#1}}%

\global\long\def\r#1{\upgreek{#1}}%

\global\long\def\br#1{\bupgreek{\bm{#1}}}%

\global\long\def\T{t}%

\global\long\def\ep{\bm{\varepsilon}}%

\global\long\def\rep{\bm{\upvarepsilon}}%

\global\long\def\marker{\checkmark}%

\global\long\def\zo{\b z^{*}}%

\global\long\def\z{\b z}%

\global\long\def\x{\b x}%

\global\long\def\u{\b u}%

\global\long\def\ur{\r u}%

\global\long\def\qmc{Q}%

\global\long\def\pmc{P^{\textrm{MC}}}%

\global\long\def\eqind{\hspace{0.5cm}}%

\global\long\def\ed{\overset{d}{=}}%

\global\long\def\elbo#1#2{\mathrm{ELBO}\bracs{#1\middle\Vert#2}}%

\global\long\def\sm#1{\mathrm{Norm}\pars{#1}}%

\global\long\def\Sm#1#2{\sm{#1\mid#2}}%

\global\long\def\tform#1{\mathcal{T}\pars{#1}}%

\global\long\def\Tform#1#2{\tform{#1,#2}}%

\global\long\def\mean{\mathrm{mean}}%

\global\long\def\zr{\r z}%

\global\long\def\wr{\r{\omega}}%

\global\long\def\ccr{\r{\chi}}%

\global\long\def\vr{\r{\nu}}%

\title{Divide and Couple: Using Monte Carlo Variational Objectives for Posterior
Approximation}

\maketitle
\maketitle
\begin{abstract}
Recent work in variational inference (VI) uses ideas from Monte Carlo
estimation to tighten the lower bounds on the log-likelihood that
are used as objectives. However, there is no systematic understanding
of how optimizing different objectives relates to approximating the
posterior distribution. Developing such a connection is important
if the ideas are to be applied to inference---i.e., applications
that require an approximate posterior and not just an approximation
of the log-likelihood. Given a VI objective defined by a Monte Carlo
estimator of the likelihood, we use a \textquotedbl divide and couple\textquotedbl{}
procedure to identify augmented proposal and target distributions.
The divergence between these is equal to the gap between the VI objective
and the log-likelihood. Thus, after maximizing the VI objective, the
augmented variational distribution may be used to approximate the
posterior distribution.
\end{abstract}

\section{Introduction}

Variational inference (VI) is a leading approximate inference method
in which a posterior distribution $p(z\mid x)$ is approximated by
a simpler distribution $q(z)$ from some approximating family. The
procedure to select $q$ is based on the decomposition that \citep{Saul_1996_MeanFieldTheory}\vspace{-0.25cm}

\begin{equation}
\log p(x)=\E_{q(\zr)}\bigg[\log\frac{p(\zr,x)}{q(\zr)}\bigg]+\text{KL}[q(\zr)\|p(\zr\mid x)].\label{eq:ELBO-decomp}
\end{equation}
The first term is the \emph{evidence lower bound} ($\text{ELBO}$)
\citep{Blei_2017_VariationalInferenceReview}. Selecting $q$ to maximize
the ELBO tightens the lower bound on $\log p(x)$ and simultaneously
minimizes the KL-divergence in the second term. This dual view is
important because minimizing the KL-divergence justifies using $q$
to approximate the posterior for making predictions.

Recent work has investigated tighter objectives \citep{Burda_2015_ImportanceWeightedAutoencoders,Naesseth_2018_VariationalSequentialMonte,Maddison_2017_FilteringVariationalObjectives,Le_2018_AutoEncodingSequentialMonte,Nowozin_2018_DEBIASINGEVIDENCEAPPROXIMATIONS,Rainforth_2018_TighterVariationalBounds},\marginpar{}
based on the following principle: Let $R$ be an estimator of the
likelihood---i.e., a nonnegative random variable with $\E R=p(x)$.
By Jensen's inequality, $\log p(x)\geq\E\log R,$ so $\log R$ is
a stochastic lower bound on the log-likelihood. Parameters of the
estimator can be optimized to tighten the bound. Standard VI is the
case when $R=p(\zr,x)/q(\zr)$ and $\zr\sim q$, which is parameterized
in terms of $q$. Importance-weighted autoencoders \citep[IWAEs;][]{Burda_2015_ImportanceWeightedAutoencoders}
essentially use $R=\frac{1}{M}\sum_{m=1}^{M}p(\zr_{m},x)/q(\zr_{m})$
where $\zr_{1}\ldots\zr_{M}\sim q$ are iid. Sequential Monte Carlo
(SMC) also gives a variational objective \citep{Naesseth_2018_VariationalSequentialMonte,Maddison_2017_FilteringVariationalObjectives,Le_2018_AutoEncodingSequentialMonte}.
The principle underlying these works is that likelihood estimators
that are more concentrated lead to tighter bounds, because the gap
in Jensen's inequality is smaller. To date, the main application has
been for learning parameters of the generative model $p$.

\begin{figure}
\begin{centering}
\includegraphics[width=0.5\columnwidth]{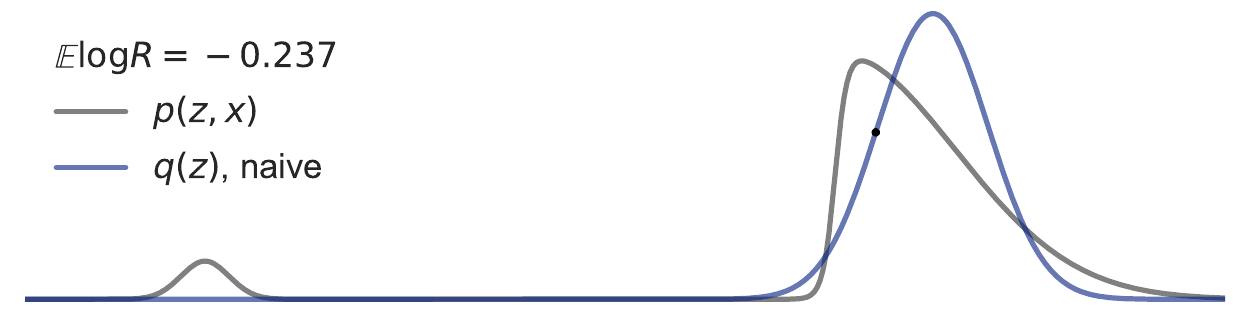}\includegraphics[width=0.5\columnwidth]{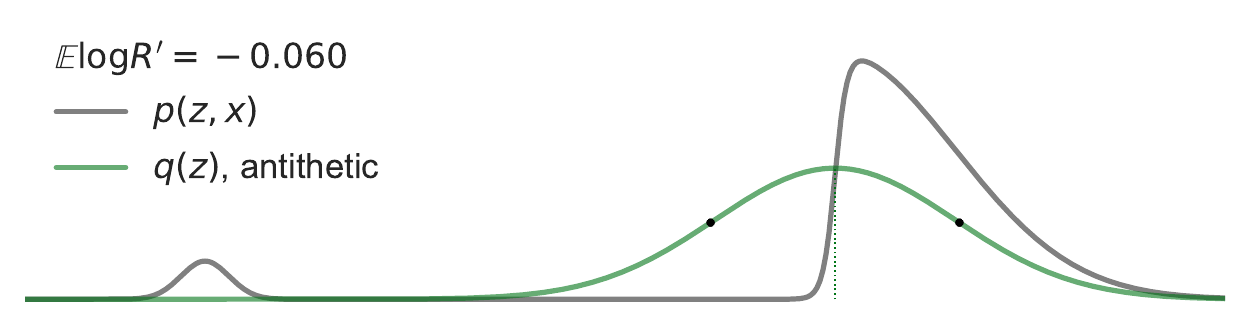}
\par\end{centering}
\caption{Left: Naive VI on the running example. Right: A tighter bound using
antithetic sampling.\label{fig:Naive-VI-running}}
\end{figure}

Our key question is: \emph{what are the implications of modified variational
objectives for probabilistic inference}? \ref{eq:ELBO-decomp} relates
the standard ELBO to $\text{KL}\bb{q\Vert p}$, which justifies using
$q$ for posterior inference. If we optimize a variational objective
obtained from a different estimator, does this still correspond to
minimizing some KL-divergence? It has been shown \citep{Cremer_2017_ReinterpretingImportanceWeightedAutoencoders,Naesseth_2018_VariationalSequentialMonte,Domke_2018_ImportanceWeightingVariational}
that maximizing the IWAE objective corresponds to minimizing (an upper
bound to) $\text{KL}[q^{\text{IS}}(\zr)\|p(\zr\mid x)]$, where $q^{\text{IS}}$
is a version of $q$ that is \begin{wrapfigure}[7]{o}{0.5\columnwidth}%
\begin{centering}
\vspace{-0.2cm}
\includegraphics[width=0.5\columnwidth]{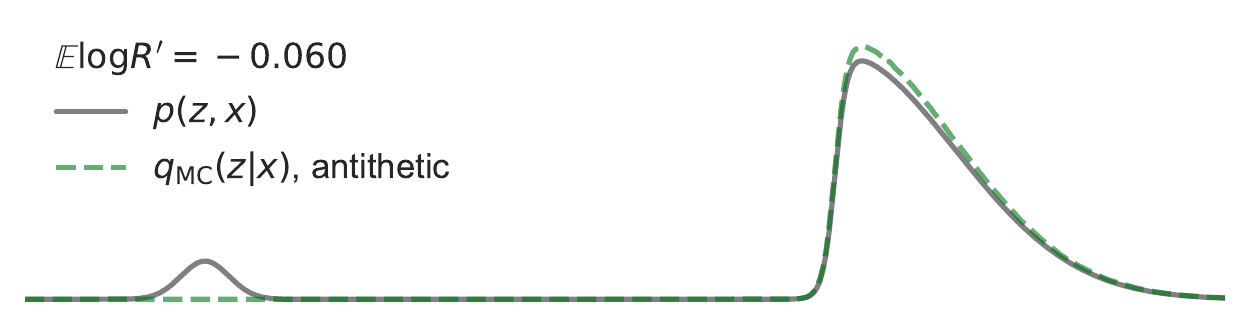}\vspace{-0.2cm}
\par\end{centering}
\caption{A kernel density approximation of $\protect\qmc\protect\pp z$ for
the antithetic estimator in \ref{fig:Naive-VI-running}.\label{fig:antithetic-running-samps}}
\end{wrapfigure}%
\textquotedbl corrected\textquotedbl{} toward $p$ using importance
sampling; this justifies using $q^{\text{IS}}$ to approximate the
posterior. \citet{Naesseth_2018_VariationalSequentialMonte} also
show that performing VI with an SMC objective can be seen as minimizing
(an upper bound to) a divergence from the SMC sampling distribution
$q^{\text{SMC}}(z)$ to $p(z|x)$.\marginpar{} For an arbitrary estimator,
however, there is little understanding.

We establish a deeper connection between variational objectives and
approximating families. Given a non-negative Monte Carlo (MC) estimator
$R$ such that $\E R=p(x)$, we show how to find a distribution $\qmc(z)$
such that the divergence between $\qmc(z)$ and $p(z\mid x)$ is at
most the gap between $\E\log R$ and $\log p(x)$. Thus, better estimators
mean better posterior approximations. The approximate posterior $\qmc\pp z$
can be found by a two-step \textquotedbl divide and couple\textquotedbl{}
procedure. The ``divide'' step follows \citet{Le_2018_AutoEncodingSequentialMonte}
and connects maximizing $\E\log R$ to minimizing a divergence between
two distributions, but not necessarily involving $p(z|x)$. The ``couple''
step shows how to find $Q\pp z$ such that the divergence is an upper
bound to $\KL{Q(\zr)}{p(\zr|x)}$. We show how a range of ideas from
the statistical literature--- such as antithetic sampling, stratified
sampling, and quasi Monte Carlo \citep{Owen_2013_MonteCarlotheory}---can
produce novel variational objectives; then, using the divide and
couple framework, we describe efficient-to-sample approximate posteriors
$\qmc\pp z$ for each of these objectives. We contribute mathematical
tools for deriving new estimators and approximating distributions
within this framework. Experiments show that the novel objectives
enabled by this framework can lead to improved likelihood bounds and
posterior approximations.

There is a large body of work using MC techniques to reduce the variance
of \emph{gradient estimators} of the standard variational objective
\citep{Ranganath_2014_BlackBoxVariational,Buchholz_2018_QuasiMonteCarloVariational,Miller_2017_ReducingReparameterizationGradient,Titsias_2015_LocalExpectationGradients,Geffner_2018_UsingLargeEnsembles}.
The aims of this paper are different: we use MC techniques to change
$R$ to get a tighter objective.

\section{Setup and Motivation}

Imagine we have some distribution $p(z,x)$. After observing data
$x$, we wish to approximate the posterior $p(z|x)$. Traditional
VI tries to both bound $\log p(x)$ and approximate $p(z|x)$ using
the ``ELBO decomposition'' of \ref{eq:ELBO-decomp}. We already
observed that a similar lower bound can be obtained from any non-negative
random variable $R$ with $\E\ R=p(x),$ since by Jensen's inequality,
\[
\log p(x)\geq\E\ \log R.
\]
Traditional VI can be seen as defining $R=p\pp{\zr,x}/q\pp{\zr}$
for $\zr\sim q$ and then optimizing the parameters of $q$ to maximize
$\E\log R$. Many other estimators $R$ of $p(x)$ can be designed
and their parameters optimized to make $\E\log R$ as large as possible.
We want to know: what relationship does this have to approximating
the posterior $p(z|x)$?

\subsection{Example\label{subsec:RunningExample}}

\Cref{fig:Naive-VI-running} shows a one dimensional target distribution
$p(z,x)$ as a function of $z$, and the Gaussian $q(z)$ obtained
by standard VI, i.e. maximizing $\E\log R$ for $R=p(\zr,x)/q\pp{\zr}$.
The resulting bound is $\E\log R\approx-0.237$, while the true value
is $\log p(x)=0.$ By \ref{eq:ELBO-decomp}, the KL-divergence from
$q(z)$ to $p(z|x)$ is $0.237$. Tightening the likelihood bound
has made $q$ close to $p.$

A Gaussian cannot exactly represent the main mode of $p(z,x)$, since
it is asymmetric. Antithetic sampling \citep{Owen_2013_MonteCarlotheory}
can exploit this. Define\vspace{-0.1cm}
\begin{equation}
R'=\frac{1}{2}\pars{\frac{p\pp{\zr,x}+p\pp{T\pp{\zr},x}}{q\pp{\zr}}},\ \zr\sim q,\label{eq:antithetic}
\end{equation}
where $T\pp z=\mu-\pp{z-\mu}$ is $z$ ``reflected'' around the
mean $\mu$ of $q.$ This is a valid estimator since $q\pp z$ is
constant under reflection. Tightening this bound over Gaussians $q$
gives $\E\log R'\approx-0.060$. This is better, intuitively, since
the right half of $q$ is a good match to the main mode of $p$, i.e.
since $q(z)\approx\frac{1}{2}p(z,x)$ for $z$ in that region. 

What about $p(z|x)$? It is \emph{not} true that antithetic sampling
gives a $q\pp z$ with lower divergence (it is around 7.34). After
all, naive VI already found the optimal Gaussian. Is there some \emph{other}
distribution that is close to $p\pp{z|x}$? How can we find it? These
questions motivate this paper.

\subsection{Notation and Conventions}

We use sans-serif font for random variables. All estimators $R$ may
depend on the input $x$, but we suppress this for simplicity. Similarly,
$a\pp{z|\omega}$ may depend on $x$. Proofs for all results are in
the supplement. Objects such as $\pmc$, $\qmc$, $p$, $q$ are distributions
of random variables and will be written like densities: $Q(\omega)$,
$p(z,x)$. However, the results are more general: the supplement includes
a more rigorous version of our main results using probability measures.
We write densities with Dirac delta functions. These are not Lebesgue-integrable,
but can be interpreted unambiguosuly as Dirac measures: e.g., $a(z|\omega)=\delta(z-\omega)$
means the conditional distribution of $\zr$ given $\wr=\omega$ is
the Dirac measure $\delta_{\omega}$. Throughout, $x$ is fixed, so
$p(z,x)$ and $\pmc(\omega,z,x)$ are unnormalized distributions over
the other variables, and $p(z|x)$ and $\pmc(\omega,z|x)$ are the
corresponding normalized distributions.

\section{The Divide-and-Couple Framework\label{sec:The-Divide-and-Couple-Framework}}

In this section we identify a correspondence between maximizing a
likelihood bound and posterior inference for general non-negative
estimators using a two step ``divide'' and then ``couple'' construction.

\subsection{Divide}

Let $R(\wr)$ be a positive function of $\wr\sim Q(\omega)$ such
that $\E_{Q(\wr)}R(\wr)=p(x)$, i.e., $R$ is an unbiased likelihood
estimator with sampling distribution $Q(\omega)$. The ``divide''
step follows \citep[Claim 1]{Le_2018_AutoEncodingSequentialMonte}:
we can interpret $\E_{Q(\wr)}\log R(\wr)$ as an ELBO by defining
$\pmc$ so that $R\pp{\omega}=\pmc\pp{\omega,x}/\qmc\pp{\omega}$.
That is, $\pmc$ and $\qmc$ ``divide'' to produce $R$. Specifically:

\begin{restatable}{lem1}{trivdivide}\label{lem:lemdivide}Let $\wr$
be a random variable with distribution $Q(\omega)$ and let $R(\wr)$
be a positive estimator such that $\E_{Q(\wr)}R(\wr)=p(x)$. Then
\begin{eqnarray*}
\pmc\pp{\omega,x} & = & Q\pp{\omega}R\pp{\omega}
\end{eqnarray*}
is an unnormalized distribution over $\omega$ with normalization
constant $p(x)$ and $R\pp{\omega}=\pmc\pp{\omega,x}/\qmc\pp{\omega}$
for $Q\pp{\omega}>0$. Furthermore as defined above,
\begin{equation}
\log p(x)=\E_{Q\pp{\wr}}\log R\pp{\wr}+\KL{\qmc(\wr)}{\pmc(\wr|x)}.\label{eq:divide-elbo}
\end{equation}
\end{restatable}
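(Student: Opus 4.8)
The plan is to verify the three assertions by direct computation, in the order they are stated; nothing is needed beyond the unbiasedness hypothesis $\E_{Q\pp\wr}R\pp\wr = p(x)$ and the definition $\pmc\pp{\omega,x}=Q\pp\omega R\pp\omega$. First I would check that $\pmc\pp{\cdot,x}$ is an unnormalized density over $\omega$: it is nonnegative because $Q$ is a density and $R$ is positive, and integrating gives $\int \pmc\pp{\omega,x}\,d\omega = \int Q\pp\omega R\pp\omega\,d\omega = \E_{Q\pp\wr}R\pp\wr = p(x)$, so its normalization constant is exactly $p(x)$. The factorization $R\pp\omega = \pmc\pp{\omega,x}/\qmc\pp\omega$ on $\{\omega : Q\pp\omega>0\}$ then follows by dividing the definition by $Q\pp\omega$.

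For the normalized version, set $\pmc\pp{\omega|x} = \pmc\pp{\omega,x}/p(x) = Q\pp\omega R\pp\omega / p(x)$. Wherever $Q\pp\omega>0$ we also have $R\pp\omega>0$, hence $\pmc\pp{\omega|x}>0$; so $Q$ is absolutely continuous with respect to $\pmc\pp{\cdot|x}$, the KL divergence in \ref{eq:divide-elbo} is well-defined, and $\log R\pp\wr$ is finite $Q$-almost surely. Finally I would expand the KL term using this expression:
\begin{align*}
\KL{\qmc(\wr)}{\pmc(\wr|x)}
&= \E_{Q\pp\wr}\log\frac{Q\pp\wr}{\pmc\pp{\wr|x}}
= \E_{Q\pp\wr}\log\frac{Q\pp\wr\,p(x)}{Q\pp\wr\,R\pp\wr}\\
&= \E_{Q\pp\wr}\bigl[\log p(x) - \log R\pp\wr\bigr]
= \log p(x) - \E_{Q\pp\wr}\log R\pp\wr,
\end{align*}
and rearranging gives the claimed identity.

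There is no real obstacle here — the content is essentially bookkeeping — but the one point that deserves care is the handling of supports: one must ensure the ratios inside the logarithms are defined $Q$-almost everywhere (guaranteed by $R>0$ on $\{Q>0\}$) and that $Q\ll\pmc\pp{\cdot|x}$ so the divergence is meaningful. The density-level argument above glosses over this slightly; the measure-theoretic restatement promised for the supplement is the clean way to make it rigorous, essentially by defining $\pmc\pp{\cdot,x}$ through $d\pmc\pp{\cdot,x} = R\,dQ$ and invoking the chain rule for Radon--Nikodym derivatives.
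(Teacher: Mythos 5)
Your proposal is correct and matches the paper's argument: the paper likewise verifies that $\pmc\pp{\omega,x}\geq 0$ integrates to $p(x)$ and then invokes the standard ELBO decomposition of \ref{eq:ELBO-decomp} with $R=\pmc/\qmc$, which is exactly the KL expansion you carry out explicitly. Your added remarks on supports and absolute continuity (guaranteed by $R>0$ on $\{Q>0\}$) correspond to the paper's measure-theoretic treatment in the supplement, so nothing is missing.
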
While this shows that it is easy to connect a stochastic
likelihood bound to minimizing a KL-divergence, this construction
alone is not useful for probabilistic inference, since neither $\qmc\pp{\omega}$
nor $\pmc\pp{\omega,x}$ make any reference to $z$. Put another way:
Even if $\KL{\qmc(\wr)}{\pmc(\wr|x)}$ is small, so what? This motivates
the coupling step below.\marginpar{} More generally, $\text{\ensuremath{\pmc}}$
is defined by letting $R=d\pmc/dQ$ be the Radon-Nikodym derivative,
a change of measure from $Q$ to $\pmc$; see supplement.

\subsection{Couple}

If the distributions identified in the above lemma are going to be
useful for approximating $p(z|x)$, they must be connected somehow
to $z$. In this section, we suggest \emph{coupling} $\pmc\pp{\omega,x}$
and $p\pp{z,x}$ into some new distribution $\pmc\pp{\omega,z,x}$
with $\pmc(z,x)=p(z,x)$. In practice, it is convenient to describe
couplings via a conditional distribution $a\pp{z|\omega}$ that augments
$\pmc\pp{\omega,x}$. There is a straightforward condition for when
$a$ is valid: for the augmented distribution $\pmc(z,\omega,x)=\pmc(\omega,x)a(z|\omega)$
to be a valid coupling, we require that $\int\pmc(\omega,x)a(z|\omega)d\omega=p(z,x)$.
An equivalent statement of this requirement is as follows.
\begin{defn*}
\label{def:valid-pair}An estimator $R\pp{\omega}$ and a distribution
$a\pp{z|\omega}$ are a \emph{valid estimator-coupling pair} under
distribution $Q(\omega)$ if
\begin{equation}
\E_{Q\pp{\wr}}R(\wr)a(z|\wr)=p(z,x).\label{eq:valid_est_coupling_pair}
\end{equation}
\end{defn*}
The definition implies that $\E_{Q(\wr)}R(\wr)=p(x)$ as may be seen
by integrating out $z$ from both sides. That is, the definition implies
that $R$ is an unbiased estimator for $p\pp x.$

Now, suppose we have a valid estimator/coupling pair and that $R$
is a good (low variance) estimator. How does this help us to approximate
the posterior $p\pp{z|x}$? The following theorem gives the ``divide
and couple'' framework.\footnote{With measures instead of densities would write $Q(z\in B,\omega\in A)=\int_{A}a(z\in B|\omega)dQ(\omega)$
where $a$ is a Markov kernel; see supplement.}

\begin{restatable}{thm1}{divandcouple}\label{thm:div-and-couple}Suppose
that $R\pp{\omega}$ and $a\pp{z|\omega}$ are a valid estimator-coupling
pair under $Q(\omega)$. Then,
\begin{eqnarray}
\qmc\pp{z,\omega} & = & Q\pp{\omega}a\pp{z|\omega},\label{eq:divcouple-q}\\
\pmc\pp{z,\omega,x} & = & Q\pp{\omega}R\pp{\omega}a\pp{z|\omega},\label{eq:divcouple-p}
\end{eqnarray}
are valid distributions, $\pmc\pp{z,x}=p\pp{z,x}$, and
\begin{equation}
\log p(x)=\E_{Q\pp{\wr}}\log R\pp{\wr}+\KL{\qmc\pp{\zr}}{p\pp{\zr|x}}+\KL{\qmc\pp{\wr|\zr}}{\pmc\pp{\wr|\zr,x}}.\label{eq:div_and_couple_final_decomp}
\end{equation}

\end{restatable}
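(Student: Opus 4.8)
The plan is to reduce the whole statement to \ref{lem:lemdivide}, applied not on $\omega$ alone but on the enlarged pair $(z,\omega)$, followed by the chain rule for KL divergence. I would first dispatch the structural claims. Since $Q\pp{\omega}$ is a distribution and $a\pp{z|\omega}$ a conditional distribution, $\qmc\pp{z,\omega}=Q\pp{\omega}a\pp{z|\omega}$ is automatically a valid joint law with $\omega$-marginal $Q\pp{\omega}$. For $\pmc\pp{z,\omega,x}=Q\pp{\omega}R\pp{\omega}a\pp{z|\omega}$, nonnegativity is immediate; integrating out $z$ first (using $\int a\pp{z|\omega}\,dz=1$) and then $\omega$ gives total mass $\E_{Q\pp{\wr}}R\pp{\wr}=p(x)<\infty$, so $\pmc$ is an unnormalized distribution with normalizer $p(x)$. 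Integrating out only $\omega$ and invoking the valid-pair identity \ref{eq:valid_est_coupling_pair} gives $\pmc\pp{z,x}=\E_{Q\pp{\wr}}R\pp{\wr}a\pp{z|\wr}=p\pp{z,x}$, hence also $\pmc\pp{x}=p(x)$ and $\pmc\pp{z|x}=p\pp{z|x}$.

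For the decomposition, the key observation is that on the joint space $\pmc\pp{z,\omega,x}=\qmc\pp{z,\omega}\,R\pp{\omega}$; that is, $R$---viewed as a function of $(z,\omega)$ that simply ignores $z$---is a positive estimator with $\E_{\qmc\pp{\zr,\wr}}R\pp{\wr}=p(x)$. Applying \ref{lem:lemdivide} with $(z,\omega)$ substituted for $\omega$ throughout yields
\[
\log p(x)=\E_{\qmc\pp{\zr,\wr}}\log R\pp{\wr}+\KL{\qmc\pp{\zr,\wr}}{\pmc\pp{\zr,\wr|x}}.
\]
Since $R\pp{\wr}$ does not depend on $z$ and $Q\pp{\wr}$ is the $\omega$-marginal of $\qmc\pp{\zr,\wr}$, the first term is $\E_{Q\pp{\wr}}\log R\pp{\wr}$. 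Splitting the joint KL by the chain rule,
\[
\KL{\qmc\pp{\zr,\wr}}{\pmc\pp{\zr,\wr|x}}=\KL{\qmc\pp{\zr}}{\pmc\pp{\zr|x}}+\KL{\qmc\pp{\wr|\zr}}{\pmc\pp{\wr|\zr,x}},
\]
where, following the convention for conditional-KL notation, the last term carries the expectation over $\zr\sim\qmc$; substituting $\pmc\pp{\zr|x}=p\pp{\zr|x}$ from the first paragraph produces exactly \ref{eq:div_and_couple_final_decomp}.

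I do not expect a genuine obstacle at the level of densities---the argument above is essentially complete. The real work is making the manipulations measure-theoretically legitimate, which is where the footnote's setup enters: one reads $R=d\pmc/dQ$ as a Radon--Nikodym derivative and $a$ as a Markov kernel, one checks that the disintegrations $\qmc\pp{\wr|\zr}$ and $\pmc\pp{\wr|\zr,x}$ exist (standard on the Polish spaces assumed in the supplement), and one verifies the absolute-continuity conditions under which the KL terms and the chain rule are valid---these hold wherever $Q\pp{\omega}>0$, which is all that is ever integrated against. Treating Dirac-type couplings such as $a\pp{z|\omega}=\delta(z-\omega)$ on the same footing as continuous ones is the single place that genuinely benefits from the measure-theoretic restatement rather than the density calculus, but it adds no new difficulty.
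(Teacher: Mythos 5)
Your proposal is correct and follows essentially the same route as the paper: verify $\pmc\pp{z,x}=p\pp{z,x}$ from the valid-pair identity, observe $\pmc\pp{z,\omega,x}/Q\pp{z,\omega}=R\pp{\omega}$, apply the ELBO decomposition (equivalently \ref{lem:lemdivide} on the joint $(z,\omega)$ space), and split the joint KL by the chain rule before substituting $\pmc\pp{\zr|x}=p\pp{\zr|x}$. Your closing remarks about Radon--Nikodym derivatives, Markov kernels, and Dirac couplings likewise match the measure-theoretic treatment in the paper's supplement.
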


The final term in \ref{eq:div_and_couple_final_decomp} is a conditional
divergence \citep[Sec. 2.5]{Cover_2006_Elementsinformationtheory}.
This is the divergence over $\omega$ between $\qmc\pp{\omega|z}$
and $\pmc\pp{\omega|z,x}$, averaged over $z$ drawn from $Q\pp z$.
At a high level, this theorem is proved by applying \ref{lem:lemdivide},
using the chain rule of KL-divergence, and simplifying using the fact
that $a$ is a valid coupling. The point of this theorem is: if $R$
is a good estimator, then $\E\log R$ will be close to $\log p\pp x.$
Since KL-divergences are non-negative, this means that the marginal
$Q\pp z$ must be close to the target $p\pp{z|x}.$ A coupling gives
us a way to \emph{transform} $Q\pp{\omega}$ so as to approximate
$p\pp{z|x}.$

To be useful, we need to access $\qmc\pp z$, usually by sampling.
We assume it is possible to sample from $Q\pp{\omega}$ since this
is part of the estimator. The user must supply a routine to sample
from $a(z|\omega$). This is why the ``trivial coupling'' of $a\pp{z|\omega}=p\pp{z|x}$
is not helpful --- if the user could sample from $p\pp{z|x}$, the
inference problem is already solved! Some estimators may be pre-equipped
with a method to approximate $p(z|x)$. For example, in SMC, $\wr$
include particles $z_{i}$ and weights $w_{i}$ such that selecting
a particle in proportion to its weight approximates $p(z|x)$. This
can be seen as a coupling $a(z|\omega)$, which provides an alternate
interpretation of the divergence bounds of \citep{Naesseth_2018_VariationalSequentialMonte}.
The divide and couple framework is also closely related to extended-space
MCMC methods \citep{Andrieu_2010_ParticleMarkovchain}, which also
use extended target distributions that admit $p(z,x)$ as a marginal;
see also \ref{sec:Conclusions}. However, in these methods, the estimators
also seem to come with \textquotedblleft obvious\textquotedblright{}
couplings.  There is no systematic understanding of how to derive
couplings for general estimators.

\subsection{Example\label{subsec:Running-Example-2}}

Consider again the antithetic estimator from \ref{eq:antithetic}.
We saw before that the antithetic estimator gives a tighter variational
bound than naive VI under the distribution in \ref{fig:Naive-VI-running}.
However, that distribution is \emph{less} similar to the target than
the one from naive VI. To reflect this (and match our general notation)
we now\footnote{With this notation, \ref{eq:antithetic} becomes $R=\frac{1}{2}\frac{p\pp{\wr,x}+p\pp{T\pp{\wr},x}}{Q\pp{\wr}}$
where $\wr\sim Q.$ Here and in the rest of this paper, when $p\pp{\cdot,\cdot}$
has two arguments, the first always plays the role of $z$.} write $\wr\sim Q$ (instead of $\zr\sim q$).

Now, we again ask: Since $Q\pp{\omega}$ is a poor approximation of
the target, can we find some other distribution that is a good approximation?
Consider the coupling distribution
\[
a\pp{z|\omega}=\pi\pp{\omega}\ \delta\pp{z-\omega}+\pp{1-\pi\pp{\omega}}\ \delta\pp{z-T\pp{\omega}},\ \ \ \ \pi\pp{\omega}=\frac{p(\omega,x)}{p(\omega,x)+p(T\pp{\omega},x)}.
\]
Intuitively, $a\pp{z|\omega}$ is supported only on $z=\omega$ and
on $z=T\pp{\omega}$, with probability proportional to the target
distribution. It is simple to verify (by substitution, see \ref{claim:antitheticrunningvalidpair}
in supplement) that $R$ and $a$ form a valid estimator-coupling
pair. Thus, the augmented variational distribution is $Q\pp{\omega,z}=Q\pp{\omega}a\pp{z|\omega}.$
To sample from this, draw $\omega\sim Q$ and select $z=\omega$ with
probability $\pi\pp{\omega}$, or $z=T(\omega)$ otherwise. The marginal
$Q(z)$ is shown in \Cref{fig:antithetic-running-samps}. This is
a much better approximation of the target than naive VI.

\section{Deriving Couplings\label{sec:Deriving-Couplings}}

\ref{thm:div-and-couple} says that if $\E_{Q(\wr)}\log R(\wr)$
is close to $\log p(x)$ and you have a tractable coupling\emph{ $a(z|\omega)$},
then drawing $\omega\sim Q\pp{\omega}$ and then $z\sim a\pp{z|\omega}$
yields samples from a distribution $Q\pp z$ close to $p\pp{z|x}.$
But how can we find a tractable coupling?

\begin{table*}
\noindent \begin{centering}
\caption{\label{tab:methods-new}Variance reduction methods jointly transform
estimators and couplings. Take an estimator $R_{0}\protect\pp{\omega}$
with coupling $a_{0}\protect\pp{z|\omega}$, valid under $Q_{0}\protect\pp{\omega}$.
Each line shows a new estimator $R\protect\pp{\cdot}$ and coupling
$a\protect\pp{z|\cdot}$. The method to simulate $Q\protect\pp{\cdot}$
is described in the left column. Here, $F^{-1}$ is a mapping so that
if $\omega$ is uniform on $[0,1]^{d}$, then $F^{-1}(\omega)$ has
density $Q_{0}(\omega)$.}
\vspace{0.2cm}
\par\end{centering}
\noindent \centering{}%
\begin{tabular}{>{\raggedright}m{5.25cm}>{\raggedright}p{2.65cm}>{\raggedright}p{4.8cm}}
\toprule 
{\footnotesize{}Description} & {\footnotesize{}$R\pp{\cdot}$} & {\footnotesize{}$a\pp{z|\cdot}$}\tabularnewline
\midrule
\addlinespace[0.02cm]
\textbf{\footnotesize{}IID Mean}{\footnotesize\par}

{\footnotesize{}$\wr_{1}\cdots\wr_{M}\sim Q_{0}$ i.i.d.} & {\footnotesize{}${\displaystyle \frac{1}{M}\sum_{m=1}^{M}R_{0}\pp{\omega_{m}}}$} & {\footnotesize{}${\displaystyle {\displaystyle \frac{\sum_{m=1}^{M}R_{0}\pp{\omega{}_{m}}a_{0}\pp{z|\omega{}_{m}}}{\sum_{m=1}^{M}R_{0}\pp{\omega_{m}}}}}$}\tabularnewline
\addlinespace[0.07cm]
\textbf{\footnotesize{}Stratified Sampling}{\footnotesize\par}

{\footnotesize{}$\Omega_{1}\cdots\Omega_{M}$ partition $\Omega$,
$\wr_{1}^{m}\cdots\wr_{M_{n}}^{1}\sim Q_{0}$ restricted to~$\Omega_{m}$,
$\mu_{m}=Q_{0}\pp{\wr\in\Omega_{m}}$.} & {\footnotesize{}${\displaystyle \sum_{m=1}^{M}\frac{\mu_{m}}{N_{m}}\sum_{n=1}^{N_{m}}R_{0}\pars{\omega_{n}^{m}}}$} & {\footnotesize{}${\displaystyle {\displaystyle \frac{\sum_{m=1}^{M}\frac{\mu_{m}}{N_{m}}\sum_{n=1}^{N_{m}}R_{0}\pars{\omega_{n}^{m}}a_{0}\pp{z|\omega_{n}^{m}}}{\sum_{m=1}^{M}\frac{\mu_{m}}{N_{m}}\sum_{n=1}^{N_{m}}R_{0}\pars{\omega_{n}^{m}}}}}$}\tabularnewline
\addlinespace[0.07cm]
\textbf{\footnotesize{}Antithetic Sampling}{\footnotesize\par}

{\footnotesize{}$\wr\sim Q_{0}$. For all $m$, $T_{m}\pp{\wr}\overset{d}{=}\wr$.} & {\footnotesize{}${\displaystyle \frac{1}{M}\sum_{m=1}^{M}R_{0}\pars{T_{m}\pp{\omega}}}$} & {\footnotesize{}${\displaystyle \frac{\sum_{m=1}^{M}R_{0}\pp{T_{m}(\omega)}\ a_{0}\pars{z|T_{m}\pp{\omega}}}{\sum_{m=1}^{M}R\pp{T_{m}(\omega)}}}$}\tabularnewline
\addlinespace[0.07cm]
\textbf{\footnotesize{}Randomized Quasi Monte Carlo}{\footnotesize\par}

{\footnotesize{}$\wr\sim\mathrm{Unif}([0,1]^{d})$, $\bar{\omega}_{1},\cdots\bar{\omega}_{M}$
fixed, $T_{m}\pp{\omega}=F^{-1}\pars{\bar{\omega}_{m}+\omega\ \text{(mod 1})}$} & {\footnotesize{}${\displaystyle \frac{1}{M}{\displaystyle \sum_{m=1}^{M}R_{0}\pars{T_{m}\pp{\omega}}}}$} & {\footnotesize{}${\displaystyle \frac{\sum_{m=1}^{M}R_{0}\pp{T_{m}(\omega)}\ a_{0}\pars{z|T_{m}\pp{\omega}}}{\sum_{m=1}^{M}R_{0}\pp{T_{m}(\omega)}}}$}\tabularnewline
\addlinespace[0.07cm]
\textbf{\footnotesize{}Latin Hypercube Sampling}{\footnotesize\par}

{\footnotesize{}$\wr_{1},\cdots,\wr_{M}$ jointly sampled from Latin
hypercube \citep[Ch. 10.3]{Owen_2013_MonteCarlotheory}, $T=F^{-1}.$} & {\footnotesize{}${\displaystyle \frac{1}{M}{\displaystyle \sum_{m=1}^{M}R_{0}\pars{T\pp{\omega_{m}}}}}$} & {\footnotesize{}${\displaystyle \frac{\sum_{m=1}^{M}R_{0}\pp{T(\omega_{m})}\ a_{0}\pars{z|T\pp{\omega_{m}}}}{\sum_{m=1}^{M}R_{0}\pp{T(\omega_{m})}}}$}\tabularnewline
\bottomrule
\addlinespace[0.07cm]
\end{tabular}\vspace{-0.2cm}
\end{table*}

Monte Carlo estimators are often created recursively using techniques
that take some valid estimator $R$ and transform it into a new valid
estimator $R'$. These techniques (e.g. change of measure, Rao-Blackwellization,
stratified sampling) are intended to reduce variance. Part of the
power of Monte Carlo methods is that these techniques can be easily
combined. In this section, we extend some of these techniques to transform
valid \emph{estimator-coupling pairs} into new valid estimator-coupling
pairs. The hope is that the standard toolbox of variance reduction
techniques can be applied as usual, and the coupling is derived ``automatically''.

\ref{tab:methods-new} shows corresponding transformations of estimators
and couplings for several standard variance reduction techniques.
In the rest of this section, we will give two abstract tools that
can be used to create all the entries in this table. For concreteness,
we begin with a trivial ``base'' estimator-coupling pair. Take a
distribution $Q_{0}\pp{\wr}$ and let $R_{0}\pp{\omega}=p\pp{\omega,x}/Q_{0}\pp{\omega}$
and $a_{0}\pp{z|\omega}=\delta\pp{z-\omega}$ (the deterministic coupling).
It is easy to check that these satisfy \ref{eq:valid_est_coupling_pair}.\marginpar{}

\subsection{Abstract Transformations of Estimators and Couplings}

Our first abstract tool transforms an estimator-coupling pair on some
space $\Omega$ into another estimator-coupling pair on a space $\Omega^{M}\times\left\{ 1,\cdots,M\right\} $.
This can be thought of as having $M$ ``replicates'' of the $\omega$
in the original estimator, along with an extra integer-valued variable
that selects one of them. We emphasize that this result does not (by
itself) reduce variance --- in fact, $R$ has exactly the same distribution
as $R_{0}$.

\begin{restatable}{thm1}{splitting}\label{thm:splitting-estimators}Suppose
that $R_{0}\pp{\omega}$ and $a_{0}\pp{z|\omega}$ are a valid estimator-coupling
pair under $Q_{0}\pp{\omega}$. Let $Q\pp{\omega_{1},\cdots,w_{M},m}$
be any distribution such that if $\pp{\wr_{1},\cdots,\wr_{M},\r m}\sim Q$,
then $\wr_{\r m}\sim Q_{0}$. Then,
\begin{align}
R\pp{\omega_{1},\cdots,\omega_{M},m} & =R_{0}\pp{\omega_{m}}\label{eq:splitting-new-R}\\
a(z|\omega_{1},\cdots,\omega_{M},m) & =a_{0}(z|\omega_{m})\label{eq:splitting-new-a}
\end{align}
are a valid estimator-coupling pair under $Q\pp{\omega_{1},\cdots,w_{M},m}.$

\end{restatable}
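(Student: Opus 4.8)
The plan is to verify the single defining identity \ref{eq:valid_est_coupling_pair} for the pair $(R,a)$ under $Q$; the ``type'' conditions come for free. Indeed $R$ is positive because $R\pp{\omega_{1},\cdots,\omega_{M},m}=R_{0}\pp{\omega_{m}}>0$, and for each fixed tuple $\pp{\omega_{1},\cdots,\omega_{M},m}$ the map $z\mapsto a\pp{z\mid\omega_{1},\cdots,\omega_{M},m}=a_{0}\pp{z\mid\omega_{m}}$ is a genuine conditional distribution over $z$ since $a_{0}$ is. So the content is entirely in the expectation identity.

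The key observation I would use is that both $R$ and $a$ depend on $\pp{\omega_{1},\cdots,\omega_{M},m}$ only through the selected coordinate $\omega_{m}$; that is, they are $R_{0}$ and $a_{0}$ composed with the measurable ``selection'' map $g\pp{\omega_{1},\cdots,\omega_{M},m}=\omega_{m}$. Hence
\[
\E_{Q}\, R\pp{\wr_{1},\cdots,\wr_{M},\r m}\, a\pp{z\mid\wr_{1},\cdots,\wr_{M},\r m}\;=\;\E_{Q}\, R_{0}\pp{\wr_{\r m}}\, a_{0}\pp{z\mid\wr_{\r m}},
\]
which is the expectation of a fixed function of the random point $\wr_{\r m}=g\pp{\wr_{1},\cdots,\wr_{M},\r m}$. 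By the change-of-variables (pushforward) formula for expectations, this equals the expectation of that same function under the law of $\wr_{\r m}$, which by the hypothesis on $Q$ is exactly $Q_{0}$. Therefore the quantity equals $\E_{Q_{0}\pp{\wr}}\, R_{0}\pp{\wr}\, a_{0}\pp{z\mid\wr}$, and this is $p\pp{z,x}$ by the assumed validity of $\pp{R_{0},a_{0}}$ under $Q_{0}$ --- which is precisely \ref{eq:valid_est_coupling_pair} for $\pp{R,a}$ under $Q$.

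With densities the pushforward step unwinds as $\E_{Q}R_{0}\pp{\wr_{\r m}}a_{0}\pp{z\mid\wr_{\r m}}=\sum_{m=1}^{M}\int Q\pp{\omega_{1},\cdots,\omega_{M},m}R_{0}\pp{\omega_{m}}a_{0}\pp{z\mid\omega_{m}}\,d\omega_{1}\cdots d\omega_{M}$; integrating out every coordinate except $\omega_{m}$ and then summing over $m$ collapses $Q$ to the marginal density of $\wr_{\r m}$, which is $Q_{0}$, giving the same conclusion. I expect the only genuine subtlety to be this marginalization step --- equivalently, pinning down the precise sense in which ``the coordinate picked out by a \emph{random} index has distribution $Q_{0}$'' --- and the pushforward formulation is the cleanest way to handle it (and to avoid delta-function bookkeeping) in the measure-theoretic version of the supplement. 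Note that no structural property of $R_{0}$ or $a_{0}$ beyond validity is used; in particular $R$ has the same law as $R_{0}$, consistent with the remark that this transformation does not, by itself, reduce variance.
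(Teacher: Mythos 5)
Your proof is correct and is essentially the paper's own argument: substitute the definitions so the integrand becomes $R_{0}\pp{\wr_{\r m}}a_{0}(z\mid\wr_{\r m})$, use the assumption that $\wr_{\r m}\sim Q_{0}$ (your pushforward step makes this collapse explicit) to reduce to $\E_{Q_{0}\pp{\wr}}R_{0}\pp{\wr}a_{0}(z\mid\wr)$, and conclude $p\pp{z,x}$ by validity of the base pair. The extra remarks on positivity of $R$ and on $a$ being a genuine conditional distribution are harmless additions, not a different route.
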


\marginpar{}Rao-Blackwellization is a well-known way to transform
an estimator to reduce variance; we want to know how it affects \emph{couplings}.
Take an estimator $R_{0}\pp{\omega,\nu}$ with state space\marginpar{}
$\Omega\times N$ and distribution $Q_{0}\pp{\omega,\nu}$. A new
estimator $R(\omega)=\E_{Q_{0}(\vr\mid\omega)}R(\omega,\vr)$ that
analytically marginalizes out $\nu$ has the same expectation and
equal or lesser variance, by the Rao-Blackwell theorem. The following
result shows that if $R_{0}$ had a coupling, then it is easy to define
a new coupling for $R.$

\begin{restatable}{thm1}{raoblack}\label{thm:rao-black-new}Suppose
that $R_{0}\pp{\omega,\nu}$ and $a_{0}\pp{z|\omega,\nu}$ are a valid
estimator-coupling pair under $Q_{0}\pp{\omega,\nu}$. Then
\begin{alignat*}{1}
R\pp{\omega} & =\E_{Q_{0}\pp{\vr|\omega}}R_{0}\pars{\omega,\vr},\\
a\pp{z|\omega} & =\frac{1}{R\pp{\omega}}\E_{Q_{0}\pp{\vr|\omega}}\bracs{R_{0}\pars{\omega,\vr}a_{0}\pp{z|\omega,\vr}},
\end{alignat*}
are a valid estimator-coupling pair under $Q\pp{\omega}=\int Q_{0}\pp{\omega,\nu}d\nu$.\end{restatable}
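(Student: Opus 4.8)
The plan is to verify the single defining identity \ref{eq:valid_est_coupling_pair} for the new pair $(R,a)$ under $Q$, namely that $\E_{Q\pp{\wr}} R\pp{\wr}\,a\pp{z|\wr} = p\pp{z,x}$, and along the way to confirm that $a\pp{z|\omega}$ is an honest conditional distribution over $z$. First I would observe that $R\pp{\omega}=\E_{Q_0\pp{\vr|\omega}}R_0\pars{\omega,\vr}$ is a (finite) positive number for $Q$-almost every $\omega$, since it is an average of the positive quantity $R_0$; hence the division in the definition of $a$ is well posed. Nonnegativity of $a\pp{z|\omega}$ is then immediate, and because $a_0\pp{\cdot|\omega,\nu}$ is a probability density in $z$, integrating the definition of $a$ over $z$ gives $\int a\pp{z|\omega}\,dz = \tfrac{1}{R\pp{\omega}}\E_{Q_0\pp{\vr|\omega}}\bracs{R_0\pars{\omega,\vr}\int a_0\pp{z|\omega,\vr}\,dz}=\tfrac{1}{R\pp{\omega}}\E_{Q_0\pp{\vr|\omega}}R_0\pars{\omega,\vr}=1$, so $a\pp{\cdot|\omega}$ is a valid distribution over $z$.

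The core of the argument is the Rao--Blackwell cancellation: the factor $R\pp{\omega}$ in the denominator of $a$ cancels against the leading $R\pp{\omega}$ in the product, so $R\pp{\omega}\,a\pp{z|\omega}=\E_{Q_0\pp{\vr|\omega}}\bracs{R_0\pars{\omega,\vr}\,a_0\pp{z|\omega,\vr}}$, and therefore
\begin{align*}
\E_{Q\pp{\wr}} R\pp{\wr}\,a\pp{z|\wr}
&= \int Q\pp{\omega}\,\E_{Q_0\pp{\vr|\omega}}\bracs{R_0\pars{\omega,\vr}\,a_0\pp{z|\omega,\vr}}\,d\omega \\
&= \int\!\!\int Q\pp{\omega}\,Q_0\pp{\nu|\omega}\,R_0\pars{\omega,\nu}\,a_0\pp{z|\omega,\nu}\,d\nu\,d\omega \\
&= \int\!\!\int Q_0\pp{\omega,\nu}\,R_0\pars{\omega,\nu}\,a_0\pp{z|\omega,\nu}\,d\nu\,d\omega = p\pp{z,x}.
\end{align*}
Here the third line uses that $Q\pp{\omega}=\int Q_0\pp{\omega,\nu}\,d\nu$ is the $\omega$-marginal of $Q_0$, so $Q\pp{\omega}Q_0\pp{\nu|\omega}=Q_0\pp{\omega,\nu}$, and the last equality is exactly the hypothesis that $R_0$ and $a_0$ are a valid estimator-coupling pair under $Q_0$. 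This is precisely \ref{eq:valid_est_coupling_pair} for $(R,a)$ under $Q$, which is all that needs to be shown; unbiasedness $\E_{Q\pp{\wr}}R\pp{\wr}=p\pp x$ then follows by integrating out $z$, as noted after the definition.

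Two points of rigor round this out. The interchange of the iterated expectation and the integral over $\omega$ (and the merging of $Q\pp{\omega}Q_0\pp{\nu|\omega}$ into $Q_0\pp{\omega,\nu}$) is justified by Tonelli's theorem, since every factor $Q_0$, $R_0$, $a_0$ is nonnegative, so no integrability side conditions are needed. The identity $Q\pp{\omega}Q_0\pp{\nu|\omega}=Q_0\pp{\omega,\nu}$ is just the disintegration of $Q_0$ along its $\omega$-marginal; in the measure-theoretic formulation of the supplement this is the assertion that $Q_0\pp{\vr\in\cdot\mid\wr=\omega}$ is a regular conditional distribution, which exists on the standard Borel spaces we work with. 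I expect the only real obstacle to be careful bookkeeping of these conditioning statements — there is no hard analytic content, as the Rao--Blackwell cancellation collapses the product $R\,a$ back to an average of $R_0\,a_0$ and the coupling property is then inherited directly from $(R_0,a_0)$.
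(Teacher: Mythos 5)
Your proof is correct and matches the paper's argument: cancel $R\pp{\omega}$ against the denominator in $a$, collapse $Q\pp{\omega}Q_{0}\pp{\nu|\omega}$ into $Q_{0}\pp{\omega,\nu}$ by the tower property, and invoke the validity of $(R_{0},a_{0})$ under $Q_{0}$. The extra remarks on positivity of $R$, normalization of $a$, and Tonelli are fine additions but do not change the route, which is essentially identical to the paper's proof.
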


\subsection{Specific Variance Reduction Techniques}

Each of the techniques in \ref{tab:methods-new} can be derived by
first applying \ref{thm:splitting-estimators} and then \ref{thm:rao-black-new}.
As a simple example, consider the IID mean. Suppose $R_{0}\pp{\omega}$
and $a_{0}\pp{z|\omega}$ are valid under $Q_{0}$. If we let $\wr_{1},\cdots,\wr_{M}\sim Q_{0}$
i.i.d. and $\r m$ uniform on $\left\{ 1,\cdots,M\right\} $ then
this satisfies the condition of \ref{thm:splitting-estimators} that
$\wr_{\r m}\sim Q_{0}.$ Thus we can define $R$ and $a$ as in \ref{eq:splitting-new-R}
and \ref{eq:splitting-new-a}. Applying Rao-Blackwellization to marginalize
out $\r m$ gives exactly the form for $R$ and $a$ shown in the
table. Details are in \ref{subsec:Specific-Variance-Reduction} of
the supplement

As another example, take stratified sampling. For simplicity, we assume
one sample in each strata ($N_{m}=1$). Suppose $\Omega_{1}\cdots\Omega_{M}$
partition the state-space\marginpar{} and let $\wr_{m}\sim Q_{0}(\wr\mid\wr\in\Omega_{m})$
and $\r m$ be equal to $m$ with probability $\mu_{m}=Q_{0}(\omega\in\Omega_{m})$.
It is again the case that $\wr_{\r m}\sim Q_{0}$, and applying \ref{thm:splitting-estimators}
and then \ref{thm:rao-black-new} produces the estimator-coupling
pair shown in the table. Again, details are in \ref{subsec:Specific-Variance-Reduction}
of the supplement.

\subsection{Example}

\begin{figure}
\begin{centering}
\includegraphics[width=0.5\columnwidth]{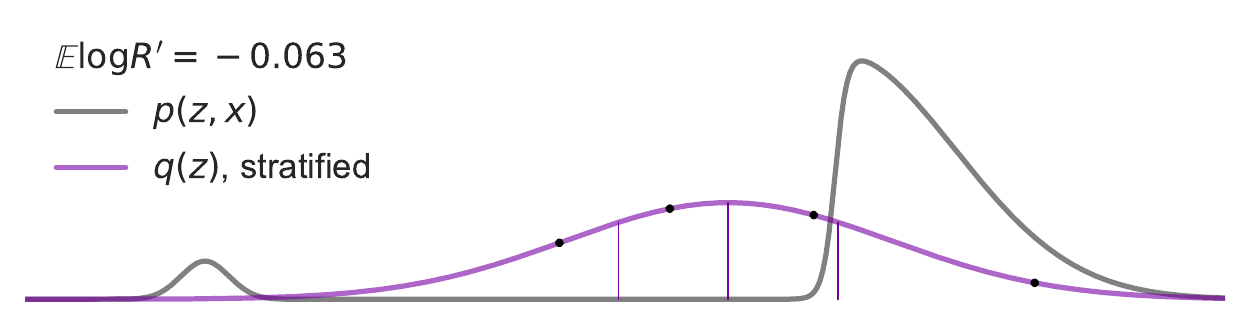}\includegraphics[width=0.5\columnwidth]{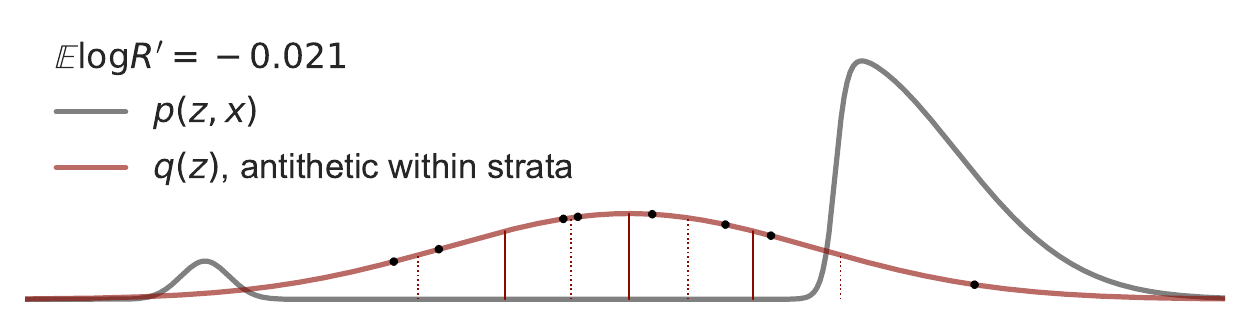}
\par\end{centering}
\caption{Stratified sampling and antithetic within stratified sampling on the
running example.\label{fig:strat-and-anti-strat-running}}
\end{figure}
\begin{figure}
\begin{centering}
\includegraphics[width=0.5\columnwidth]{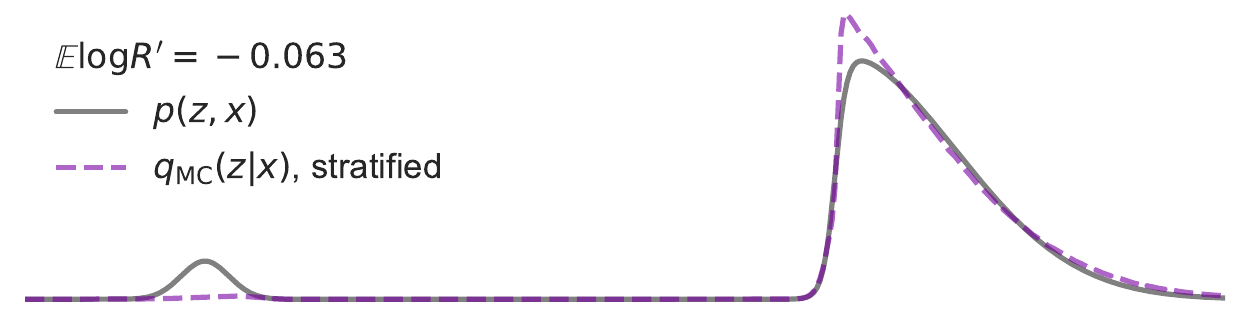}\includegraphics[width=0.5\columnwidth]{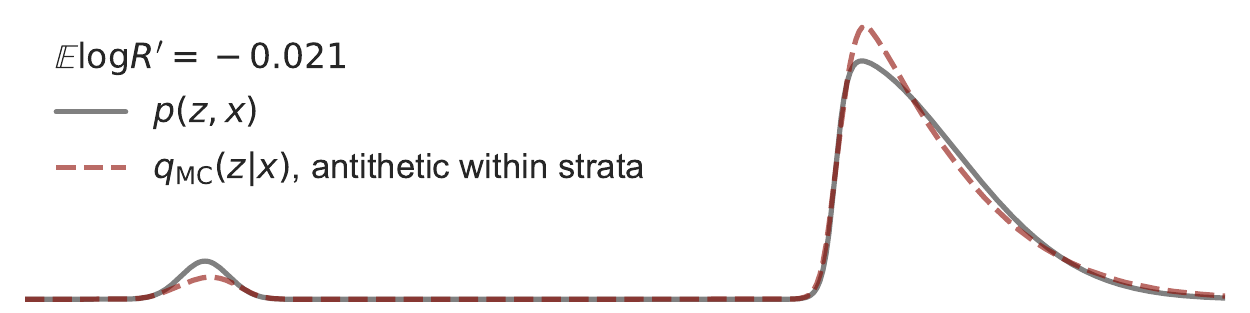}
\par\end{centering}
\caption{A kernel density approximation of $\protect\qmc\protect\pp{z|x}$
for the estimators in \ref{fig:strat-and-anti-strat-running}\label{fig:strat-and-anti-strat-running-samps}.}
\end{figure}

We return to the example from \ref{subsec:RunningExample} and \ref{subsec:Running-Example-2}.
\ref{fig:strat-and-anti-strat-running} shows the result of applying
stratified sampling to the standard VI estimator $R=p\pp{\zr,x}/q\pp{\zr}$
and then adjusting the parameters of $q$ to tighten the bound. The
bound is tighter than standard VI and slightly worse than antithetic
sampling.\marginpar{}

Why not combine antithetic and stratified sampling? \ref{fig:strat-and-anti-strat-running}
shows the result of applying antithetic sampling inside of stratified
sampling. Specifically, the estimator $R\pp{\omega^{m}}$ for each
stratum $m$ is replaced by $\frac{1}{2}\pars{R\pp{\omega^{m}}+R\pp{T_{m}(\omega^{m})}}$
where $T_{m}$ is a reflection inside the stratum that leaves the
density invariant. A fairly tight bound results. For all of antithetic
sampling (\ref{fig:antithetic-running-samps}), stratified sampling
(\ref{fig:strat-and-anti-strat-running}) and antithetic within stratified
sampling (\ref{fig:strat-and-anti-strat-running}) tightening $\E\log R$
finds $Q\pp{\omega}$ \marginpar{} such that all batches place some
density on $z$ in the main mode of $p$. Thus, the better sampling
methods permit a $q$ with some coverage of the left mode of $p$
while precluding the possibility that all samples in a batch are simultaneously
in a low-density region (which would result in $R$ near zero, and
thus a very low value for $\E\log R$). What do these estimators say
about $p(z|x)?$ \ref{fig:strat-and-anti-strat-running-samps} compares
the resulting $\qmc(z)$ for each estimator --- the similarity to
$p\pp{z|x}$ correlates with the likelihood bound.

\section{Implementation and Empirical Study\label{sec:Empirical-Study}}

\begin{figure}
\begin{centering}
\noindent\begin{minipage}[t]{1\columnwidth}%
\begin{center}
\includegraphics[width=0.21\textwidth]{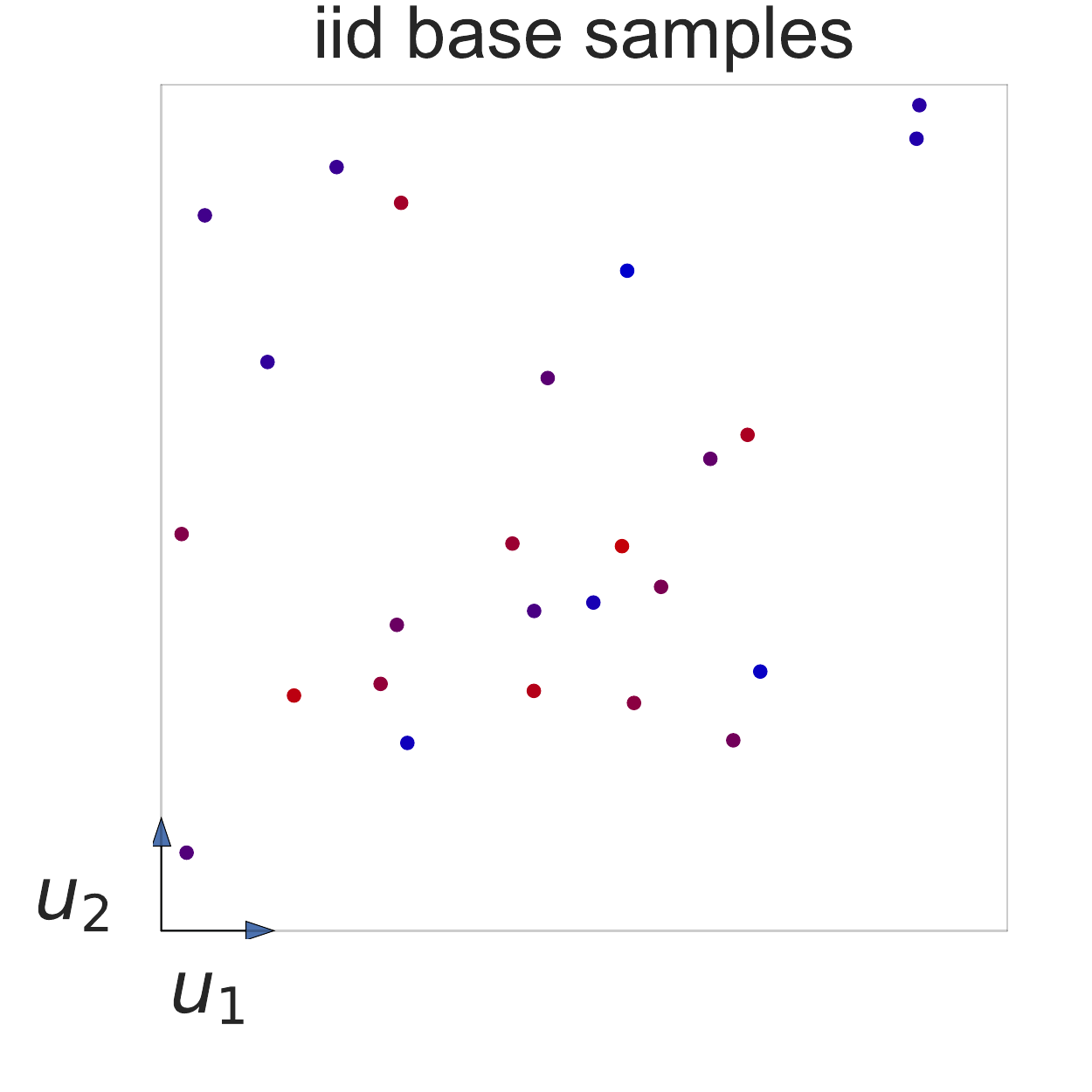}\hspace{-0.35cm}\includegraphics[width=0.21\textwidth]{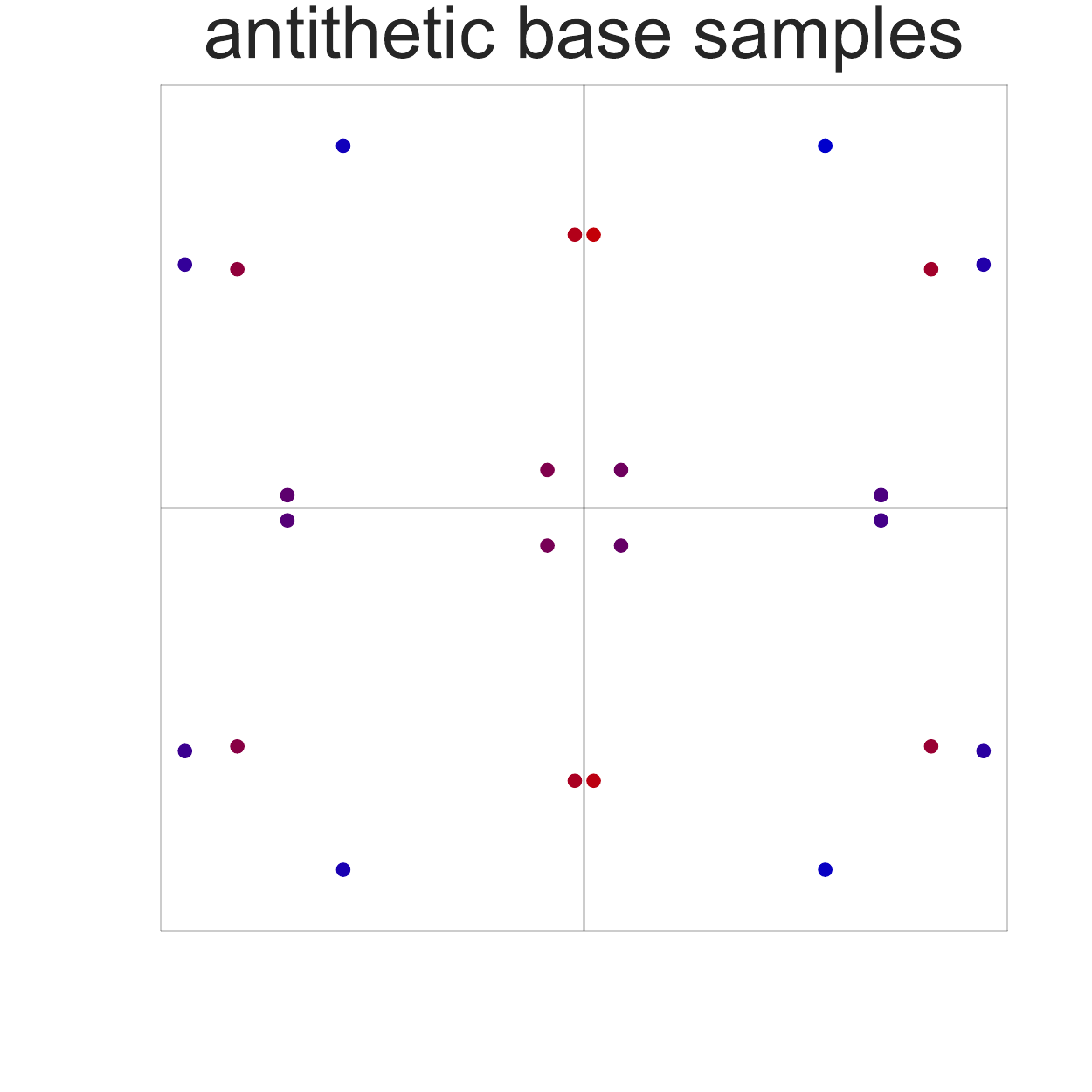}\hspace{-0.35cm}\includegraphics[width=0.21\textwidth]{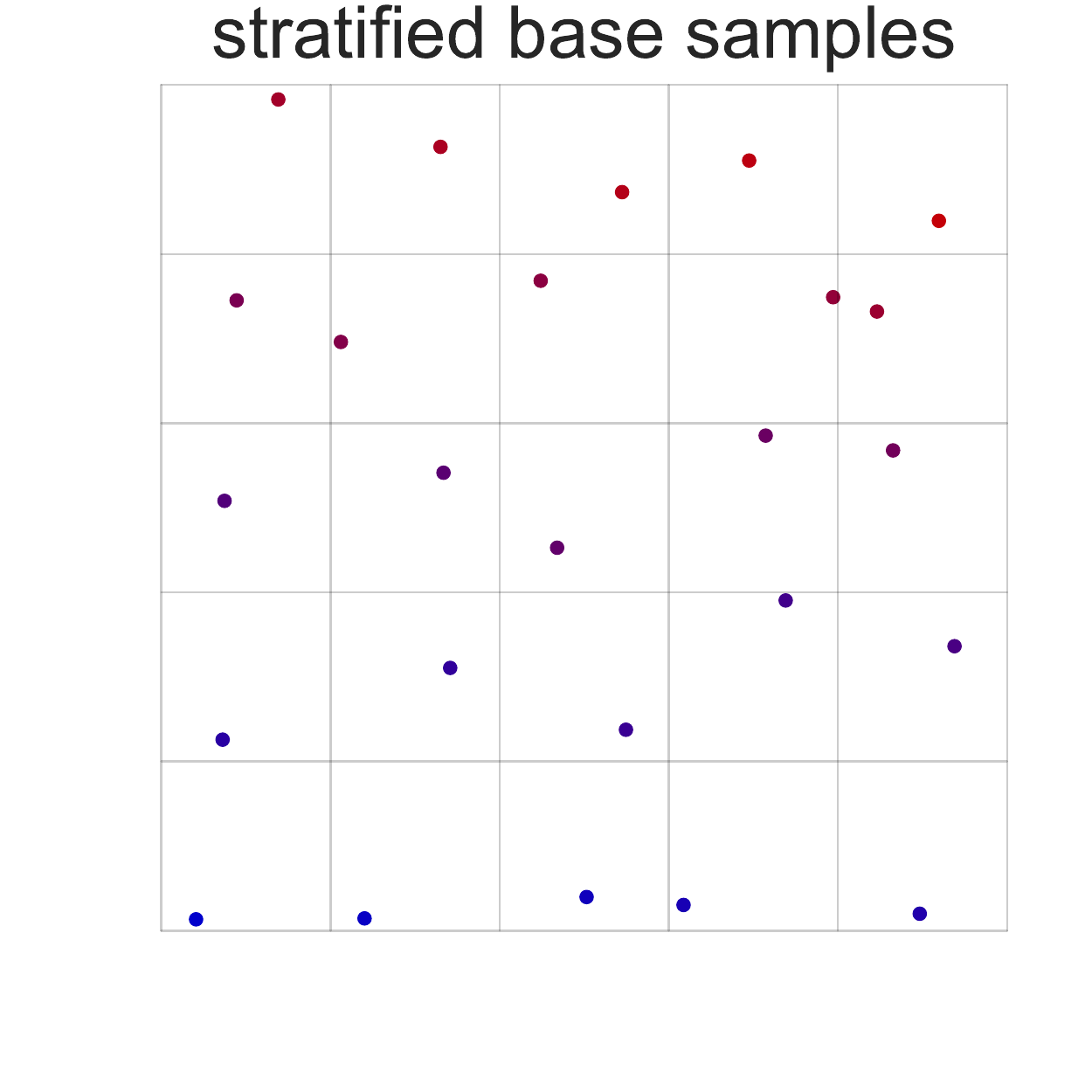}\hspace{-0.35cm}\includegraphics[width=0.21\textwidth]{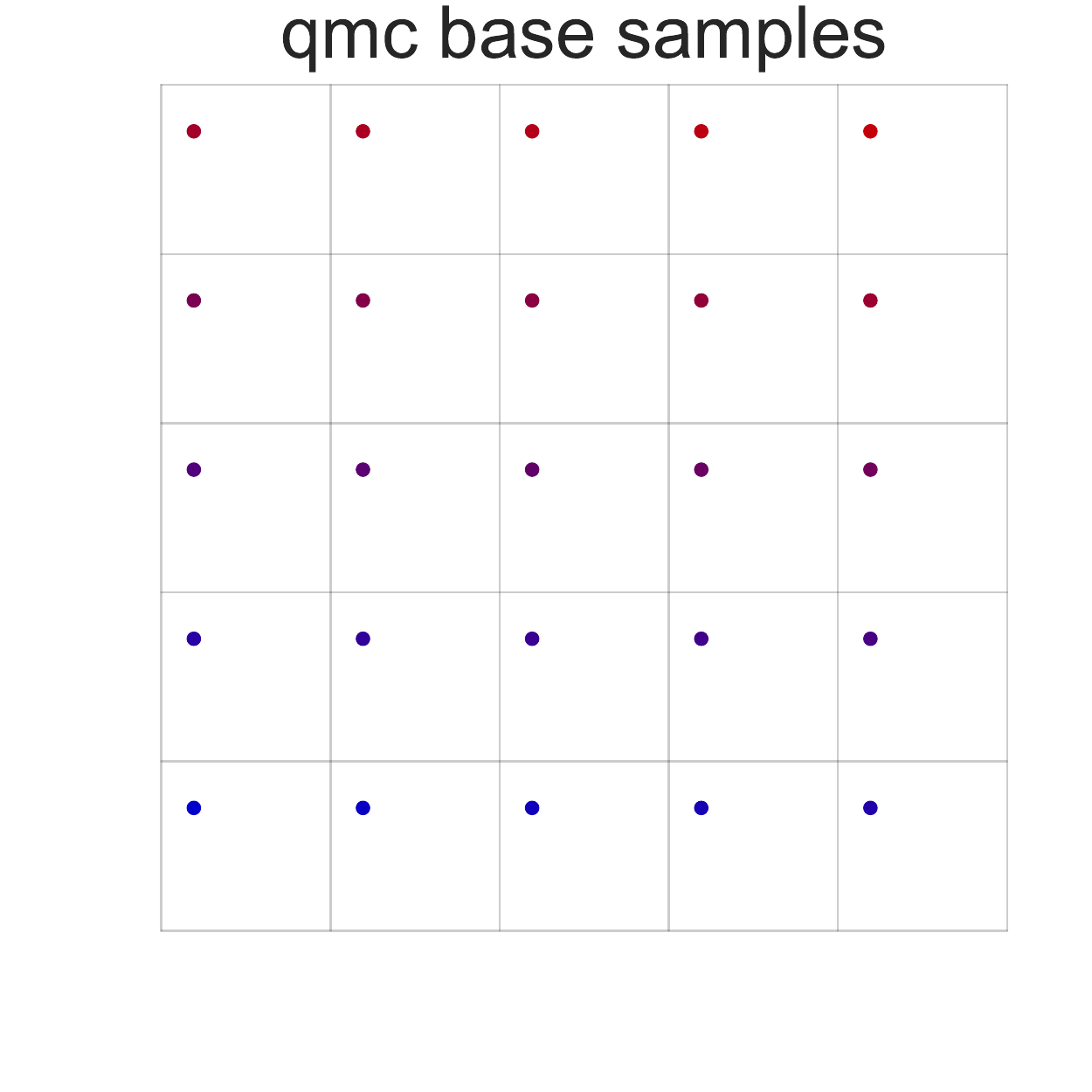}\hspace{-0.35cm}\includegraphics[width=0.21\textwidth]{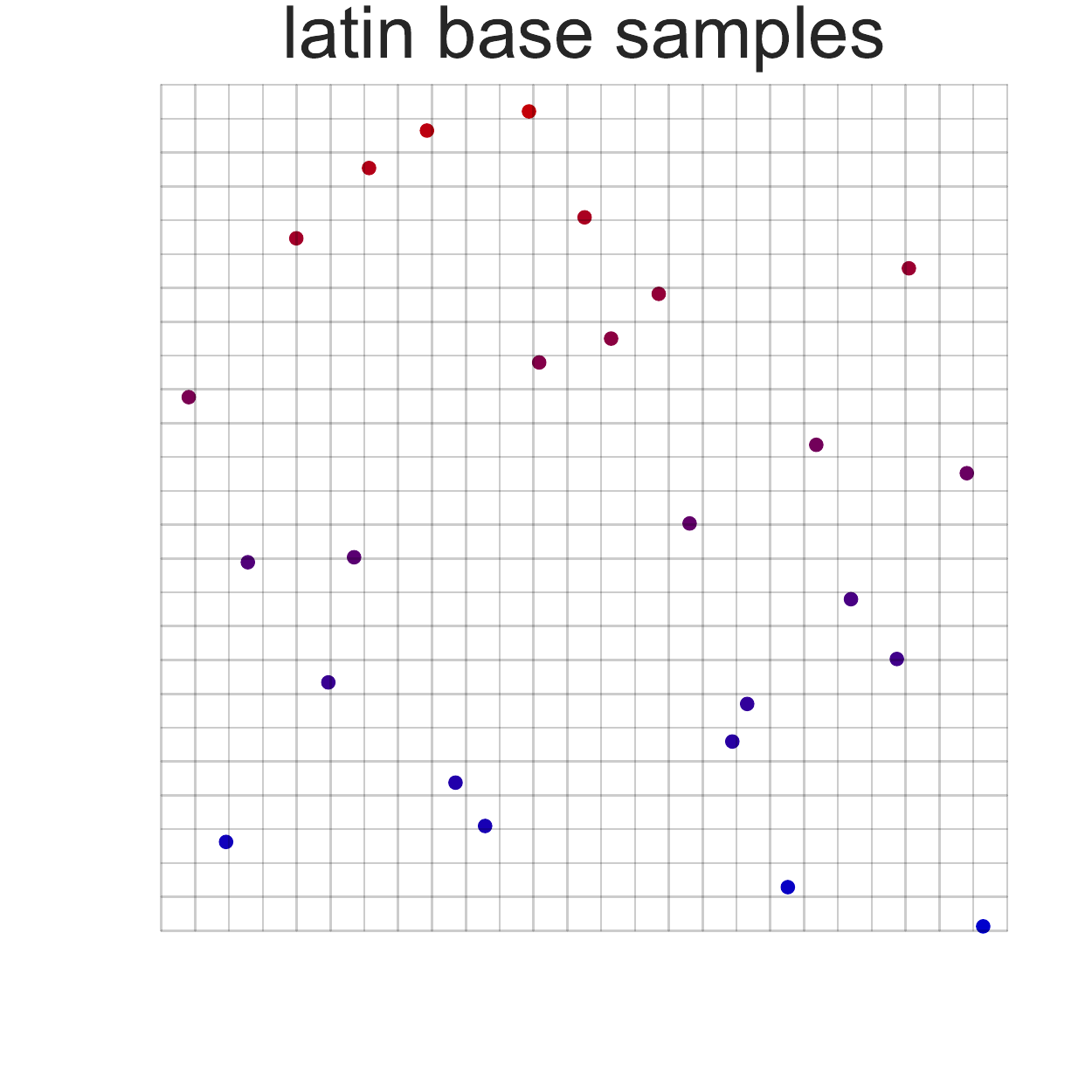}\vspace{-0.35cm}
\par\end{center}
\begin{center}
\includegraphics[width=0.22\textwidth]{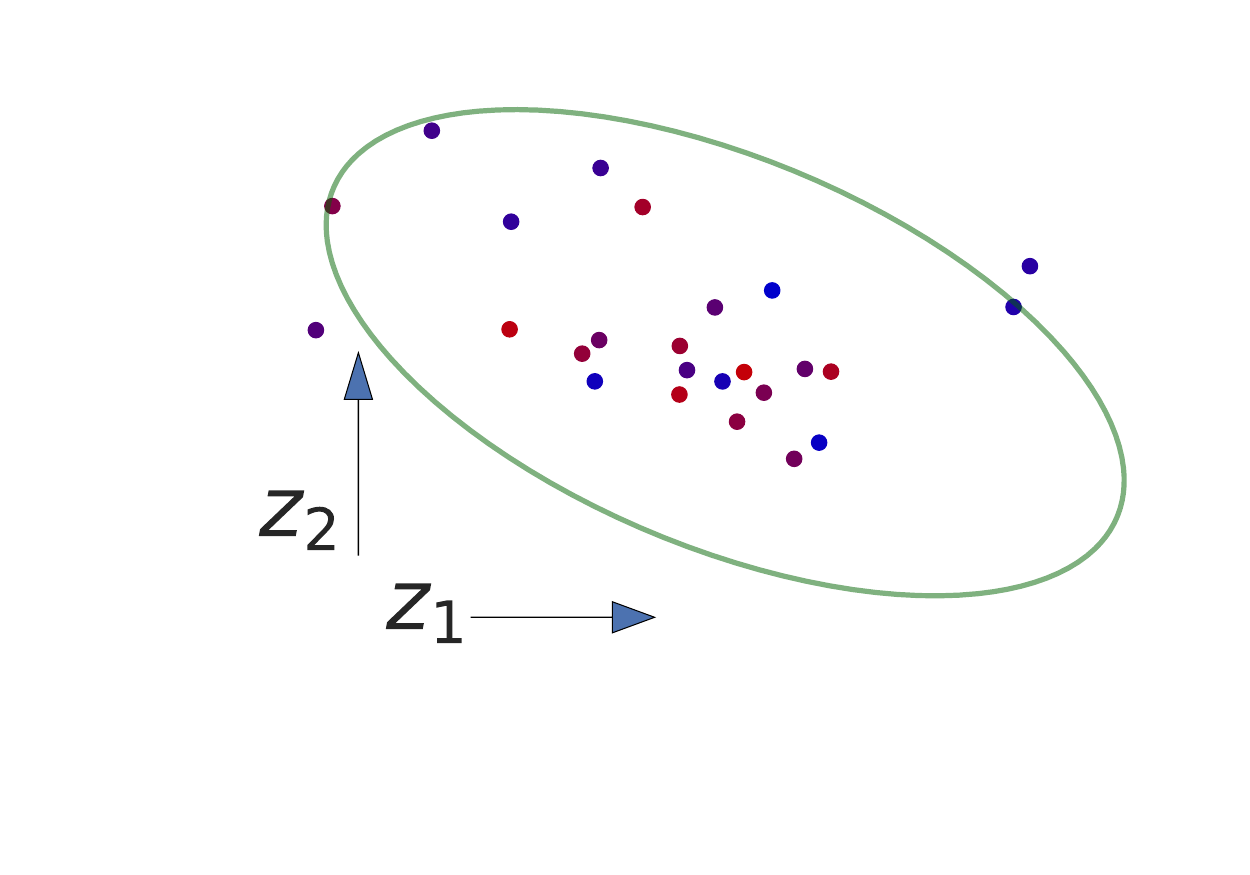}\hspace{-0.5cm}\includegraphics[width=0.22\textwidth]{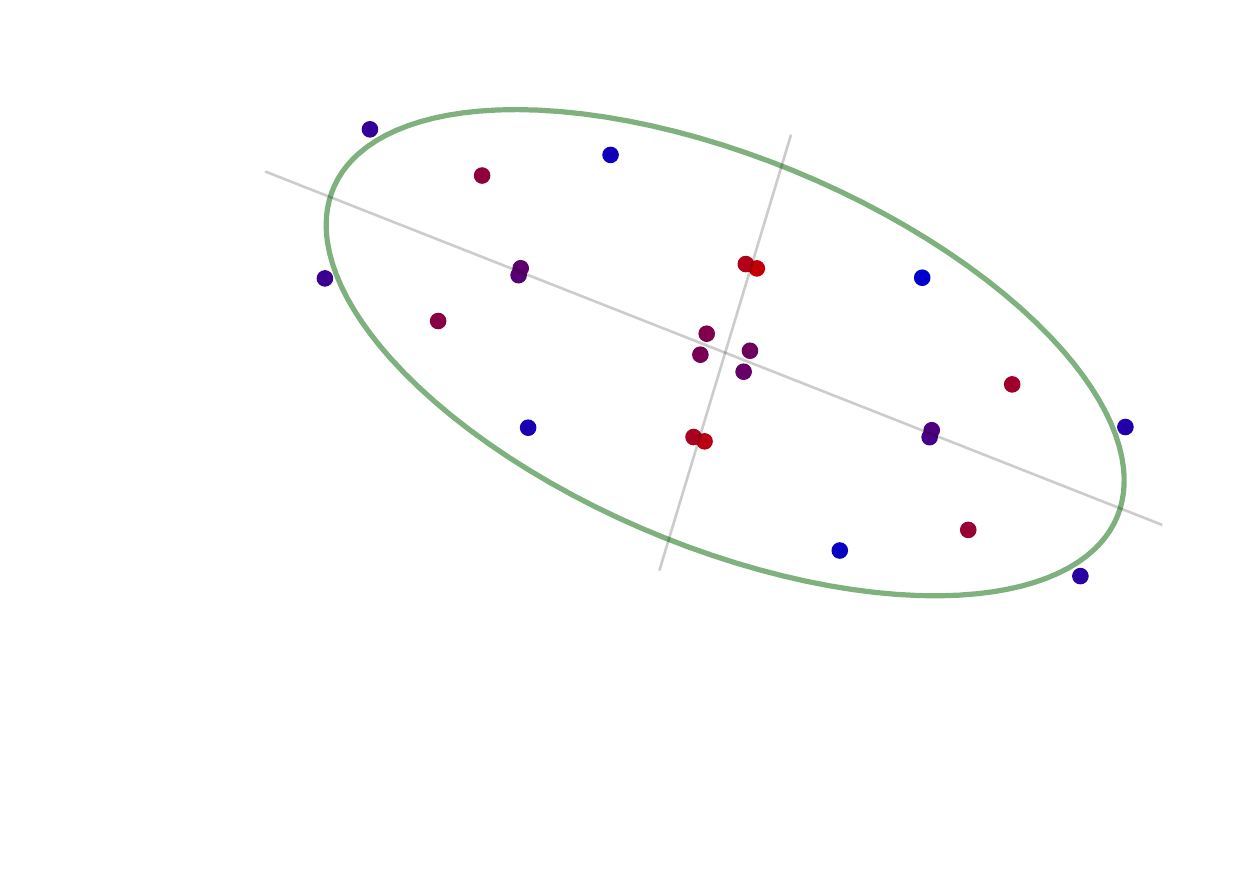}\hspace{-0.5cm}\includegraphics[width=0.22\textwidth]{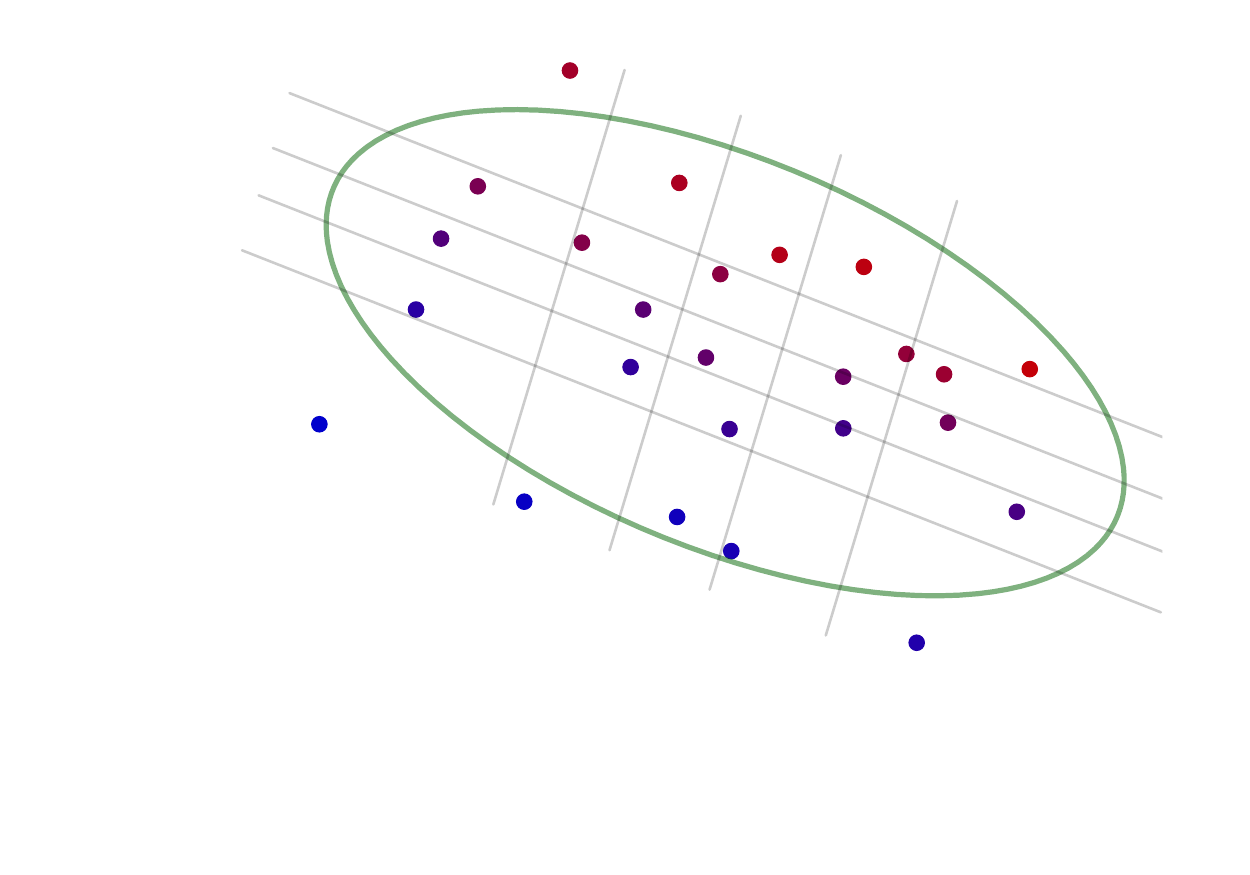}\hspace{-0.5cm}\includegraphics[width=0.22\textwidth]{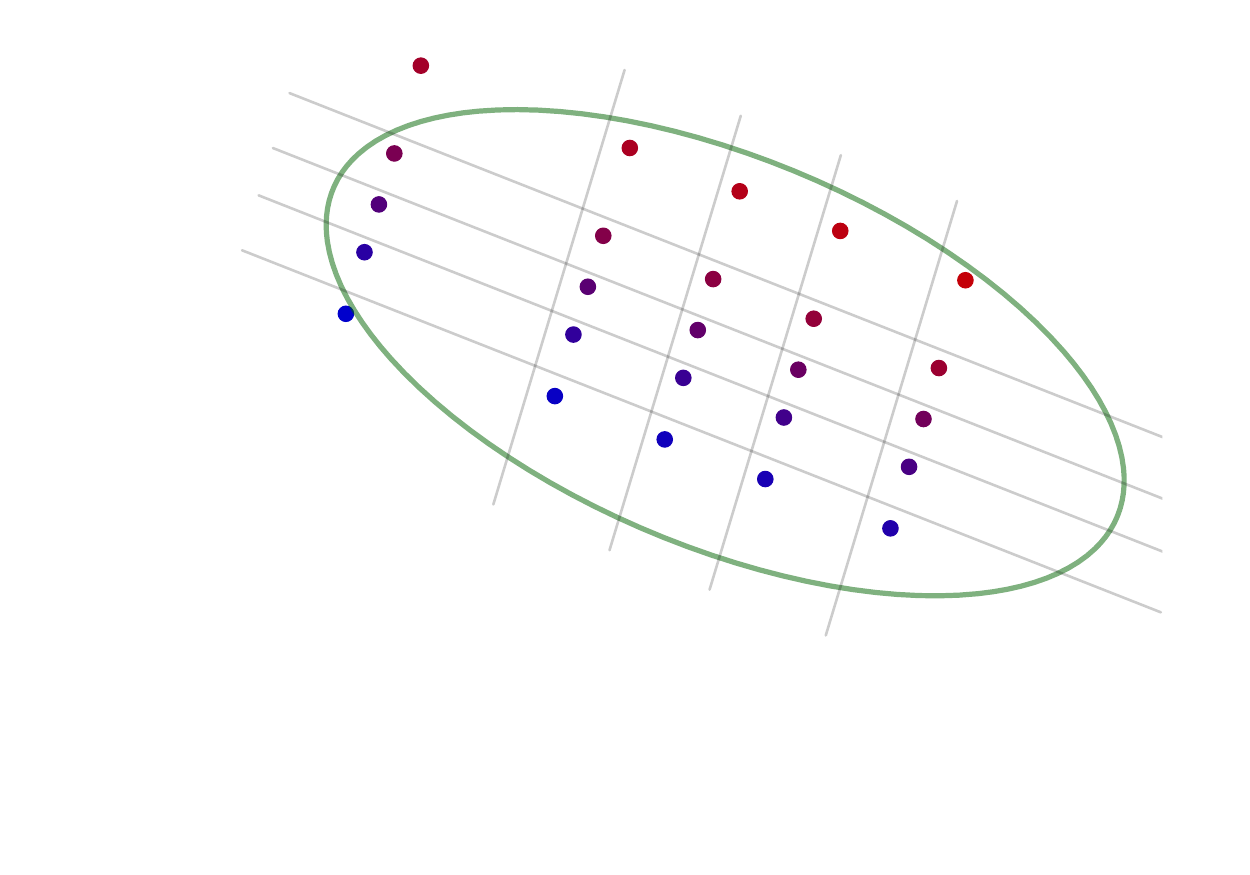}\hspace{-0.5cm}\includegraphics[width=0.22\textwidth]{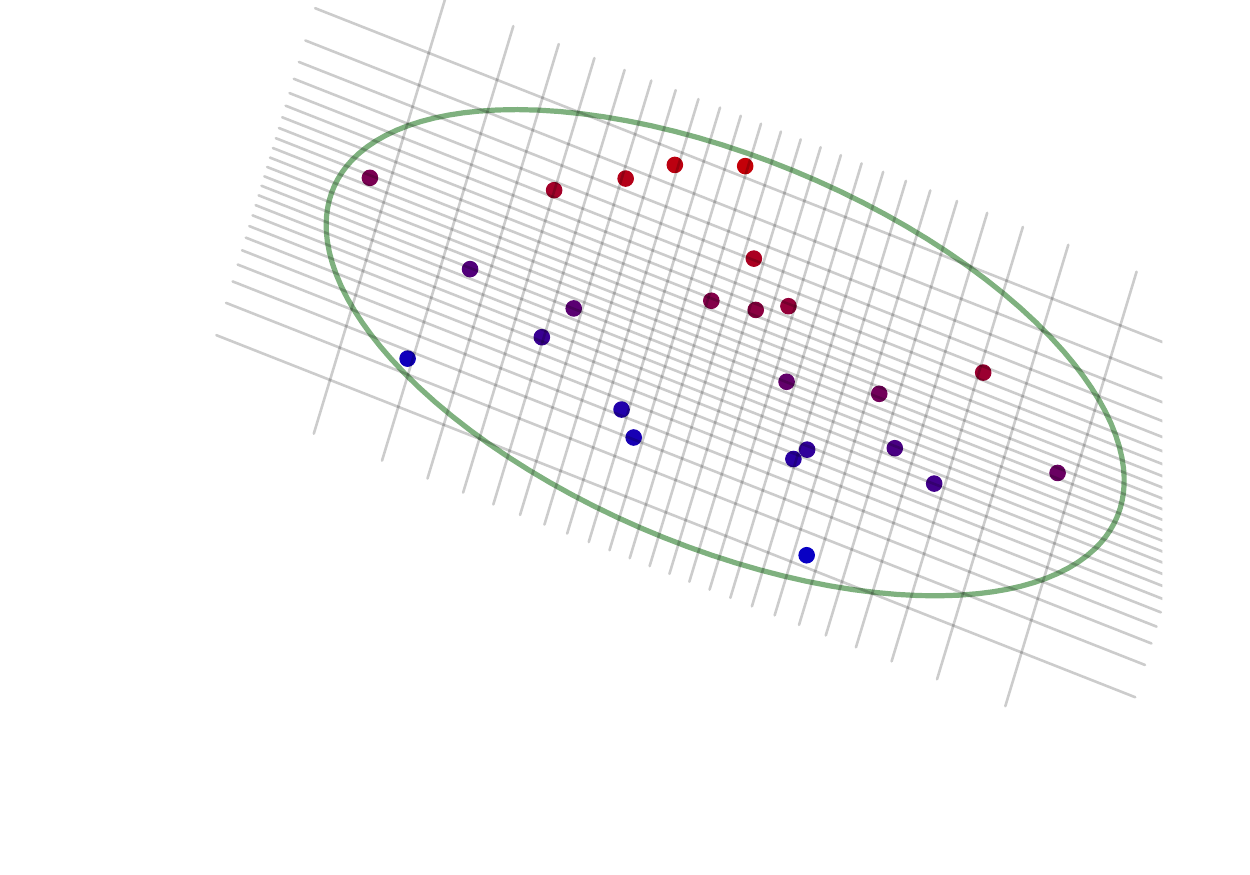}
\par\end{center}
\begin{center}
\vspace{-1.2cm}
\par\end{center}
\begin{center}
\includegraphics[width=0.22\textwidth]{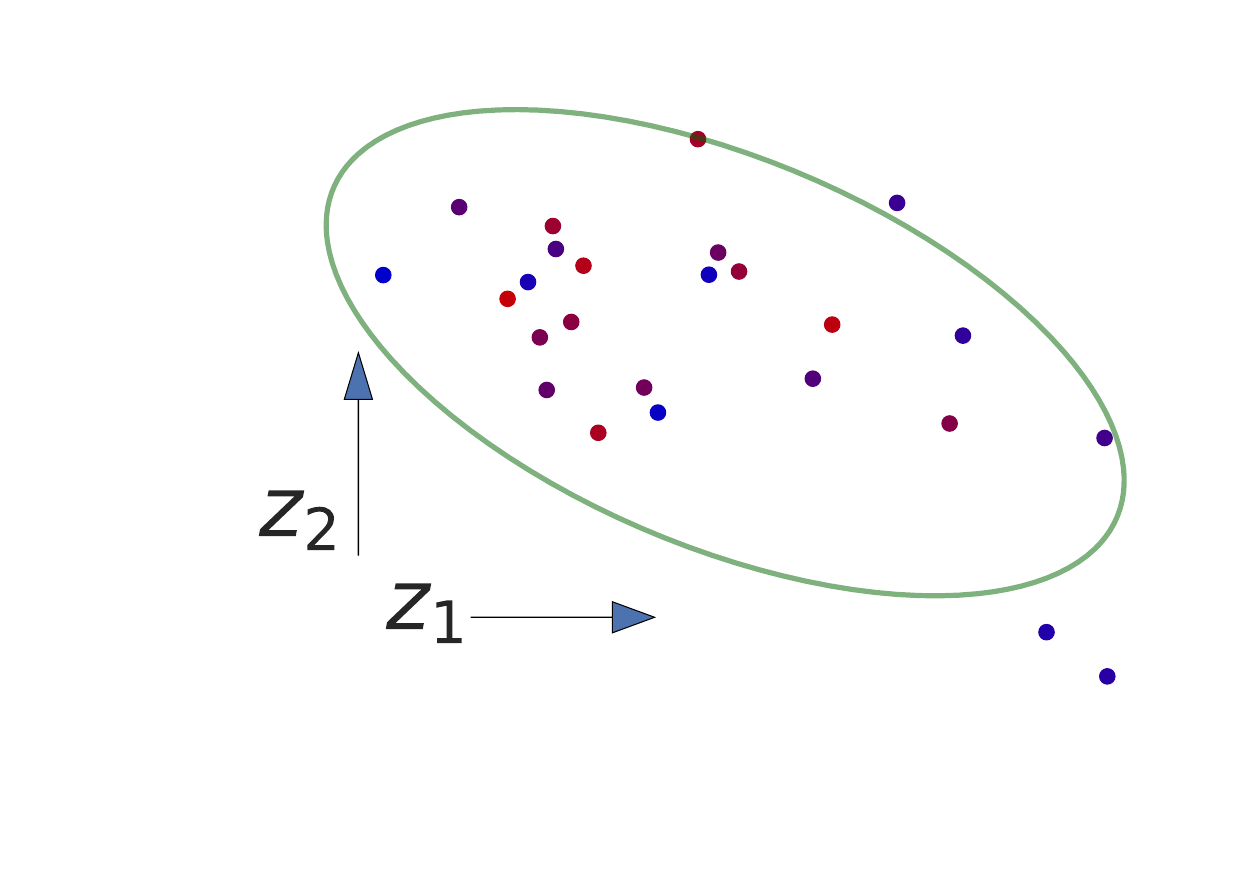}\hspace{-0.5cm}\includegraphics[width=0.22\textwidth]{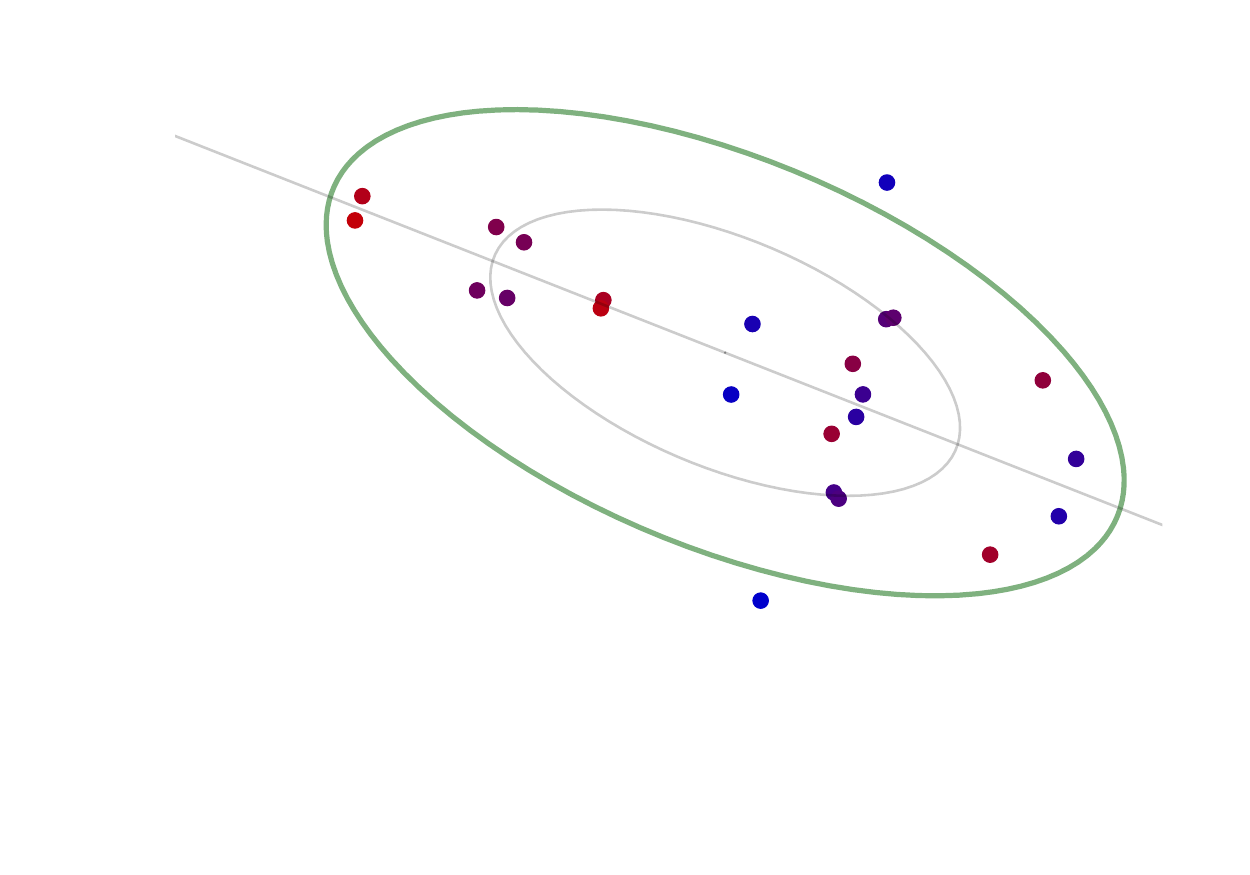}\hspace{-0.5cm}\includegraphics[width=0.22\textwidth]{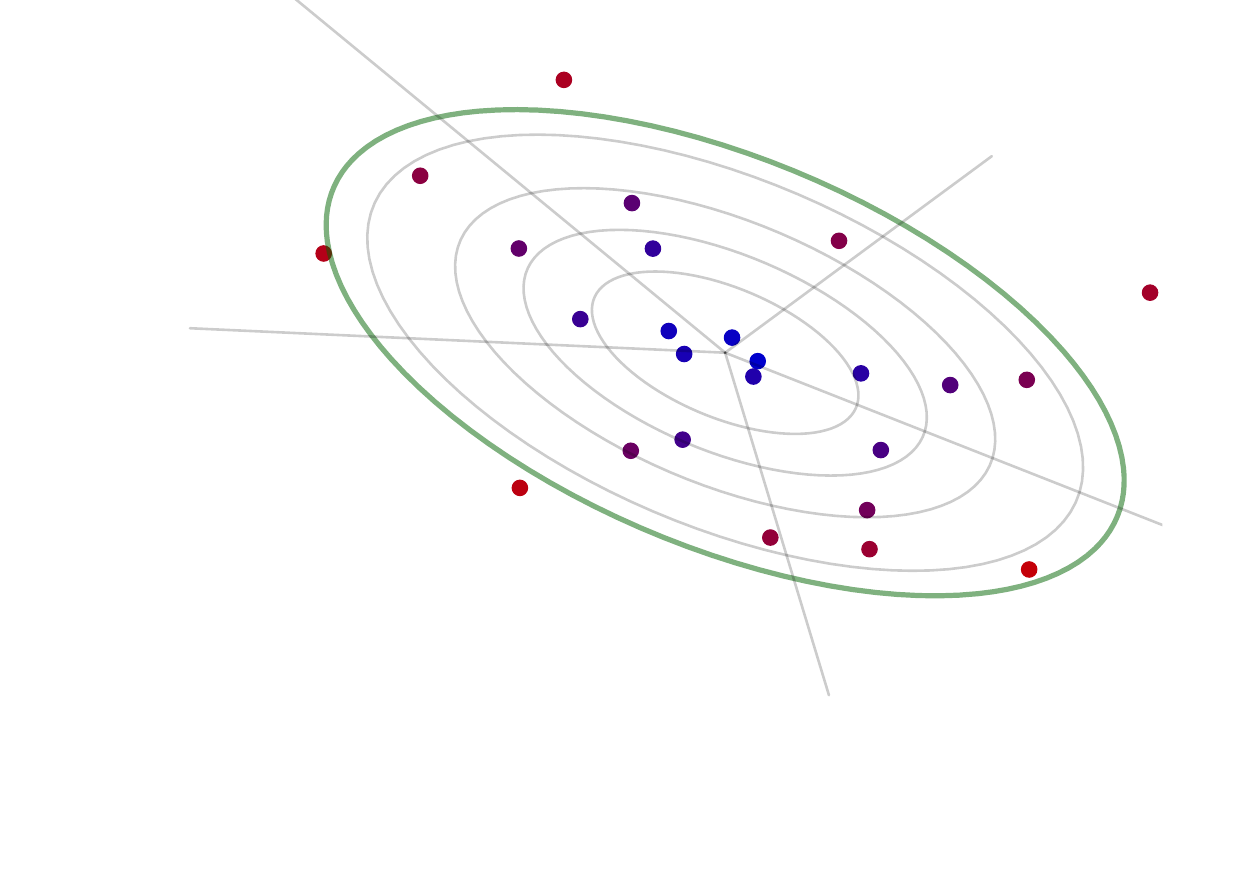}\hspace{-0.5cm}\includegraphics[width=0.22\textwidth]{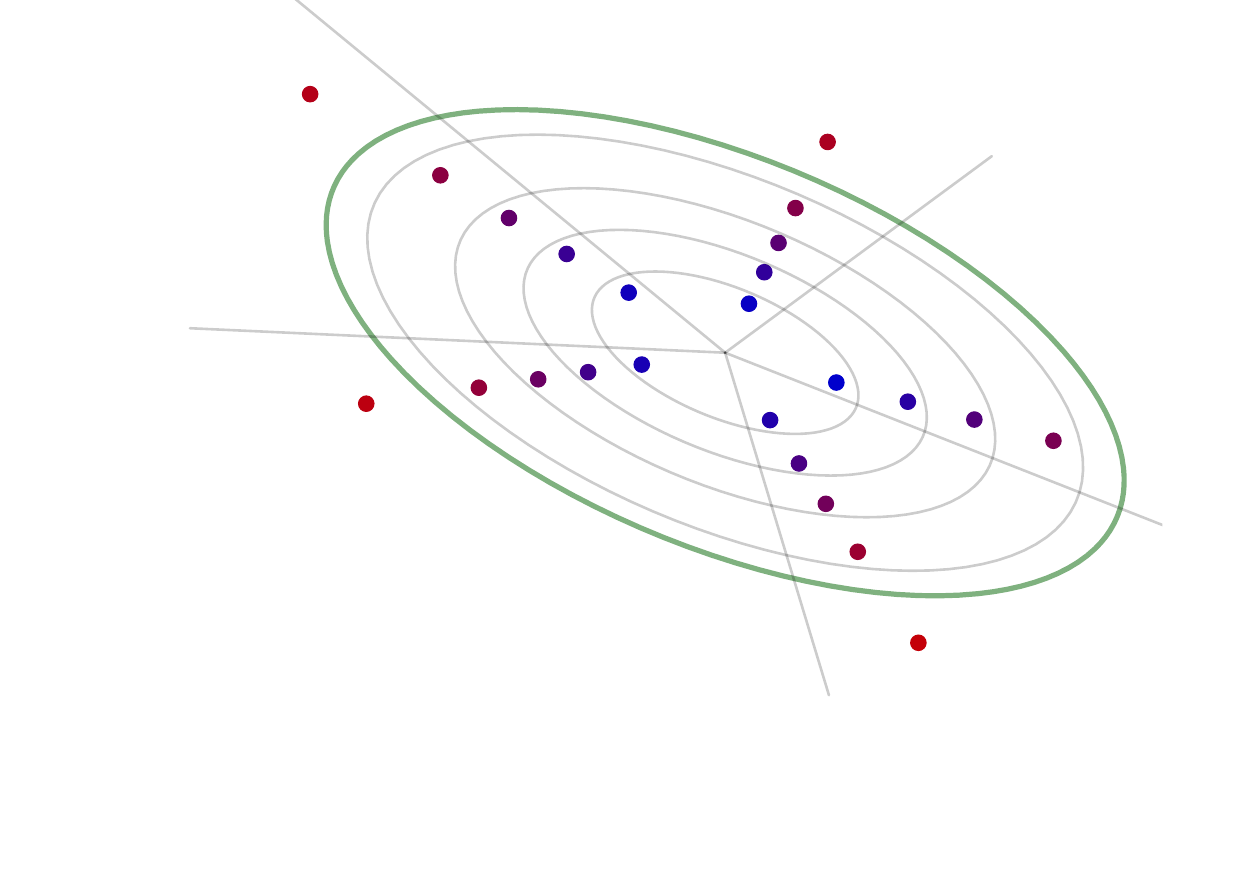}\hspace{-0.5cm}\includegraphics[width=0.22\textwidth]{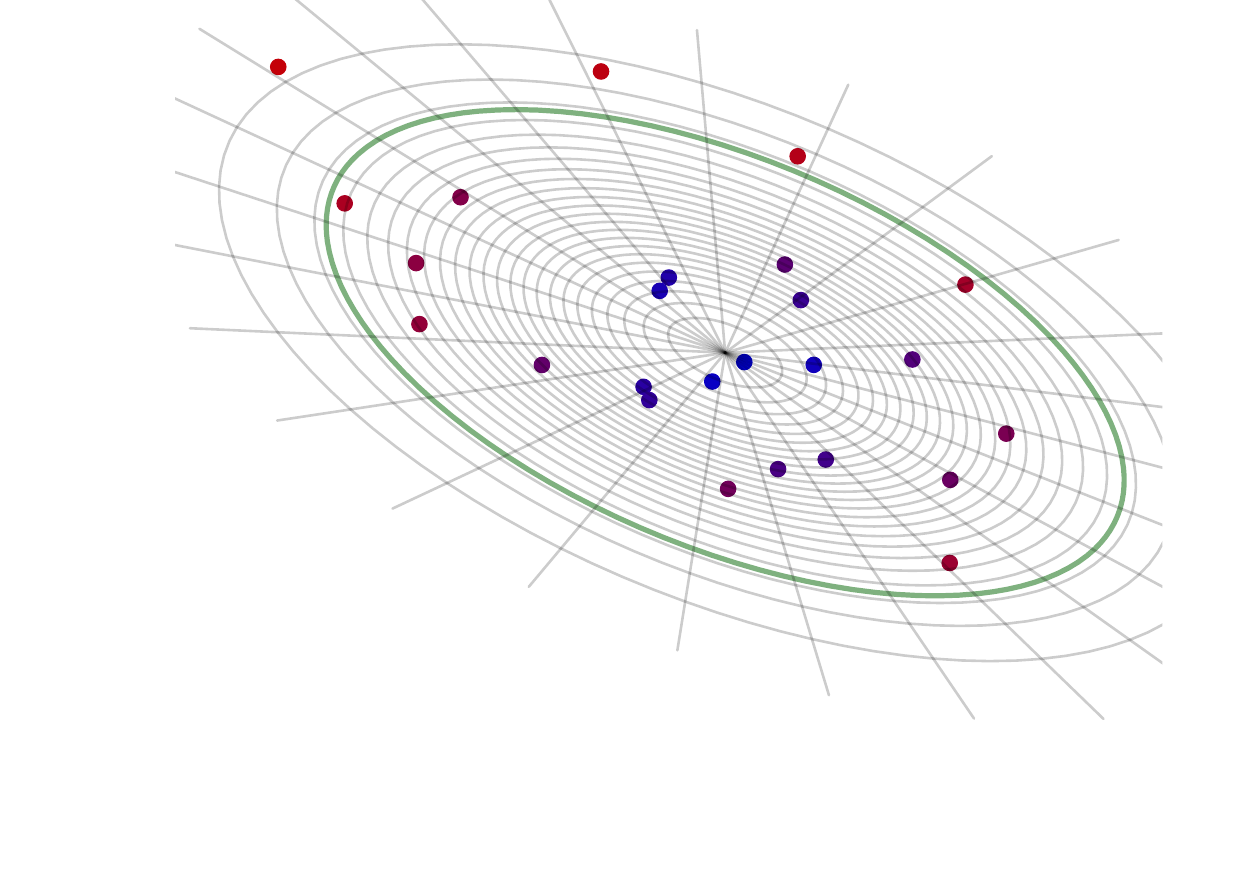}
\par\end{center}
\begin{center}
\vspace{-0.65cm}
\par\end{center}%
\end{minipage}
\par\end{centering}
\caption{Different sampling methods applied to Gaussian VI. Top row: Different
methods to sample from the unit cube. Middle row: these samples transformed
using the ``Cartesian'' mapping. Bottom row: Same samples transformed
using the ``Elliptical'' mapping.\label{fig:Different-sampling-methods}}
\end{figure}
Our results are easy to put into practice, e.g. for variational inference
with Gaussian approximating distributions and the reparameterization
trick to estimate gradients.. To illustrate this, we show a simple
but general approach. As shown in \ref{fig:Different-sampling-methods}
the idea is to start with a batch of samples $\omega_{1}\cdots\omega_{M}$
generated from the unit hypercube. Different sampling strategies can
give more uniform coverage of the cube than i.i.d. sampling. After
transformation, one obtains samples $z_{1}\cdots z_{M}$ that have
more uniform coverage of the Gaussian. This better coverage often
manifests as a lower-variance estimator $R$. Our coupling framework
gives a corresponding approximate posterior $Q\pp z$.

Formally, take any distribution $Q\pp{\omega_{1},\cdots,\omega_{M}}$
such that each marginal $Q\pp{\omega_{m}}$ is uniform over the unit
cube (but the different $\omega_{m}$ may be dependent). As shown
in \ref{fig:Different-sampling-methods}, there are various ways to
generate $\omega_{1}\cdots\omega_{M}$ and to map them to samples
$z_{1}\cdots z_{M}$ from a Gaussian $q\pp{z_{m}}$. Then, \ref{fig:Generic-methods}
gives algorithms to generate an estimator $R$ and to generate $z$
from a distribution $Q\pp z$ corresponding to a valid coupling. We
use mappings $\omega\stackrel{F^{-1}}{\to}u\stackrel{\mathcal{T_{\theta}}}{\to}z$
where $\T_{\theta}=\mathcal{T}_{\theta}\circ F^{-1}$ maps $\wr\sim\mathrm{Unif}([0,1]^{d})$
to $\T_{\theta}(\wr)\sim q_{\theta}$ for some density $q_{\theta}$.
The idea is to implement variance reduction to sample (batches of)
$\omega$, use $F^{-1}$ to map $\omega$ to a ``standard'' distribution
(typically in the same family as $q_{\theta}$), and then use $\mathcal{T}_{\theta}$
to map samples from the standard distribution to samples from $q_{\theta}$.\marginpar{}

The algorithms are again derived from \ref{thm:splitting-estimators}
and \ref{thm:rao-black-new}. Define $Q_{0}(\omega)$ uniform on $[0,1]^{d}$,
$R_{0}(\omega)=p(\T_{\theta}(\omega),x)/q_{\theta}(\T_{\theta}(\omega))$
and $a_{0}(z|\omega)=\delta(z-\T_{\theta}(\omega))$. These define
a valid estimator-coupling pair. Let $Q\pp{\omega_{1},\cdots,\omega_{M}}$
be as described (uniform marginals) and $\r m$ uniform on $\{1,\cdots,M\}$.
Then $Q\pp{\omega_{1},\cdots,\omega_{M},m}$ satisfies the assumptions
of \ref{thm:splitting-estimators}, so we can use that theorem then
\ref{thm:rao-black-new} to Rao-Blackwellize out $\r m$. This produces
the estimator-coupling pair in \ref{fig:Generic-methods}.

\begin{figure}[!b]
\begin{minipage}[t]{0.5\columnwidth}%
\textbf{Algorithm} (Generate $R$)
\begin{itemize}
\item Generate $\omega_{1},\cdots,\omega_{M}$ from any distribution where
$\omega_{m}$ is marginally uniform over $\left[0,1\right]^{d}$.
\item Map to a standard dist. as $u_{m}=F^{-1}\pp{\omega_{m}}$.
\item Map to $q_{\theta}$ as $z_{m}=\mathcal{T}_{\theta}\pp{u_{m}}.$
\item Return $R=\frac{1}{M}\sum_{m=1}^{M}\frac{p\pp{z_{m},x}}{q_{\theta}\pp{z_{m}}}$
\end{itemize}
\end{minipage}%
\begin{minipage}[t]{0.5\columnwidth}%
\textbf{Algorithm} (Sample from $Q\pp z$)
\begin{itemize}
\item Generate $z_{1},\cdots z_{M}$ as on the left.
\item For all $m$ compute weight $w_{m}=\frac{p\pp{z_{m},x}}{q_{\theta}\pp{z_{m}}}.$
\item Select $m$ with probability $\frac{w_{m}}{\sum_{m'=1}^{M}w_{m'}}.$
\item Return $z_{m}$
\end{itemize}
\end{minipage}

\caption{Generic methods to sample $R$ (left) and $Q\protect\pp z$ (right).
Here, $Q\protect\pp{\omega_{1},\cdots,\omega_{M}}$ is any distribution
where the marginals $Q\protect\pp{\omega_{m}}$ are uniform over the
unit hypercube.\label{fig:Generic-methods}}
\end{figure}
The value of this approach is the many off-the-shelf methods to generate
``batches'' of samples $\pp{\omega_{1},\cdots,\omega_{M}}$ that
have good ``coverage'' of the unit cube. This manifests as coverage
of $q_{\theta}$ after being mapped. \ref{fig:Different-sampling-methods}
shows examples of this with multivariate Gaussians. As shown, there
may be multiple mappings $F^{-1}.$ These manifest as different coverage
of $q_{\theta},$ so the choice of mapping influences the quality
of the estimator. We consider two examples\marginpar{}: The ``Cartesian''
mapping $F_{\mathcal{N}}^{-1}\pp{\omega}$ simply applies the inverse
CDF of the standard Gaussian. An ``elliptical'' mapping, meanwhile,
uses the ``elliptical'' reparameterization of the Gaussian \citep{Domke_2018_ImportanceWeightingVariational}:
If $\r r\sim\chi_{d}$ and $\r v$ is uniform over the unit sphere,
then $\r{r\,}\r v\sim\N(0,I).$ In \ref{fig:Different-sampling-methods}
we generate $r$ and $v$ from the uniform distribution as $r=F_{\chi_{d}}^{-1}\pp{\omega_{1}}$
and $v=\pp{\cos\pp{2\pi\omega_{2}},\sin\pp{2\pi\omega_{2}}}$, and
then set $F^{-1}\pp{\omega}=r\ v.$ In higher dimensions, it is easier
to generate samples from the unit sphere using redundant dimensions.
Thus, we use $\omega\in\R^{d+1}$ and map the first component to $r$
again using the inverse $\chi$ distribution CDF $F_{\chi_{d}}^{-1}$.
The other components are mapped to the unit sphere by first applying
the Gaussian inverse CDF in each component, then normalizing.

In the experiments, we use a multivariate Gaussian $q_{\theta}$ with
parameters $\theta=\pp{C,\mu}.$ The mapping is $\mathcal{T}_{\theta}\pp u=Cu+\mu$.
To ensure a diverse test, we downloaded the corpus of models from
the Stan \citep{Carpenter_2017_StanProbabilisticProgramming} model
library \citep{Standevelopers._2018_ExampleModels} (see also \citet{Regier_2017_FastBlackboxVariationala})
and created an interface for automatic differentiation in Stan to
interoperate with automatic differentiation code written in Python.
We compare VI in terms of the likelihood bound and in terms of the
(squared Frobenius norm) error in the estimated posterior variance.
As a surrogate for the true variance, we computed the empirical variance
of 100,000 samples generated via Stan's Hamiltonian Markov chain Monte
Carlo (MCMC) method. For tractability, we restrict to the 88 models
where profiling indicates MCMC would take at most 10 hours, and evaluating
the posterior for 10,000 settings of the latent variables would take
at most 2 seconds. It was infeasible to tune stochastic gradient methods
for all models. Instead we used a fixed batch of 50,000 batches $\omega_{1},\cdots,\omega_{M}$
and optimized the empirical ELBO using BFGS, initialized using Laplace's
method. A fresh batch of 500,000 samples was used to compute the final
likelihood bound and covariance estimator. \ref{fig:errors} shows
example errors for a few models. The supplement contains similar plots
for all models, as well as plots aggregating statistics, and a visualization
of how the posterior density approximation changes.

\begin{figure}
\includegraphics[viewport=0bp 33.7744bp 300bp 166.6204bp,clip,width=0.33\columnwidth]{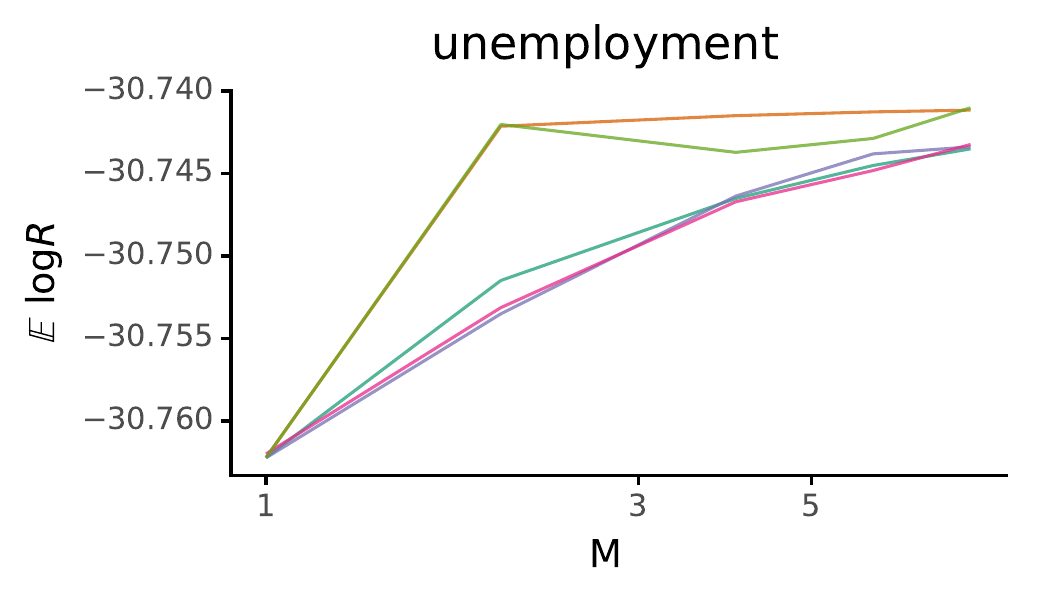}\includegraphics[viewport=0bp 33.5826bp 314bp 177bp,clip,width=0.33\columnwidth]{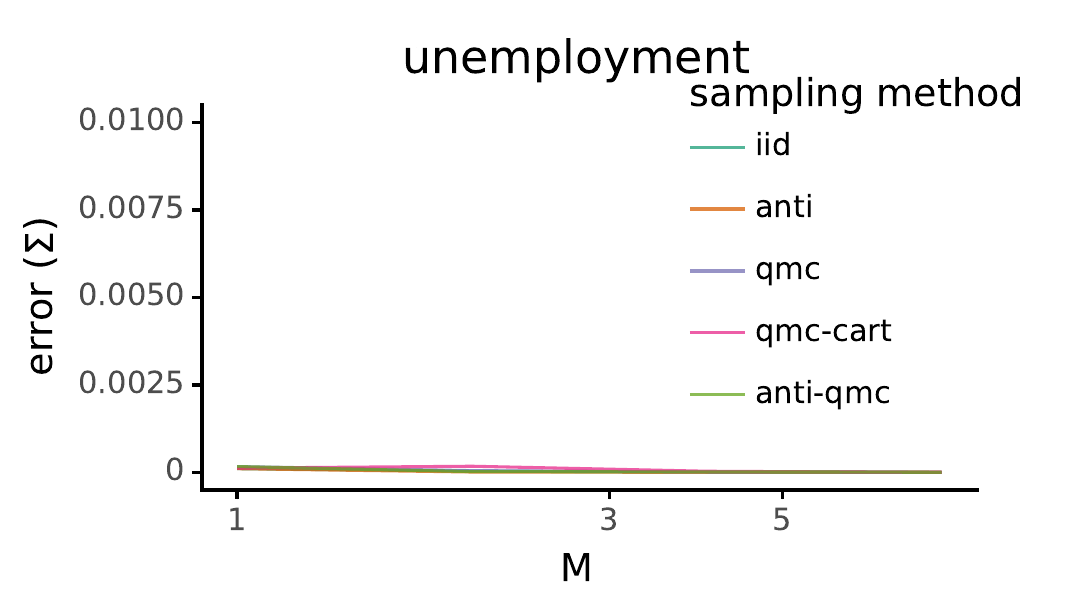}\includegraphics[viewport=0bp 33.6577bp 315bp 177bp,clip,width=0.33\columnwidth]{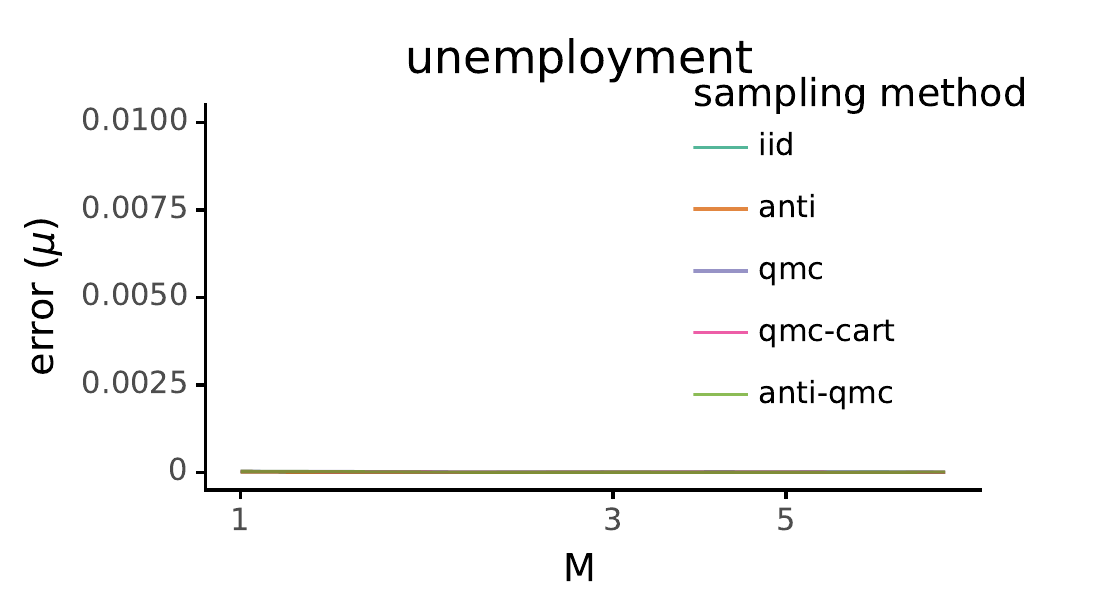}

\includegraphics[viewport=-3.675bp 33.2692bp 294bp 173bp,clip,width=0.33\columnwidth]{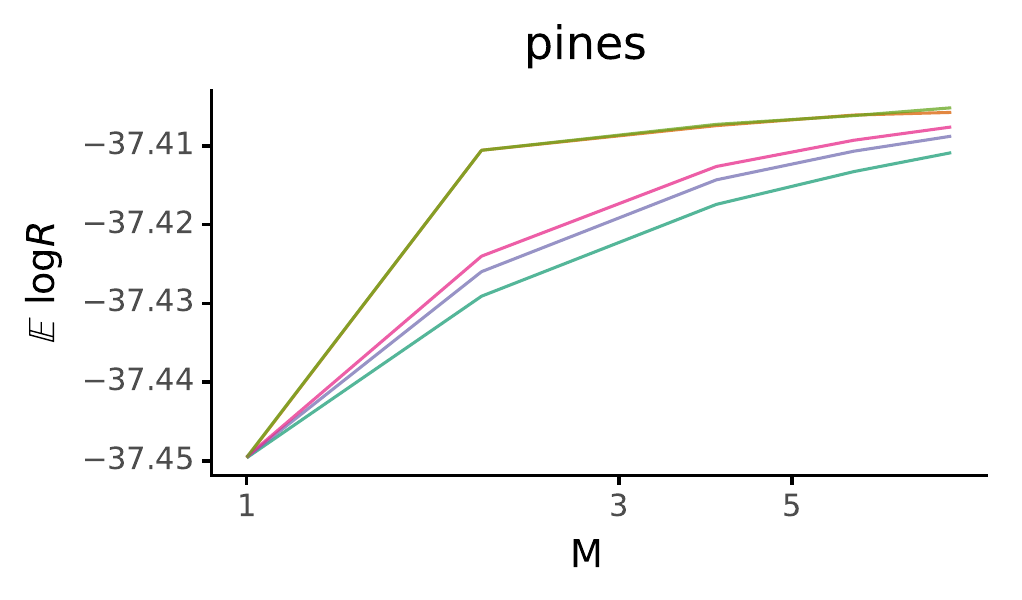}\includegraphics[viewport=-11.3625bp 33.569bp 303bp 177bp,clip,width=0.33\columnwidth]{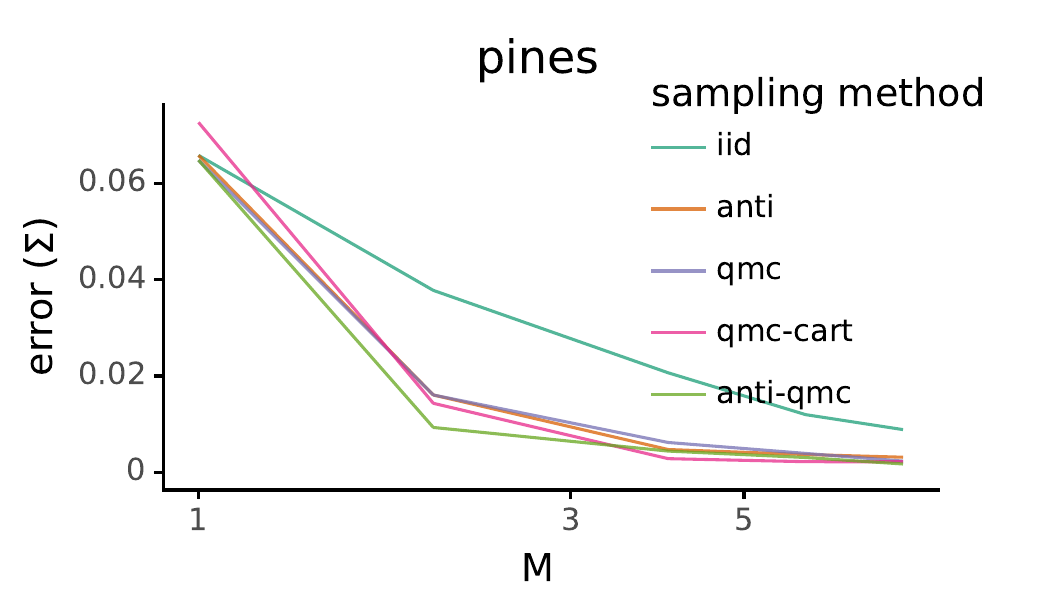}\includegraphics[viewport=0bp 33.6577bp 315bp 177bp,clip,width=0.33\columnwidth]{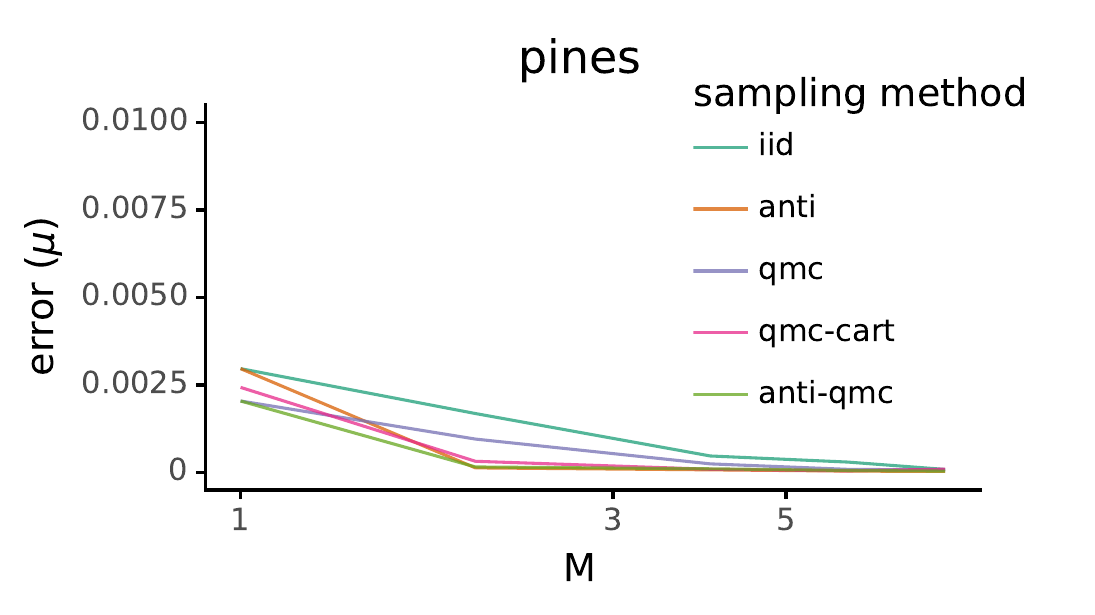}

\includegraphics[viewport=0bp 33.7744bp 300bp 173bp,clip,width=0.33\columnwidth]{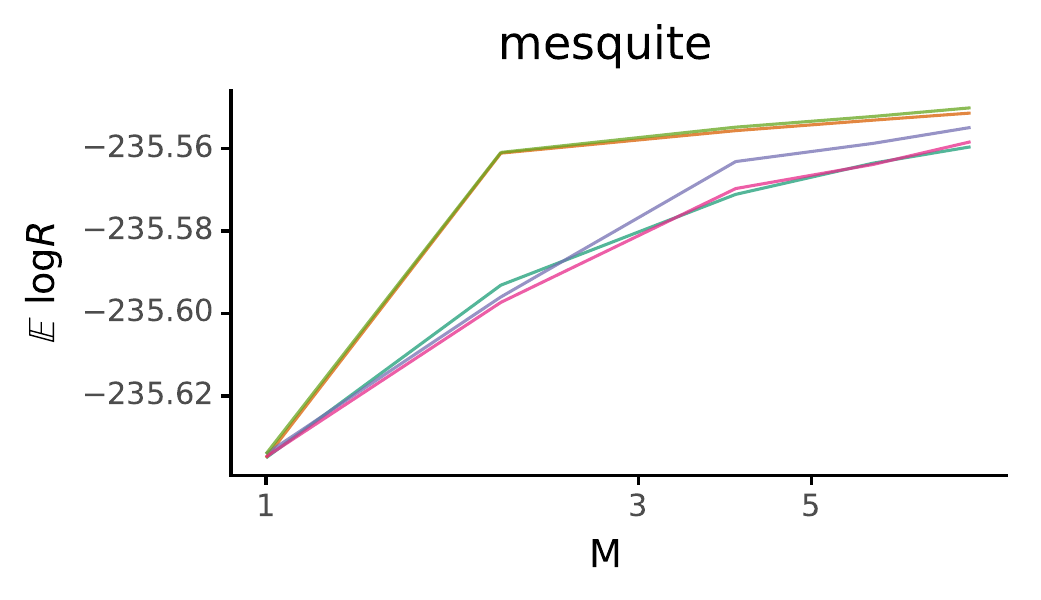}\includegraphics[viewport=-7.725bp 32.9212bp 309bp 177bp,clip,width=0.33\columnwidth]{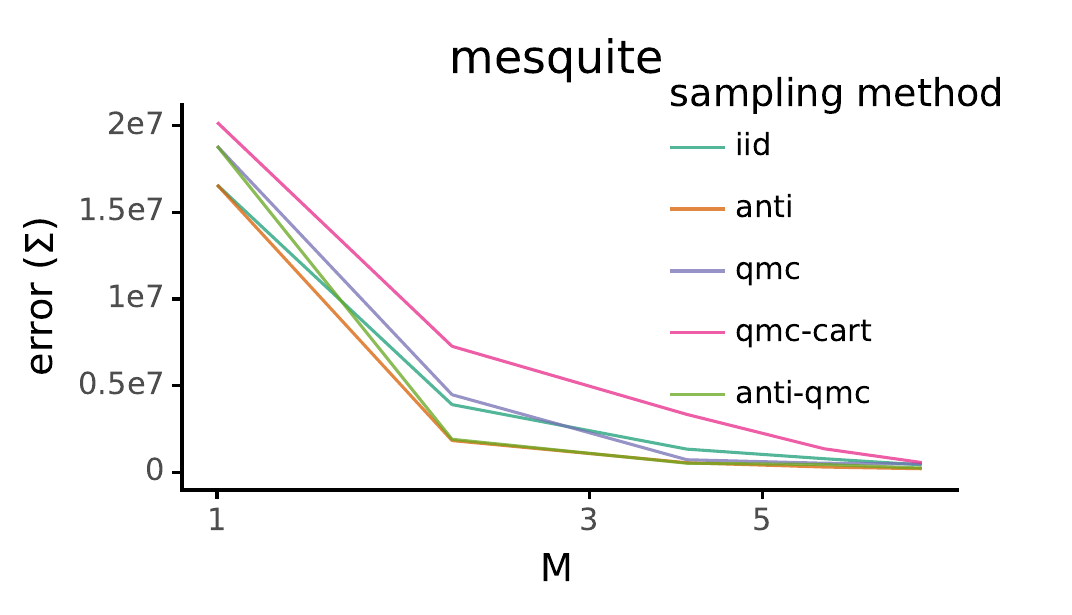}\includegraphics[viewport=-14.95bp 33.75bp 299bp 177bp,clip,width=0.33\columnwidth]{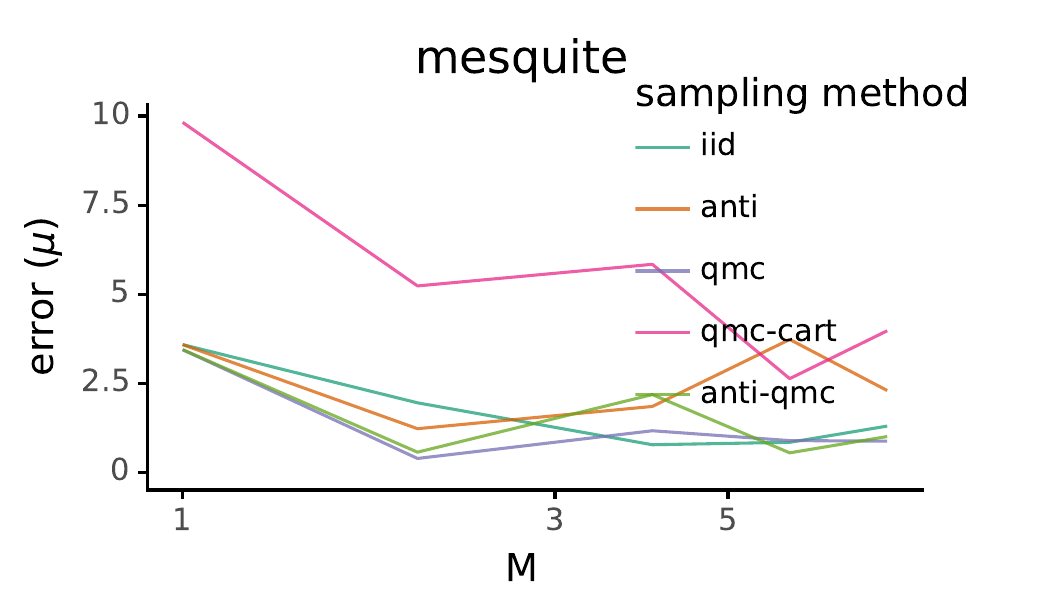}

\includegraphics[viewport=-3.675bp 0bp 294bp 173bp,clip,width=0.33\columnwidth]{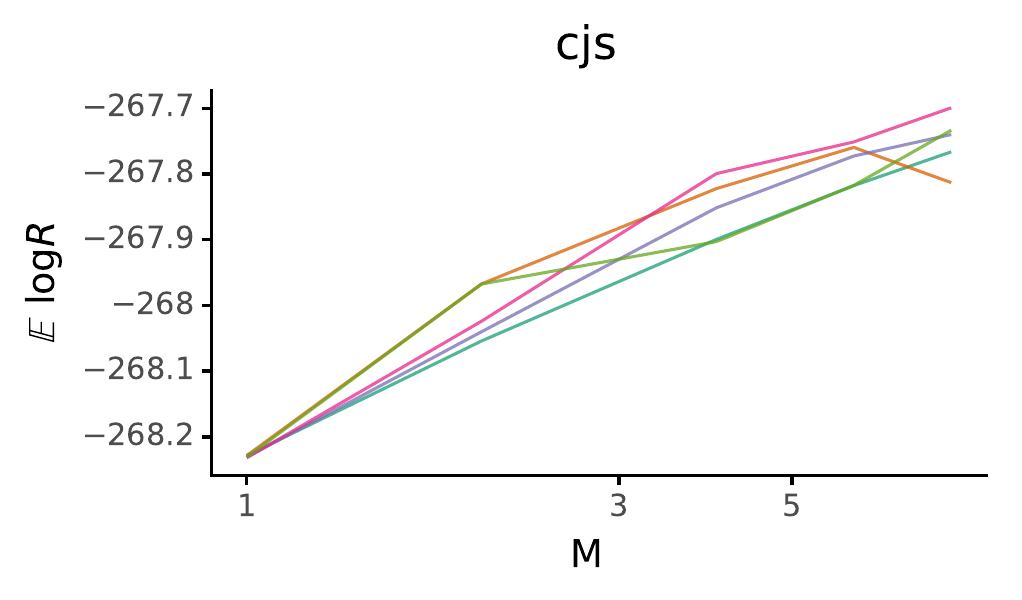}\includegraphics[viewport=-25.2875bp 0bp 289bp 177bp,clip,width=0.33\columnwidth]{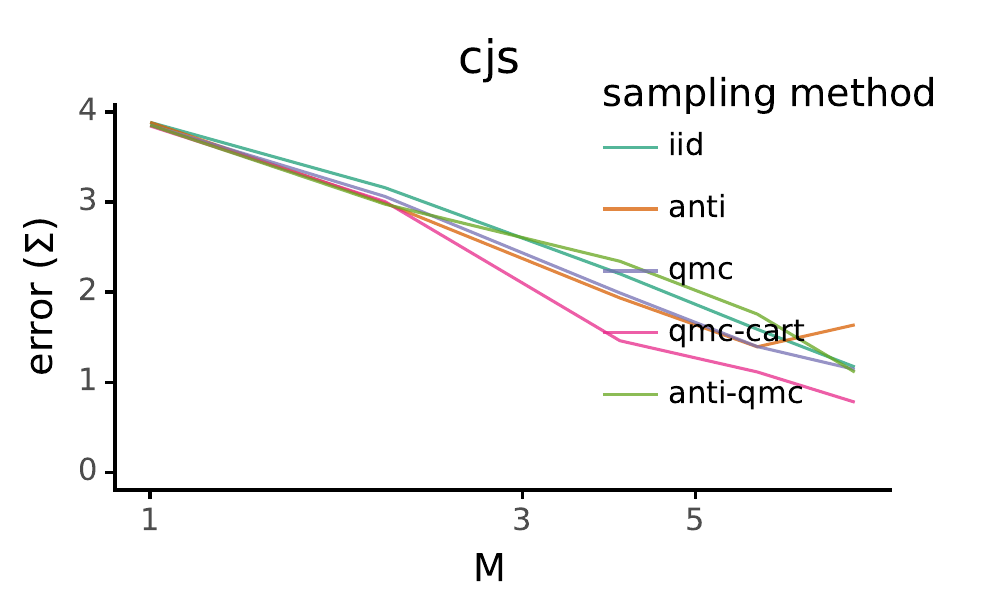}\includegraphics[viewport=0bp 0bp 310bp 183.2105bp,clip,width=0.33\columnwidth]{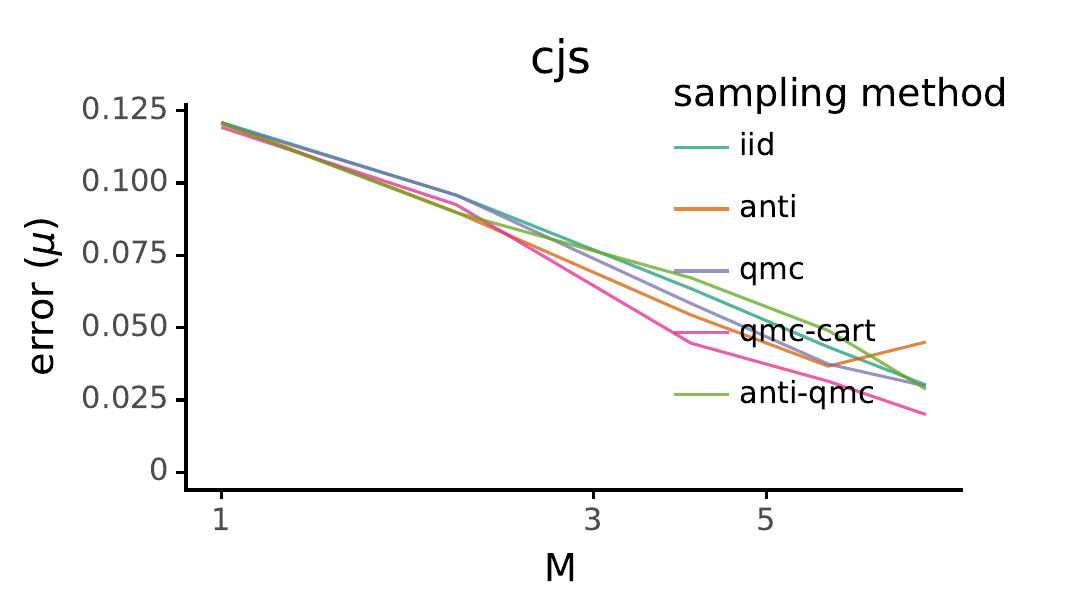}

\caption{\textbf{Across all models, improvements in likelihood bounds correlate
strongly with improvements in posterior accuracy}. Better sampling
methods can improve both. First row: the common case where a simple
Gaussian posterior is already very accurate. Here, only a tiny improvement
in the ELBO is possible, and improvement in the posterior is below
the level detectable when comparing to MCMC. The other rows show cases
where larger improvements are possible. \textbf{\label{fig:errors}}}
\end{figure}

\section{Conclusions\label{sec:Conclusions}}

Recent work has studied the use of improved Monte Carlo estimators
for better variational likelihood bounds. The central insight of this
paper is that an \emph{approximate posterior} can be constructed from
an estimator using a \emph{coupling}. This posterior's divergence
is bounded by the looseness of the likelihood bound. We suggest a
framework of ``estimator-coupling'' pairs to make this coupling
easy to construct for many estimators.

Several recent works have viewed Monte Carlo VI bounds through the
lens of augmented VI \citep{Bachman_2015_TrainingDeepGenerative,Cremer_2017_ReinterpretingImportanceWeightedAutoencoders,Naesseth_2018_VariationalSequentialMonte,Domke_2018_ImportanceWeightingVariational,Lawson_2019_EnergyInspiredModelsLearning}.
These establish connections between particular likelihood estimators
and approximate posteriors through extended distributions. They differ
from our work primarily in the \textquotedblleft direction\textquotedblright{}
of the construction, the generality, or both. Most of the work uses
the following reasoning, which starts with an approximate posterior
and arrives at a tractable likelihood estimator. Take a Monte Carlo
method (e.g. self-normalized importance sampling) to approximately
sample from $p\pp{z|x}$. Call the approximation $q\pp z,$ but suppose
it is not tractable to evaluate $q(z)$. A tractable likelihood estimator
can be obtained as $R(\omega,z)=p(z,x)p(\omega\vert z,x)/q(\omega,z)$,
where $q(\omega,z)$ is the (tractable) joint density over the \textquotedblleft internal
randomness\textquotedblright{} $\omega$ of the Monte Carlo procedure
and the final sample $z$, and $p(\omega|z,x)$ is a conditional distribution
used to extend the target to also contain these variables. Different
choices for the Monte Carlo procedure $q(\omega,z)$ and the target
extension $p(\omega|z,x)$ lead to different estimators. To arrive
at a particular existing likelihood estimator $R$ requires careful
estimator-specific choices and derivations. In contrast, our work
proceeds in the opposite direction: we start with an arbitrary estimator
$R$ and show (via coupling) how to find a corresponding Monte Carlo
procedure $q(\omega,z)$. We also provide a set of tools to \textquotedblleft automatically\textquotedblright{}
find couplings for many types of estimators.

The idea of using extended state-spaces is common in (Markov chain)
Monte Carlo inference methods \citep{Finke_2015_ExtendedStateSpaceConstructions,Andrieu_2010_ParticleMarkovchain,Neal_1998_AnnealedImportanceSampling,Neal_2005_HamiltonianImportanceSampling}.
These works also identify extended target distributions that admit
$p(z,x)$ as a marginal, i.e., a coupling in our terminology. By running
an Markov chain Monte Carlo (MCMC) sampler on the extended target
and dropping the auxiliary variables, they obtain an MCMC sampler
for $p(z|x)$. Our work can be seen as the VI analogue of these MCMC
methods. Other recent work \citep{Burda_2015_ImportanceWeightedAutoencoders,Maddison_2017_FilteringVariationalObjectives,Cremer_2017_ReinterpretingImportanceWeightedAutoencoders,Naesseth_2018_VariationalSequentialMonte,Le_2018_AutoEncodingSequentialMonte,Domke_2018_ImportanceWeightingVariational,ChristianAnderssonNaesseth_2018_Machinelearningusing,Ren_2019_AdaptiveAntitheticSampling}
that has explored the connection between using estimators in variational
bounds and auxiliary variational inference \citep{Agakov_2004_AuxiliaryVariationalMethod}.
To the best of our knowledge, all of these works consider situations
in which the relevant extended state space $\pp{z,\omega}$ is known.
Thus, in these works, the estimator essentially comes with an ``obvious''
coupling distribution $a\pp{z|\omega}$. In contrast, the goal of
this paper is to consider an arbitrary estimator $R\pp{\omega}$,
where it is not obvious that a tractable coupling distribution $a\pp{z|\omega}$
even exists. This is the situation in which our framework of estimator-coupling
pairs is likely to be useful. The alternative would be manual construction
of extended state-spaces for each individual estimator.

\bibliographystyle{plainnat}
\bibliography{justindomke_zotero_betterbibtex2}
\clearpage\newpage{}

\global\long\def\B{\mathrm{\mathcal{B}}}%

\global\long\def\A{\mathrm{\mathcal{A}}}%

\global\long\def\Pmc{P^{\mathrm{MC}}}%

\clearpage{}

\section{Proofs for Main Framework (\ref{sec:The-Divide-and-Couple-Framework})}

\trivdivide*
\begin{proof}
Since $\pmc\pp{\omega,x}\geq0$ and $\pmc\pp x=\int\pmc(\omega,x)d\omega=\E_{Q(\wr)}R(\wr)=p(x)$,
it is a valid distribution. Thus, one can apply the standard ELBO
decomposition to $\qmc\pp{\omega}$ and $\pmc\pp{\omega,x}$. But
since $R=\pmc/\qmc$, it follows that $\E_{\qmc\pp{\wr}}\log\pars{\pmc\pp{\wr,x}/Q\pp{\wr}}=\E_{Q\pp{\wr}}\log R\pp{\wr}$.
\end{proof}

\divandcouple*
\begin{proof}
First, note that
\begin{eqnarray*}
\pmc\pp{z,x} & = & \int\pmc\pp{z,\omega,x}d\omega\\
 & = & \int Q\pp{\omega}R\pp{\omega}a\pp{z\vert\omega}d\omega\\
 & = & \E_{Q\pp{\wr}}R\pp{\wr}a\pp{z\vert\wr}\\
 & = & p\pp{z,x},
\end{eqnarray*}
so $\pmc\pp{z,\omega,x}$ is a valid augmentation of $p\pp{z,x}.$

Next, observe for $\pmc$ and $Q$ as defined,
\[
\frac{\pmc\pp{z,\omega,x}}{Q\pp{z,\omega}}=R\pp{\omega}.
\]

Applying the ELBO decomposition from \ref{eq:ELBO-decomp} to $Q\pp{z,\omega}$
and $\pmc\pp{z,\omega,x}$ we get that
\[
\log\pmc\pp x=\E_{Q\pp{\zr,\wr}}\bracs{\log\frac{\pmc\pp{\zr,\wr,x}}{Q\pp{\zr,\wr}}}+\KL{Q\pp{\zr,\wr}}{\pmc\pp{\zr,\wr\vert x}}.
\]
Using the observations above and the chain rule of KL-divergence means
that
\begin{eqnarray*}
\log p\pp x & = & \E_{Q\pp{\wr}}\log R\pp{\wr}+\KL{Q\pp{\zr,\wr}}{\pmc\pp{\zr,\wr\vert x}}\\
 & = & \E_{Q\pp{\wr}}\log R\pp{\wr}+\KL{Q\pp{\zr}}{\pmc\pp{\zr\vert x}}+\KL{Q\pp{\wr\vert\zr}}{\pmc\pp{\wr\vert\zr,x}}\\
 & = & \E_{Q\pp{\wr}}\log R\pp{\wr}+\KL{Q\pp{\zr}}{p\pp{\zr\vert x}}+\KL{Q\pp{\wr\vert\zr}}{\pmc\pp{\wr\vert\zr,x}}.
\end{eqnarray*}

\end{proof}
\begin{restatable}{clm1}{antithetic-running-valid-pair}Suppose that
$Q\pp{T\pp{\omega}}=Q\pp{\omega}$. Then, the antithetic estimator\label{claim:antitheticrunningvalidpair}
\[
R\pp{\omega}=\frac{p\pp{\omega,x}+p\pp{T\pp{\omega},x}}{2Q\pp{\omega}}
\]
and the coupling distribution

\begin{eqnarray*}
a\pp{z|\omega} & = & \pi\pp{\omega}\ \delta\pp{z-\omega}+\pp{1-\pi\pp{\omega}}\ \delta\pp{z-T\pp{\omega}},\\
\pi\pp{\omega} & = & \frac{p(\omega,x)}{p(\omega,x)+p(T\pp{\omega},x)}.
\end{eqnarray*}
form a valid estimator / coupling pair under $Q\pp{\omega}.$\end{restatable}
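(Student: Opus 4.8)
The plan is to verify the single requirement in the definition of a valid estimator-coupling pair, namely \ref{eq:valid_est_coupling_pair}: that $\E_{Q(\wr)}R(\wr)a(z|\wr)=p(z,x)$. Everything else — in particular that $R$ is an unbiased estimator of $p(x)$ — then follows automatically by integrating out $z$, as noted immediately after that definition. First I would record the two elementary identities $R(\omega)\pi(\omega)=p(\omega,x)/(2Q(\omega))$ and $R(\omega)(1-\pi(\omega))=p(T(\omega),x)/(2Q(\omega))$, both immediate from the formulas for $R$ and $\pi$. Multiplying by the given mixture form of $a(z\mid\omega)$ yields
\[
R(\omega)a(z\mid\omega)=\frac{p(\omega,x)}{2Q(\omega)}\,\delta(z-\omega)+\frac{p(T(\omega),x)}{2Q(\omega)}\,\delta\bigl(z-T(\omega)\bigr).
\]

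Next I would take $\E_{Q(\wr)}$ term by term. The first term is direct: $\E_{Q(\wr)}\bigl[\tfrac{p(\wr,x)}{2Q(\wr)}\delta(z-\wr)\bigr]=\int Q(\omega)\tfrac{p(\omega,x)}{2Q(\omega)}\delta(z-\omega)\,d\omega=\tfrac12 p(z,x)$. For the second term I would use the hypothesis $Q(T(\omega))=Q(\omega)$ to replace $Q(\wr)$ in the denominator by $Q(T(\wr))$, so the term equals $\E_{Q(\wr)}[g(T(\wr))]$ with $g(\omega):=\tfrac{p(\omega,x)}{2Q(\omega)}\delta(z-\omega)$. Since $T$ is a reflection about the mean of $Q$ — hence a measure-preserving involution — the density symmetry $Q\circ T=Q$ gives $T(\wr)\ed\wr$, so $\E_{Q(\wr)}[g(T(\wr))]=\E_{Q(\wr)}[g(\wr)]=\tfrac12 p(z,x)$ by the same computation as the first term. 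Summing the two halves gives $p(z,x)$, which is exactly \ref{eq:valid_est_coupling_pair}.

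It remains to note the routine sanity checks: $a(\cdot\mid\omega)$ is genuinely a probability distribution since its weights $\pi(\omega)$ and $1-\pi(\omega)$ are nonnegative (as $p\ge 0$) and sum to one, and $R(\omega)>0$ wherever $Q(\omega)>0$, again from $p\ge 0$ together with the standing assumption $p(\omega,x)+p(T(\omega),x)>0$ on the support of $Q$. The only genuinely delicate step — and the one I expect to require the most care — is the change of variables hidden in the second term, i.e.\ making precise that $\delta(z-T(\omega))$ integrates correctly against $Q(\omega)\,d\omega$. This is clean precisely because $T$ is an involutive, Lebesgue-measure-preserving map: the substitution $u=T(\omega)$ has unit Jacobian and inverse $\omega=T(u)$, and combined with $Q\circ T=Q$ this is exactly the identity $T(\wr)\ed\wr$ invoked above. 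In the measure-theoretic formulation referenced in the supplement one would instead state the antithetic hypothesis directly as $T_{\#}Q=Q$ (equivalently $T(\wr)\ed\wr$) and replace the Dirac-delta manipulation by pushing the measure $Q$ forward under $\omega\mapsto(T(\omega),\text{coin})$; the bookkeeping is identical.
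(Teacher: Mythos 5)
Your proposal is correct and follows essentially the same route as the paper's proof: substitute $R$, $\pi$, and the mixture form of $a$, then use the density invariance $Q(T(\omega))=Q(\omega)$ together with $T(\wr)\ed\wr$ under $Q$ to convert the reflected term into a copy of the direct term, yielding $p(z,x)$. The only cosmetic difference is that you evaluate the two delta terms separately (each contributing $\tfrac12 p(z,x)$) while the paper keeps them inside a single expectation, and you are slightly more explicit than the paper in justifying $T(\wr)\ed\wr$ via the unit-Jacobian involution.
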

\begin{proof}
\begin{alignat}{1}
 & \E_{Q\pp{\wr}}R\pp{\wr}a\pp{z|\wr}\nonumber \\
 & =\E_{Q\pp{\wr}}\frac{p\pp{\wr,x}+p\pp{T\pp{\wr},x}}{2Q\pp{\wr}}\pars{\pi\pp{\wr}\ \delta\pp{z-\wr}+\pp{1-\pi\pp{\wr}}\ \delta\pp{z-T\pp{\wr}}}\nonumber \\
 & =\E_{Q\pp{\wr}}\frac{p\pp{\wr,x}+p\pp{T\pp{\wr},x}}{2Q\pp{\wr}}\Bigl(\frac{p(\wr,x)}{p(\wr,x)+p(T\pp{\wr},x)}\ \delta\pp{z-\wr}\nonumber \\
 & \ \ \ \ \ \ \ \ \ \ \ \ \ \ \ \ \ \ \ \ \ \ \ \ \ \ \ \ \ \ \ \ \ \ \ \ \ \ \ \ \ \ \ \ \ \ \ \ \ \ +\frac{p\pp{T\pp{\wr},x}}{p(\wr,x)+p(T\pp{\wr},x)}\ \delta\pp{z-T\pp{\wr}}\Bigr)\nonumber \\
 & =\E_{Q\pp{\wr}}\frac{1}{2Q\pp{\wr}}\pars{p(\wr,x)\ \delta\pp{z-\wr}+p\pp{T\pp{\wr},x}\ \delta\pp{z-T\pp{\wr}}}\nonumber \\
 & =\E_{Q\pp{\wr}}\frac{1}{2}\pars{\frac{1}{Q\pp{\wr}}p(\wr,x)\ \delta\pp{z-\wr}+\frac{1}{Q\pp{\wr}}p\pp{T\pp{\wr},x}\ \delta\pp{z-T\pp{\wr}}}\nonumber \\
 & =\E_{Q\pp{\wr}}\frac{1}{2}\pars{\frac{1}{Q\pp{\wr}}p(\wr,x)\ \delta\pp{z-\wr}+\frac{1}{Q\pp{T\pp{\wr}}}p\pp{T\pp{\wr},x}\ \delta\pp{z-T\pp{\wr}}}\label{eq:antithetic-proof-eq-1}\\
 & =\E_{Q\pp{\wr}}\frac{1}{2}\pars{\frac{1}{Q\pp{\wr}}p(\wr,x)\ \delta\pp{z-\wr}+\frac{1}{Q\pp{\wr}}p\pp{\wr,x}\ \delta\pp{z-\wr}}\label{eq:antithetic-proof-eq-2}\\
 & =\E_{Q\pp{\wr}}\pars{\frac{1}{Q\pp{\wr}}p(\wr,x)\ \delta\pp{z-\wr}}\nonumber \\
 & =\int\pars{p(\omega,x)\ \delta\pp{z-\omega}}d\omega\nonumber \\
 & =p\pp{z,x}\nonumber 
\end{alignat}
Here, \ref{eq:antithetic-proof-eq-1} follows from the fact that $Q\pp{T\pp{\omega}}=Q\pp{\omega}$
while \ref{eq:antithetic-proof-eq-2} follows from the fact that $T\pp{\wr}$
is equal in distribution to $\wr$ when $\wr\sim Q$.
\end{proof}
\clearpage{}

\section{Measure-Theoretic Details}

The content of this section draws from \citep{Gray_2011_Entropyinformationtheory,Klenke_2014_ProbabilityTheoryComprehensive}.
We do not use sans-serif font in this section.

\subsection{Measures, KL, ELBO}

Let $\pp{\Omega,\mathcal{A}}$ be a measurable space and $Q$ and
$P$ be two measures over it. Write $Q\ll P$ when $Q$ is absolutely
continuous with respect to $P$, i.e. when $P(A)=0\Rightarrow Q(A)=0$.
Whenever $Q\ll P$, there exists measurable $f:\Omega\rightarrow\R$
such that 
\[
Q\pp A=\int_{A}f\ dP.
\]
The function $f$ is the\emph{ Radon-Nikodym derivative,} denoted
as $f=\frac{dQ}{dP}$. Write $Q\sim P$ when $Q\ll P$ and $P\ll Q$;
in this case $\frac{dQ}{dP}=\pars{\frac{dP}{dQ}}^{-1}$ $Q$-a.e.

For two probability measures $Q\ll P$, the KL-divergence is
\[
\KL QP=\int\log\pars{\frac{dQ}{dP}}Q\pp{d\omega}=\E_{Q\pp{\omega}}\log\frac{dQ}{dP}.
\]

For a probability measure $Q$ and measure $\hat{P}$ (not necessarily
a probability measure) with $Q\ll\hat{P}$, the evidence lower bound
or ``ELBO'' is
\[
\elbo Q{\hat{P}}=-\E_{Q}\log\frac{dQ}{d\hat{P}}.
\]

When $Q\sim\hat{P},$ we can equivalently write $\elbo Q{\hat{P}}=\E_{Q}\log\frac{d\hat{P}}{dQ}$.

Let $(Z,\B)$ be a measurable space. Let $P_{z,x}$ be an unnormalized
distribution over $z$ representing the joint distribution over $\pp{z,x}$
for a fixed $x.$ Write either $P_{z,x}(B)$ or $P_{z,x}(z\in B)$
for the measure of $B\in\B$. Define 
\[
p(x)=P_{z,x}(Z)
\]
 to be the total measure or the normalization constant of $P_{z,x}$,\marginpar{}
and write $P_{z|x}(z\in B):=P_{z,x}(z\in B)/p(x)$ for the corresponding
normalized measure, which represents the conditional distribution
of $z$ given $x$. Henceforth, $x$ will \emph{always} denote a fixed
constant, and, for any $u$, the measure $P_{u,x}$ is unnormalized
with total measure $p(x)$.

The following gives a measure-theoretic version of the ``ELBO decomposition''
from \ref{eq:ELBO-decomp}.

\begin{restatable}{lem1}{decomp-measures}

\label{lem:ELBO-decomposition-measures}Given a probability measure
$Q$ and a measure $P_{z,x}$ on $(Z,\B)$, whenever $Q\ll P_{z,x}$
we have the following \textquotedbl ELBO decomposition\textquotedbl :
\[
\log p(x)=\elbo Q{P_{z,x}}+\KL Q{P_{z|x}}.
\]

\end{restatable}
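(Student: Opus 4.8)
The plan is to unfold the definitions of $\elbo{\cdot}{\cdot}$ and $\KL{\cdot}{\cdot}$ stated just above, and to relate the two Radon--Nikodym derivatives $\tfrac{dQ}{dP_{z,x}}$ and $\tfrac{dQ}{dP_{z|x}}$ through the scalar $p(x)$. First I would record that the hypothesis $Q \ll P_{z,x}$ with $Q$ a probability measure, together with the standing assumption that $P_{z,x}$ is an unnormalized distribution, forces $0 < p(x) < \infty$: if $p(x) = P_{z,x}(Z)$ were $0$, then $P_{z,x}$ would be the zero measure and absolute continuity would give $Q(Z)=0$, which is impossible. Hence $P_{z|x} = p(x)^{-1} P_{z,x}$ is a genuine probability measure, and $Q \ll P_{z,x}$ if and only if $Q \ll P_{z|x}$ (scaling a measure by a positive finite constant does not change its null sets), so both the ELBO and the KL-divergence on the right-hand side are well-defined.

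The key step is the identity
\[
\frac{dQ}{dP_{z|x}} \;=\; p(x)\,\frac{dQ}{dP_{z,x}} \qquad Q\text{-a.e.},
\]
which follows from uniqueness of the Radon--Nikodym derivative: writing $f = \tfrac{dQ}{dP_{z,x}}$, for every $A \in \B$ we have $\int_A p(x) f \, dP_{z|x} = \int_A f\, dP_{z,x} = Q(A)$, so $p(x) f$ is a density of $Q$ with respect to $P_{z|x}$. (One also notes, as usual, that $f > 0$ holds $Q$-a.e., since $Q(\{f=0\}) = \int_{\{f=0\}} f\, dP_{z,x} = 0$, so all the logarithms below are $Q$-a.e.\ defined.)

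Then I would substitute into the definition of the KL-divergence and pull out the constant:
\[
\KL{Q}{P_{z|x}} \;=\; \E_Q \log \frac{dQ}{dP_{z|x}} \;=\; \E_Q \log\!\Big(p(x)\,\frac{dQ}{dP_{z,x}}\Big) \;=\; \log p(x) + \E_Q \log \frac{dQ}{dP_{z,x}} \;=\; \log p(x) - \elbo{Q}{P_{z,x}}.
\]
Here splitting the logarithm of the product and extracting $\log p(x)$ from the expectation is legitimate because $\log p(x)$ is finite; the remaining expectation $\E_Q \log \tfrac{dQ}{dP_{z,x}}$ may equal $+\infty$, but then the identity simply holds in $[-\infty,+\infty]$ with both sides infinite. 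Rearranging gives $\log p(x) = \elbo{Q}{P_{z,x}} + \KL{Q}{P_{z|x}}$, as claimed.

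There is essentially no serious obstacle: the only point requiring care is the preliminary bookkeeping — confirming $p(x) \in (0,\infty)$ so that $P_{z|x}$ and all quantities are meaningful, and observing that the two absolute-continuity relations are equivalent. Once that is in place the proof is the single-line manipulation of Radon--Nikodym derivatives above.
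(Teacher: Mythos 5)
Your proof is correct and follows essentially the same route as the paper's: establish $\frac{dQ}{dP_{z|x}} = p(x)\,\frac{dQ}{dP_{z,x}}$ (the paper verifies this by the same integral computation you give), then expand $\KL{Q}{P_{z|x}}$ and rearrange. The extra bookkeeping you include on $p(x)\in(0,\infty)$ and the a.e.\ positivity of the density is a welcome but minor refinement of the same argument.
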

\begin{proof}
It is easy to check that $\frac{dQ}{dP_{z|x}}=p(x)\frac{dQ}{dP_{z,x}}$.\footnote{$\int_{A}p(x)\frac{dQ}{dP_{z,x}}dP_{z|x}=\int_{A}\frac{dQ}{dP_{z,x}}dP_{z,x}=Q(z\in A).$}
Then

\begin{multline*}
\KL Q{P_{z|x}}=\E_{Q}\log\frac{dQ}{dP_{z|x}}=\E_{Q}\log\pars{p(x)\frac{dQ}{dP_{z,x}}}\\
=\log p(x)+\E_{Q}\log\frac{dQ}{dP_{z,x}}=\log p(x)-\elbo Q{P_{z,x}}.
\end{multline*}

Rearranging, we see the ELBO decomposition.
\end{proof}

\subsection{Conditional, Marginal, and Joint Distributions}

\paragraph{Standard Borel and product spaces}

We will assume that each relevant measure space is a \emph{standard
Borel space}, that is, isomorphic to a Polish space (a separable complete
metric space) with the Borel \emph{$\sigma$}-algebra. Standard Borel
spaces capture essentially all spaces that arise in practice in probability
theory \citep{Klenke_2014_ProbabilityTheoryComprehensive}. Let $(\Omega,\A)$
and $(Z,\B)$ be standard Borel spaces. The \emph{product space} \emph{$(\Omega\times Z,\A\otimes\B)$
}is the measurable space on $\Omega\times Z$ with $\A\otimes\B=\{A\times B:A\in\A,B\in\B\}$,
and is also a standard Borel space.

\paragraph{Conditional distributions}

We require tools to augment a distribution with a new random variable
and define the conditional distribution of one random variable with
respect to another. We begin with a Markov kernel, which we will use
to augment a distribution $P_{\omega}$ with a new random variable
to obtain a joint distribution $P_{\omega,z}$.

Formally, a \emph{Markov kernel \citep[Def. 8.24]{Klenke_2014_ProbabilityTheoryComprehensive}}
from $(\Omega,\A)$ to $(Z,\B)$ is a mapping $a(B\mid\omega)$ that
satisfies:
\begin{enumerate}
\item For fixed $\omega$, $a(B\mid\omega)$ is a probability measure on
$(Z,\B)$.
\item For fixed $B$, $a(B\mid\omega)$ is an $\A$-measurable function
of $\omega$.
\end{enumerate}
Let $P_{\omega}$ be a measure on $(\Omega,A)$ and $a(B\mid\omega)$
a Markov kernel from $(\Omega,A)$ to $(Z,\B)$. These define a unique
measure $P_{\omega,z}$ over the product space defined as

\[
P_{\omega,z}(\omega\in A,z\in B)=\int_{A}a(z\in B\mid\omega)P_{\omega}(d\omega),
\]
such that if $P_{\omega}$ is a probability measure, then $P_{\omega,z}$
is also a probability measure \citep[Cor. 14.23]{Klenke_2014_ProbabilityTheoryComprehensive}.

Alternately, we may have a joint distribution $P_{\omega,z}$ (a
measure on the product space $(\Omega\times Z,\A\otimes\B)$) and
want to define the marginals and conditionals. The \emph{marginal
distribution $P_{z}$} is the measure on $(Z,\B)$ with $P_{z}(z\in B)=P_{\omega,z}(\omega\in\Omega,z\in B)$,
and the marginal $P_{\omega}$ is defined analogously. Since the product
space is standard Borel \citep[Thm 14.8]{Klenke_2014_ProbabilityTheoryComprehensive},
there exists a \emph{regular conditional distribution} $P_{\omega|z}(\omega\in A\mid z)$
\citep[Def. 8.27, Thm. 8.36]{Klenke_2014_ProbabilityTheoryComprehensive},
which is a Markov kernel (as above) and satisfies the following for
all $A\in\A$, $B\in\B$:

\[
P_{\omega,z}(\omega\in A,z\in B)=\int_{B}P_{\omega|z}(\omega\in A\mid z)P_{z}(dz).
\]
The regular conditional distribution is unique up to null sets of
$P_{z}$.

The conditional distribution $P_{z|\omega}$ is defined analogously.

\subsection{KL Chain Rule}

Let $P_{\omega,z}$ and $Q_{\omega,z}$ be two probability measures
on the standard Borel product space $(\Omega\times Z,\A\otimes\B)$
with $Q_{\omega,z}\ll P_{\omega,z}$. The \emph{conditional} \emph{KL-divergence
}$\KL{Q_{\omega|z}}{P_{\omega|z}}$ is defined\footnote{While this (standard) notation for the divergence refers to ``$Q_{\omega|z}$''
it is a function of the joint $Q_{\omega,z}$ and similiarly for $P_{\omega,z}$.} as \citep[Ch. 5.3]{Gray_2011_Entropyinformationtheory}

\[
\KL{Q_{\omega|z}}{P_{\omega|z}}=\E_{Q_{\omega,z}}\pars{\frac{dQ_{\omega|z}}{dP_{\omega|z}}},
\]

where $\frac{dQ_{\omega|z}}{dP_{\omega|z}}(\omega|z)=\pars{\frac{dQ_{\omega,z}}{dP_{\omega,z}}(\omega,z)}\pars{\frac{dQ_{z}}{dP_{z}}(z)}^{-1}$
when $\frac{dQ_{z}}{dP_{z}}(z)>0$ and $1$ otherwise\marginpar{}.
When all densities exist, $\frac{dQ_{\omega|z}}{dP_{\omega|z}}(\omega|z)=\frac{q(\omega|z)}{p(\omega|z)}.$
Under the same conditions as above, we have the \emph{chain rule for
KL-divergence \citep[Lem. 5.3.1]{Gray_2011_Entropyinformationtheory}}

\[
\KL{Q_{\omega,z}}{P_{\omega,z}}=\KL{Q_{\omega}}{P_{\omega}}+\KL{Q_{\omega|z}}{P_{\omega|z}}=\KL{Q_{z}}{P_{z}}+\KL{Q_{z|\omega}}{P_{z|\omega}}.
\]

\subsection{Our Results}

Now consider a strictly positive estimator $R(\omega)$ over probability
space $(\Omega,\A,Q_{\omega})$ such that $\E_{Q_{\omega}}R=\int R\,dQ_{\omega}=p(x).$
We wish to define $\Pmc_{\omega,x}$ so that $\frac{d\Pmc_{\omega,x}}{dQ_{\omega}}=R$,
to justify interpreting $\E_{Q_{\omega}}\log R$ as an ELBO. This
is true when $R=\frac{d\Pmc_{\omega,x}}{dQ_{\omega}}$ is the Radon-Nikodym
derivative, i.e., a change of measure from $Q_{\omega}$ to $\Pmc_{\omega,x}$,
and is strictly positive. This leads to the definition
\[
\Pmc_{\omega,x}(\omega\in A)=\int_{A}R\,dQ_{\omega}.
\]

\begin{restatable}{lem1}{le-measures}

\label{lem:le-measures}Let $R(\omega)$ be an almost-everywhere positive
random variable on $(\Omega,\A,Q_{\omega})$ with $\E_{Q_{\omega}}R=p(x)$
and define $\Pmc_{\omega,x}(\omega\in A)=\int_{A}R\,dQ_{\omega}$.
The ELBO decomposition applied to $Q_{\omega}$ and $\Pmc_{\omega,x}$
gives: 
\[
\log p(x)=\E_{Q_{\omega}}\log R+\KL{Q_{\omega}}{\Pmc_{\omega|x}}.
\]

\end{restatable}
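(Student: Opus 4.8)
The plan is to obtain the identity as a direct instance of the measure-theoretic ELBO decomposition of \ref{lem:ELBO-decomposition-measures}, taking the measure $\Pmc_{\omega,x}$ in the role of $P_{z,x}$ and the probability measure $Q_{\omega}$ in the role of $Q$ (so the variable written ``$z$'' there is ``$\omega$'' here). First I would check that $\Pmc_{\omega,x}$ is a legitimate finite non-negative measure: non-negativity is immediate since $R\geq 0$ holds $Q_{\omega}$-a.e., countable additivity follows from monotone convergence, and the total mass is $\Pmc_{\omega,x}(\Omega)=\int_{\Omega}R\,dQ_{\omega}=p(x)$, so $\Pmc_{\omega|x}=\Pmc_{\omega,x}/p(x)$ is the corresponding normalized probability measure. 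By the very definition $\Pmc_{\omega,x}(A)=\int_{A}R\,dQ_{\omega}$, the function $R$ is the Radon--Nikodym derivative $\frac{d\Pmc_{\omega,x}}{dQ_{\omega}}$.

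Next I would establish the absolute continuity needed so that the right-hand side is even defined, namely $Q_{\omega}\ll\Pmc_{\omega,x}$: if $\Pmc_{\omega,x}(A)=\int_{A}R\,dQ_{\omega}=0$ while $R>0$ holds $Q_{\omega}$-a.e., then $Q_{\omega}(A)=0$. The reverse direction $\Pmc_{\omega,x}\ll Q_{\omega}$ is immediate from the definition, so in fact $Q_{\omega}\sim\Pmc_{\omega,x}$, and therefore $\frac{dQ_{\omega}}{d\Pmc_{\omega,x}}=\big(\frac{d\Pmc_{\omega,x}}{dQ_{\omega}}\big)^{-1}=1/R$ holds $Q_{\omega}$-a.e.

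Finally I would apply \ref{lem:ELBO-decomposition-measures} to get $\log p(x)=\elbo{Q_{\omega}}{\Pmc_{\omega,x}}+\KL{Q_{\omega}}{\Pmc_{\omega|x}}$ and simplify the ELBO term using $Q_{\omega}\sim\Pmc_{\omega,x}$ together with the density just computed:
\[
\elbo{Q_{\omega}}{\Pmc_{\omega,x}}=\E_{Q_{\omega}}\log\frac{d\Pmc_{\omega,x}}{dQ_{\omega}}=\E_{Q_{\omega}}\log R,
\]
which yields the claimed decomposition. The only genuine subtlety --- the ``hard part'', modest as it is --- is the direction of absolute continuity: both $\KL{Q_{\omega}}{\Pmc_{\omega|x}}$ and $\elbo{Q_{\omega}}{\Pmc_{\omega,x}}$ require $Q_{\omega}\ll\Pmc_{\omega,x}$, and this is exactly where the hypothesis that $R$ is \emph{almost-everywhere positive} (not merely non-negative) is used; without it $\Pmc_{\omega,x}$ need not dominate $Q_{\omega}$ and the right-hand side would be ill-posed. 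One can also note in passing that $\E_{Q_{\omega}}\log R$ is a well-defined element of $[-\infty,\infty)$, since $(\log R)^{+}\leq R$ and $\E_{Q_{\omega}}R=p(x)<\infty$, so the identity remains meaningful even when the divergence is infinite.
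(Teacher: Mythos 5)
Your proposal is correct and follows essentially the same route as the paper's own proof: identify $R$ as the Radon--Nikodym derivative $\frac{d\Pmc_{\omega,x}}{dQ_{\omega}}$, use the almost-everywhere positivity of $R$ to get $Q_{\omega}\sim\Pmc_{\omega,x}$ so that $\elbo{Q_{\omega}}{\Pmc_{\omega,x}}=\E_{Q_{\omega}}\log R$, and then invoke \ref{lem:ELBO-decomposition-measures}. You simply spell out a few routine verifications (finiteness of the measure, the direction of absolute continuity, integrability of $\log R$) that the paper leaves implicit.
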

\begin{proof}
By construction, $R=\frac{d\Pmc_{\omega,x}}{dQ_{\omega}}$ and $\Pmc_{\omega,x}\sim Q_{\omega}$,
since $R$ is positive $Q$-a.e. Therefore $\E_{Q_{\omega}}\log R=\E_{Q_{\omega}}\log\frac{d\Pmc_{\omega,x}}{dQ}=\elbo Q{\Pmc_{\omega,x}}$,
where the final equality uses the definition of the ELBO for the case
when $\Pmc_{\omega,x}\sim Q_{\omega}$. Now apply \ref{lem:ELBO-decomposition-measures}
and the fact that $\elbo Q{\Pmc_{\omega,x}}=\E_{Q_{\omega}}\log R.$
\end{proof}
\ref{lem:le-measures} provides distributions $Q_{\omega}$ and $\Pmc_{\omega,x}$
so that $\E_{Q_{\omega}}\log R=\elbo{Q_{\omega}}{\Pmc_{\omega,x}}$,
which justifies maximizing the likelihood bound $\E_{Q_{\omega}}\log R$
as minimzing the KL-divergence from $Q_{\omega}$ to the ``target''
$\Pmc_{\omega|x}$. However, neither distribution contains the random
variable $z$ from the original target distribution $P_{z|x}$, so
the significance of \ref{lem:le-measures} on its own is unclear.
We now describe a way to couple $\Pmc_{\omega,x}$ to the original
target distribution using a Markov kernel $a(z\in B\mid\omega)$.
\begin{defn}
A valid \emph{estimator-coupling pair} with respect to target distribution
$P_{z,x}$ is an estimator $R(\omega)$ on probability space $(\Omega,\A,Q_{\omega})$
and Markov kernel $a(z\in B\mid\omega)$ from $(\Omega,\A)$ to $(Z,\B)$
such that:
\end{defn}

\[
\E_{Q_{\omega}}R(\omega)a(z\in B\mid\omega)=P_{z,x}(z\in B).
\]

\begin{restatable}{lem1}{pmc-omega-z-x-measures}

\label{lem:pmc-omega-z-x-measures}Assume $R(\omega)$ and $a(z\in B\mid\omega)$
are a valid estimator-coupling pair with respect to target $P_{z,x}$,
and define 
\[
\Pmc_{\omega,z,x}(\omega\in A,z\in B)=\int_{A}a(z\in B\mid\omega)\,R(\omega)\,Q_{\omega}(d\omega).
\]
 Then $\Pmc_{\omega,z,x}$ admits $P_{z,x}$ as a marginal, i.e.,
$\Pmc_{z,x}(z\in B)=P_{z,x}(z\in B)$.

\end{restatable}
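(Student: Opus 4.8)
The plan is to recognize that the claimed marginalization falls out directly from the definitions, once one checks that $\Pmc_{\omega,z,x}$ is a genuine measure. First I would recall, from the construction preceding \ref{lem:le-measures}, that
\[
\Pmc_{\omega,x}(\omega\in A):=\int_A R\,dQ_\omega
\]
is a finite measure on $(\Omega,\A)$ with total mass $\Pmc_{\omega,x}(\Omega)=\E_{Q_\omega}R=p(x)$; this last identity is just the valid estimator-coupling pair equation with $B=Z$, since $a(z\in Z\mid\omega)=1$, giving $\E_{Q_\omega}R(\omega)=P_{z,x}(z\in Z)=p(x)$.

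Next I would verify that the formula in the statement really does define a measure on the product space $(\Omega\times Z,\A\otimes\B)$. The integrand $\omega\mapsto a(z\in B\mid\omega)\,R(\omega)$ is $\A$-measurable by property~2 of a Markov kernel together with measurability of $R$, so the integral is well-defined for every $A\in\A$, $B\in\B$. Assuming $p(x)>0$ (the case of interest), $p(x)^{-1}\Pmc_{\omega,x}$ is a probability measure on $(\Omega,\A)$, so by the Markov-kernel construction cited earlier (Klenke, Cor.~14.23) it together with the kernel $a$ induces a unique probability measure on $(\Omega\times Z,\A\otimes\B)$; rescaling by $p(x)$ gives the finite measure
\[
\Pmc_{\omega,z,x}(\omega\in A,z\in B)=\int_A a(z\in B\mid\omega)\,R(\omega)\,Q_\omega(d\omega),
\]
which is exactly the object defined in the statement.

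Finally I would read off the $z$-marginal by taking $A=\Omega$:
\[
\Pmc_{z,x}(z\in B)=\Pmc_{\omega,z,x}(\omega\in\Omega,z\in B)=\int_\Omega a(z\in B\mid\omega)\,R(\omega)\,Q_\omega(d\omega)=\E_{Q_\omega}\bracs{R(\omega)\,a(z\in B\mid\omega)},
\]
and the right-hand side equals $P_{z,x}(z\in B)$ by the very definition of a valid estimator-coupling pair with respect to $P_{z,x}$. Since this holds for all $B\in\B$, the measures $\Pmc_{z,x}$ and $P_{z,x}$ coincide, which is the claim.

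I do not expect a real obstacle here: the statement is a one-line consequence of the coupling definition. The only point needing care is the measure-theoretic bookkeeping in the second step — confirming that integrating a Markov kernel against the (non-probability) finite measure $\Pmc_{\omega,x}$ yields an honest measure on the product space, rather than merely a set function — and that is dispatched by normalizing $\Pmc_{\omega,x}$ to a probability measure, invoking the cited existence/uniqueness result for Markov kernels, and rescaling by $p(x)$.
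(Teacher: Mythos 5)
Your proposal is correct and follows essentially the same route as the paper: evaluate $\Pmc_{\omega,z,x}$ at $A=\Omega$, recognize the resulting integral as $\E_{Q_\omega}\bigl[R(\omega)\,a(z\in B\mid\omega)\bigr]$, and conclude by the definition of a valid estimator-coupling pair. The additional bookkeeping you include (total mass $p(x)$ and well-definedness of the product measure via normalization and the cited Markov-kernel construction) is sound but not part of the paper's one-line proof, which takes that setup for granted from its earlier measure-theoretic preliminaries.
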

\begin{proof}
We have

\[
\begin{aligned}\Pmc_{z,x}(z\in B) & =\Pmc_{\omega,z,x}(\omega\in\Omega,z\in B)\\
 & =\int_{\Omega}a(z\in B\mid\omega)R(\omega)Q_{\omega}(d\omega).\\
 & =\E_{Q_{\omega}}R(\omega)a(z\in B\mid\omega)\\
 & =P_{z,x}(z\in B).
\end{aligned}
\]

The second line uses the definition of $\Pmc_{\omega,z,x}$ . The
last line uses the definition of a valid estimator-coupling pair.
\end{proof}
\begin{thm}
Let $P_{z,x}$ be an unnormalized distribution with normalization
constant $p(x)$. Assume $R(\omega)$ and $a(z\in B\mid\omega)$ are
a valid estimator-coupling pair with respect to $P_{z,x}$. Define
$\Pmc_{\omega,z,x}$ as in \ref{lem:pmc-omega-z-x-measures} and define
\textup{$Q_{\omega,z}(\omega\in A,z\in B)=\int_{A}a(z\in B\mid\omega)\,Q_{\omega}(d\omega)$.
}Then
\end{thm}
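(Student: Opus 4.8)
The plan is to prove this exactly as the measure-theoretic analogue of \ref{thm:div-and-couple}: apply the ELBO decomposition of \ref{lem:ELBO-decomposition-measures} on the product space $(\Omega\times Z,\A\otimes\B)$ to the pair $Q_{\omega,z}$, $\Pmc_{\omega,z,x}$, and then peel the joint divergence apart with the KL chain rule. The crucial preliminary step is to identify the Radon--Nikodym derivative $\tfrac{d\Pmc_{\omega,z,x}}{dQ_{\omega,z}}$. First I would observe that for every measurable rectangle $A\times B$,
\[
\Pmc_{\omega,z,x}(\omega\in A,z\in B)=\int_{A}a(z\in B\mid\omega)\,R(\omega)\,Q_{\omega}(d\omega)=\int_{A\times B}R\,dQ_{\omega,z},
\]
the second equality being the defining property of the augmented measure $Q_{\omega,z}(\omega\in A,z\in B)=\int_{A}a(z\in B\mid\omega)\,Q_{\omega}(d\omega)$ together with the fact that $R$ depends on $\omega$ only. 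Since rectangles form a generating $\pi$-system for $\A\otimes\B$, this extends to all measurable sets and shows $R(\omega)=\tfrac{d\Pmc_{\omega,z,x}}{dQ_{\omega,z}}$ $Q_{\omega,z}$-a.e.; because $R$ is positive $Q_{\omega}$-a.e., hence $Q_{\omega,z}$-a.e., in fact $Q_{\omega,z}\sim\Pmc_{\omega,z,x}$. Taking $A=\Omega$, $B=Z$ in the same display gives $\Pmc_{\omega,z,x}(\Omega\times Z)=\E_{Q_{\omega}}R=p(x)$, so $\Pmc_{\omega,z,x}$ is unnormalized with constant $p(x)$, exactly the situation \ref{lem:ELBO-decomposition-measures} needs.

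Next I would apply \ref{lem:ELBO-decomposition-measures} to $Q_{\omega,z}$ and $\Pmc_{\omega,z,x}$ (legitimate since $Q_{\omega,z}\ll\Pmc_{\omega,z,x}$), obtaining
\[
\log p(x)=\elbo{Q_{\omega,z}}{\Pmc_{\omega,z,x}}+\KL{Q_{\omega,z}}{\Pmc_{\omega,z|x}}.
\]
Using $Q_{\omega,z}\sim\Pmc_{\omega,z,x}$ and the derivative computed above, the ELBO term is $\E_{Q_{\omega,z}}\log\tfrac{d\Pmc_{\omega,z,x}}{dQ_{\omega,z}}=\E_{Q_{\omega,z}}\log R(\omega)=\E_{Q_{\omega}}\log R$, since $R$ is a function of $\omega$ alone and $Q_{\omega}$ is the $\omega$-marginal of $Q_{\omega,z}$. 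For the divergence term I would invoke the KL chain rule, conditioning on $z$ first (the regular conditional distributions exist because the spaces are standard Borel): $\KL{Q_{\omega,z}}{\Pmc_{\omega,z|x}}=\KL{Q_{z}}{\Pmc_{z|x}}+\KL{Q_{\omega|z}}{\Pmc_{\omega|z,x}}$.

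Finally I would use \ref{lem:pmc-omega-z-x-measures}, which gives $\Pmc_{z,x}=P_{z,x}$; since both are unnormalized measures with the same constant $p(x)$, their normalized conditionals coincide, $\Pmc_{z|x}=P_{z|x}$, so $\KL{Q_{z}}{\Pmc_{z|x}}=\KL{Q_{z}}{P_{z|x}}$. Assembling the three pieces yields
\[
\log p(x)=\E_{Q_{\omega}}\log R+\KL{Q_{z}}{P_{z|x}}+\KL{Q_{\omega|z}}{\Pmc_{\omega|z,x}},
\]
which is the assertion. I expect the only real obstacle to be the bookkeeping in the first step: verifying carefully that $\tfrac{d\Pmc_{\omega,z,x}}{dQ_{\omega,z}}$ on the product space is the function $(\omega,z)\mapsto R(\omega)$, via the kernel/disintegration identity and $\pi$-system extension, and checking the absolute-continuity and equivalence hypotheses required to invoke \ref{lem:ELBO-decomposition-measures} and the chain rule. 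Once that is secured, everything else is a direct substitution of the earlier lemmas.
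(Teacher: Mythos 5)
Your proof is correct, and it takes a slightly but genuinely different route from the paper's own measure-theoretic argument. The paper first invokes \ref{lem:le-measures} to get $\log p(x)=\E_{Q_{\omega}}\log R+\KL{Q_{\omega}}{\Pmc_{\omega|x}}$ and then needs \emph{two} applications of the KL chain rule: one to pass from the $\omega$-marginal to the joint, which requires arguing that the conditional divergence $\KL{Q_{z|\omega}}{\Pmc_{z|\omega,x}}$ is zero (precisely because $\frac{d\Pmc_{\omega,z,x}}{dQ_{\omega,z}}=R$ depends on $\omega$ alone), and a second application conditioning on $z$, combined with $\Pmc_{z|x}=P_{z|x}$ from \ref{lem:pmc-omega-z-x-measures}. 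You instead establish $\frac{d\Pmc_{\omega,z,x}}{dQ_{\omega,z}}=R$ up front---your rectangle identity plus $\pi$-system extension is the same computation the paper performs via Fubini's theorem for Markov kernels---then apply the ELBO decomposition of \ref{lem:ELBO-decomposition-measures} directly on the product space and use only one chain-rule application. This arrangement mirrors the density-level proof of \ref{thm:div-and-couple} rather than the paper's measure-theoretic proof; it is marginally shorter because the ``conditional divergence equals zero'' step is absorbed into the observation that the joint Radon--Nikodym derivative is a function of $\omega$ only, whereas the paper's ordering makes the role of the ``divide'' lemma \ref{lem:le-measures} explicit. The hypotheses you need---total mass $p(x)$ of $\Pmc_{\omega,z,x}$, the equivalence $Q_{\omega,z}\sim\Pmc_{\omega,z,x}$ from a.e.\ positivity of $R$, and standard Borel spaces guaranteeing the regular conditionals used in the chain rule---are all checked or available, so the argument goes through as written.
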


\[
\log p(x)=\E_{Q_{\omega}}\log R+\KL{Q_{z}}{P_{z\mid x}}+\KL{Q_{\omega|z}}{\Pmc_{\omega|z,x}}.
\]

\begin{proof}
From \ref{lem:le-measures}, we have
\[
\log p(x)=\E_{Q_{\omega}}\log R+\KL{Q_{\omega}}{\Pmc_{\omega|x}}.
\]

We will show by two applications of the KL chain rule that the second
term can be expanded as

\begin{equation}
\KL{Q_{\omega}}{\Pmc_{\omega|x}}=\KL{Q_{z}}{P_{z\mid x}}+\KL{Q_{\omega|z}}{\Pmc_{\omega|z,x}},\label{eq:KL-chain-combined}
\end{equation}

which will complete the proof.

We first apply the KL chain rule as follows:

\begin{equation}
\KL{Q_{\omega,z}}{\Pmc_{\omega,z|x}}=\KL{Q_{\omega}}{\Pmc_{\omega|x}}+\underbrace{\KL{Q_{z|\omega}}{\Pmc_{z|\omega,x}}}_{=0}.\label{eq:KL-chain-1}
\end{equation}

We now argue that the second term is zero, as indicated in the equation.
Note from above that $\frac{d\Pmc_{\omega,x}}{dQ_{\omega}}=R$. It
is also true that $\frac{d\Pmc_{\omega,z,x}}{dQ_{\omega,z}}=R$. To
see this, observe that

\[
\begin{aligned}\int_{A\times B}R(\omega)\,Q_{\omega,z}(d\omega,dz) & =\int_{A}\Big(\int_{B}R(\omega)a(z\in dz\mid\omega)\Big)\,Q_{\omega}(d\omega)\\
 & =\int_{A}R(\omega)\Big(\int_{B}a(z\in dz\mid\omega)\Big)\,Q_{\omega}(d\omega)\\
 & =\int_{A}R(\omega)\,a(z\in B\mid\omega)\,Q_{\omega}(d\omega)\\
 & =\Pmc_{\omega,z,x}(\omega\in A,z\in B).
\end{aligned}
\]

The first equality above uses a version of Fubini's theorem for Markov
kernels \citep[Thm. 14.29]{klenke2013probability}. Because $\Pmc_{\omega,x}\sim Q_{\omega}$
it also follows that $\frac{dQ_{\omega}}{d\Pmc_{\omega,x}}=\frac{dQ_{\omega,z}}{d\Pmc_{\omega,z,x}}=\frac{1}{R}$.
Since the normalized distributions $\Pmc_{\omega|x}$ and $\Pmc_{\omega,z|x}$
differ from the unnormalized counterparts by the constant factor $p(x)$,
it is straightforward to see that $\frac{dQ_{\omega}}{d\Pmc_{\omega|x}}=\frac{dQ_{\omega,z}}{d\Pmc_{\omega,z|x}}=\frac{p(x)}{R}$.\footnote{$\int_{A}\frac{p(x)}{R}d\Pmc_{\omega|x}=\int_{A}\frac{1}{R}d\Pmc_{\omega,x}=Q_{\omega}(\omega\in A)$,
and similarly for $Q_{\omega,z}$ and $\Pmc_{\omega,z,x}$.} This implies that $\frac{d\Pmc_{z|\omega,x}}{dQ_{z|\omega}}=1$ a.e.,
which in turn implies that the conditional divergence $\KL{Q_{z|\omega}}{\Pmc_{z|\omega,x}}$
is equal to zero.

We next apply the chain rule the other way and use the fact that $\Pmc_{z|x}=P_{z|x}$
(\ref{lem:pmc-omega-z-x-measures}) to see that:

\begin{equation}
\KL{Q_{\omega,z}}{\Pmc_{\omega,z|x}}=\KL{Q_{z}}{\Pmc_{z|x}}+\KL{Q_{\omega|z}}{\Pmc_{\omega|z,x}}=\KL{Q_{z}}{P_{z|x}}+\KL{Q_{\omega|z}}{\Pmc_{\omega|z,x}}.\label{eq:KL-chain-2}
\end{equation}
Combining \ref{eq:KL-chain-1} and \ref{eq:KL-chain-2} we get \ref{eq:KL-chain-combined},
as desired.

\end{proof}
\clearpage{}

\section{Specific Variance Reduction Techniques\label{subsec:Specific-Variance-Reduction}}

\subsection{IID Mean}

As a simple example, consider the IID mean. Suppose $R_{0}\pp{\omega}$
and $a_{0}\pp{z|\omega}$ are valid under $Q_{0}$. If we define
\[
Q\pp{\omega_{1},\cdots,\omega_{M},m}=\frac{1}{M}\prod_{m=1}^{M}Q_{0}\pp{\omega_{m}}
\]
(with $\wr_{1},\cdots,\wr_{M}\sim Q_{0}$ i.i.d. and $\r m$ uniform
on $\left\{ 1,\cdots,M\right\} $) then this satisfies the condition
of \ref{thm:splitting-estimators} that $\wr_{\r m}\sim Q_{0}.$ Thus
we can define $R$ and $a$ as in \ref{eq:splitting-new-R} and \ref{eq:splitting-new-a},
to get that
\begin{eqnarray*}
R\pp{\omega_{1},\cdots,\omega_{M},m} & = & R_{0}\pp{\omega_{m}}\\
a(z|\omega_{1},\cdots,\omega_{M},m) & = & a_{0}(z|\omega_{m})
\end{eqnarray*}
are a valid estimator-coupling pair under $Q.$ Note that $Q\pp{m|\omega_{1},\cdots,\omega_{M}}=\frac{1}{M}$,
so if we apply \ref{thm:rao-black-new} to marginalize out $m$, we
get that
\begin{eqnarray*}
R\pp{\omega_{1},\cdots,\omega_{M}} & = & \E_{Q\pp{\r m|\wr_{1},\cdots,\wr_{M}}}R\pars{\omega_{1},\cdots,\omega_{M},\r m}\\
 & = & \frac{1}{M}\sum_{m=1}^{M}R\pars{\omega_{1},\cdots,\omega_{M},m}\\
 & = & \frac{1}{M}\sum_{m=1}^{M}R_{0}\pp{\omega_{m}}\\
a\pp{z|\omega_{1},\cdots,\omega_{M}} & = & \frac{1}{R\pp{\omega_{1},\cdots,\omega_{M}}}\E_{Q\pp{\r m|\omega_{1},\cdots,\omega_{M}}}\bracs{R\pars{\omega_{1},\cdots,\omega_{M},\r m}a(z|\omega_{1},\cdots,\omega_{M},\r m)}\\
 & = & \frac{1}{R\pp{\omega_{1},\cdots,\omega_{M}}}\frac{1}{M}\sum_{m=1}^{M}\bracs{R\pars{\omega_{1},\cdots,\omega_{M},m}a(z|\omega_{1},\cdots,\omega_{M},m)}\\
 & = & \frac{1}{\frac{1}{M}\sum_{m=1}^{M}R_{0}\pp{\omega_{m}}}\frac{1}{M}\sum_{m=1}^{M}\bracs{R_{0}\pp{\omega_{m}}a_{0}(z|\omega_{m})}\\
 & = & \frac{\sum_{m=1}^{M}\bracs{R_{0}\pp{\omega_{m}}a_{0}(z|\omega_{m})}}{\sum_{m=1}^{M}R_{0}\pp{\omega_{m}}}.
\end{eqnarray*}

These are exactly the forms for $R\pp{\cdot}$ and $a\pp{z|\cdot}$
shown in the table.

\subsection{Stratified Sampling}

As another example, take stratified sampling. The estimator-coupling
pair can be derived similiarly to with the i.i.d. mean. For simplicity,
we assume here one sample in each strata ($N_{m}=1$). Suppose $\Omega_{1}\cdots\Omega_{M}$
partition the state-space and define
\[
Q\pp{\omega_{1},\cdots,\omega_{M},m}=\frac{1}{M}\prod_{k=1}^{M}\frac{Q_{0}\pp{\omega_{k}}I\pp{\omega_{k}\in\Omega_{m}}}{\mu\pp k}\times\mu\pp m,\ \ \ \mu\pp m=\E_{Q_{0}\pp{\wr}}I\pp{\wr\in\Omega_{m}}.
\]
This again satisfies the condition of \ref{thm:splitting-estimators},
so \ref{eq:splitting-new-R} and \ref{eq:splitting-new-a} give that
\begin{eqnarray*}
R\pp{\omega_{1},\cdots,\omega_{M},m} & = & R_{0}\pp{\omega_{m}}\\
a(z|\omega_{1},\cdots,\omega_{M},m) & = & a_{0}(z|\omega_{m})
\end{eqnarray*}
is a valid estimator-coupling pair with respect to $Q$. Note that
$Q\pp{m\vert\omega_{1},\cdots,\omega_{M}}=\mu\pp m$, so if we apply
\ref{thm:rao-black-new} to marginalize out $m$, we get that
\begin{eqnarray*}
R\pp{\omega_{1},\cdots,\omega_{M}} & = & \E_{Q\pp{\r m|\omega_{1},\cdots,\omega_{M}}}R\pars{\omega_{1},\cdots,\omega_{M},\r m}\\
 & = & \sum_{m=1}^{M}\mu\pp m\ R\pars{\omega_{1},\cdots,\omega_{M},m}\\
 & = & \sum_{m=1}^{M}\mu\pp mR_{0}\pp{\omega_{m}}\\
a\pp{z|\omega_{1},\cdots,\omega_{M}} & = & \frac{1}{R\pp{\omega_{1},\cdots,\omega_{M}}}\E_{Q\pp{\r m|\omega_{1},\cdots,\omega_{M}}}\bracs{R\pars{\omega_{1},\cdots,\omega_{M},\r m}a(z|\omega_{1},\cdots,\omega_{M},\r m)}\\
 & = & \frac{1}{R\pp{\omega_{1},\cdots,\omega_{M}}}\sum_{m=1}^{M}\mu\pp mR\pars{\omega_{1},\cdots,\omega_{M},m}a(z|\omega_{1},\cdots,\omega_{M},m)\\
 & = & \frac{\sum_{m=1}^{M}\mu\pp mR_{0}\pp{\omega_{m}}a_{0}(z|\omega_{m})}{\sum_{m=1}^{M}\mu\pp mR_{0}\pp{\omega_{m}}}.
\end{eqnarray*}

Again, this is the form shown in the table.

\clearpage{}

\section{Proofs for Deriving Couplings (\ref{sec:Deriving-Couplings})}

\splitting*
\begin{proof}
Substitute the definitions of $R$ and $a$ to get that

\begin{eqnarray*}
\E_{Q\pp{\wr_{1},\cdots,\wr_{M},\r m}}R\pp{\omega_{1},\cdots,\omega_{M},m}a(z|\omega_{1},\cdots,\omega_{M},m) & = & \E_{Q\pp{\wr_{1},\cdots,\wr_{M},\r m}}R_{0}\pp{\wr_{\r m}}a_{0}(z|\wr_{\r m})\\
 & = & \E_{Q_{0}\pp{\wr}}R_{0}\pp{\wr}a_{0}(z|\wr)\\
 & = & p\pp{z,x},
\end{eqnarray*}
which is equivalent to the definition of $R$ and $a$ being a valid
estimator-coupling pair. The second line follows from the assumption
on $Q\pp{\omega_{1},\cdots,\omega_{M},m}.$
\end{proof}
\raoblack*
\begin{proof}
Substitute the definition of $a$ to get that
\begin{eqnarray*}
\E_{Q\pp{\wr}}R\pp{\wr}a\pp{z|\wr} & = & \E_{Q\pp{\wr}}R\pp{\wr}\frac{1}{R\pp{\wr}}\E_{Q_{0}\pp{\vr|\wr}}\bb{R_{0}\pp{\wr,\vr}a_{0}\pp{z|\wr,\vr}}\\
 & = & \E_{Q_{0}\pp{\wr,\vr}}\bb{R_{0}\pp{\wr,\vr}a_{0}\pp{z|\wr,\vr}}\\
 & = & p\pp{z,x},
\end{eqnarray*}
which is equivalent to $R$ and $a$ being a valid estimator-coupling
pair under $R\pp{\omega}.$ The last line follows from the fact that
$R_{0}$ and $a_{0}$ are a valid estimator-coupling pair under $Q_{0}$.
\end{proof}

\clearpage{}

\section{Additional Experimental Results}

\begin{figure}
\noindent\begin{minipage}[t]{1\columnwidth}%
\includegraphics[viewport=0bp 245.0309bp 506bp 593bp,clip,width=1\columnwidth]{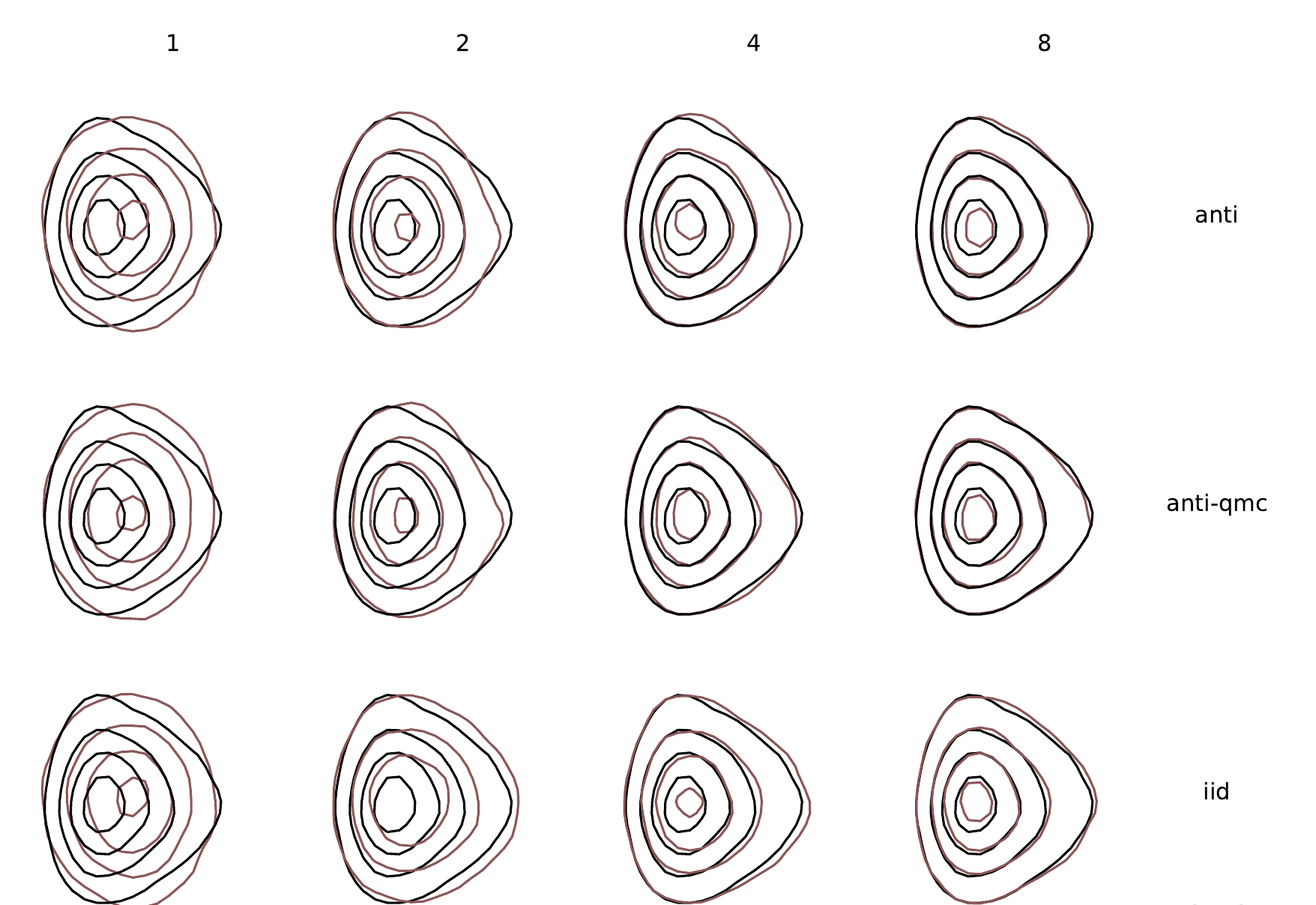}%
\end{minipage}

\caption{The target density $p\protect\pp{z|x}$ and the approximation $\protect\qmc\protect\pp{z|x}$
produced by various sampling methods (row) with various $M$ (columns).
The dark curves show isocontours of kernel density estimate for samples
generated using Stan and projected to the first two principal components.
The darker curves show isocontours for the process applied to samples
from $\protect\qmc\protect\pp{z|x}$. Antithetic sampling is visibly
(but subtly) better than iid for $M=2$ while the combination of quasi-Monte
Carlo and antithetic sampling is (still more subtly) best for $M=8$.
\label{fig:2d-approx}}
\end{figure}

\ref{fig:aggregate-statistics} shows additional aggregate statistics
of ELBO and posterior variance error for different methods across
model from the Stan library.

\begin{figure*}
\begin{centering}
\noindent\begin{minipage}[t]{1\textwidth}%
\includegraphics[viewport=0bp 36.5625bp 560bp 218.6719bp,clip,width=1\textwidth]{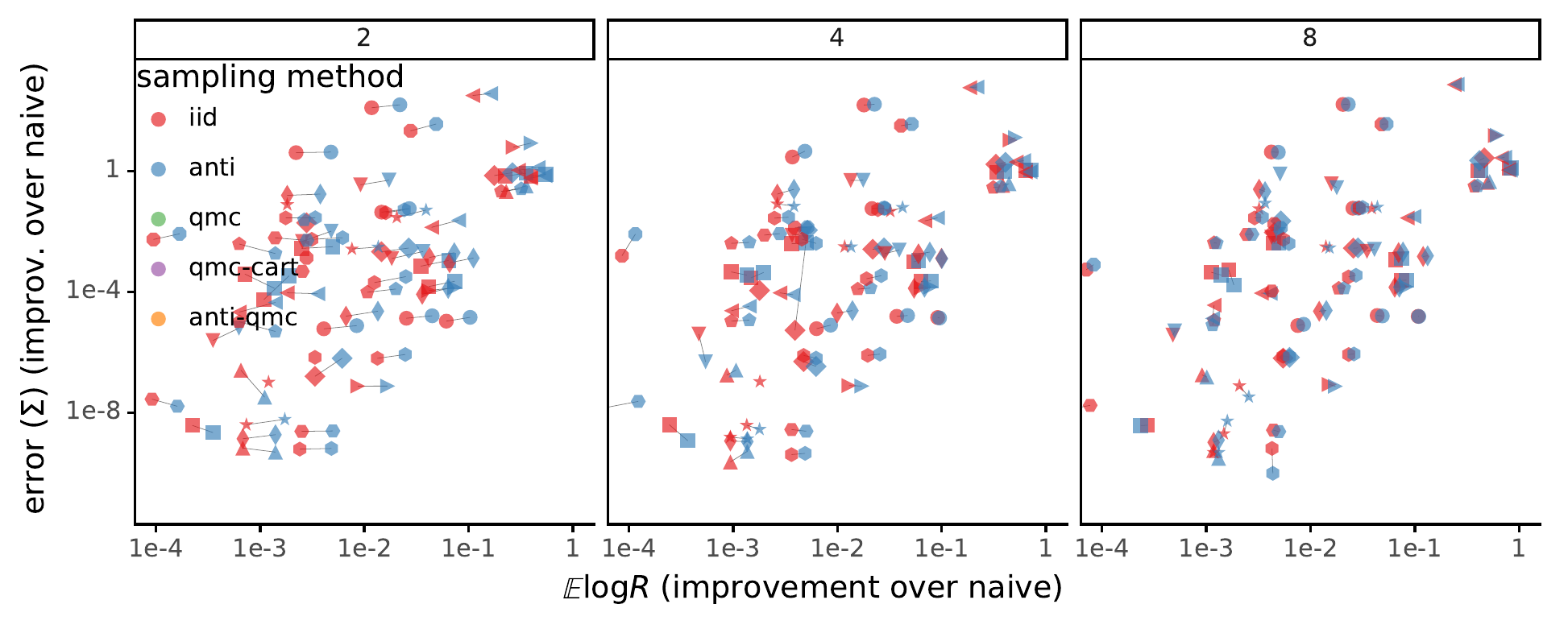}

\includegraphics[viewport=0bp 36.5625bp 560bp 203.2031bp,clip,width=1\textwidth]{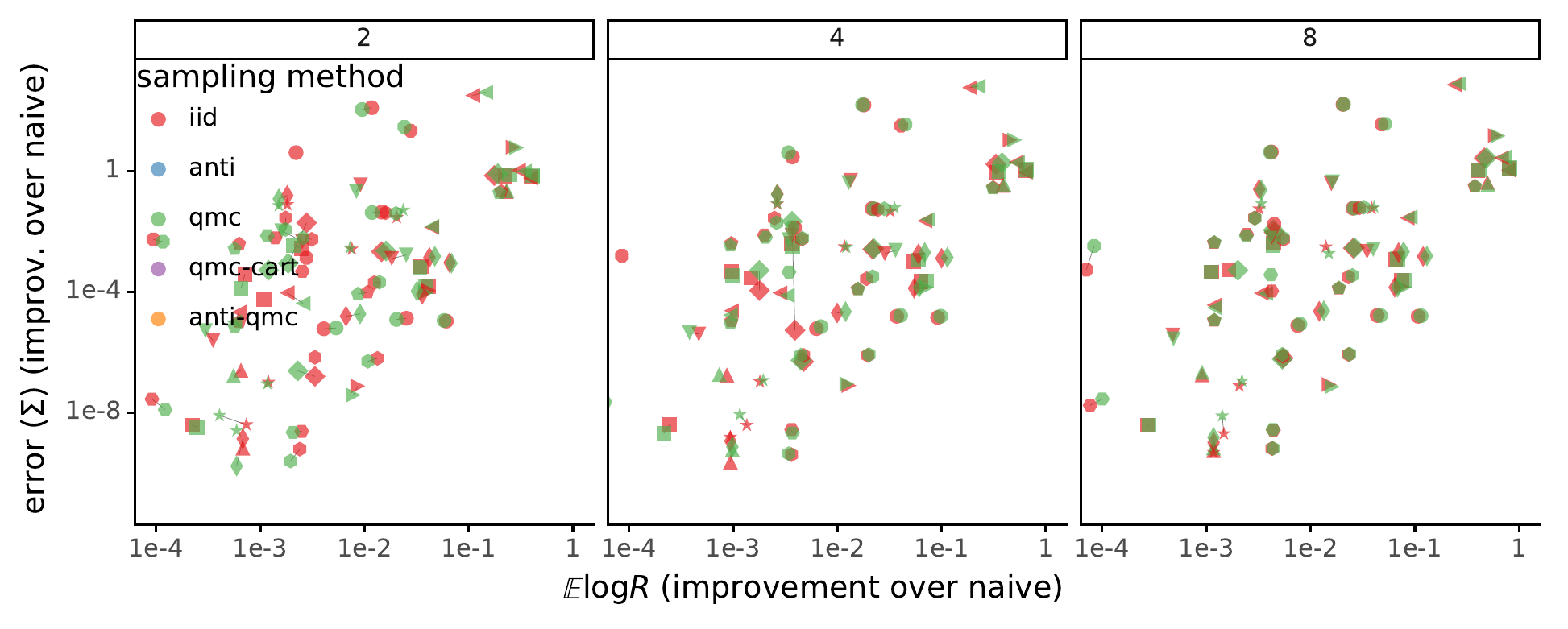}

\includegraphics[viewport=0bp 36.5625bp 560bp 203.2031bp,clip,width=1\textwidth]{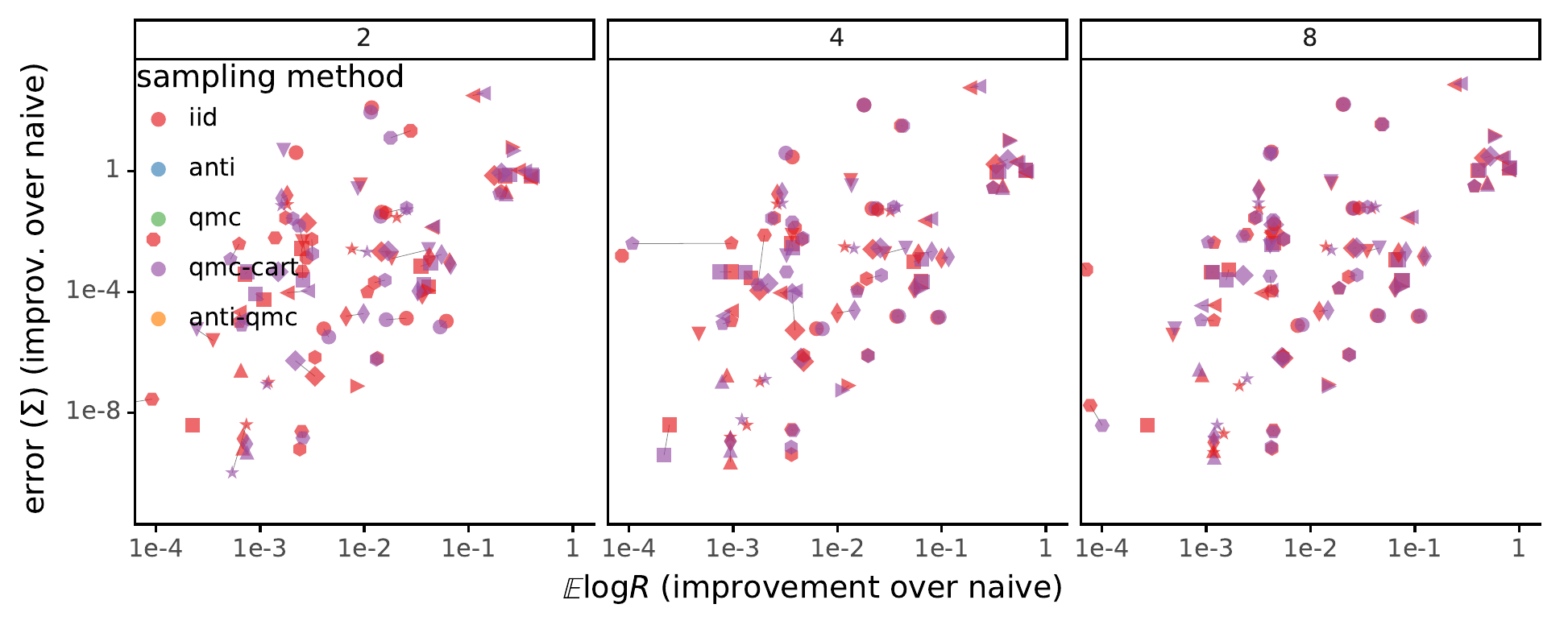}

\includegraphics[viewport=0bp 0bp 560bp 203.2031bp,clip,width=1\textwidth]{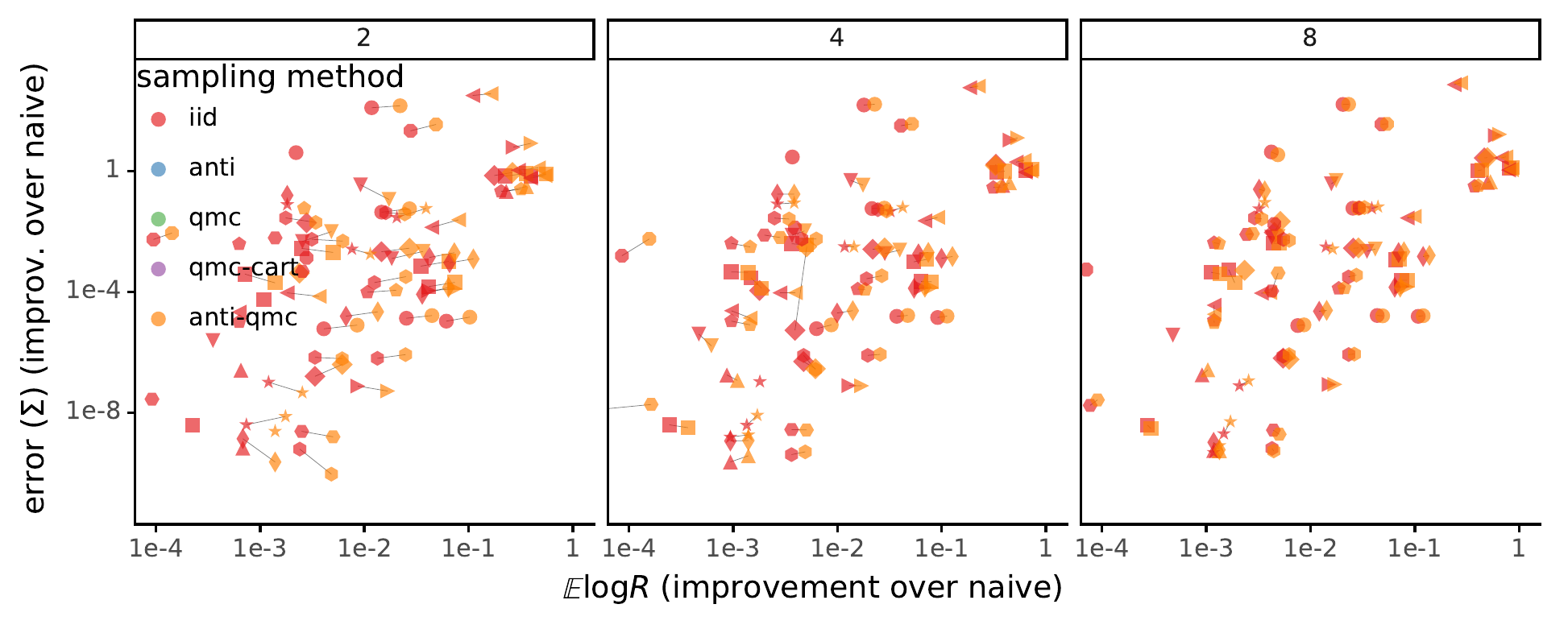}%
\end{minipage}
\par\end{centering}
\caption{\textbf{How much do methods improve over naive VI in likelihood bound
(x-axis) and in estimating posterior variance (y-axis)}? Each point
corresponds to a model from the Stan library, with a random shape.
Each plot compares \texttt{iid} sampling against some other strategy.
From top, these are antithetic sampling (\texttt{anti}), Quasi-Monte
Carlo, either using an elliptical mapping (\texttt{qmc}) or a Cartesian
mapping (\texttt{qmc-cart}), and antithetic sampling after an elliptical
mapping (\texttt{anti-qmc}). The columns correspond to using $M=2,4$
and $8$ samples for each estimate. \protect \linebreak{}
\textbf{Conclusions}: Improvements in ELBO and error are correlated.
Improvements converge to those of \texttt{iid} for larger $M$, as
all errors decay towards zero. Different sampling methods are best
on different datasets. A few cases are not plotted where the measured
``improvement'' was negative (if naive VI has near-zero error, or
due to local optima).\label{fig:aggregate-statistics}}
\end{figure*}

\clearpage{}

\section{Full Results For All Models}

\begin{figure}
\includegraphics[width=0.33\columnwidth]{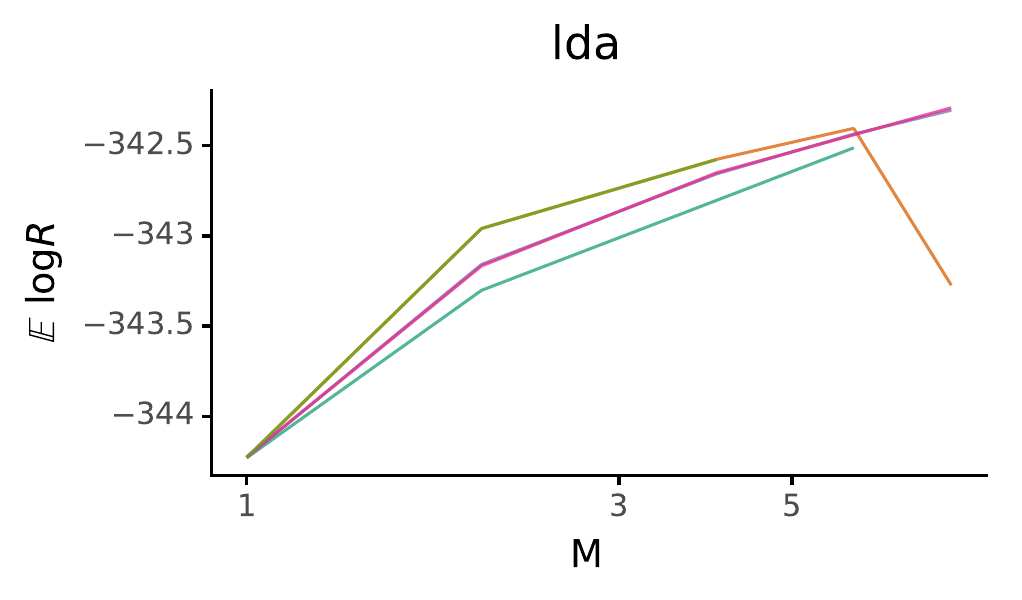}\includegraphics[width=0.33\columnwidth]{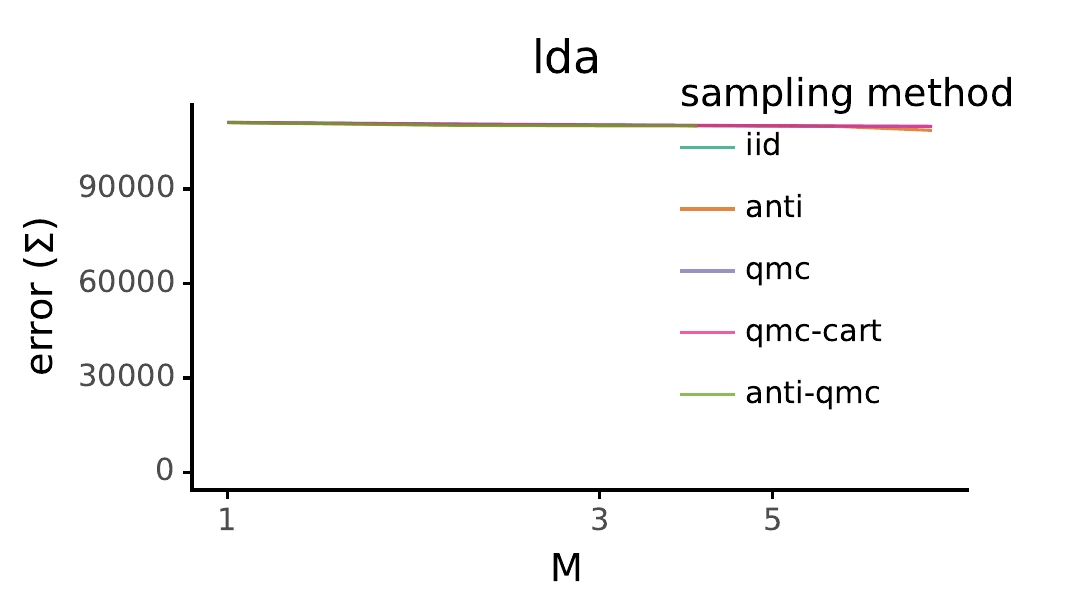}\includegraphics[width=0.33\columnwidth]{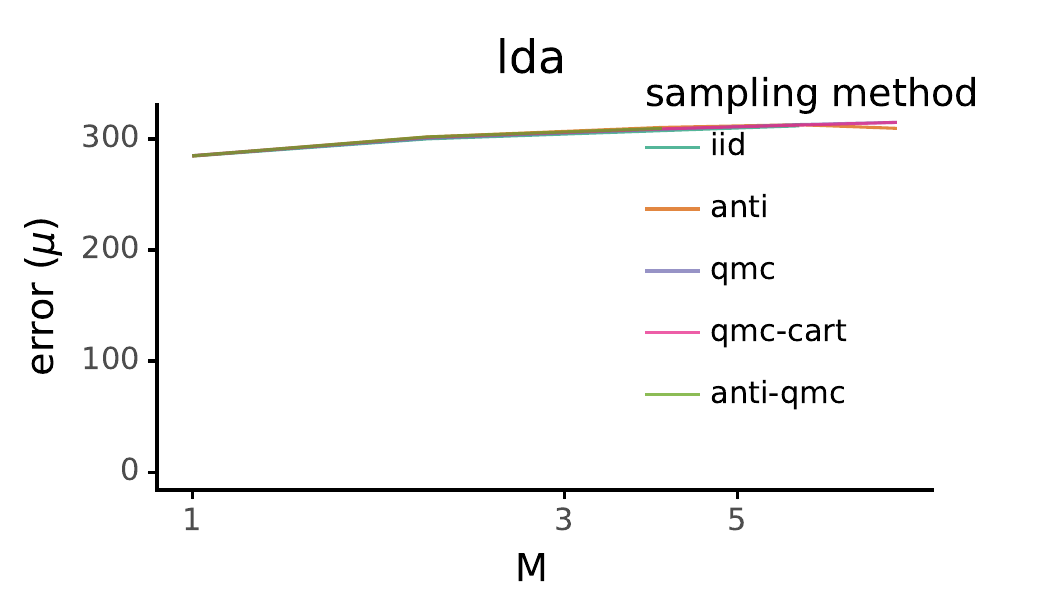}\linebreak{}

\includegraphics[width=0.33\columnwidth]{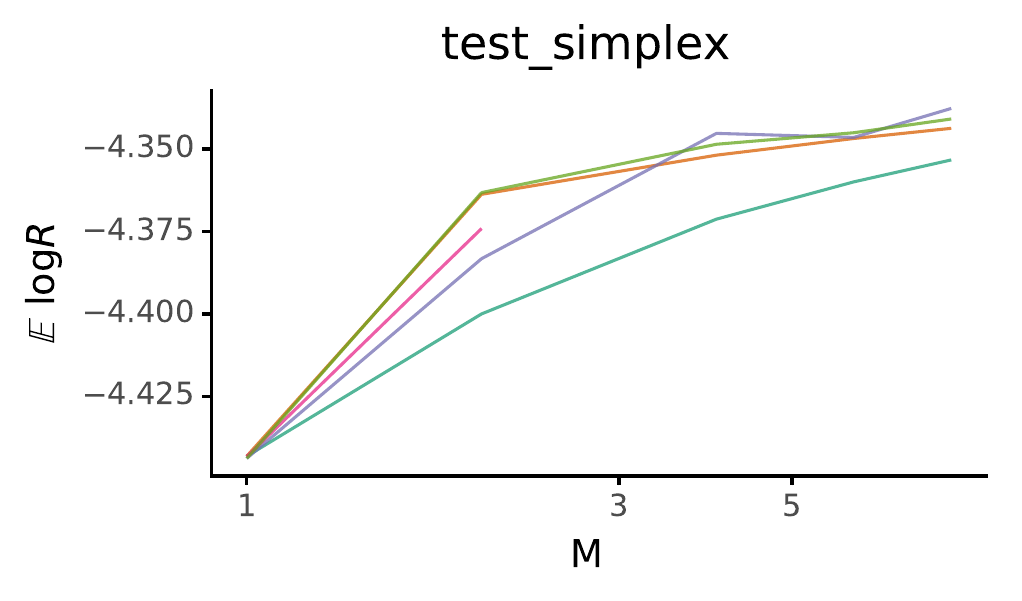}\includegraphics[width=0.33\columnwidth]{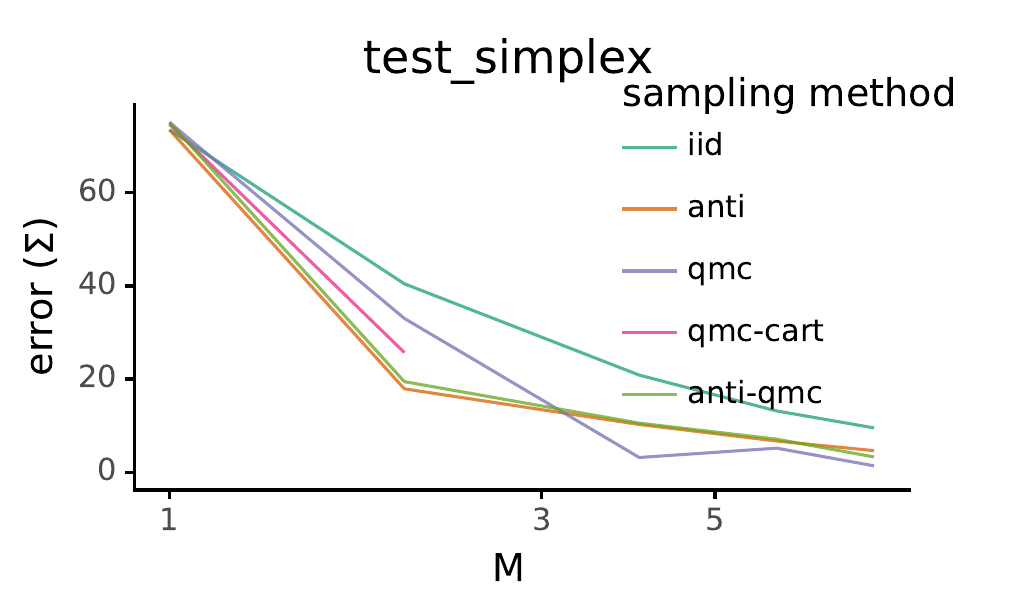}\includegraphics[width=0.33\columnwidth]{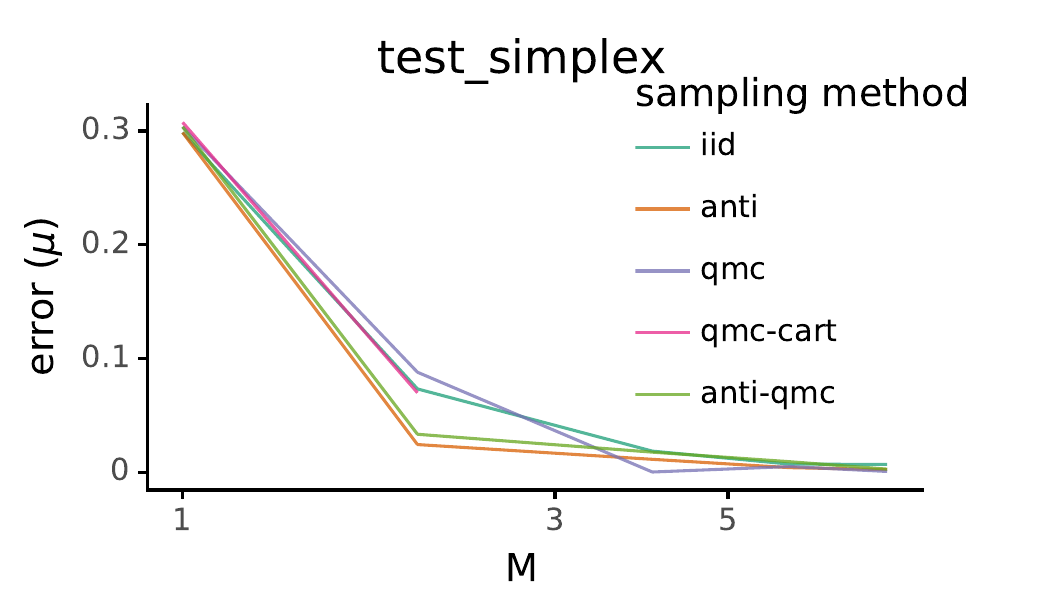}\linebreak{}

\includegraphics[width=0.33\columnwidth]{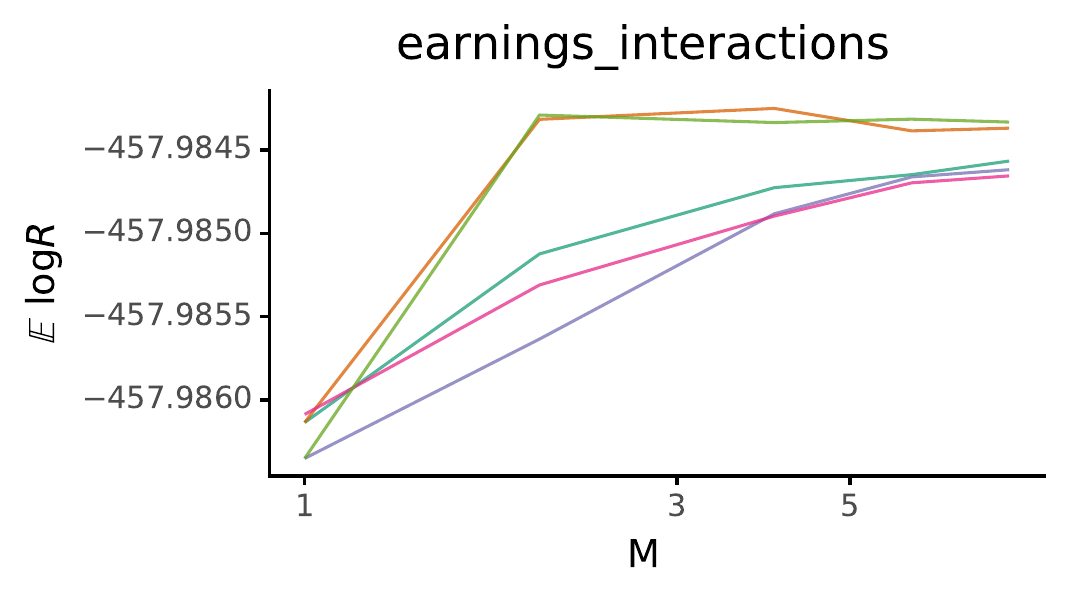}\includegraphics[width=0.33\columnwidth]{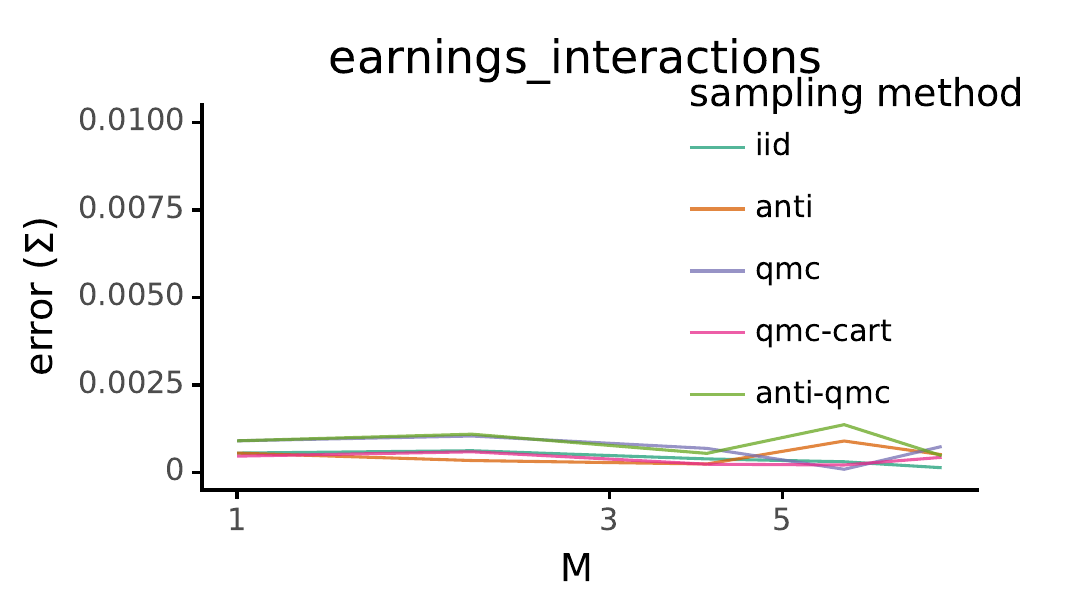}\includegraphics[width=0.33\columnwidth]{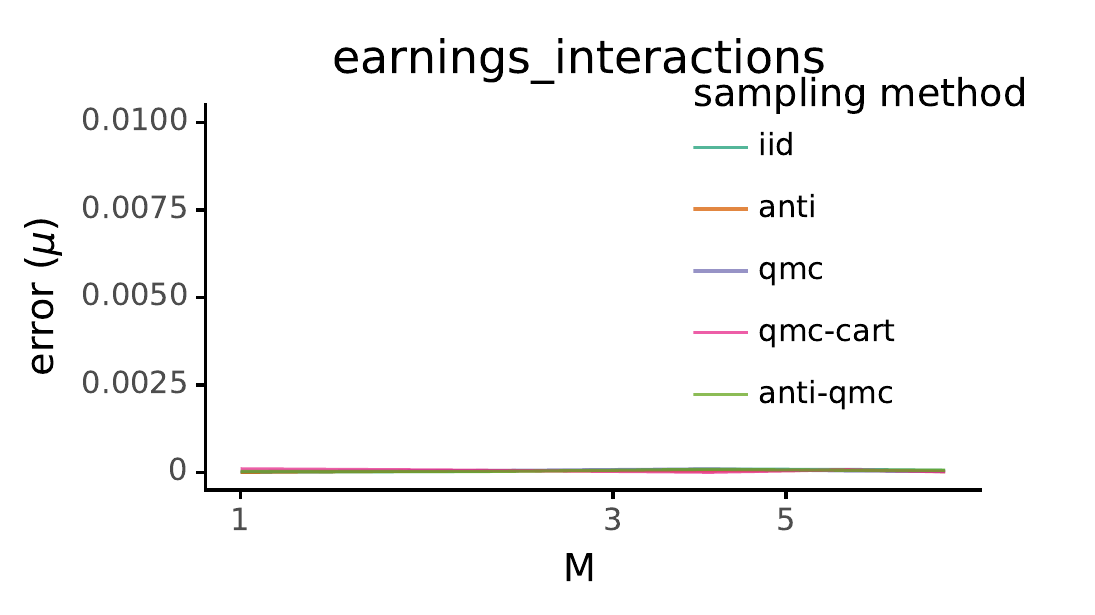}\linebreak{}

\includegraphics[width=0.33\columnwidth]{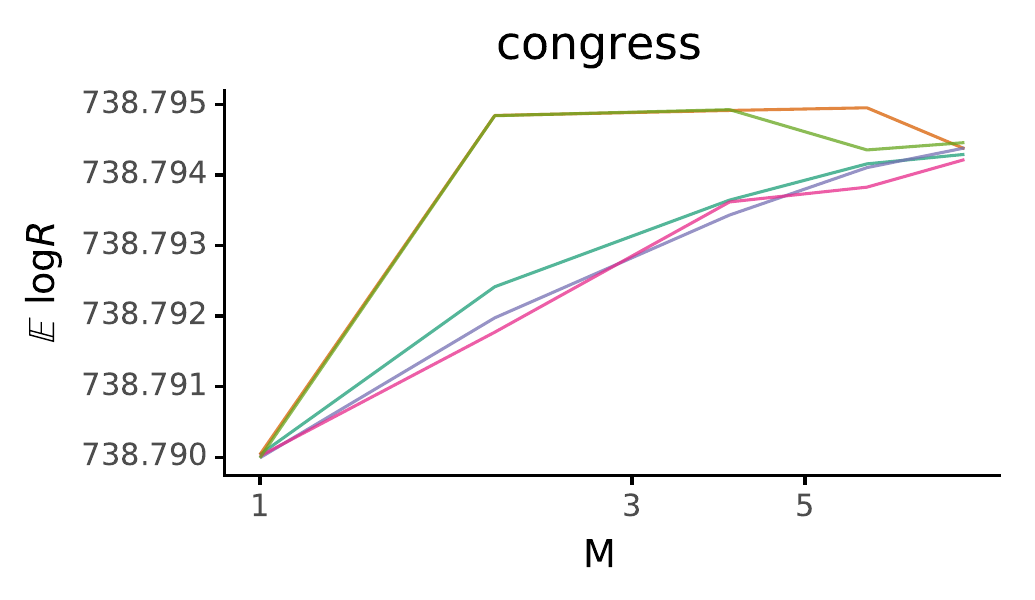}\includegraphics[width=0.33\columnwidth]{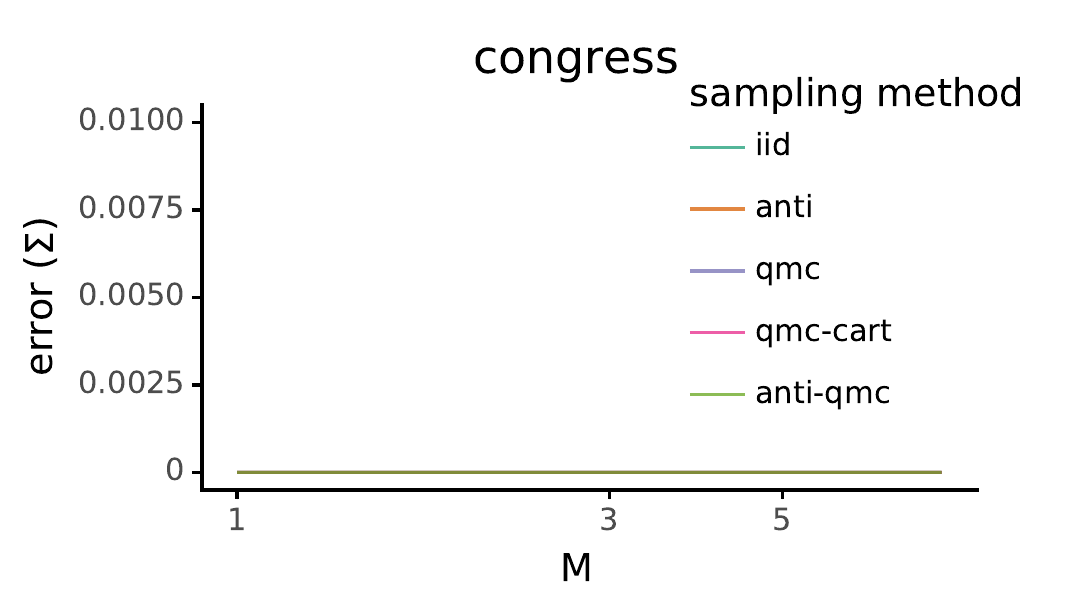}\includegraphics[width=0.33\columnwidth]{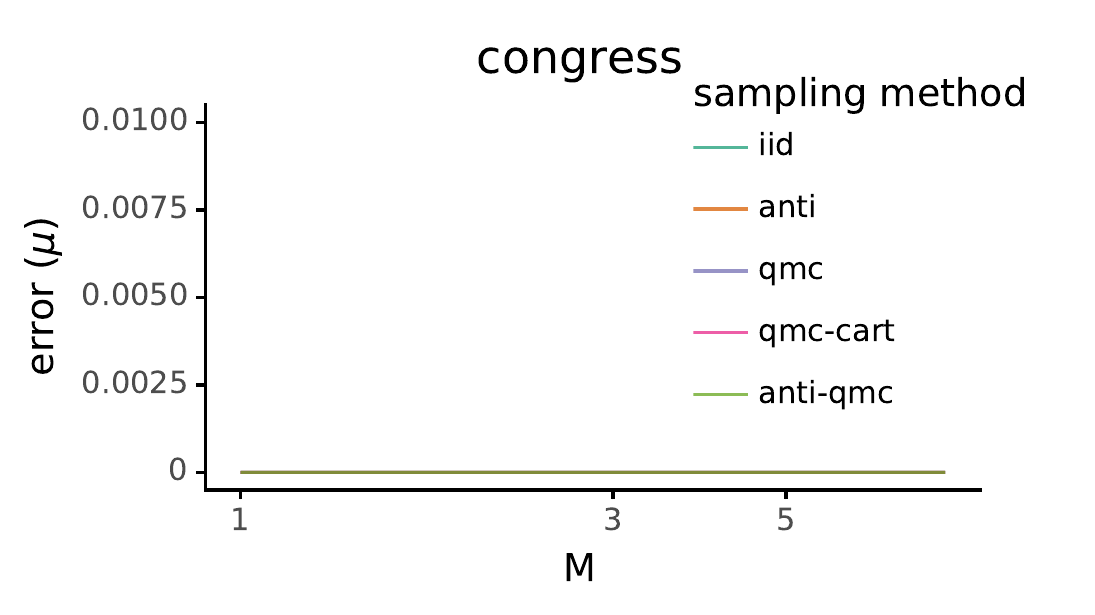}\linebreak{}

\includegraphics[width=0.33\columnwidth]{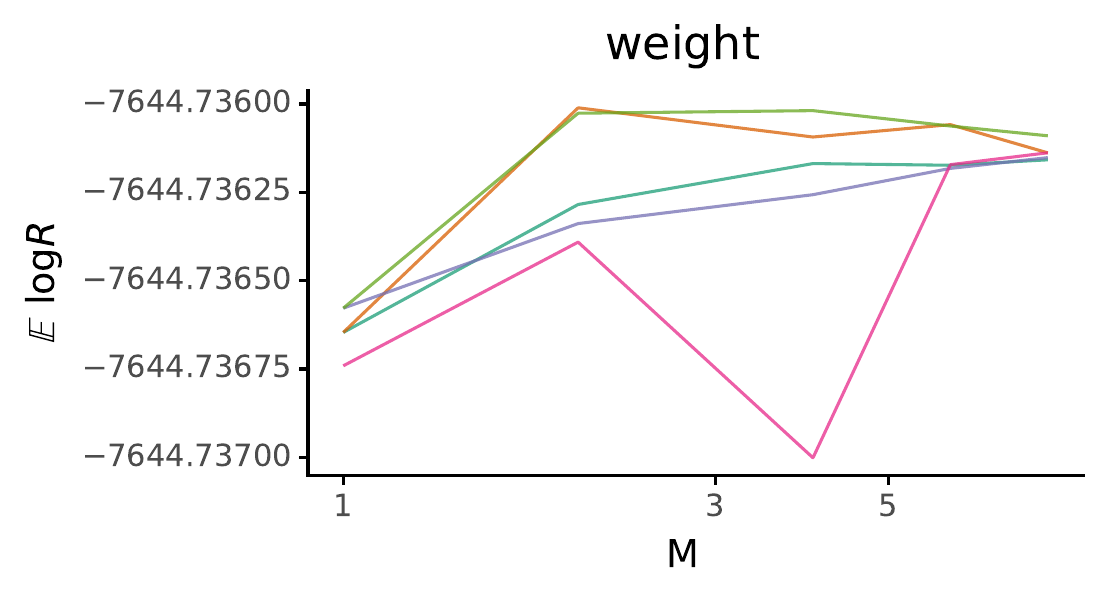}\includegraphics[width=0.33\columnwidth]{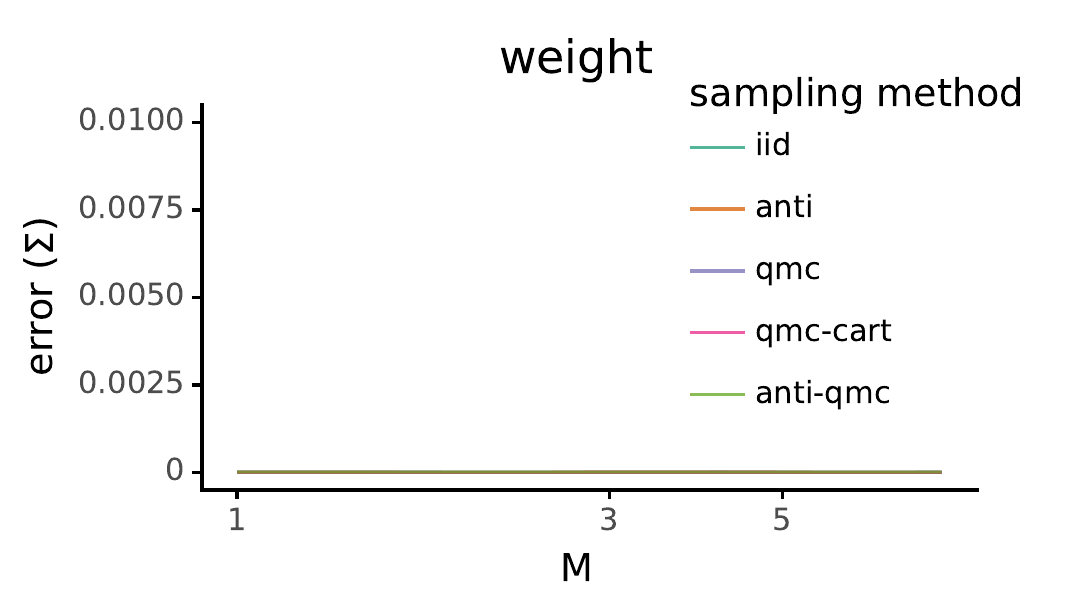}\includegraphics[width=0.33\columnwidth]{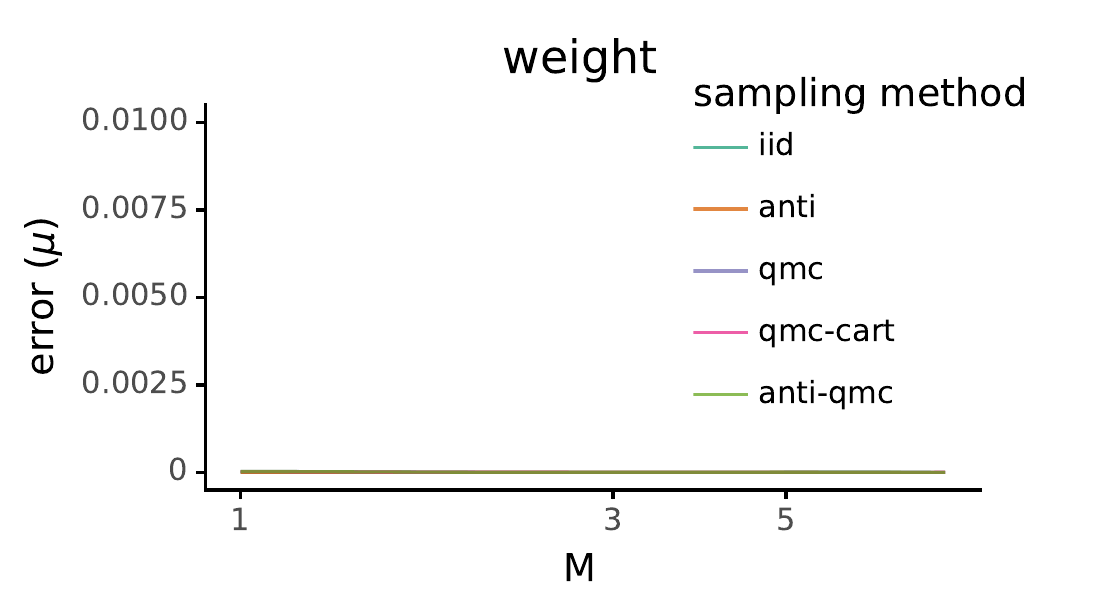}\linebreak{}

\includegraphics[width=0.33\columnwidth]{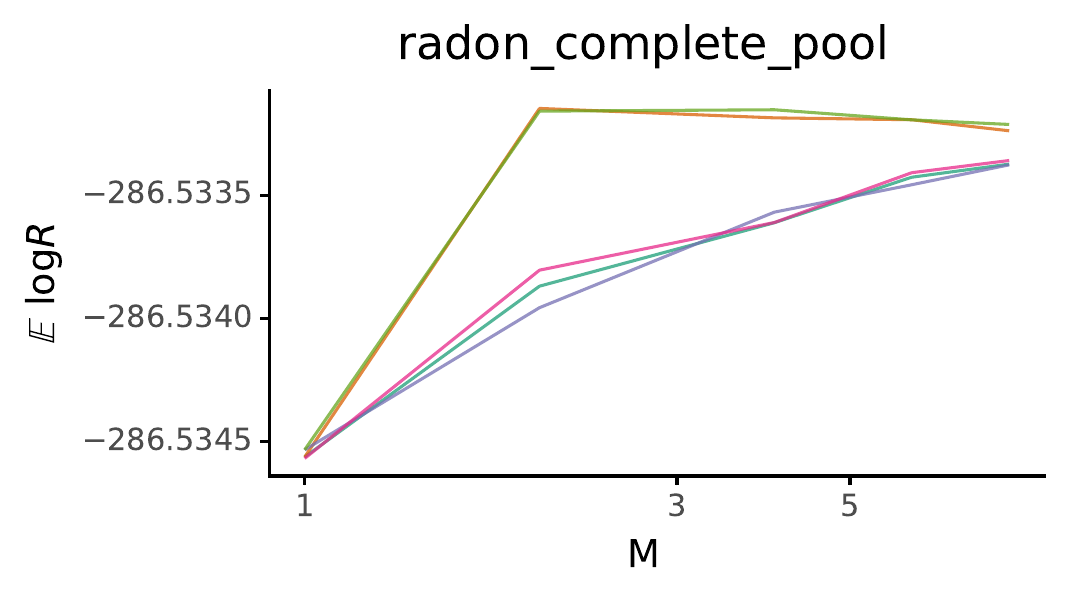}\includegraphics[width=0.33\columnwidth]{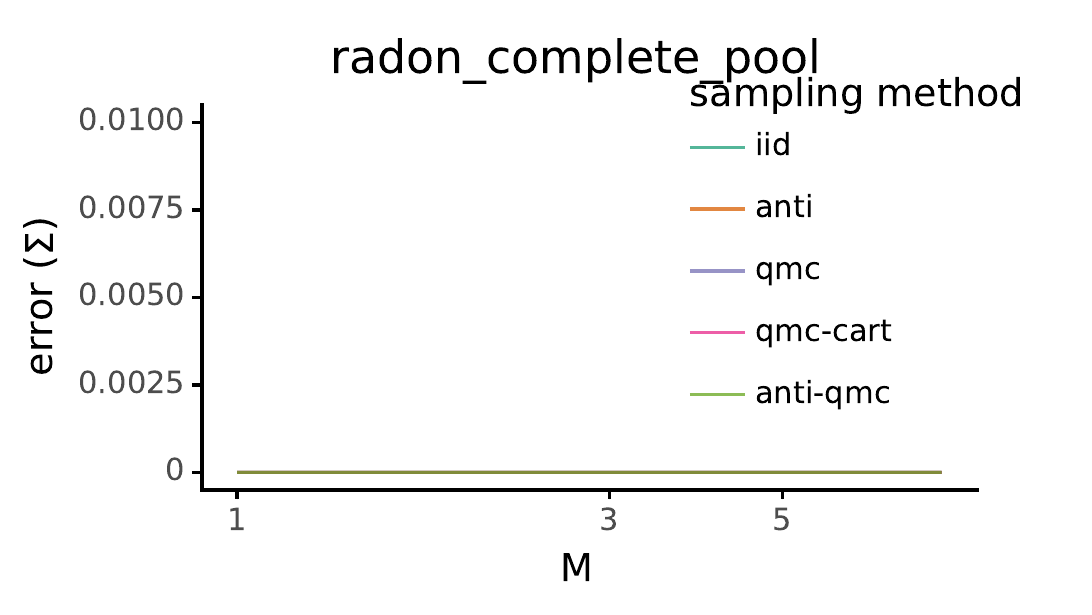}\includegraphics[width=0.33\columnwidth]{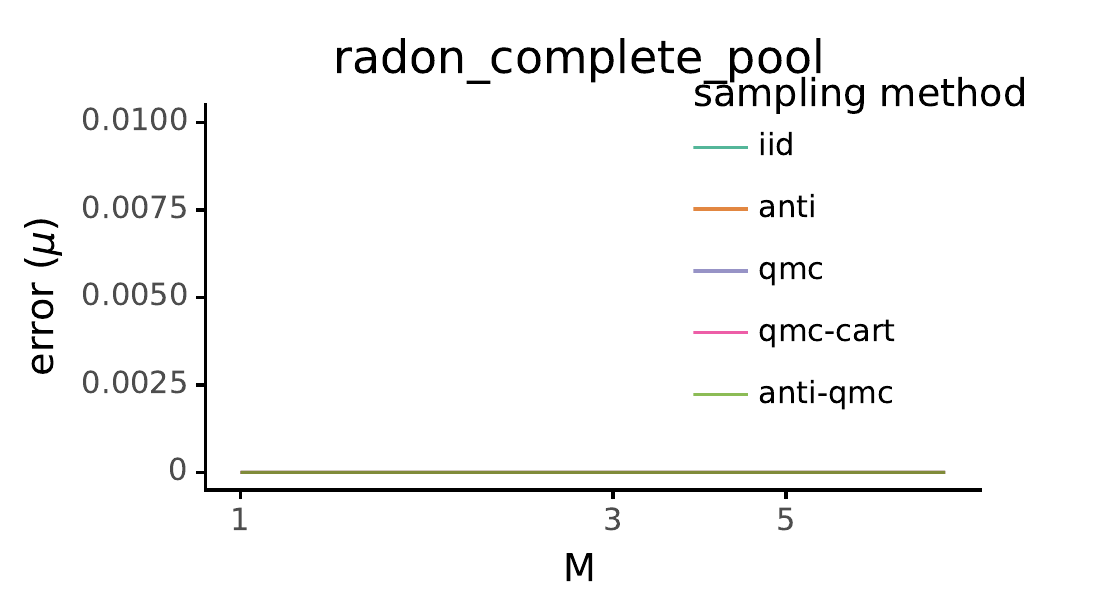}\linebreak{}

\includegraphics[width=0.33\columnwidth]{final_swarm_figs/stan12_1_elbos}\includegraphics[width=0.33\columnwidth]{final_swarm_figs/stan12_1_err_Sigma}\includegraphics[width=0.33\columnwidth]{final_swarm_figs/stan12_1_err_mu}\linebreak{}

\caption{\textbf{Across all models, improvements in likelihood bounds correlate
strongly with improvements in posterior accuracy. Better sampling
methods can improve both.}}
\end{figure}

\begin{figure}
\includegraphics[width=0.33\columnwidth]{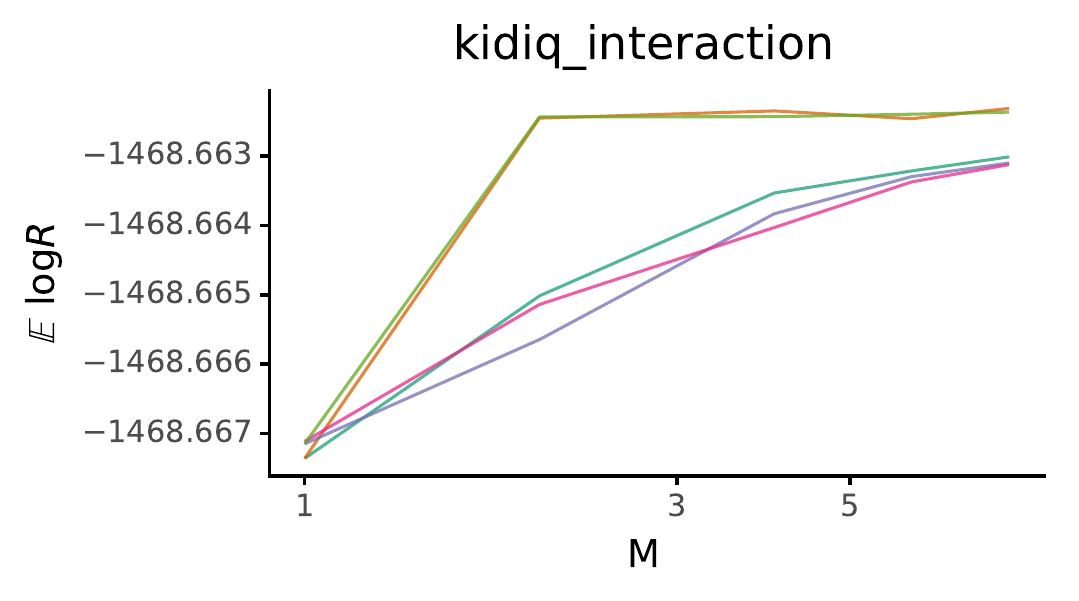}\includegraphics[width=0.33\columnwidth]{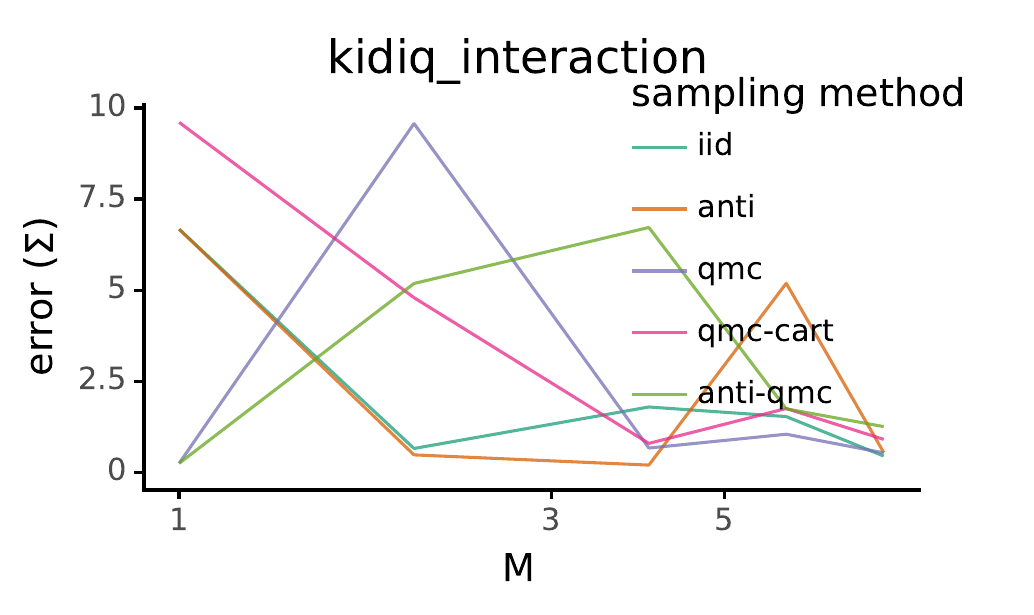}\includegraphics[width=0.33\columnwidth]{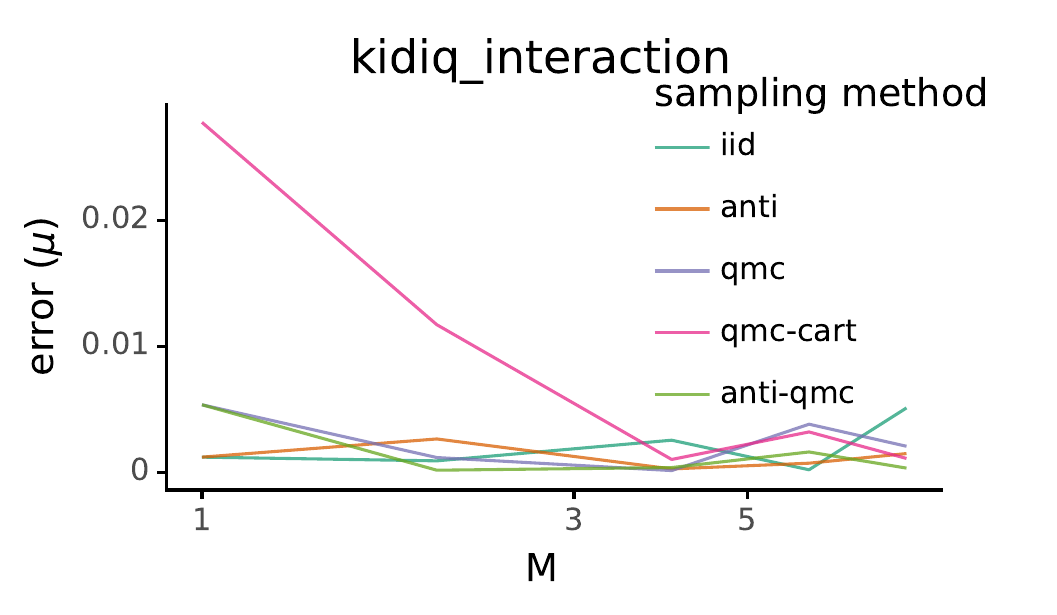}\linebreak{}

\includegraphics[width=0.33\columnwidth]{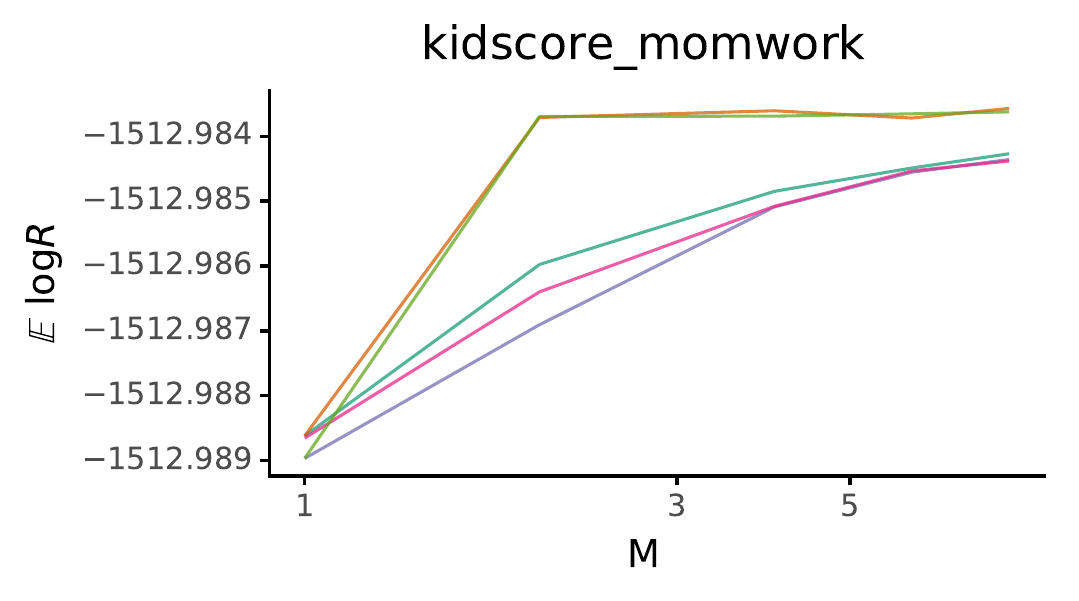}\includegraphics[width=0.33\columnwidth]{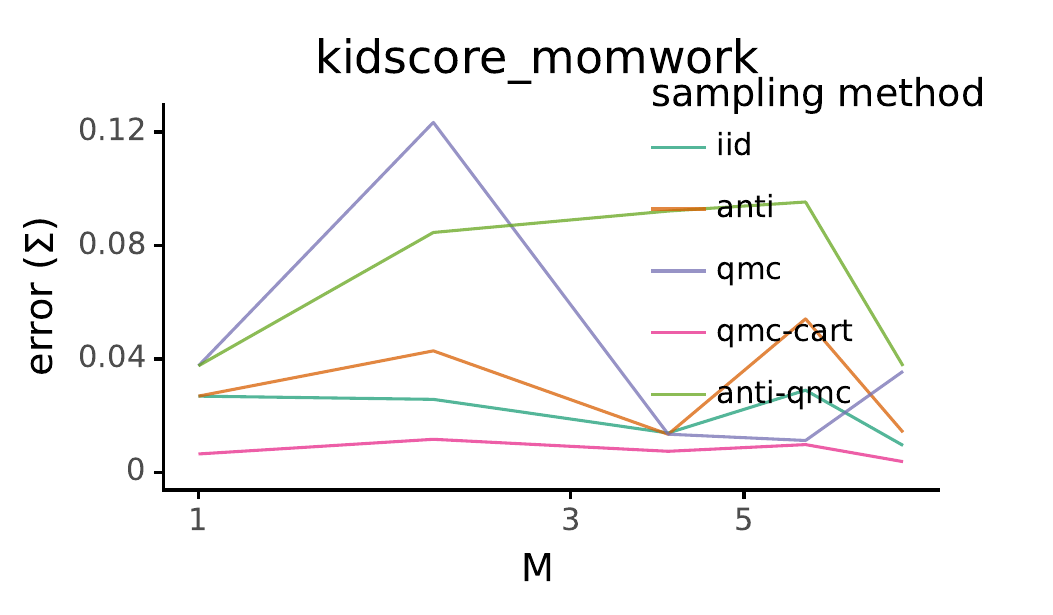}\includegraphics[width=0.33\columnwidth]{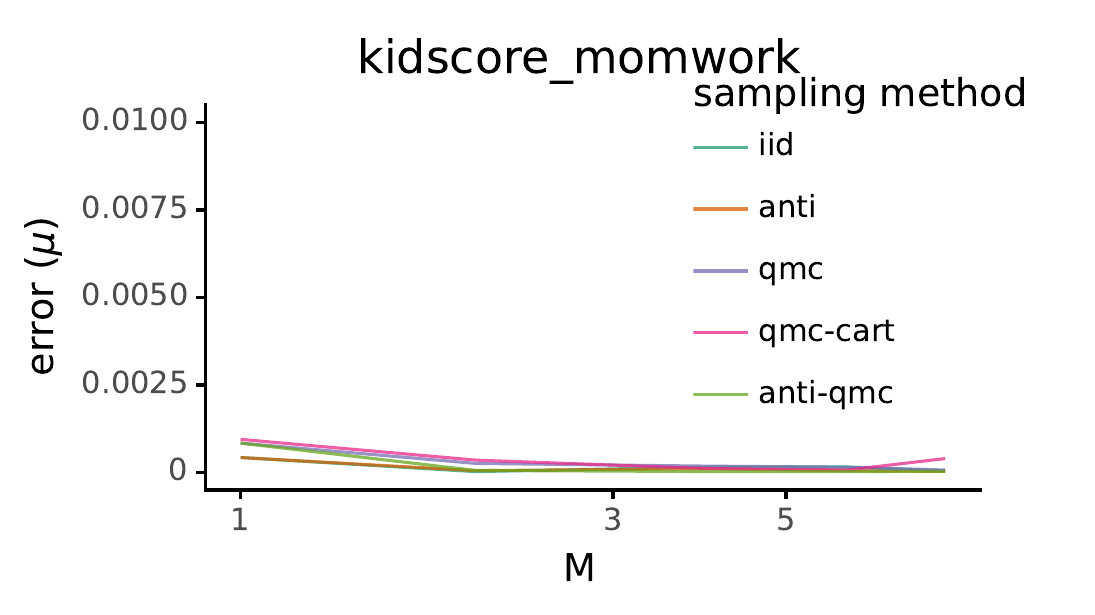}\linebreak{}

\includegraphics[width=0.33\columnwidth]{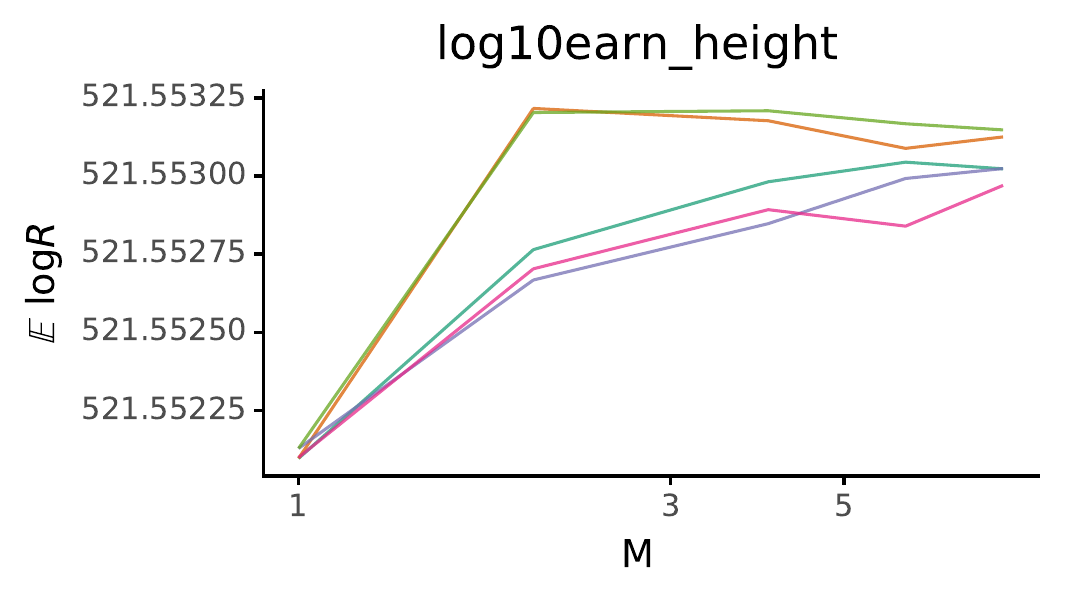}\includegraphics[width=0.33\columnwidth]{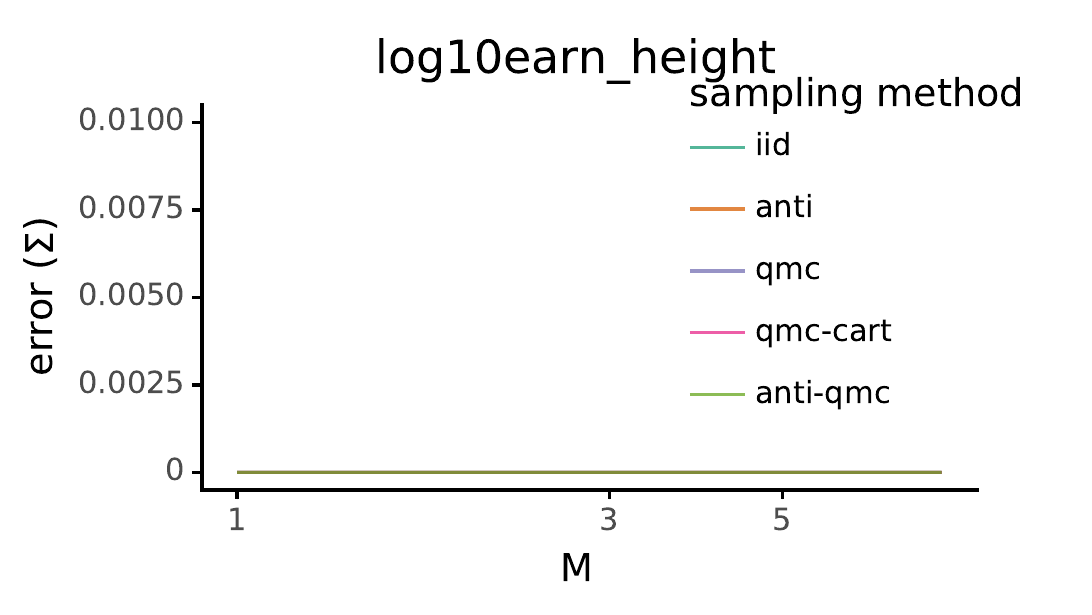}\includegraphics[width=0.33\columnwidth]{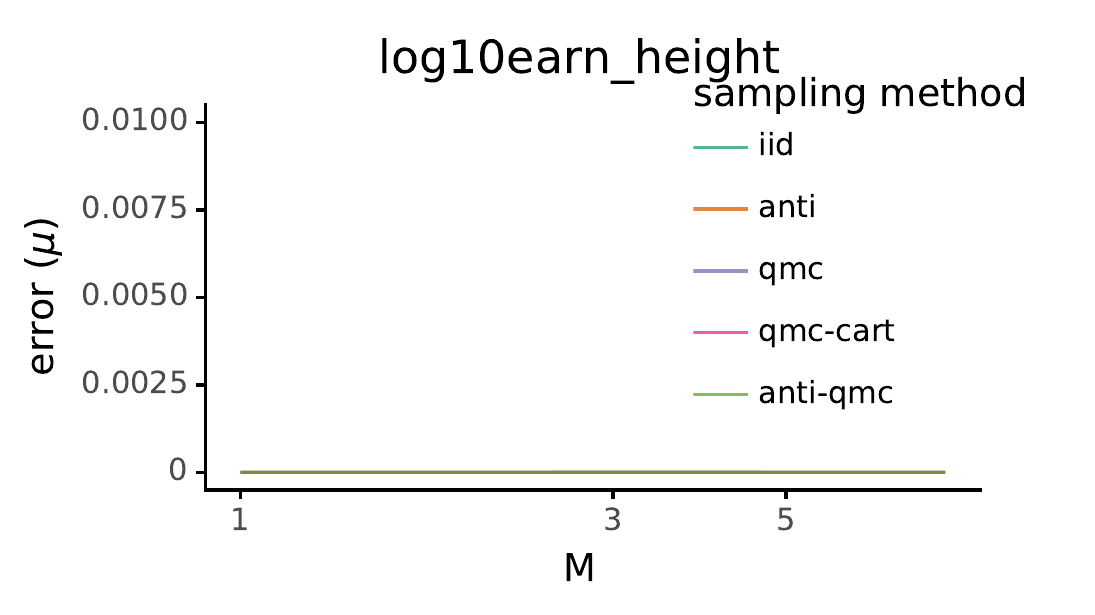}\linebreak{}

\includegraphics[width=0.33\columnwidth]{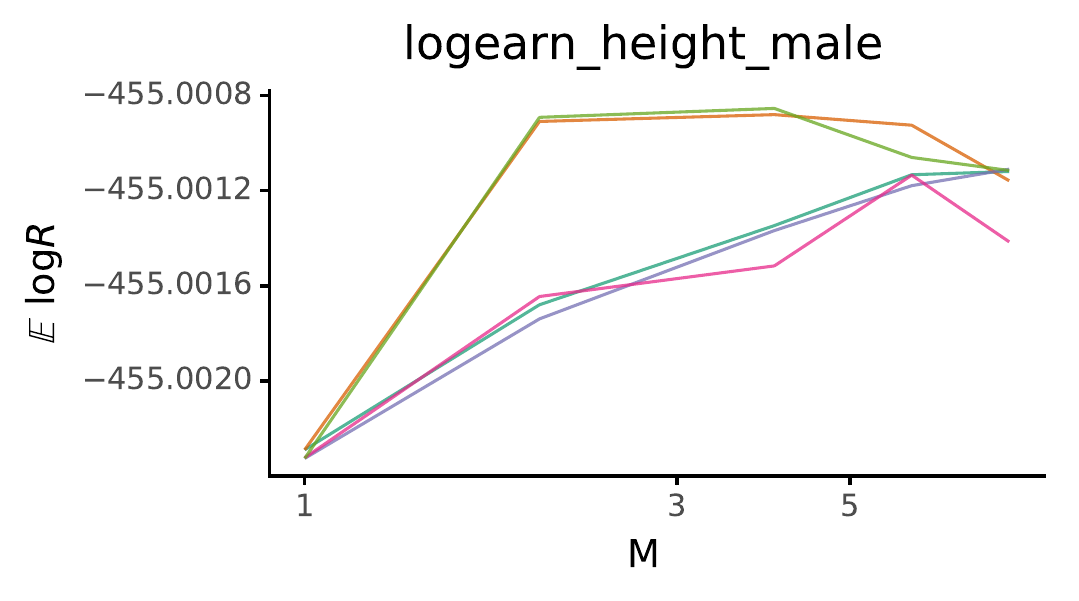}\includegraphics[width=0.33\columnwidth]{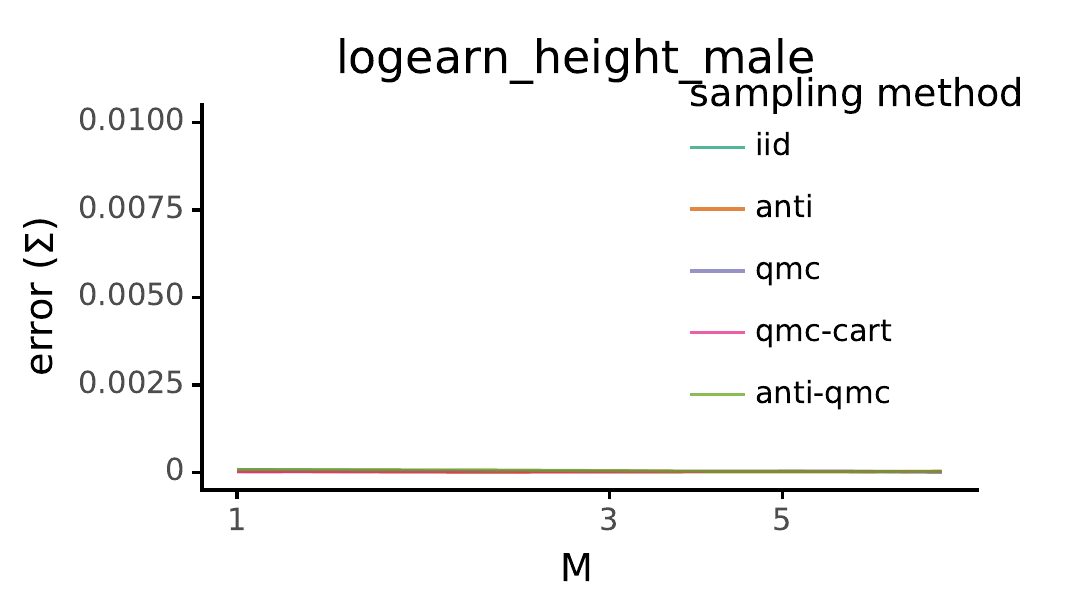}\includegraphics[width=0.33\columnwidth]{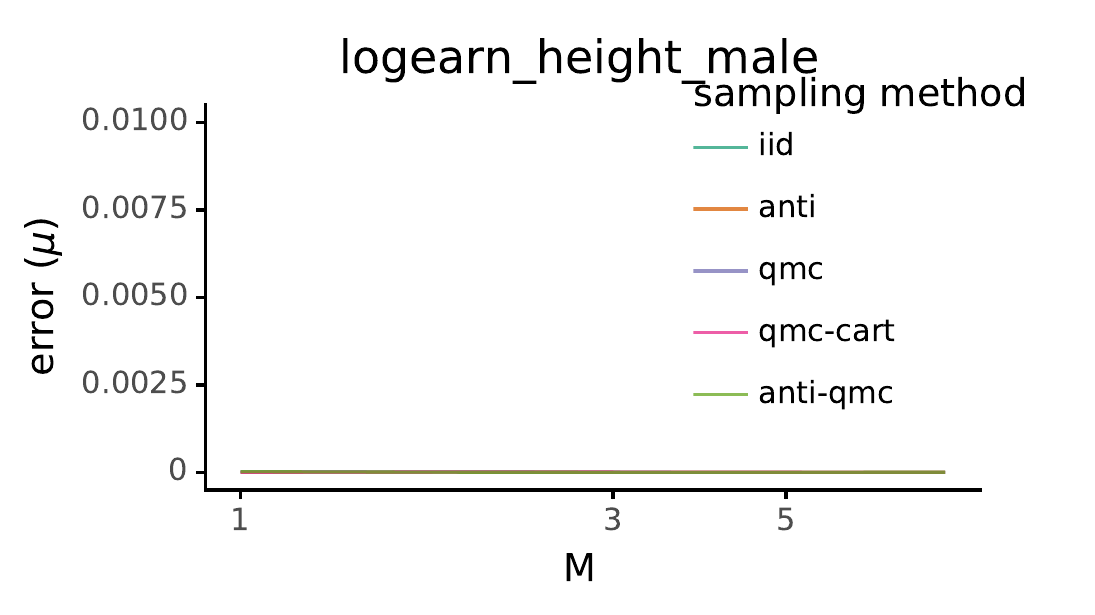}\linebreak{}

\includegraphics[width=0.33\columnwidth]{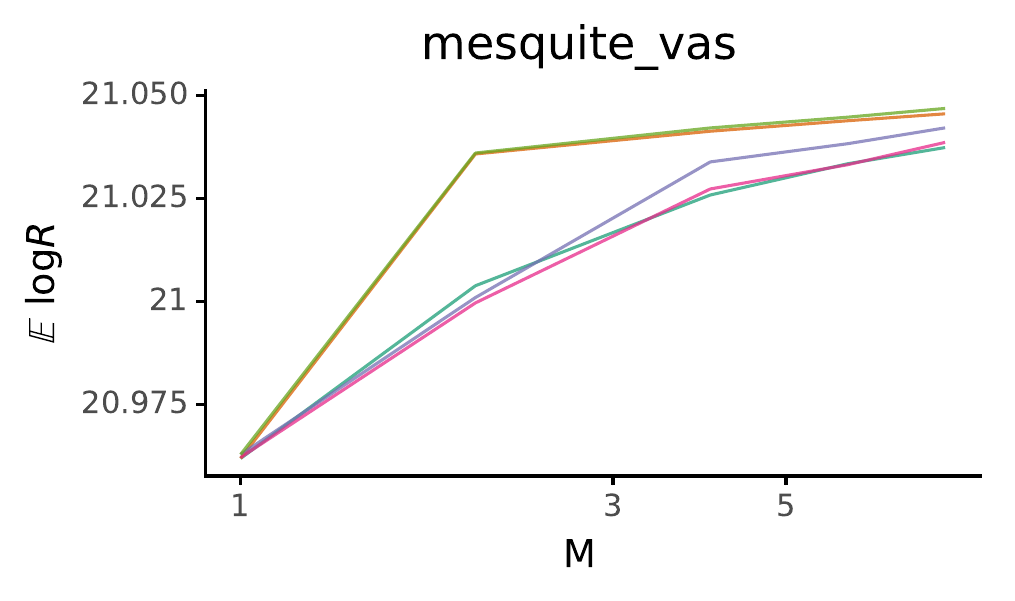}\includegraphics[width=0.33\columnwidth]{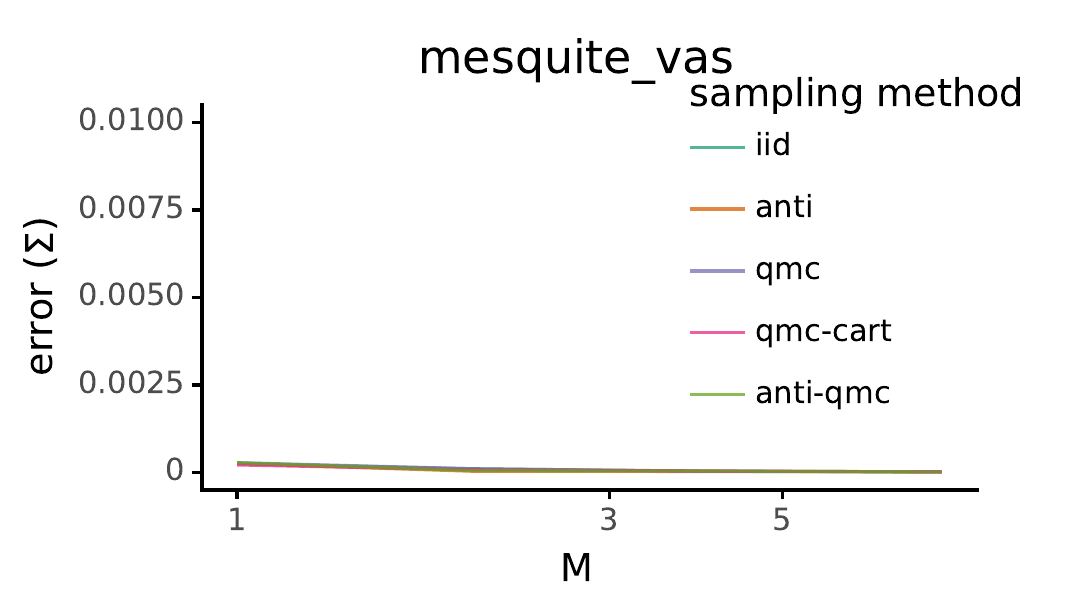}\includegraphics[width=0.33\columnwidth]{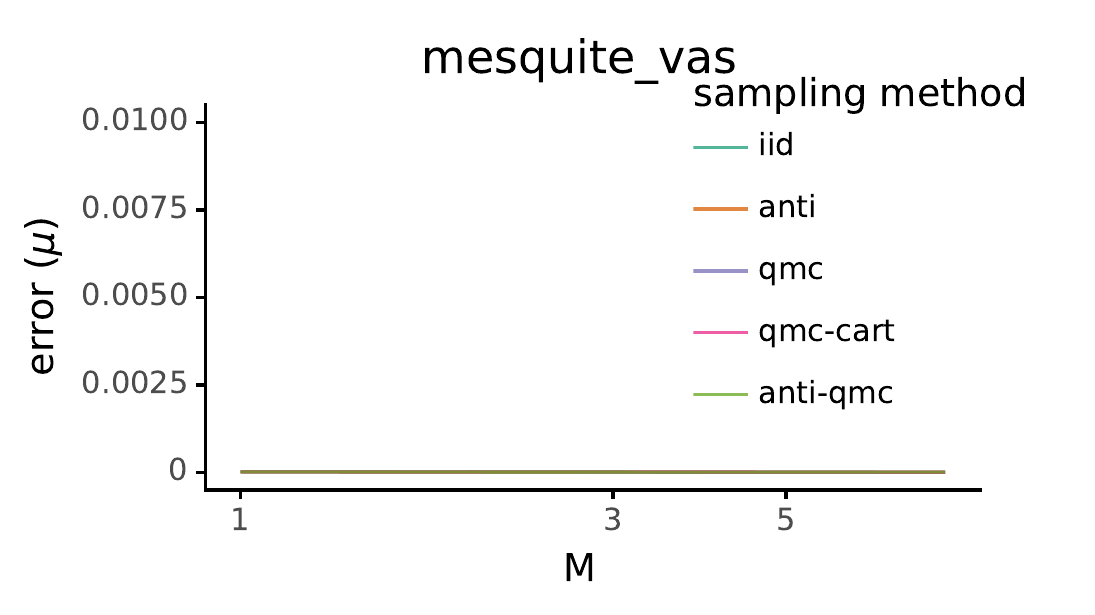}\linebreak{}

\includegraphics[width=0.33\columnwidth]{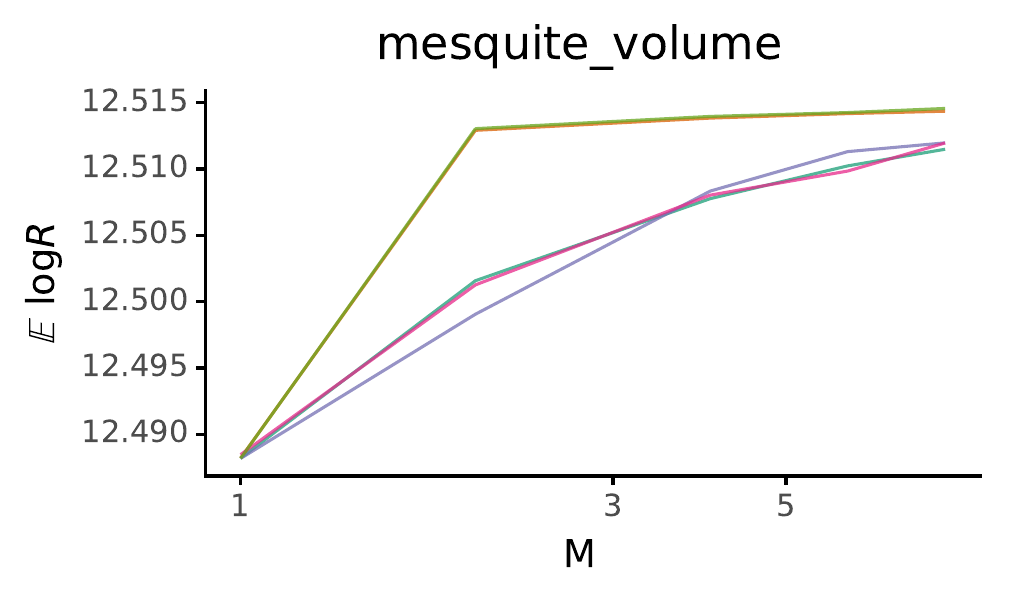}\includegraphics[width=0.33\columnwidth]{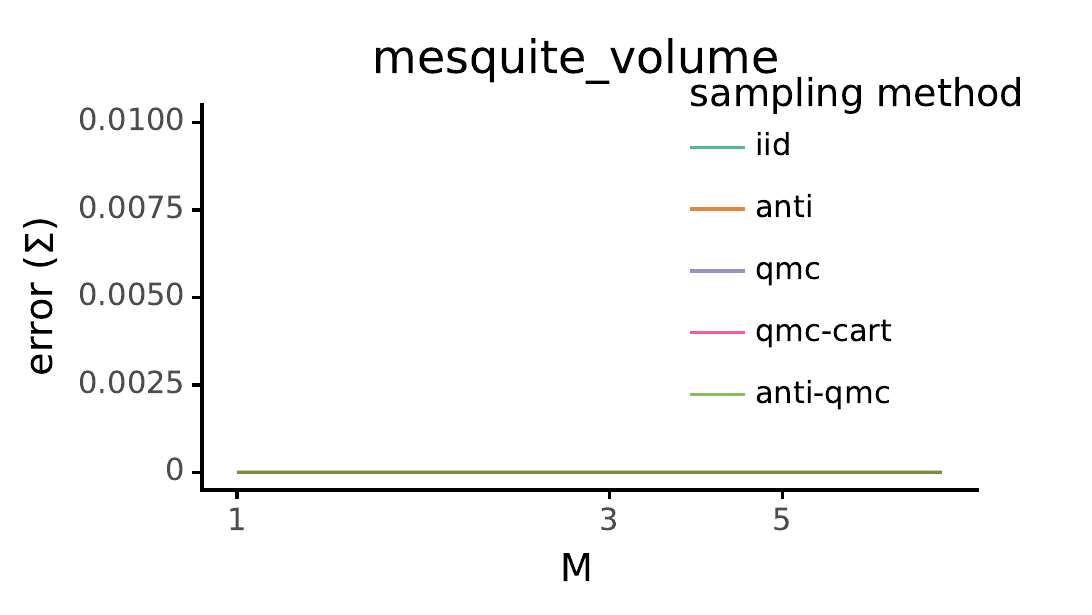}\includegraphics[width=0.33\columnwidth]{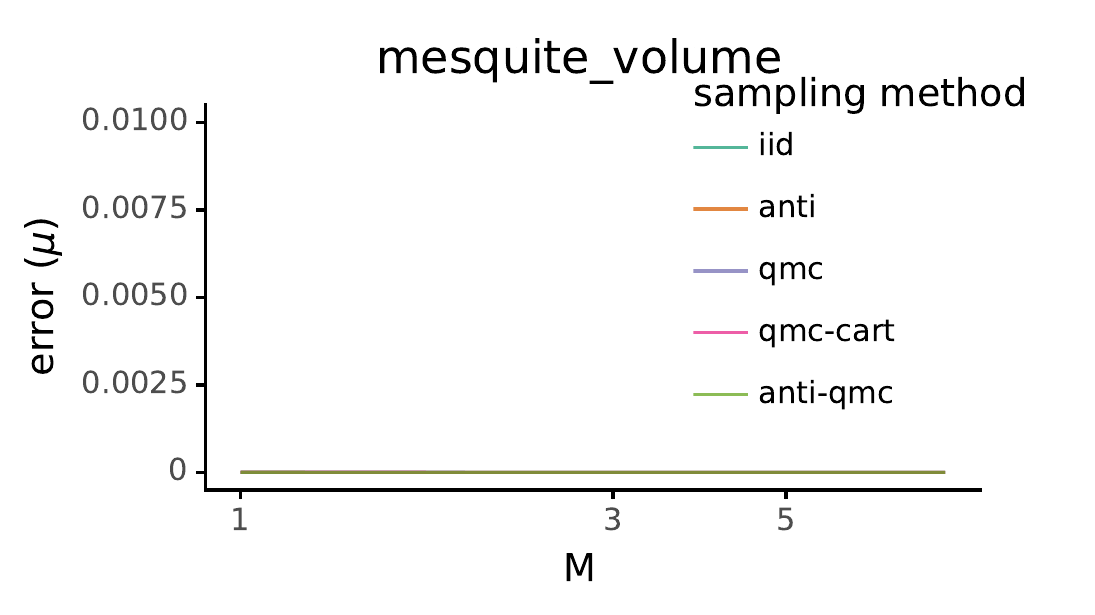}\linebreak{}

\includegraphics[width=0.33\columnwidth]{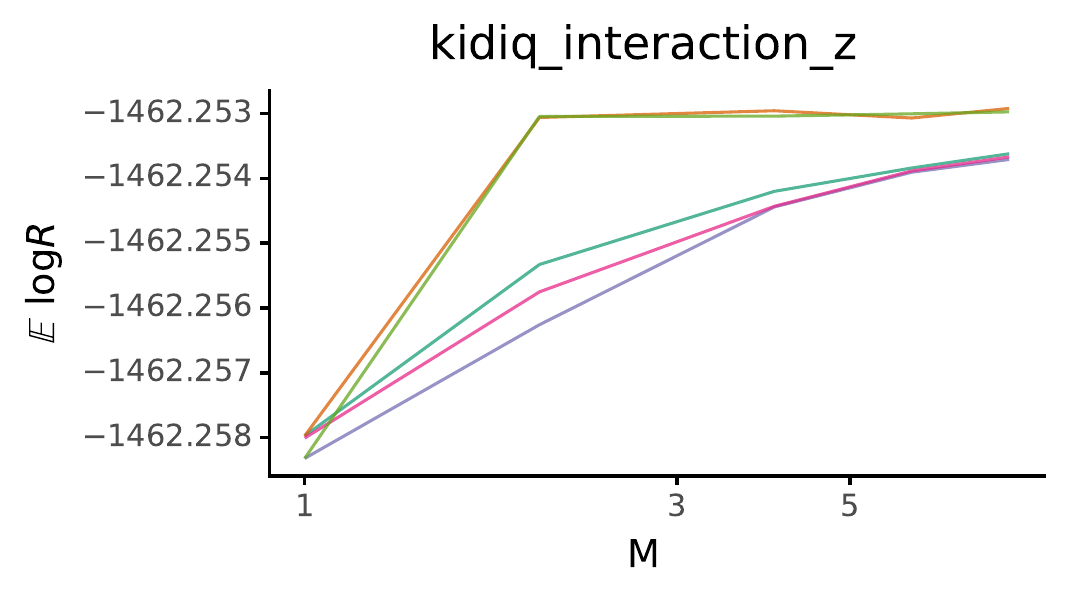}\includegraphics[width=0.33\columnwidth]{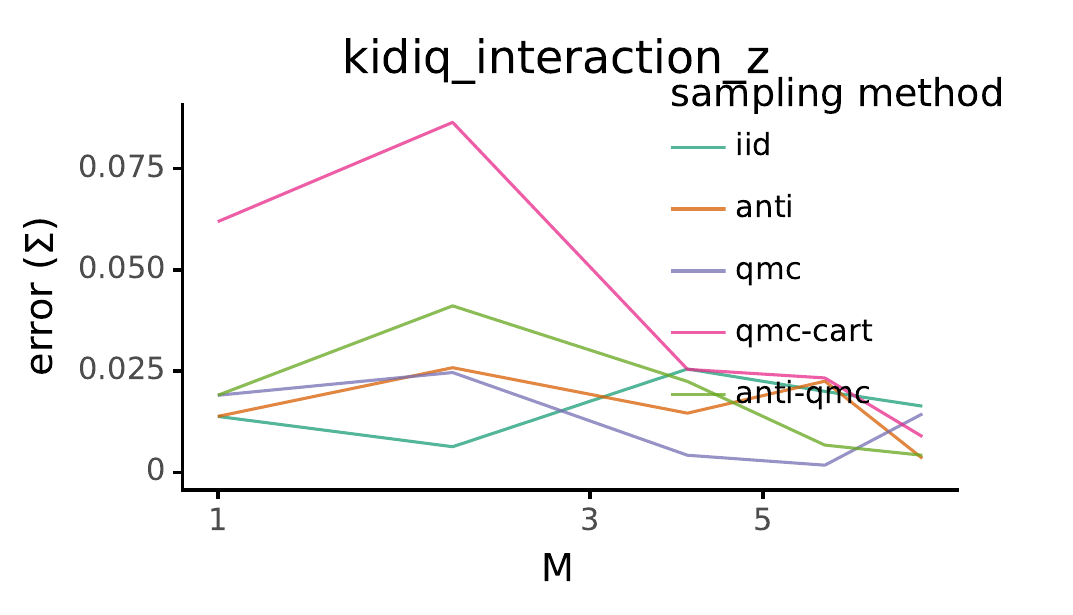}\includegraphics[width=0.33\columnwidth]{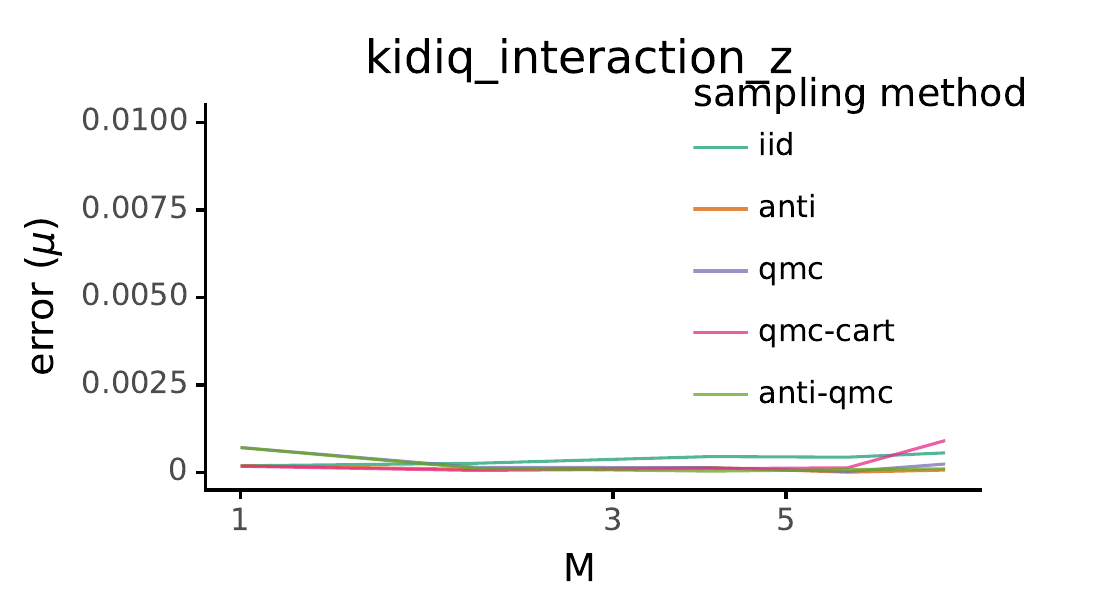}

\caption{\textbf{Across all models, improvements in likelihood bounds correlate
strongly with improvements in posterior accuracy. Better sampling
methods can improve both.}}
\end{figure}

\begin{figure}
\includegraphics[width=0.33\columnwidth]{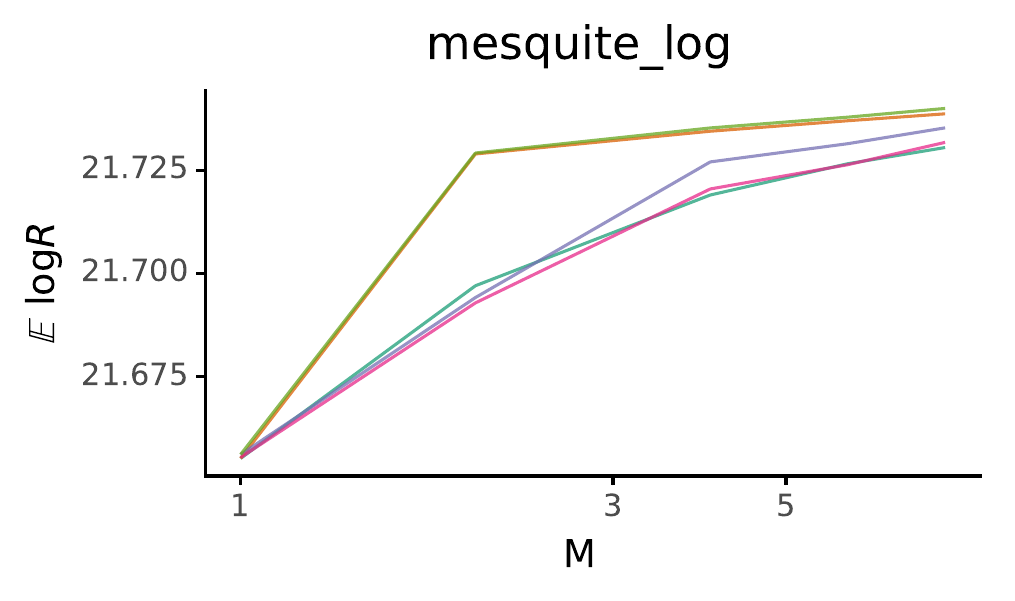}\includegraphics[width=0.33\columnwidth]{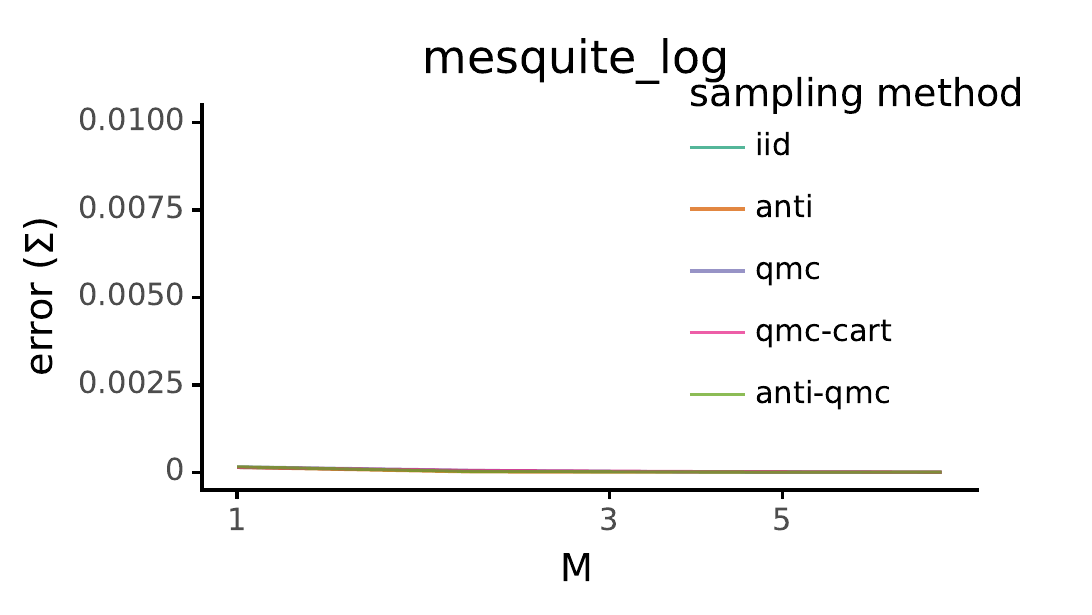}\includegraphics[width=0.33\columnwidth]{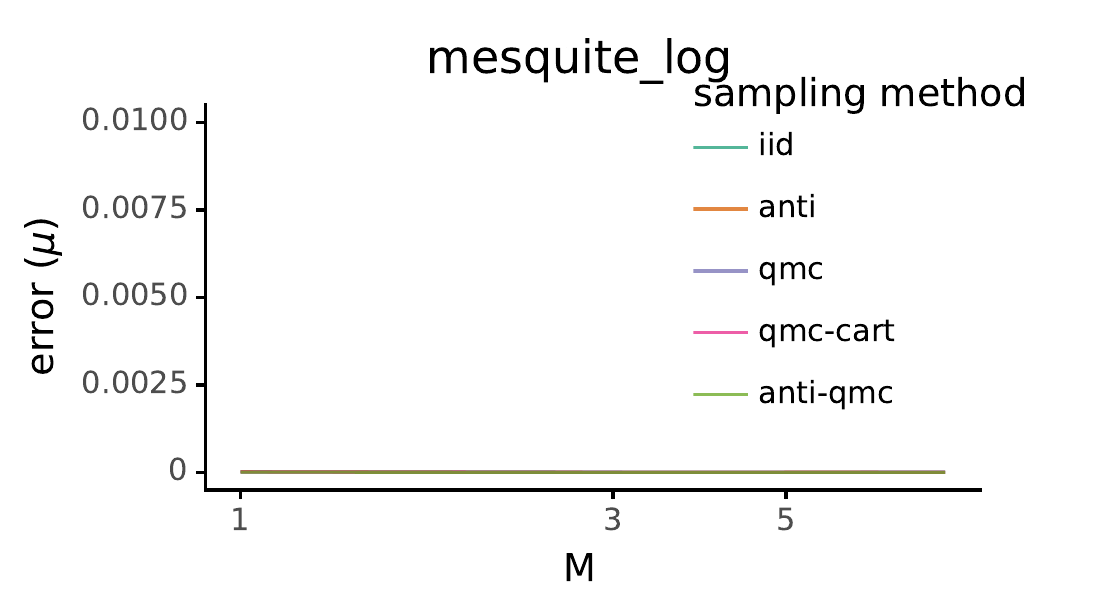}\linebreak{}

\includegraphics[width=0.33\columnwidth]{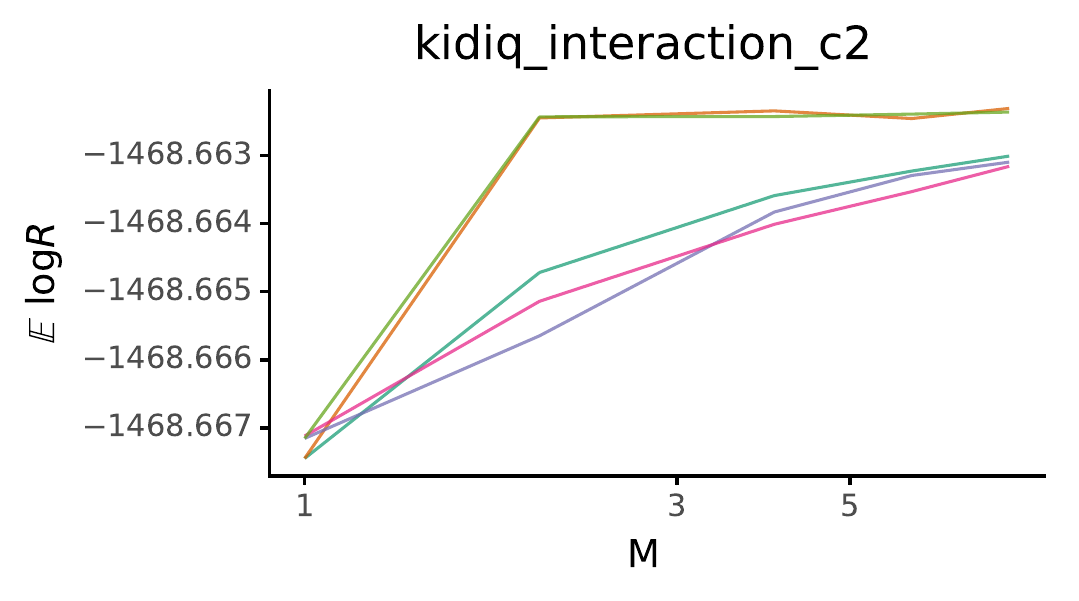}\includegraphics[width=0.33\columnwidth]{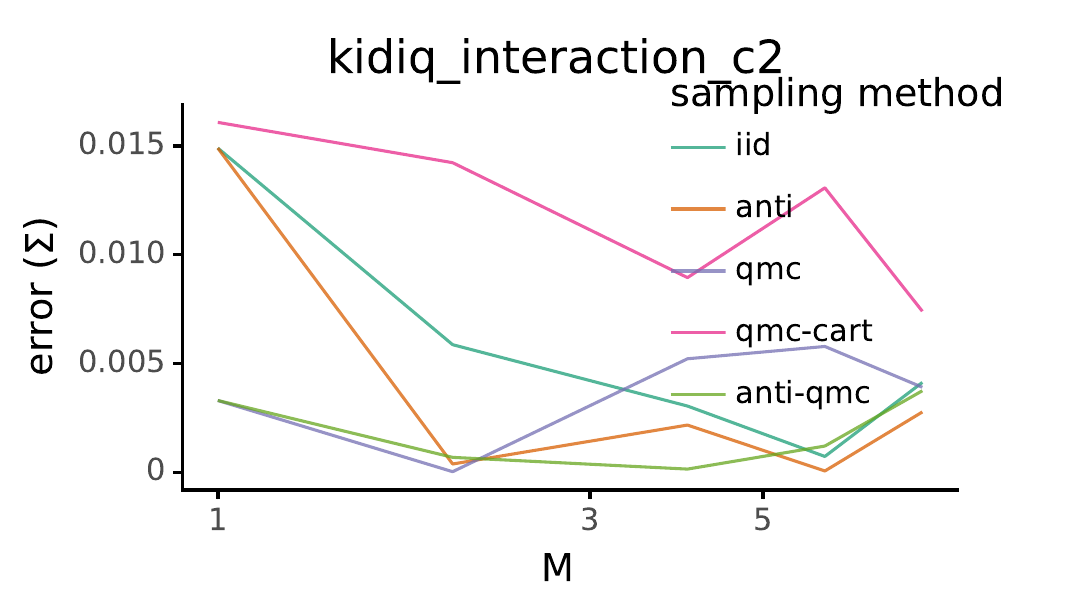}\includegraphics[width=0.33\columnwidth]{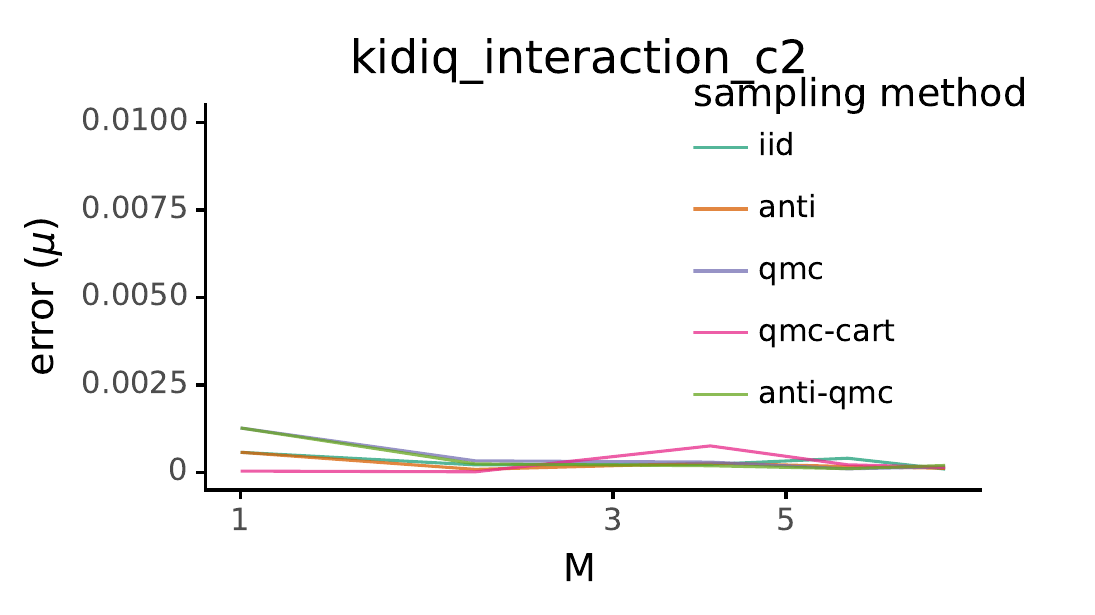}\linebreak{}

\includegraphics[width=0.33\columnwidth]{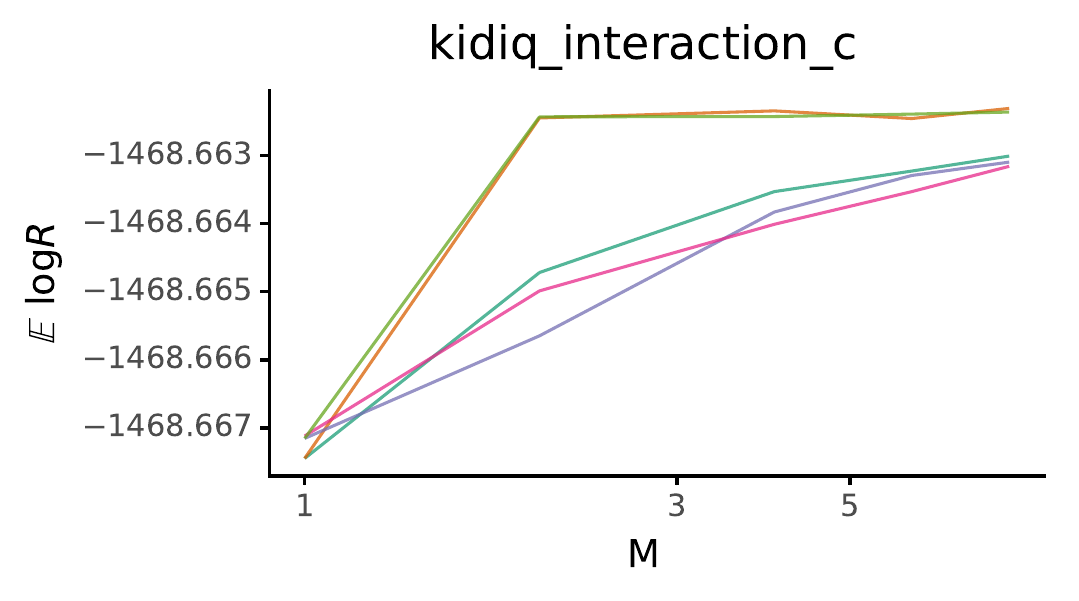}\includegraphics[width=0.33\columnwidth]{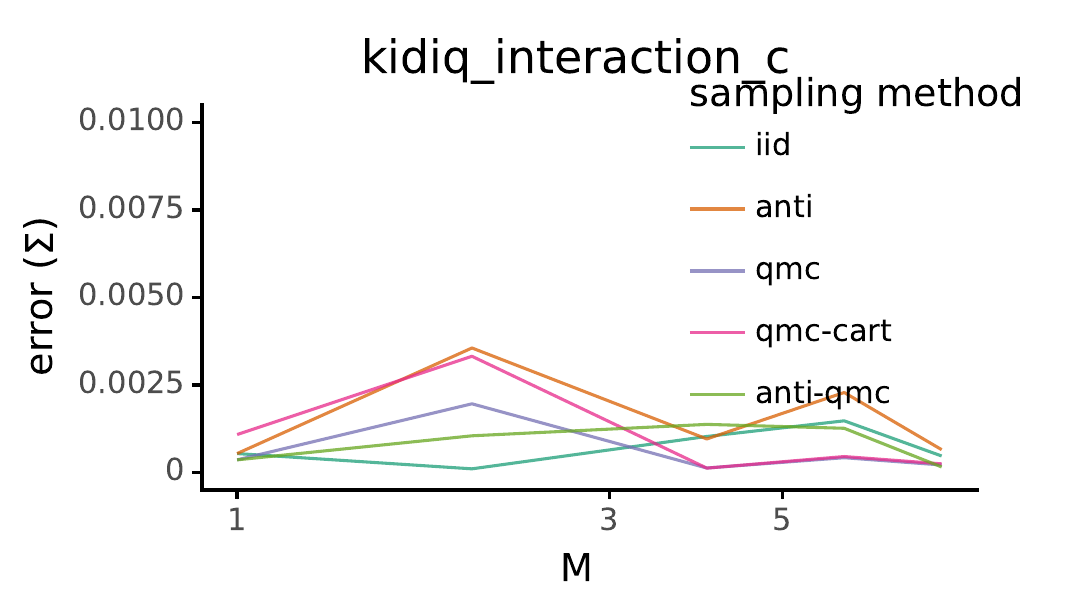}\includegraphics[width=0.33\columnwidth]{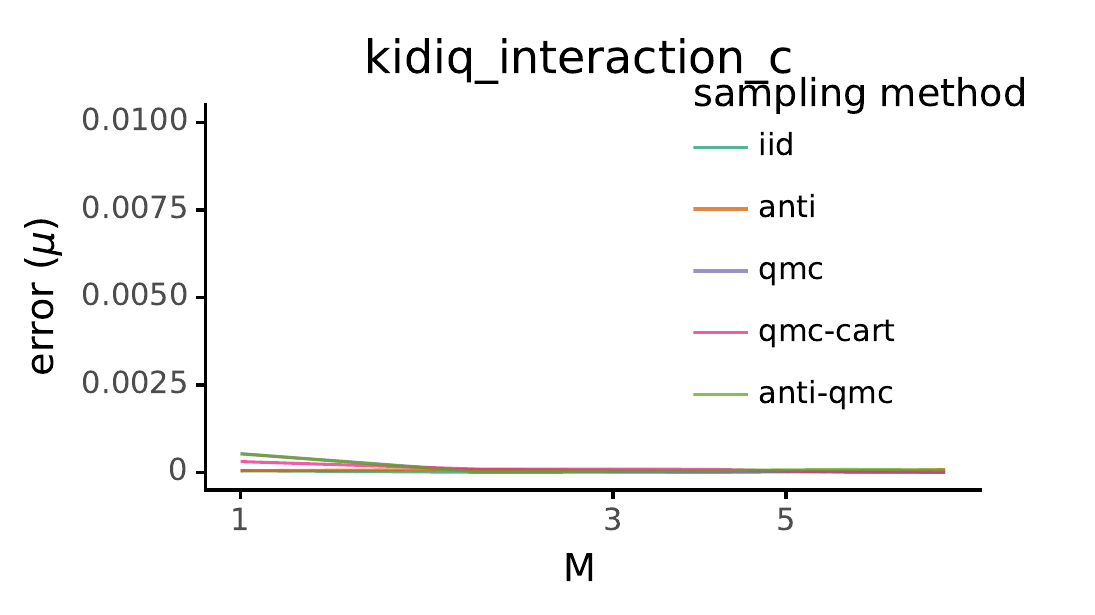}\linebreak{}

\includegraphics[width=0.33\columnwidth]{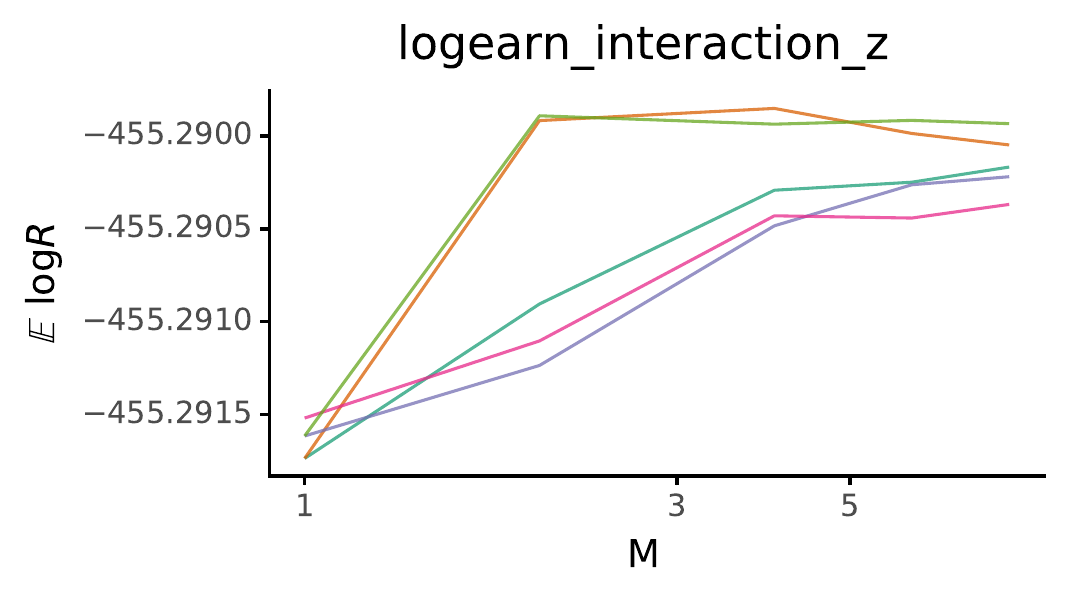}\includegraphics[width=0.33\columnwidth]{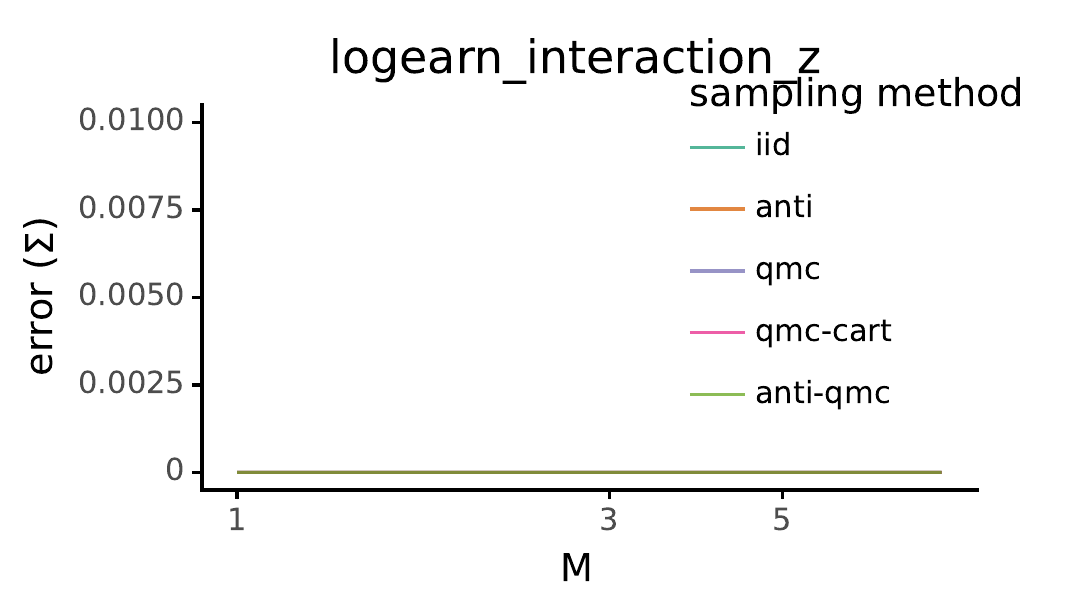}\includegraphics[width=0.33\columnwidth]{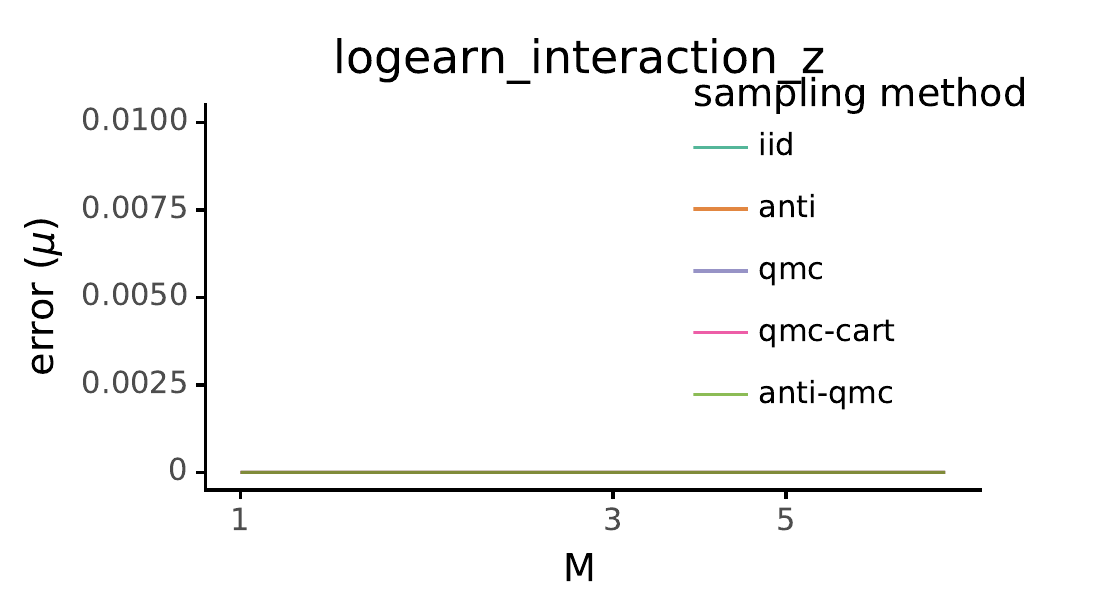}\linebreak{}

\includegraphics[width=0.33\columnwidth]{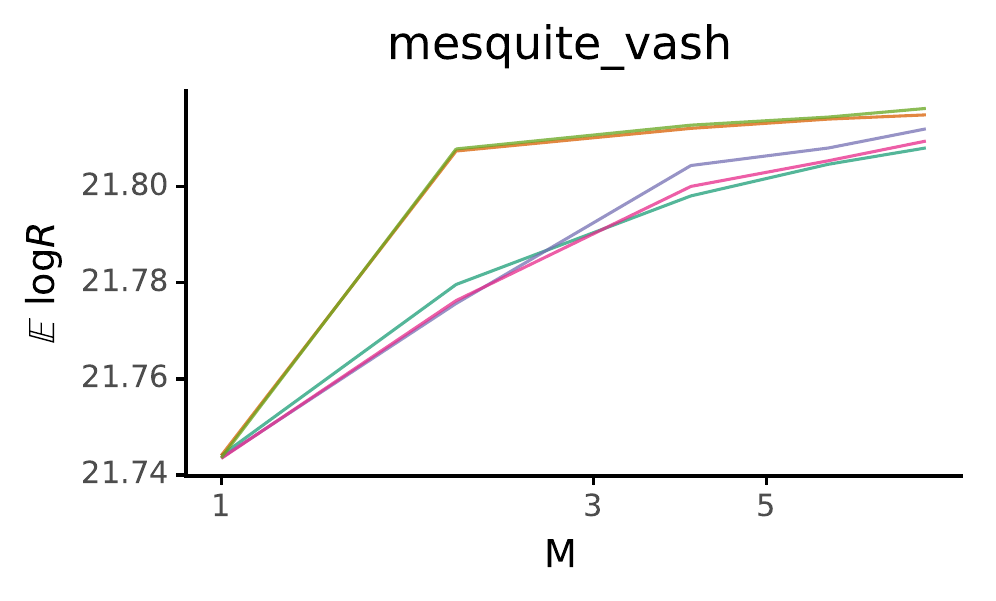}\includegraphics[width=0.33\columnwidth]{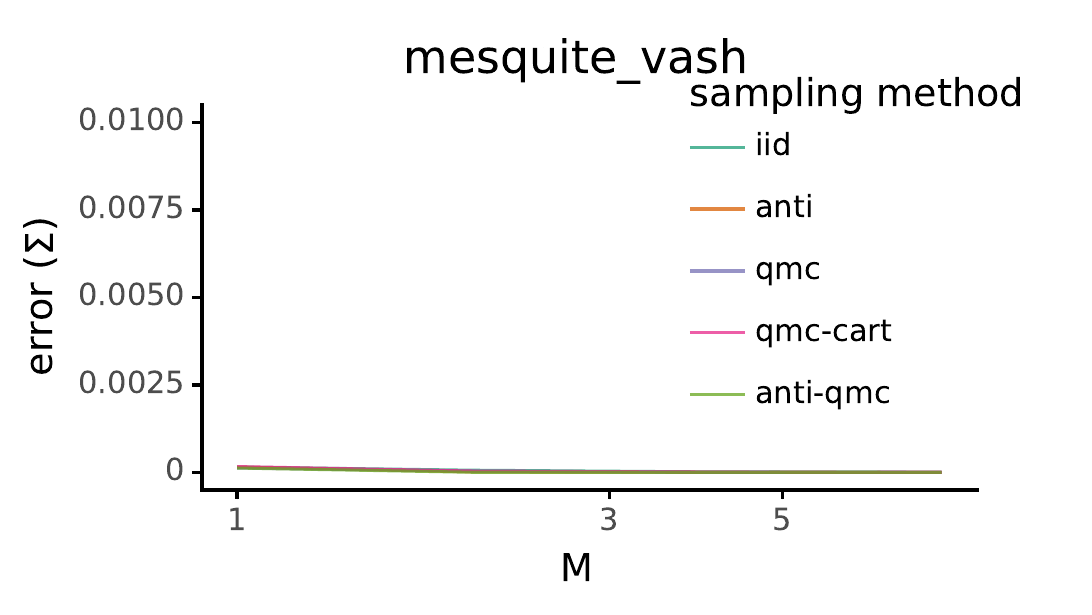}\includegraphics[width=0.33\columnwidth]{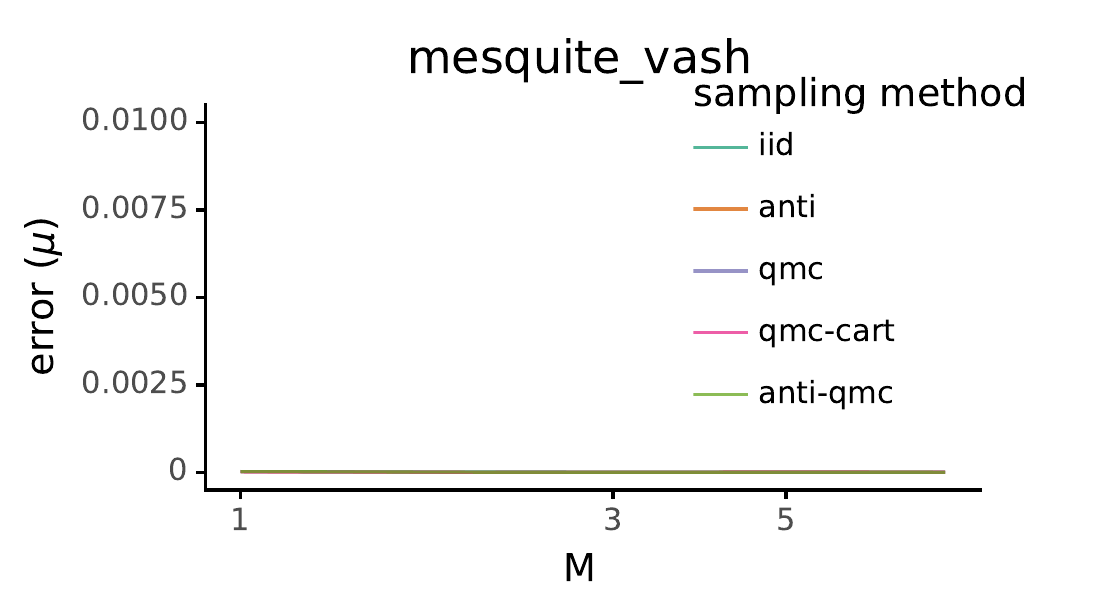}\linebreak{}

\includegraphics[width=0.33\columnwidth]{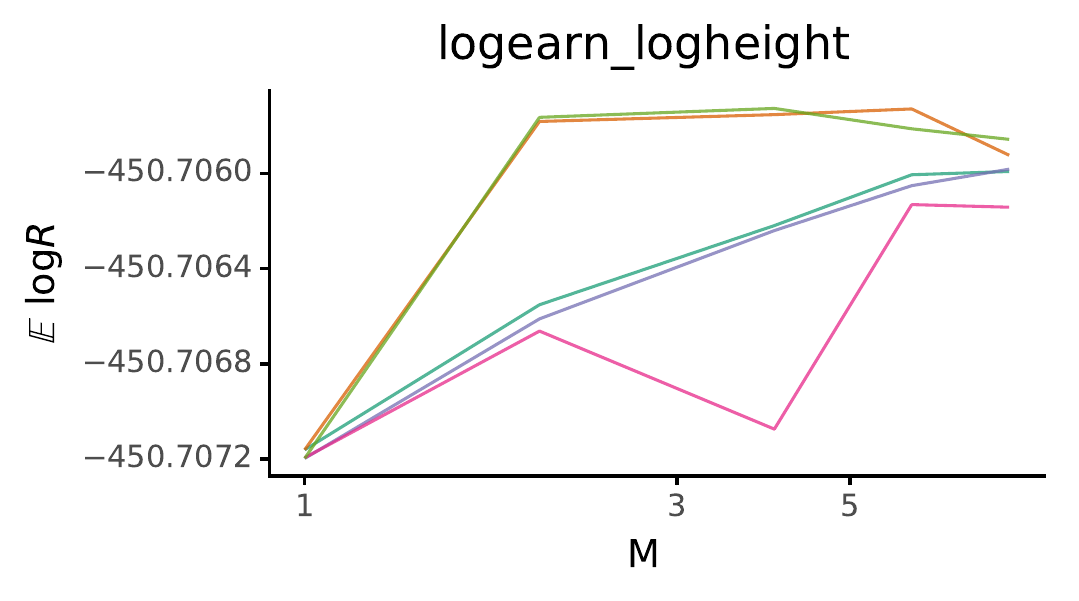}\includegraphics[width=0.33\columnwidth]{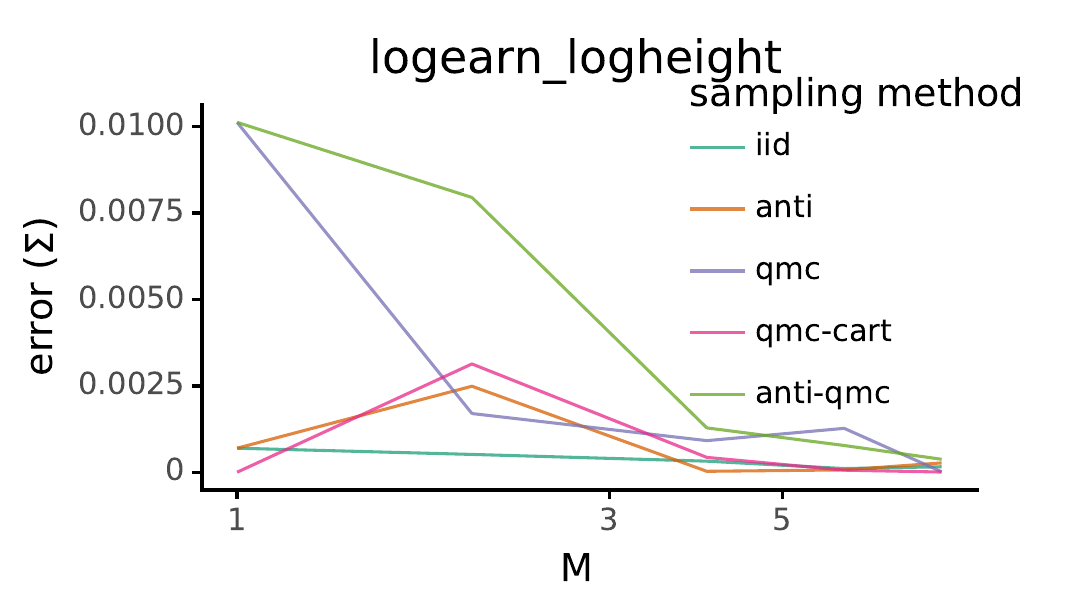}\includegraphics[width=0.33\columnwidth]{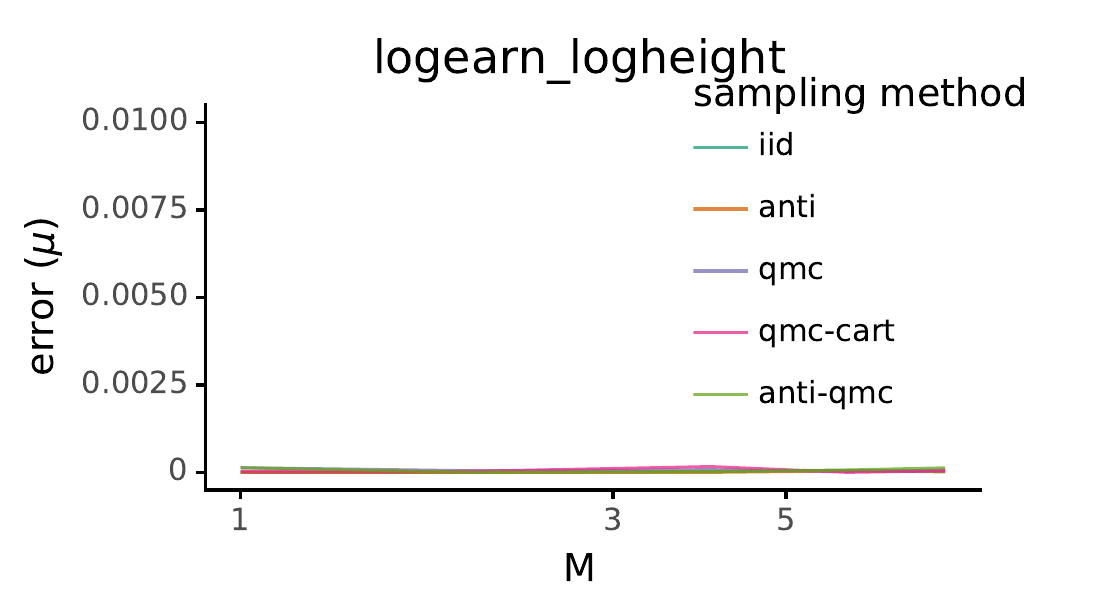}\linebreak{}

\includegraphics[width=0.33\columnwidth]{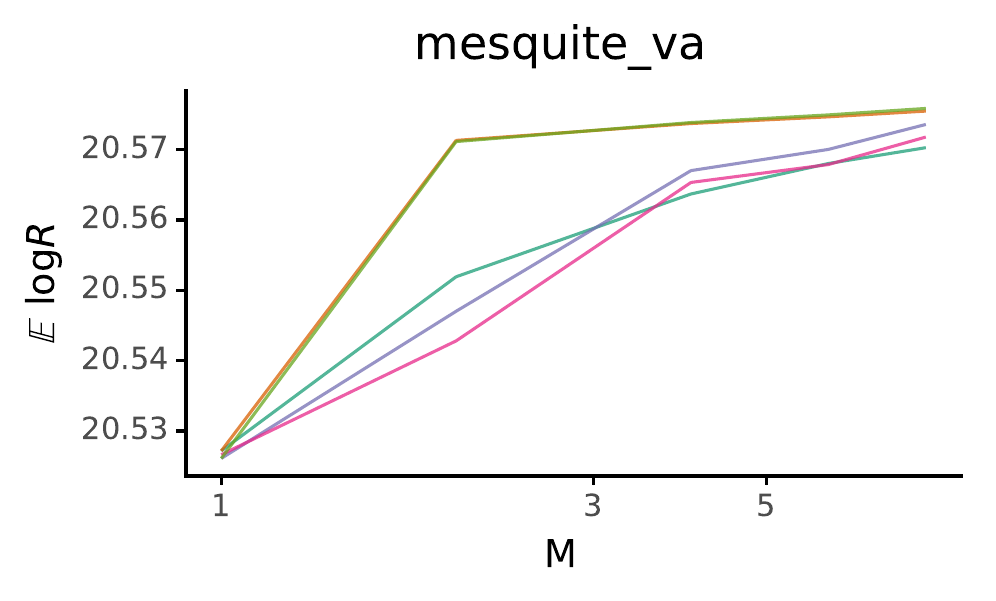}\includegraphics[width=0.33\columnwidth]{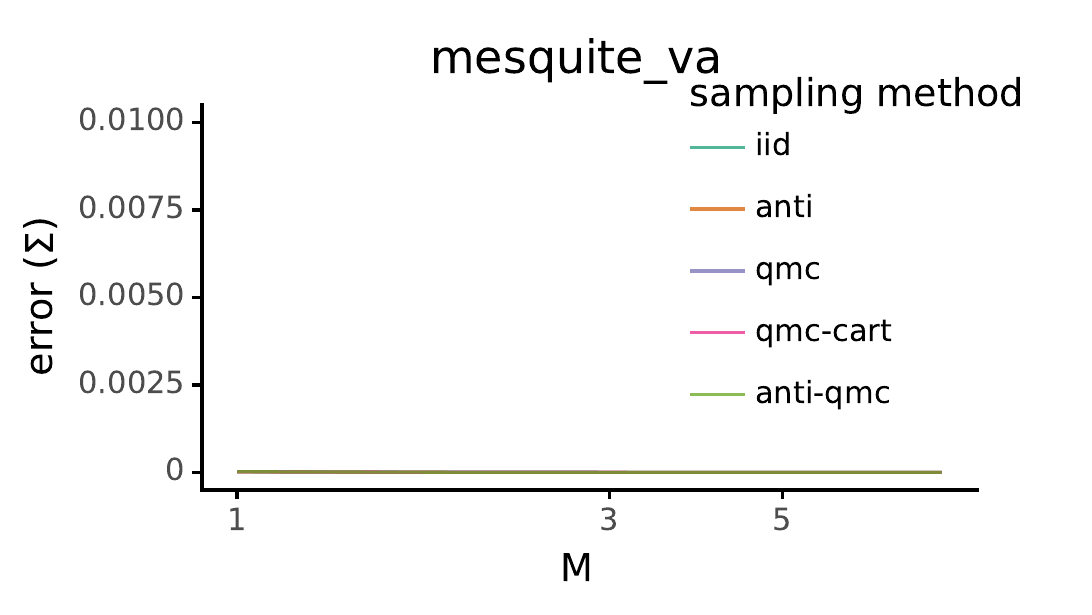}\includegraphics[width=0.33\columnwidth]{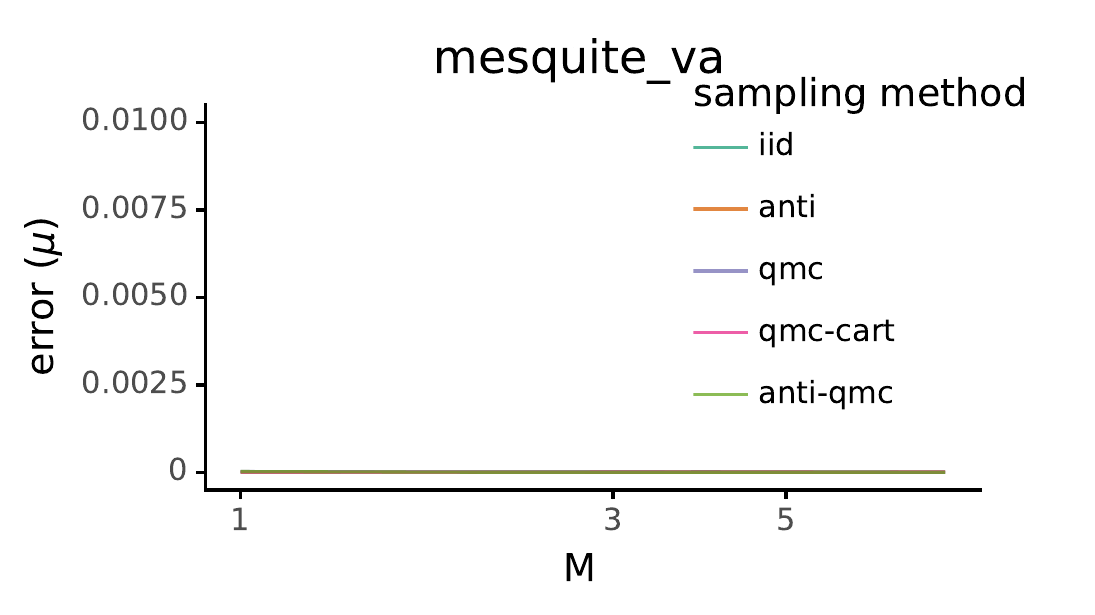}\linebreak{}

\caption{\textbf{Across all models, improvements in likelihood bounds correlate
strongly with improvements in posterior accuracy. Better sampling
methods can improve both.}}
\end{figure}

\begin{figure}
\includegraphics[width=0.33\columnwidth]{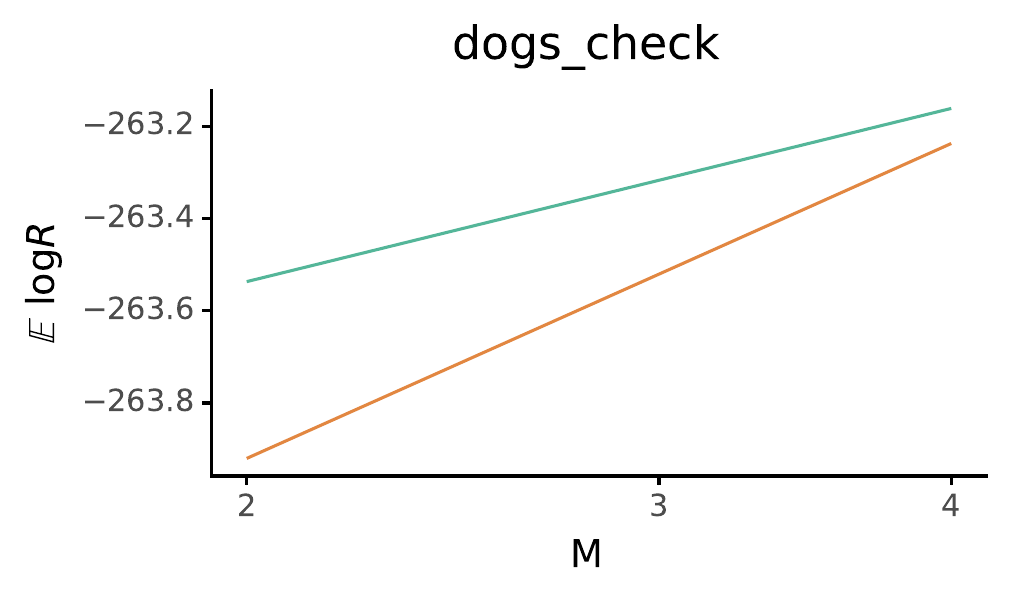}\includegraphics[width=0.33\columnwidth]{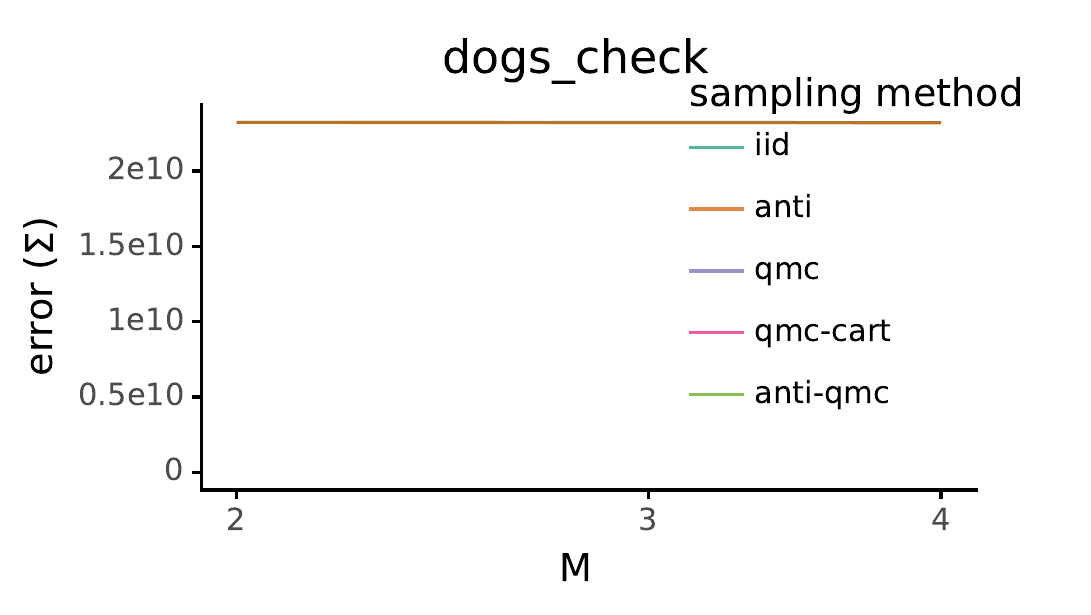}\includegraphics[width=0.33\columnwidth]{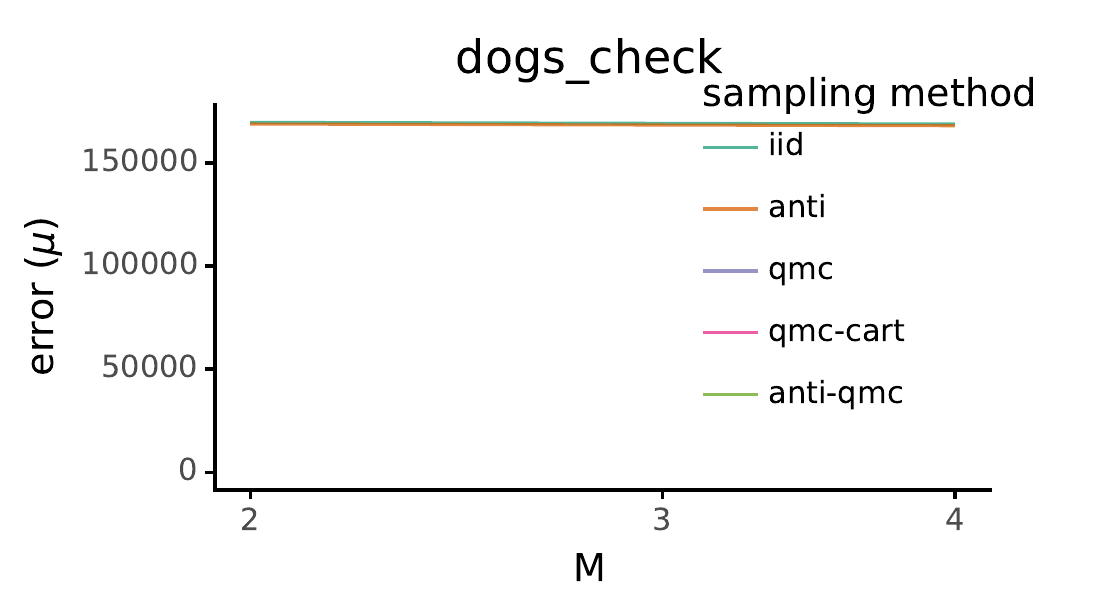}\linebreak{}

\includegraphics[width=0.33\columnwidth]{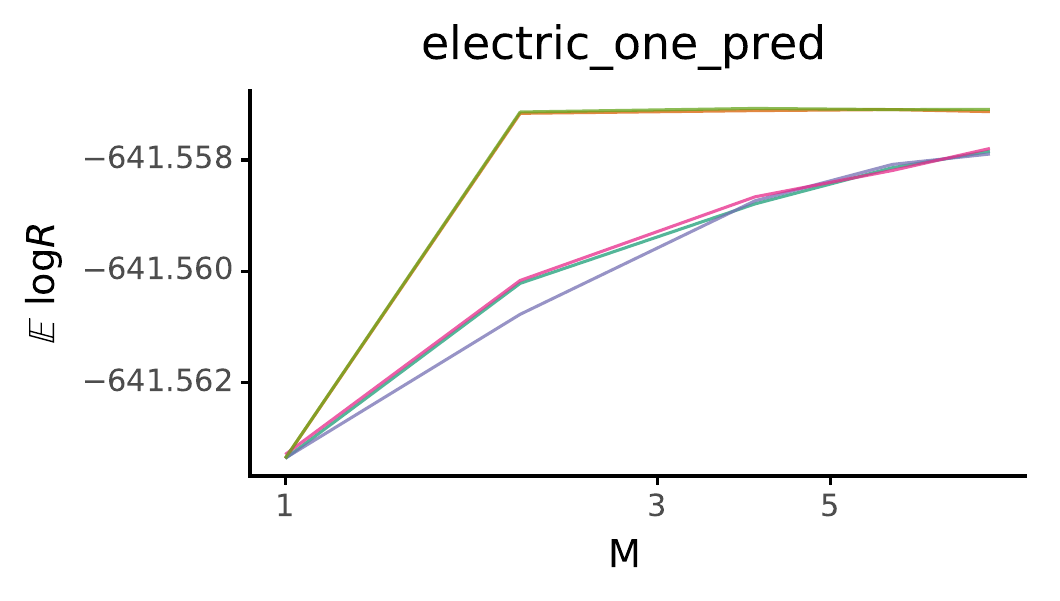}\includegraphics[width=0.33\columnwidth]{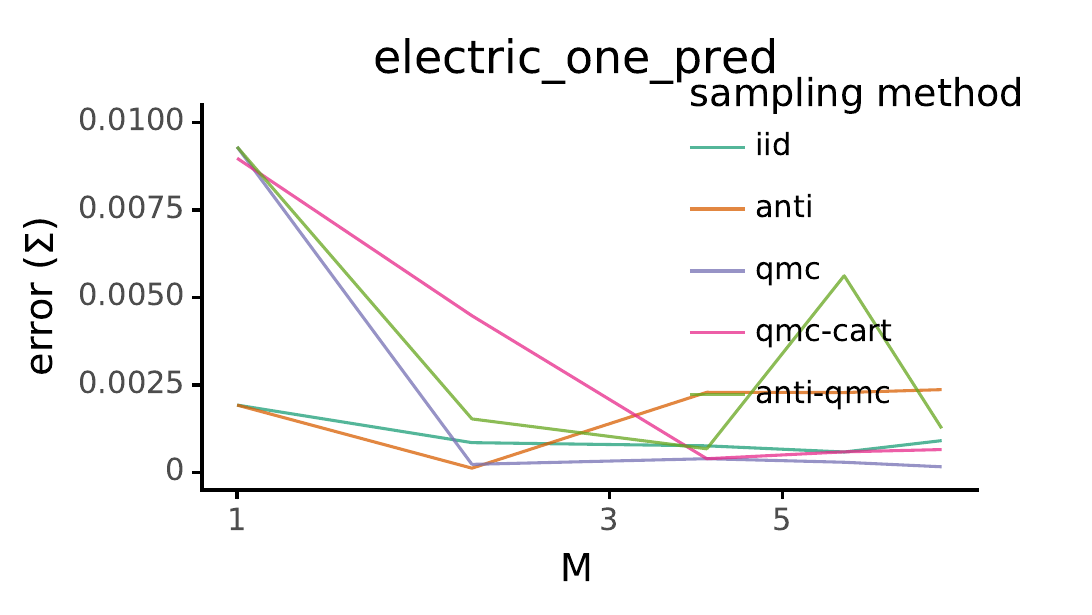}\includegraphics[width=0.33\columnwidth]{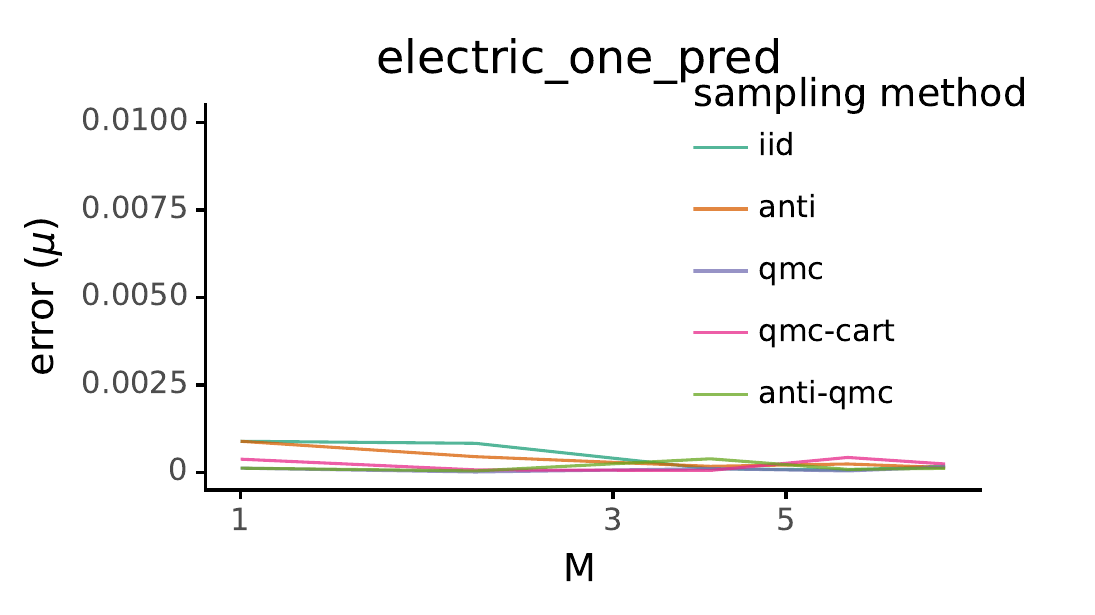}\linebreak{}

\includegraphics[width=0.33\columnwidth]{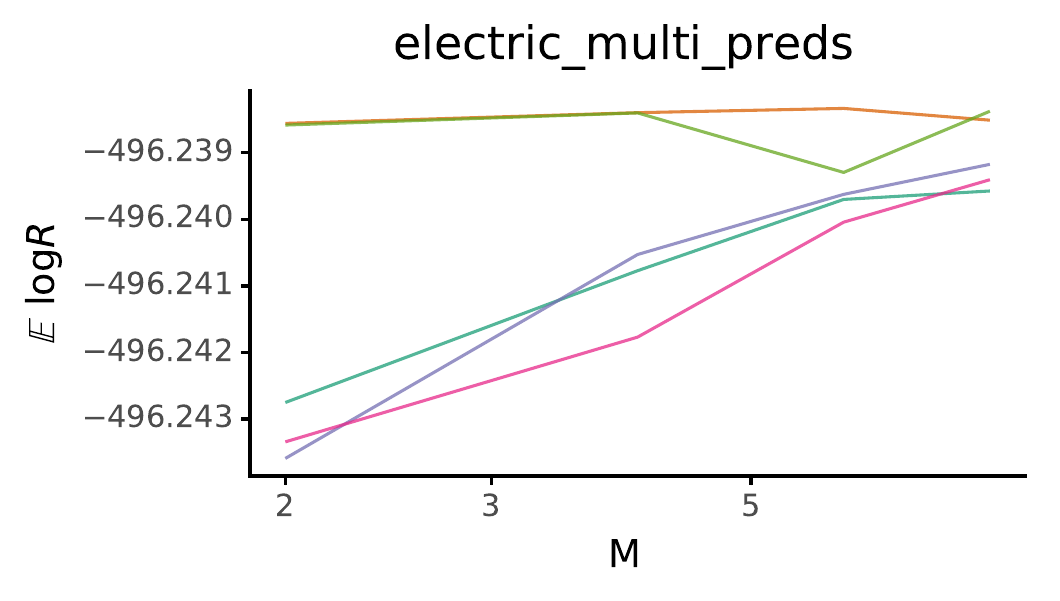}\includegraphics[width=0.33\columnwidth]{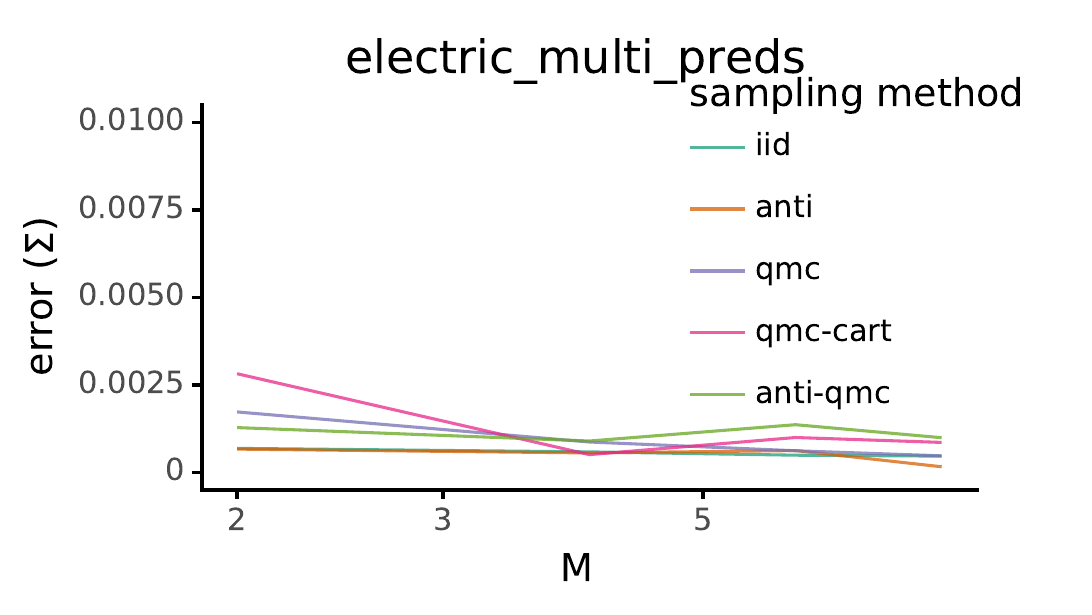}\includegraphics[width=0.33\columnwidth]{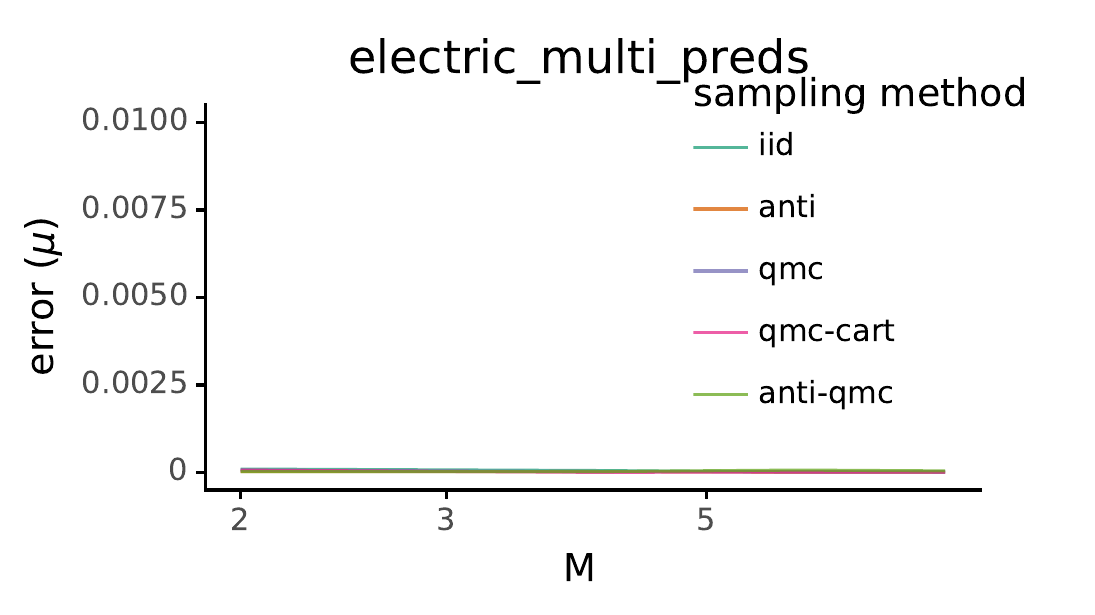}\linebreak{}

\includegraphics[width=0.33\columnwidth]{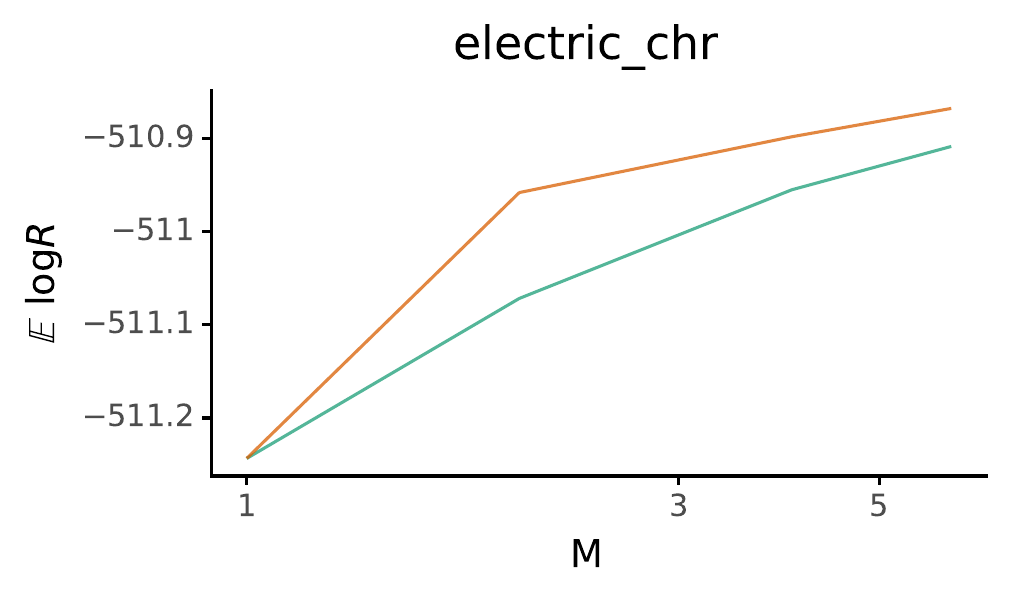}\includegraphics[width=0.33\columnwidth]{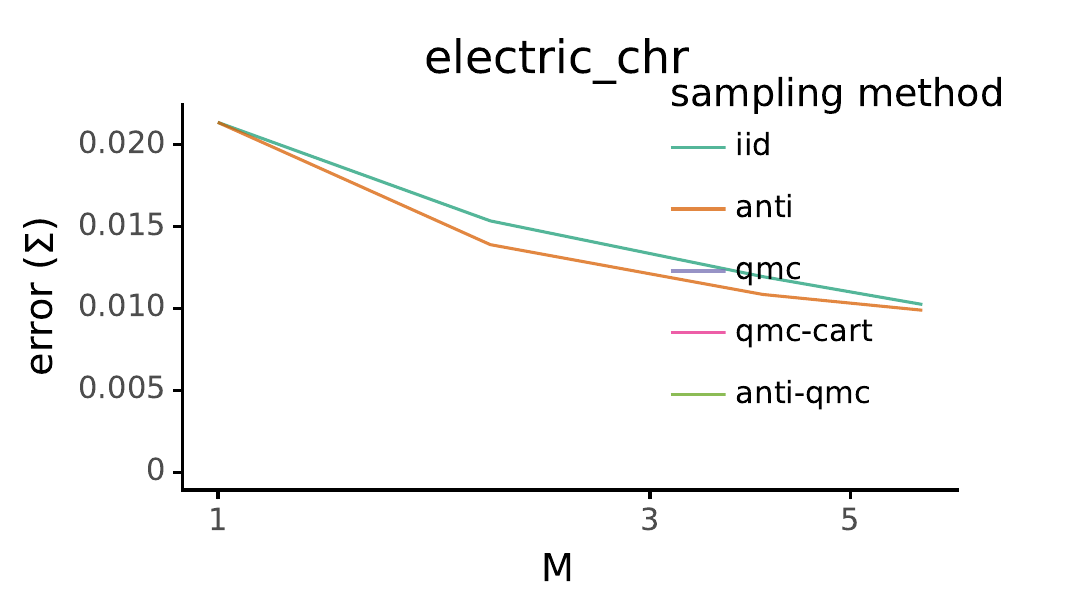}\includegraphics[width=0.33\columnwidth]{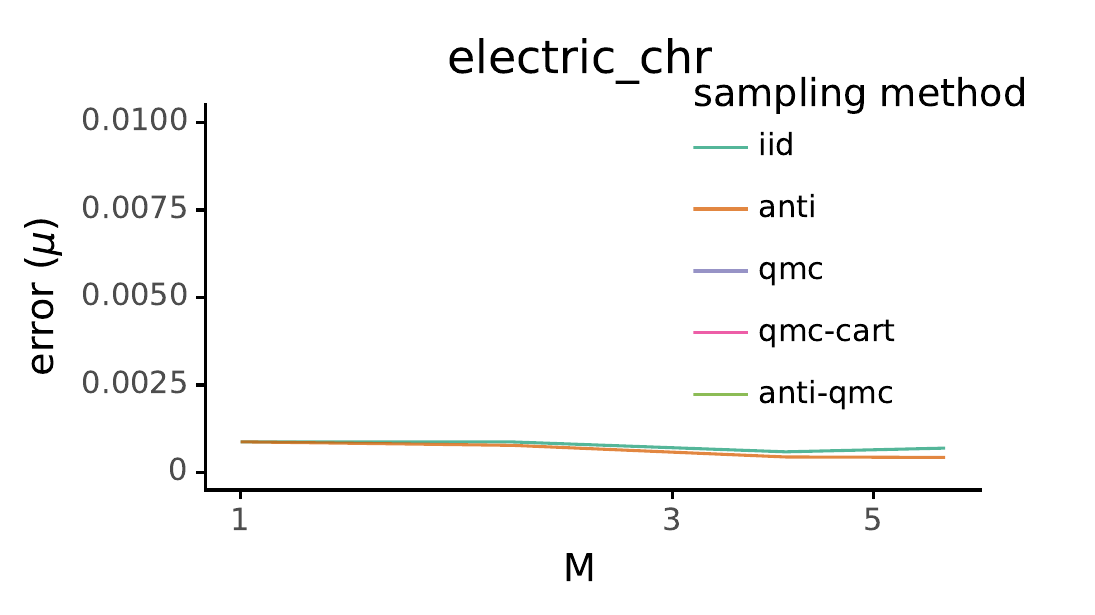}\linebreak{}

\includegraphics[width=0.33\columnwidth]{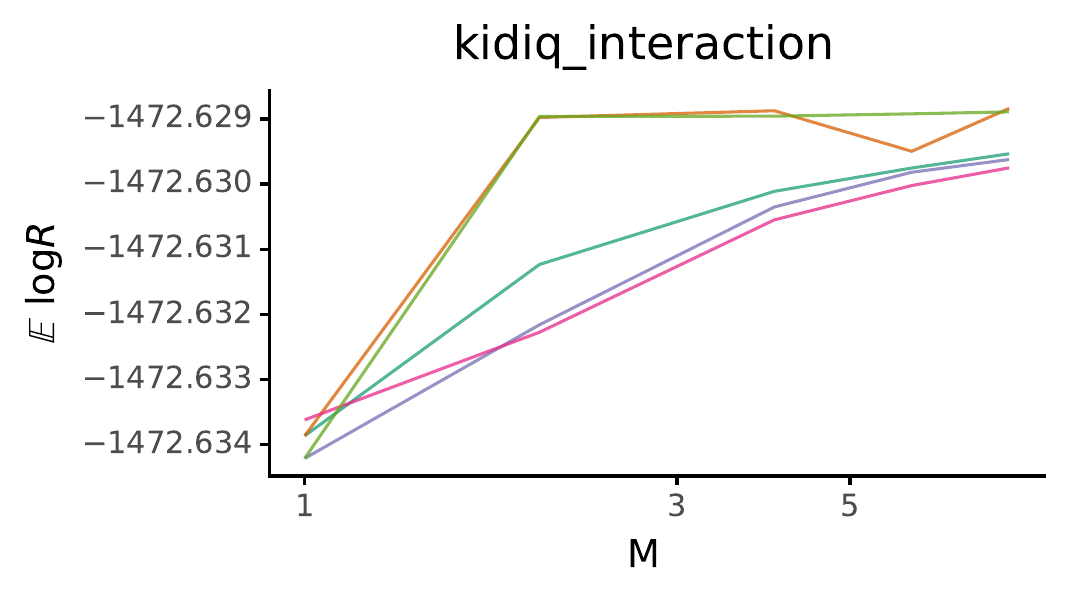}\includegraphics[width=0.33\columnwidth]{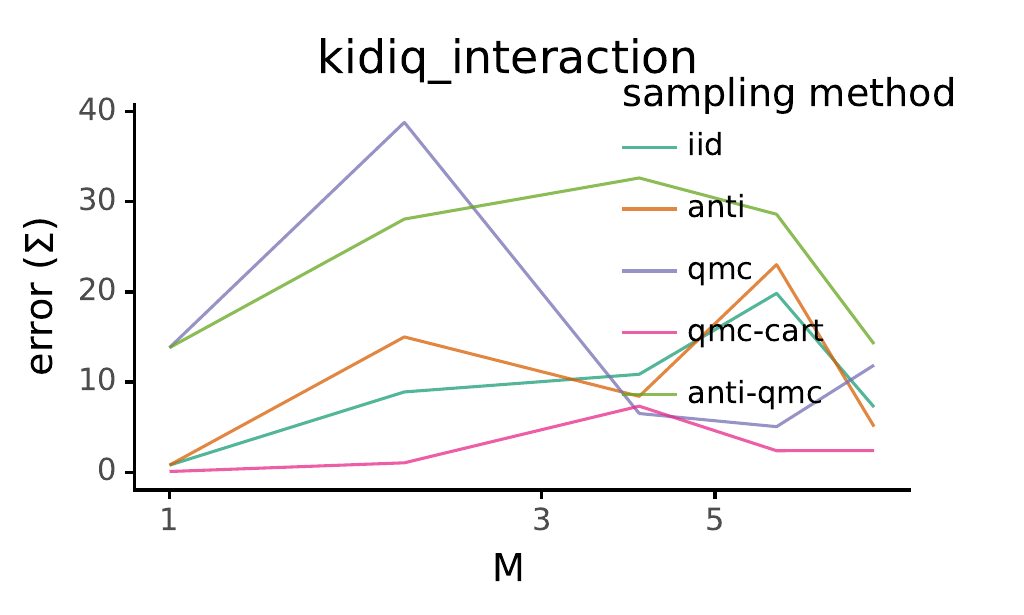}\includegraphics[width=0.33\columnwidth]{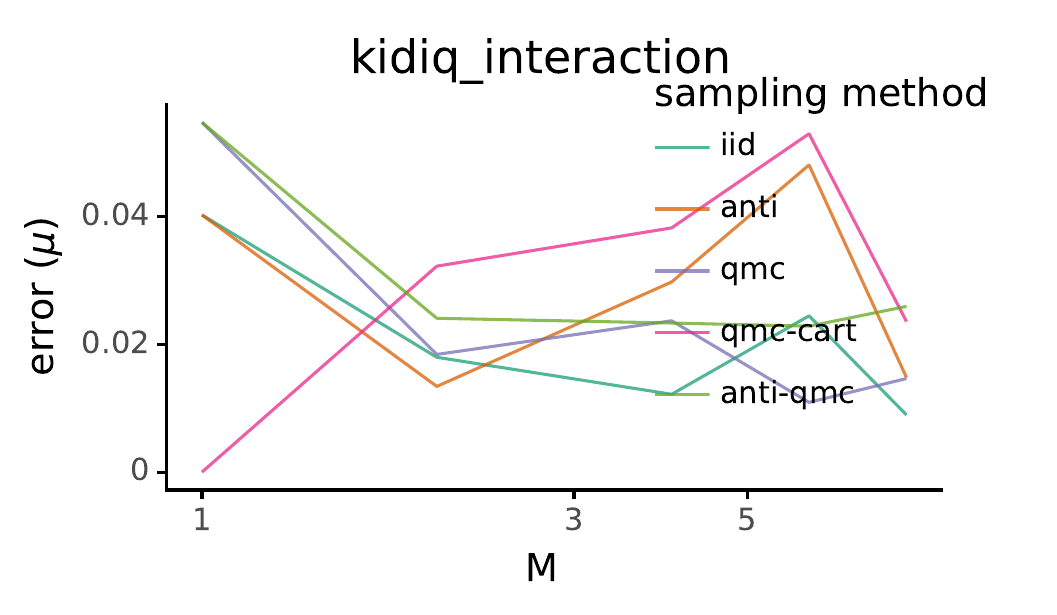}\linebreak{}

\includegraphics[width=0.33\columnwidth]{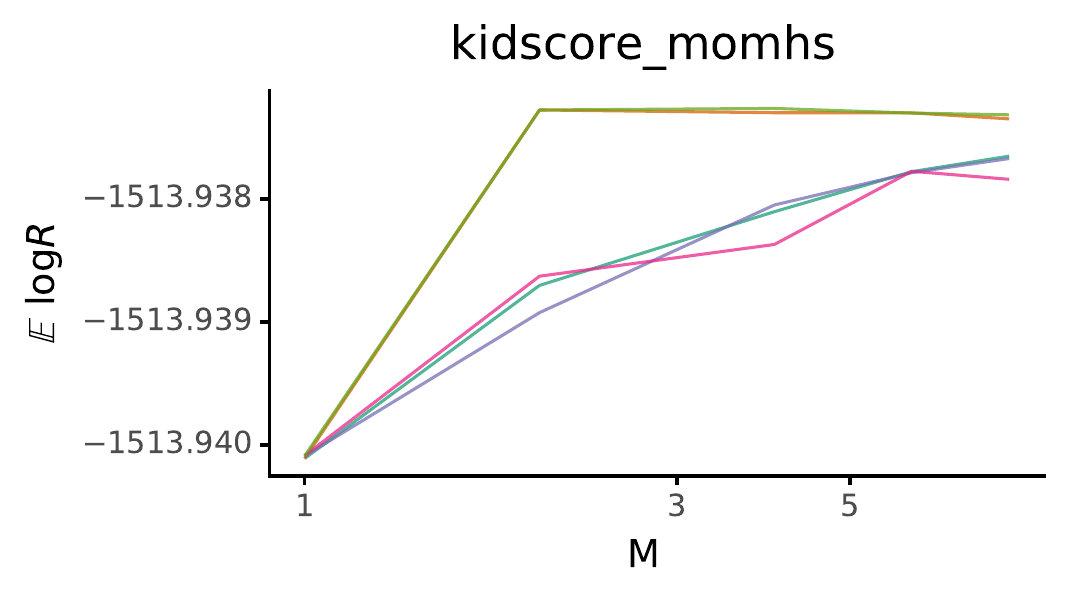}\includegraphics[width=0.33\columnwidth]{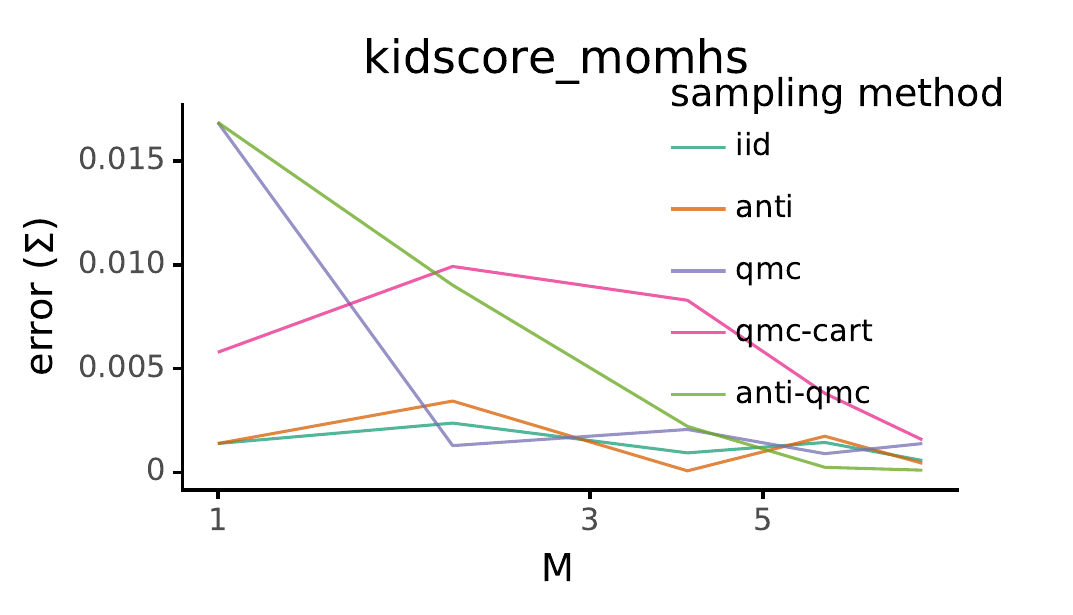}\includegraphics[width=0.33\columnwidth]{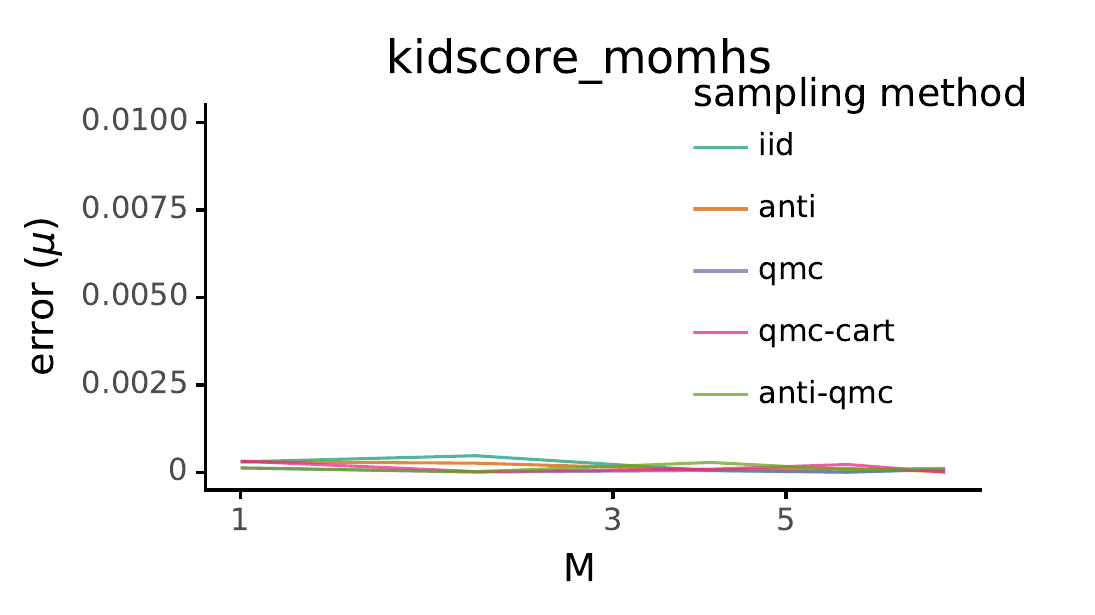}\linebreak{}

\includegraphics[width=0.33\columnwidth]{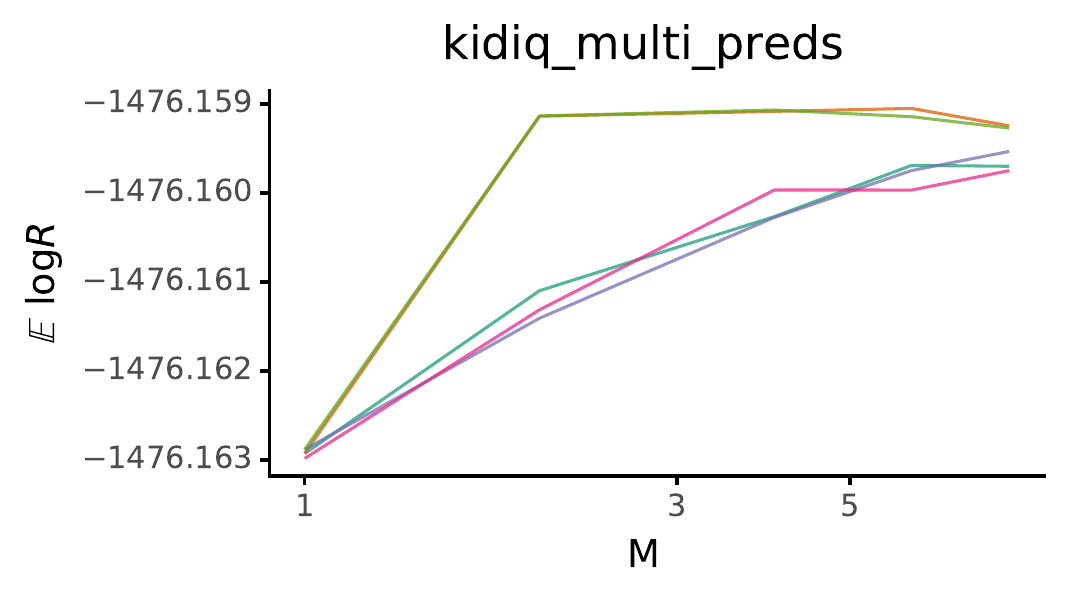}\includegraphics[width=0.33\columnwidth]{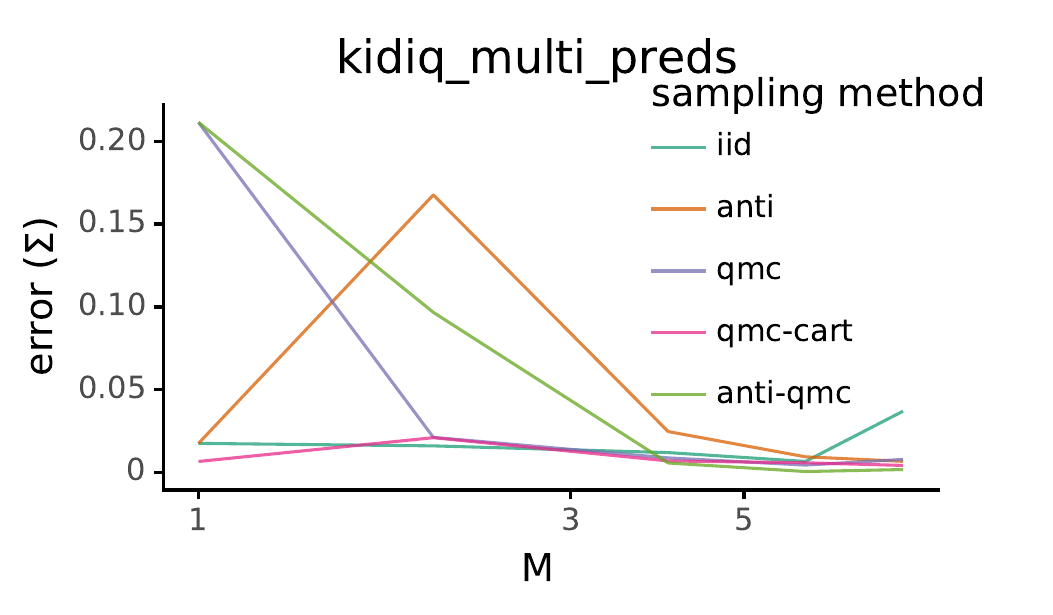}\includegraphics[width=0.33\columnwidth]{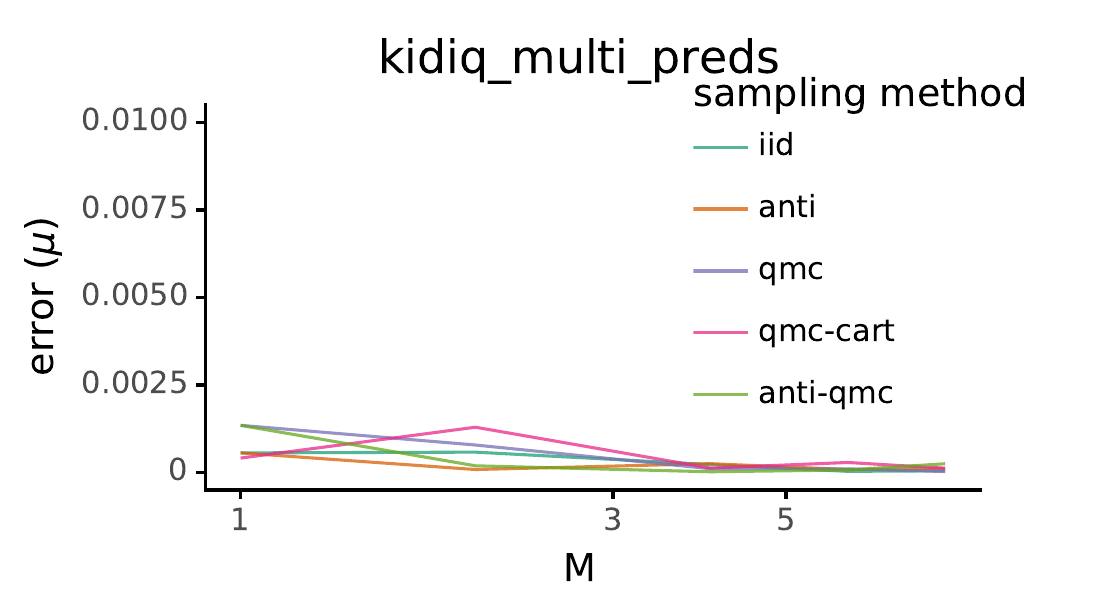}\linebreak{}

\caption{\textbf{Across all models, improvements in likelihood bounds correlate
strongly with improvements in posterior accuracy. Better sampling
methods can improve both.}}
\end{figure}

\begin{figure}
\includegraphics[width=0.33\columnwidth]{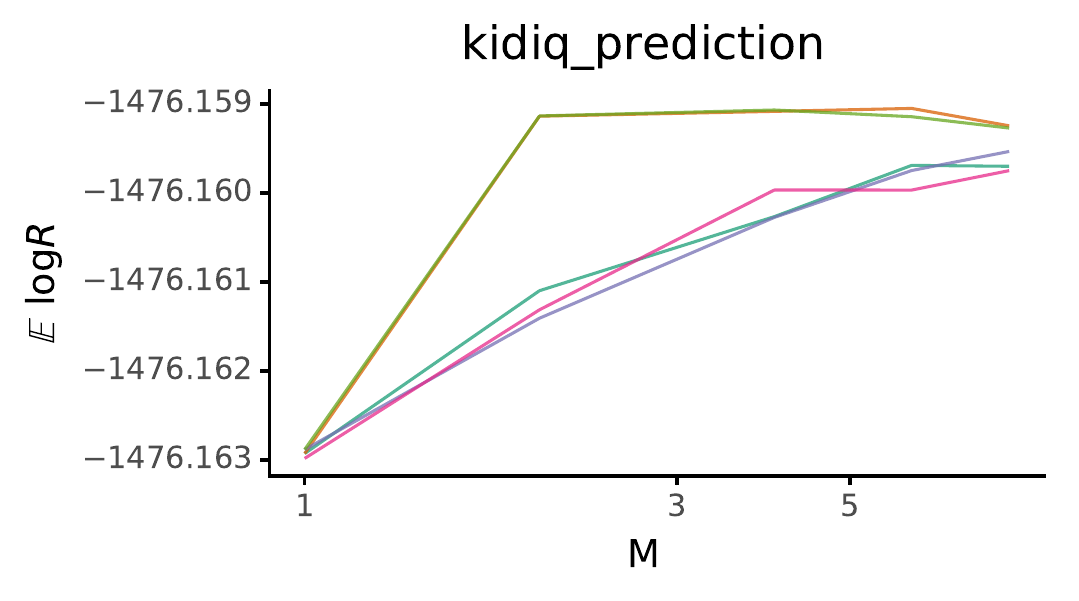}\includegraphics[width=0.33\columnwidth]{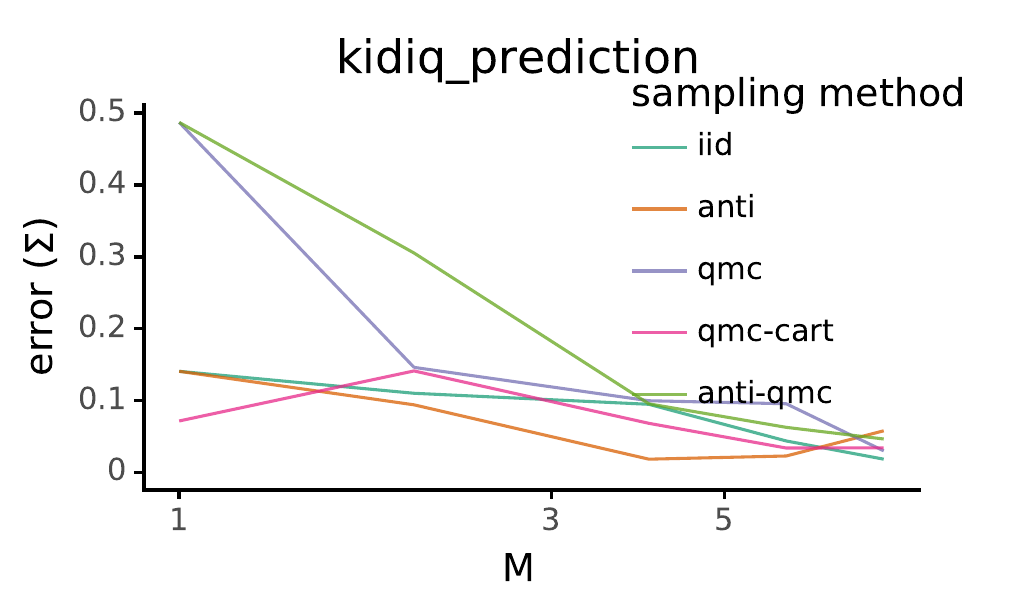}\includegraphics[width=0.33\columnwidth]{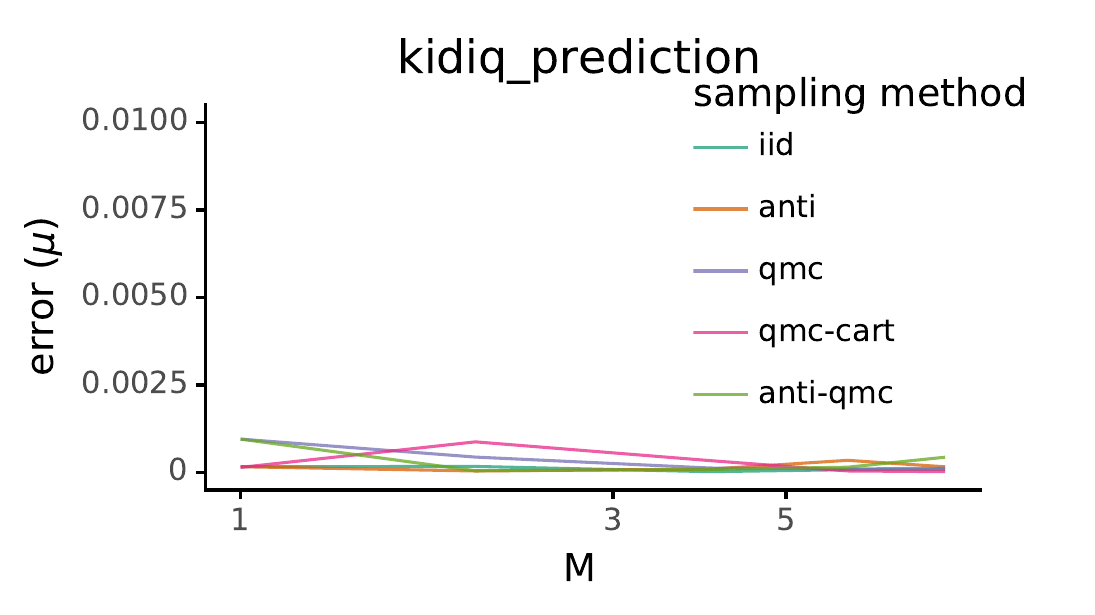}\linebreak{}

\includegraphics[width=0.33\columnwidth]{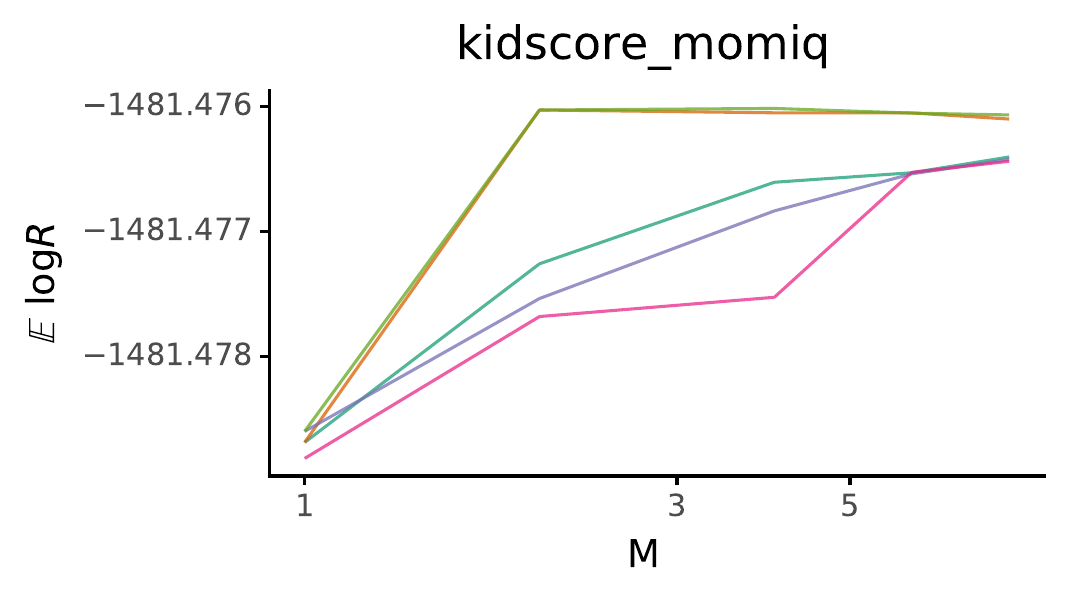}\includegraphics[width=0.33\columnwidth]{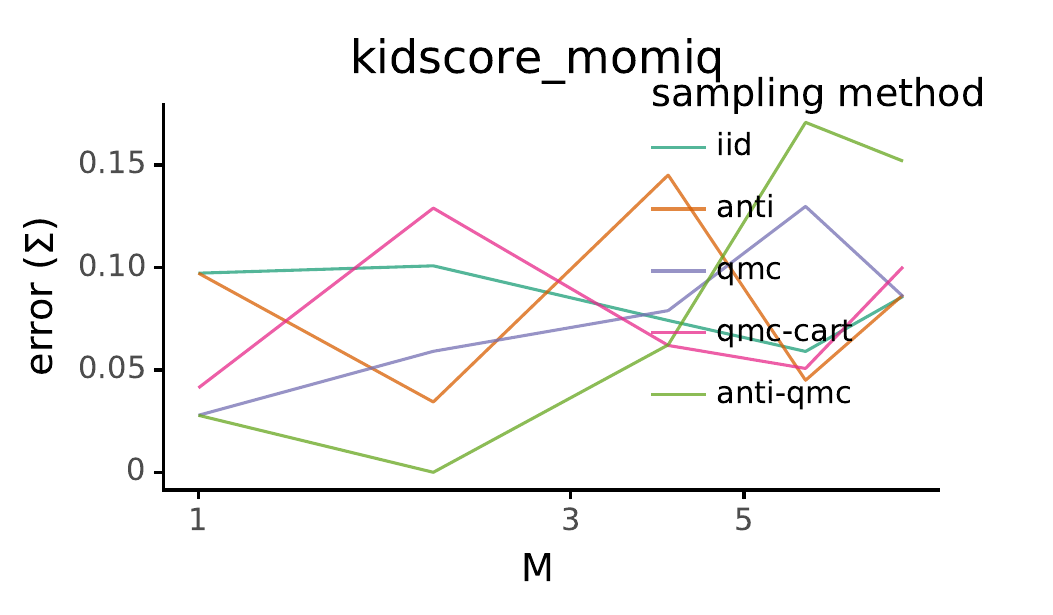}\includegraphics[width=0.33\columnwidth]{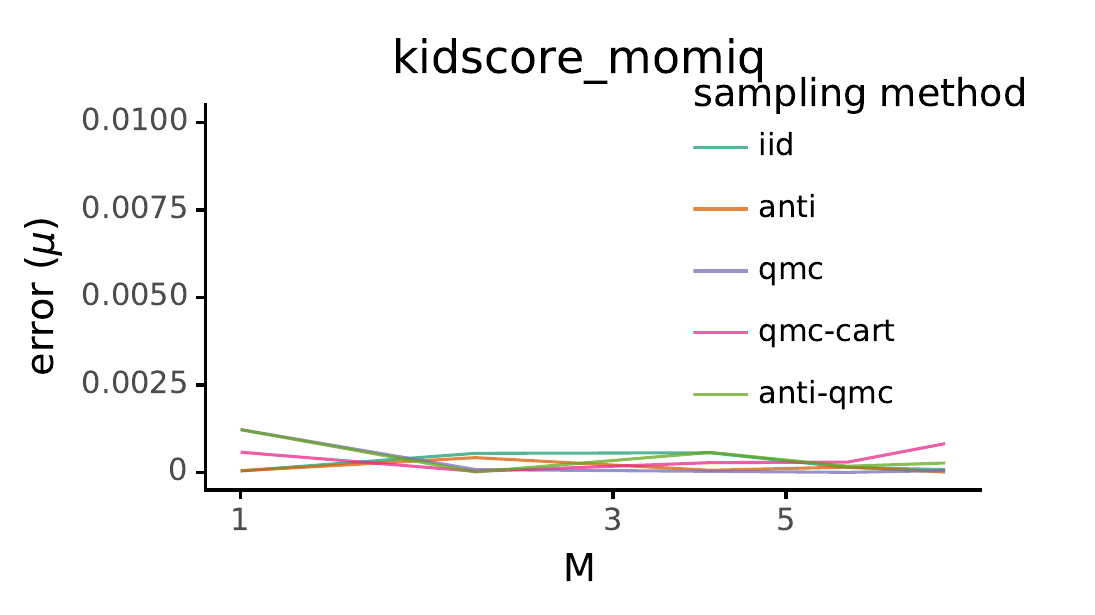}\linebreak{}

\includegraphics[width=0.33\columnwidth]{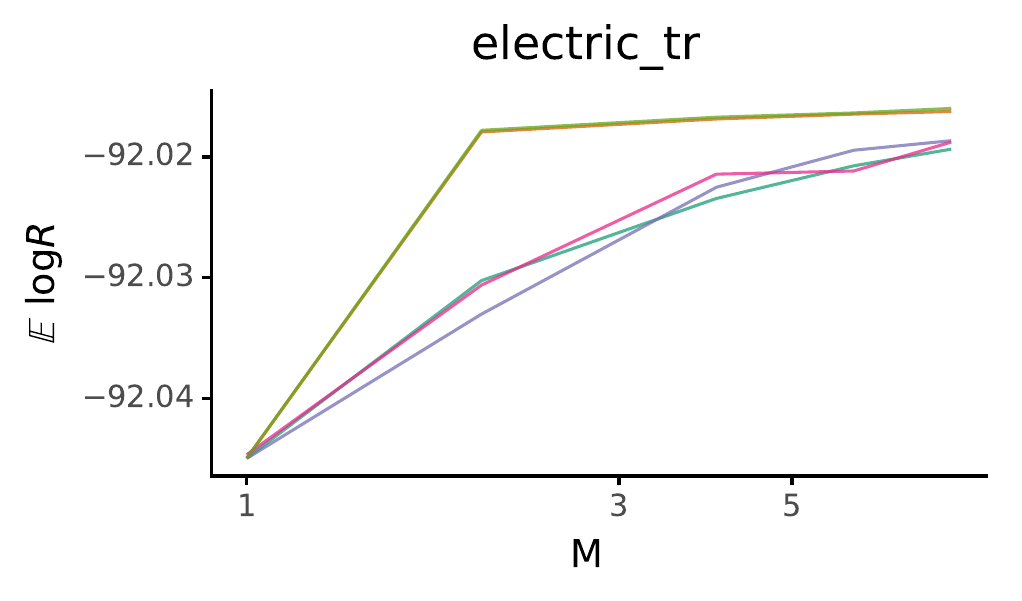}\includegraphics[width=0.33\columnwidth]{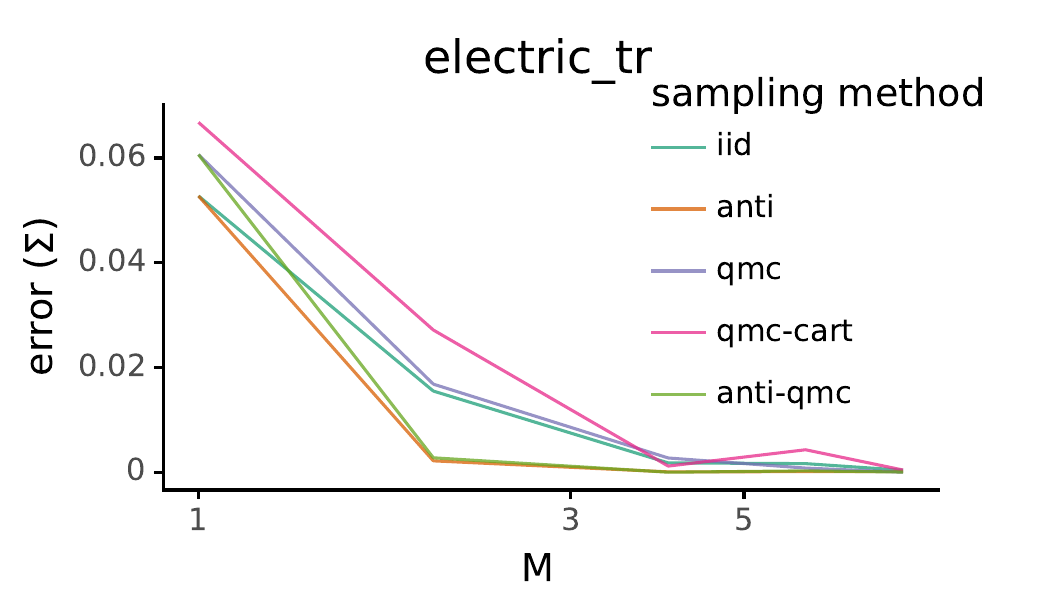}\includegraphics[width=0.33\columnwidth]{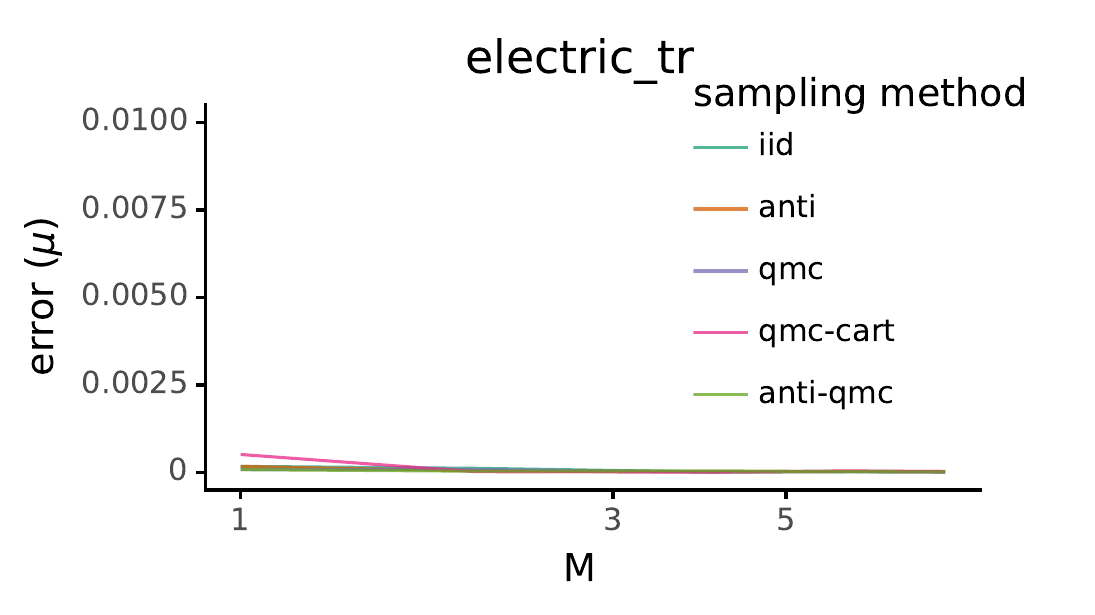}\linebreak{}

\includegraphics[width=0.33\columnwidth]{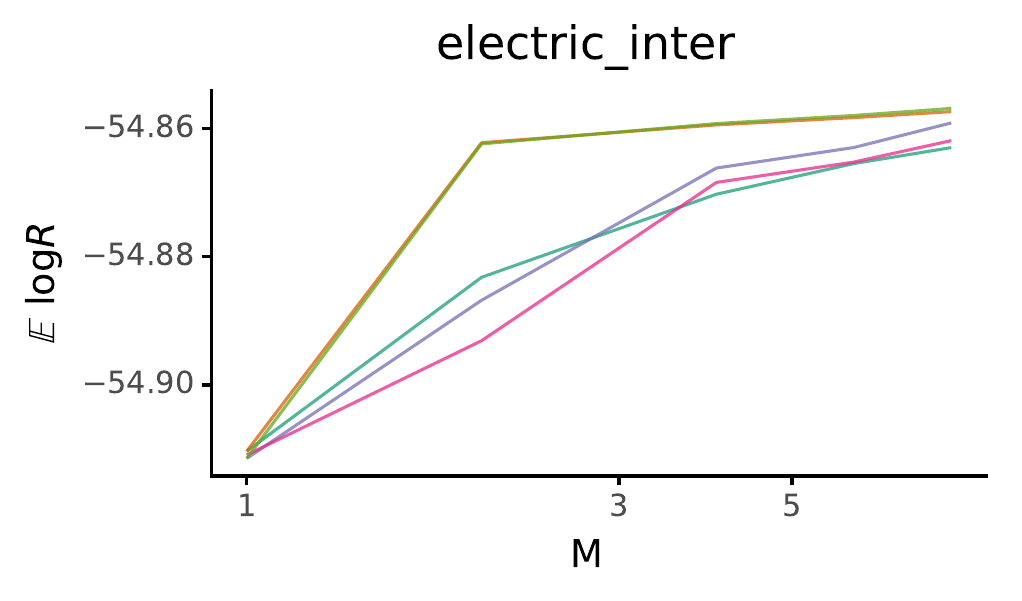}\includegraphics[width=0.33\columnwidth]{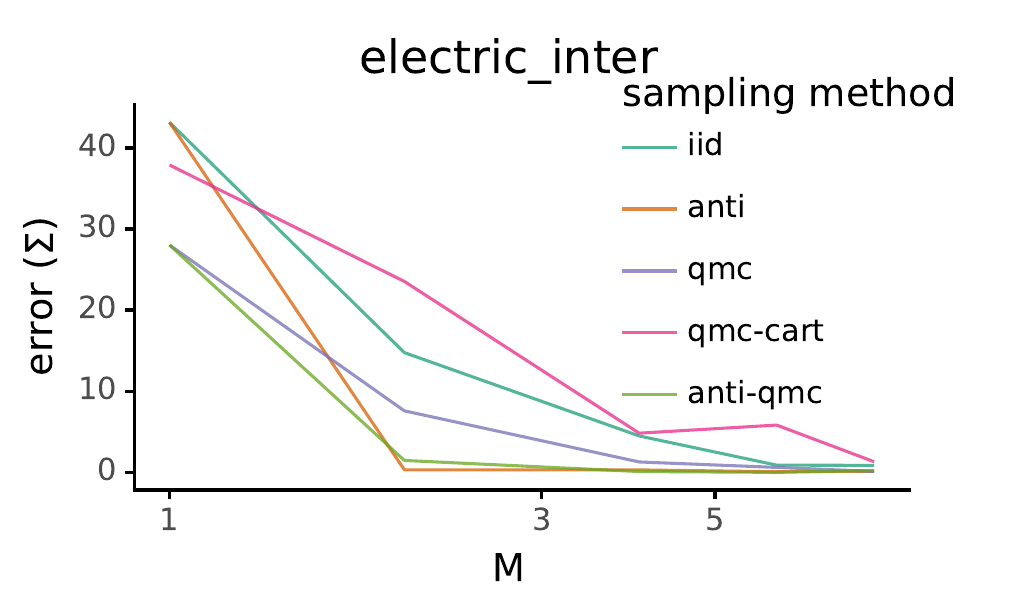}\includegraphics[width=0.33\columnwidth]{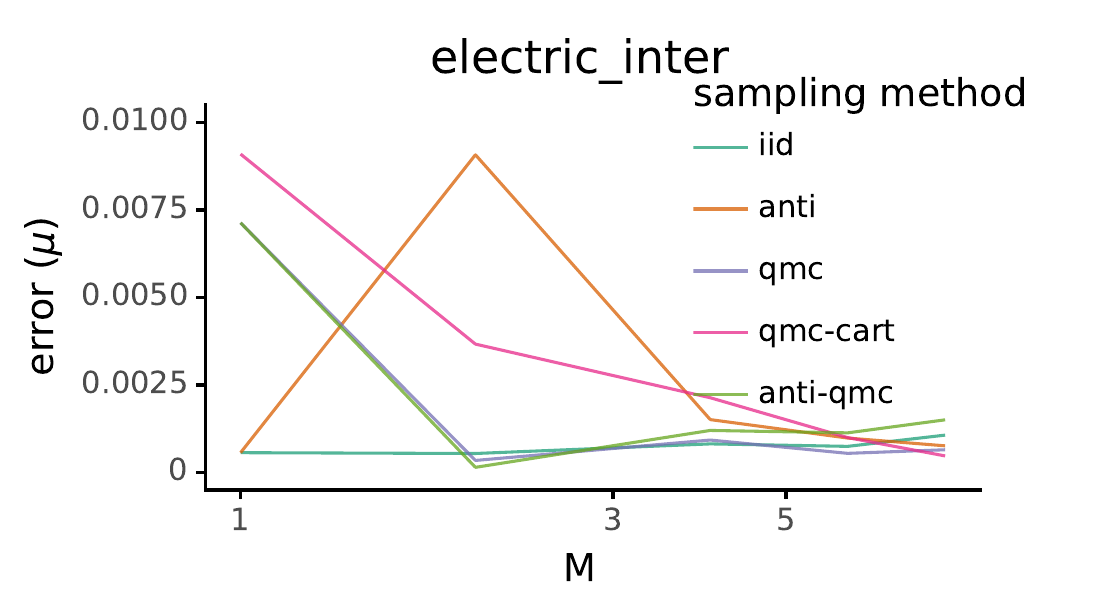}\linebreak{}

\includegraphics[width=0.33\columnwidth]{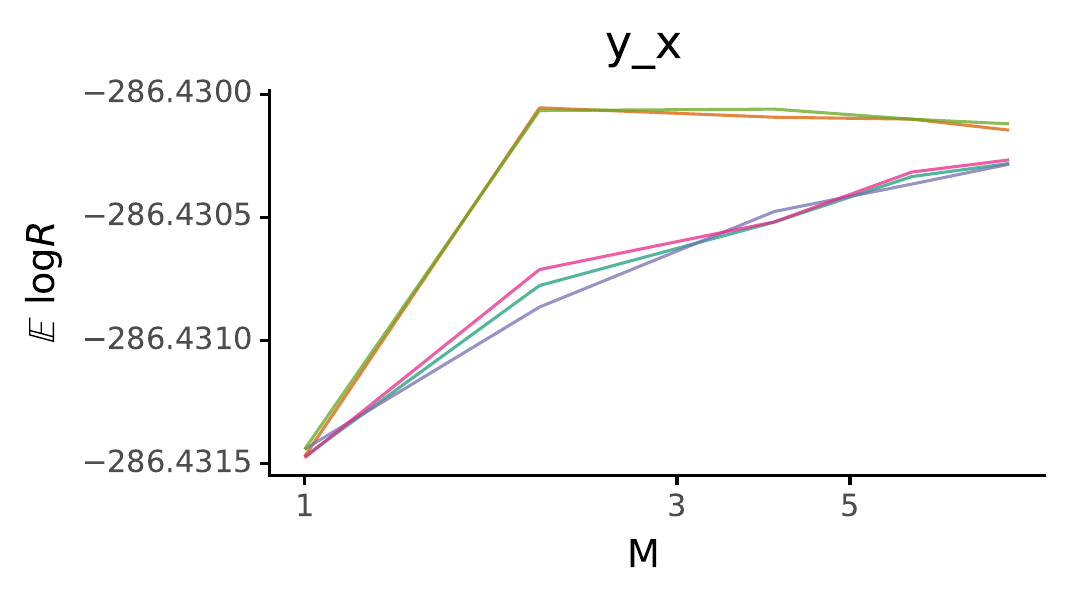}\includegraphics[width=0.33\columnwidth]{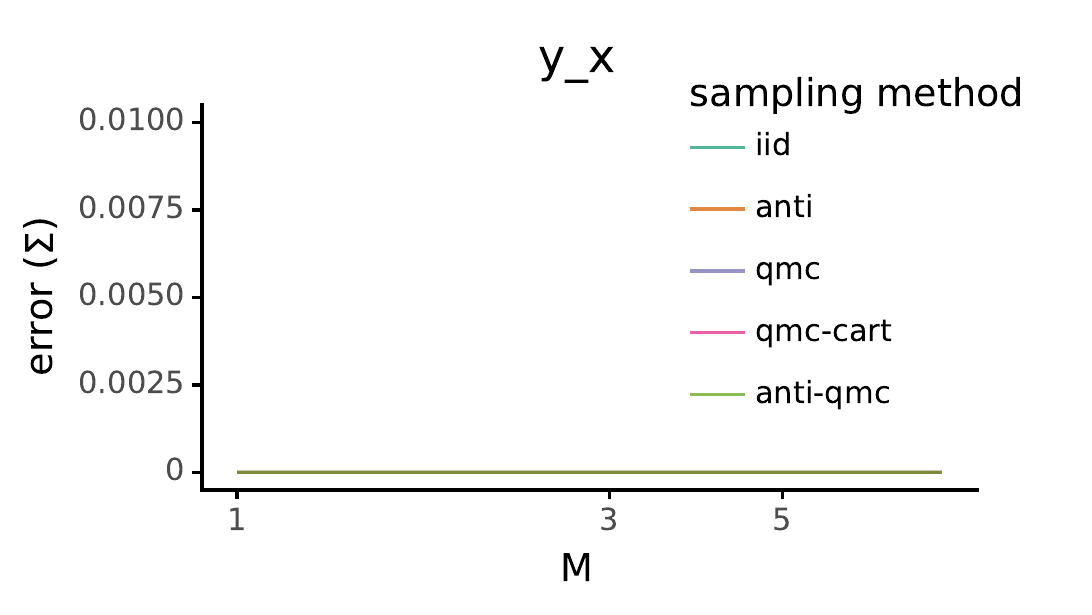}\includegraphics[width=0.33\columnwidth]{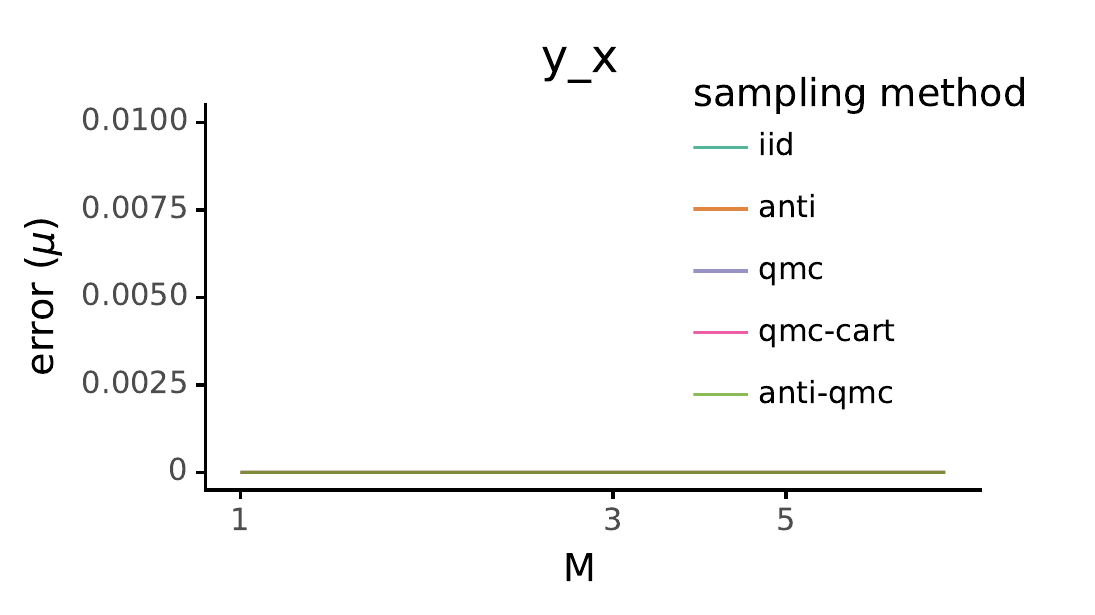}\linebreak{}

\includegraphics[width=0.33\columnwidth]{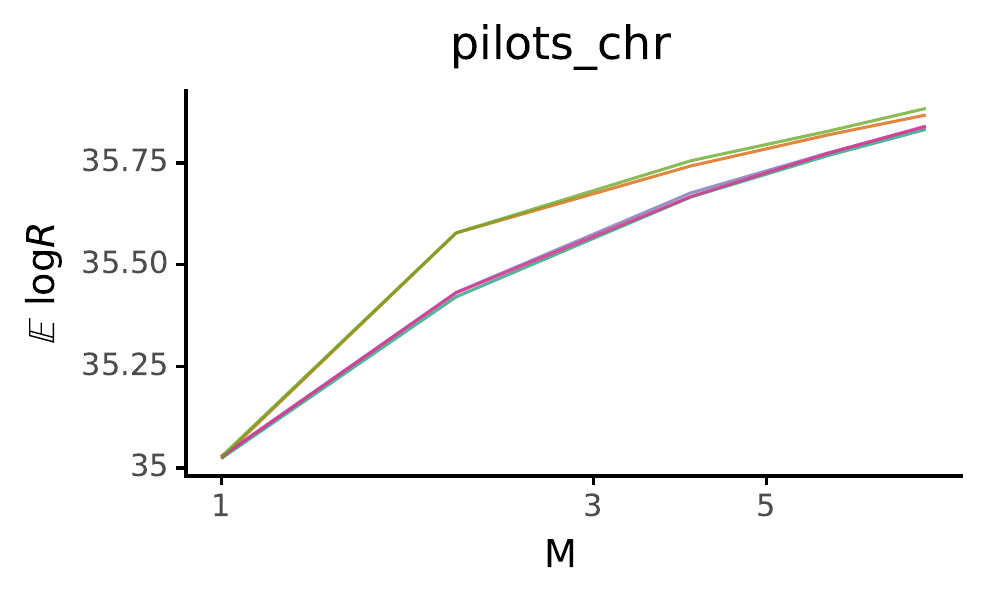}\includegraphics[width=0.33\columnwidth]{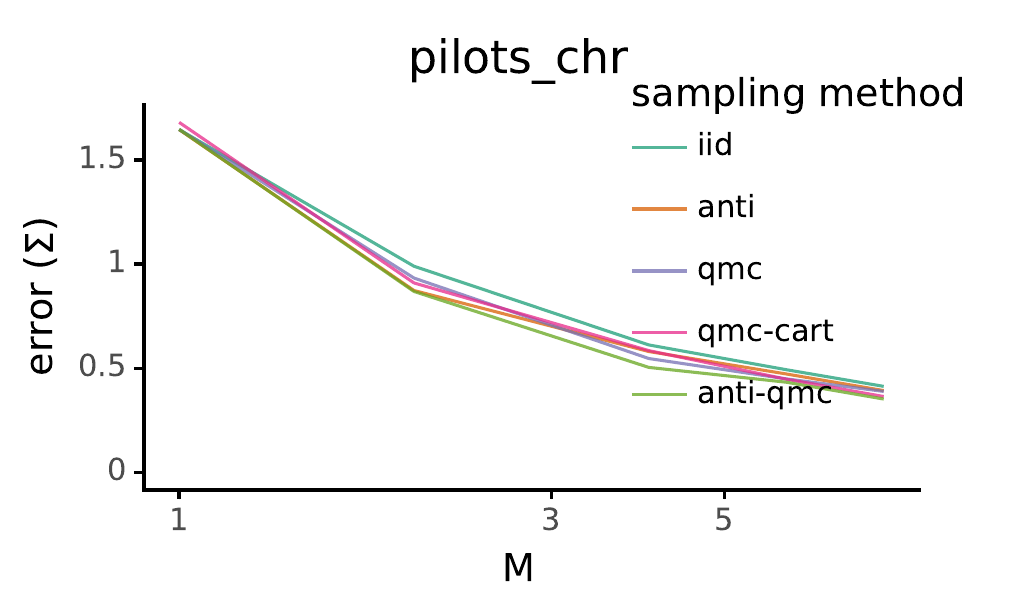}\includegraphics[width=0.33\columnwidth]{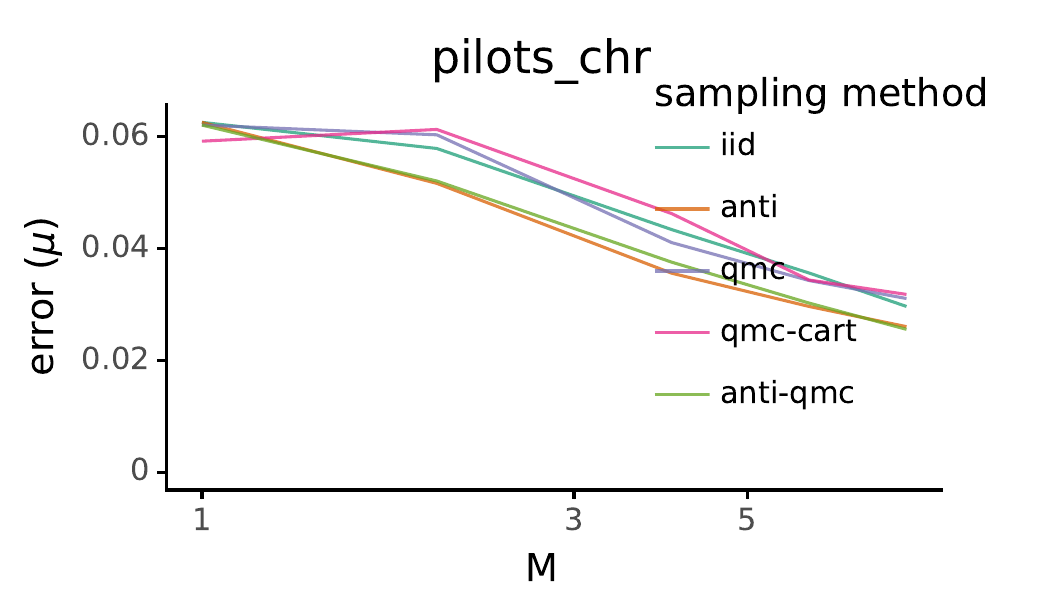}\linebreak{}

\includegraphics[width=0.33\columnwidth]{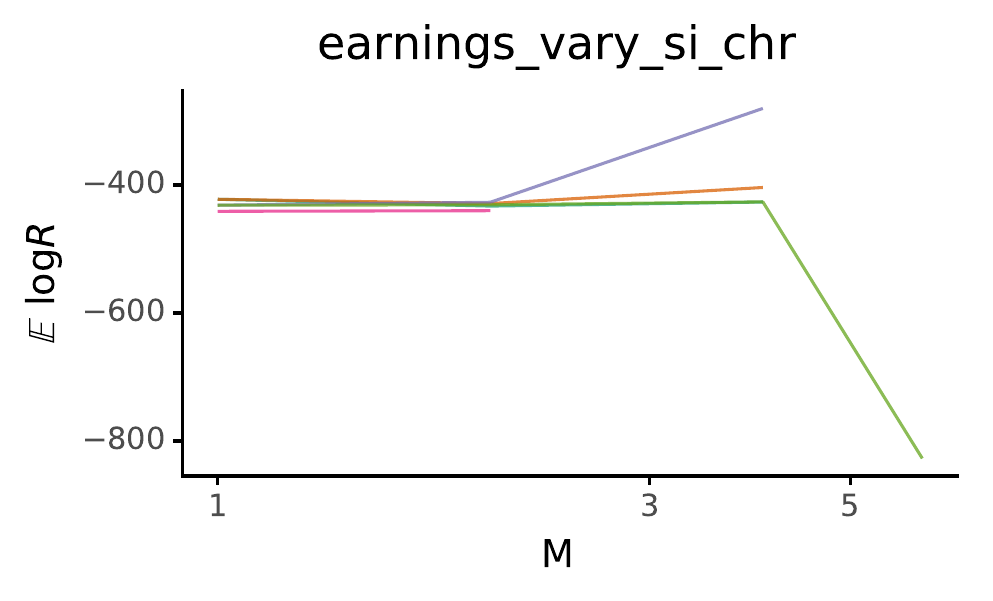}\includegraphics[width=0.33\columnwidth]{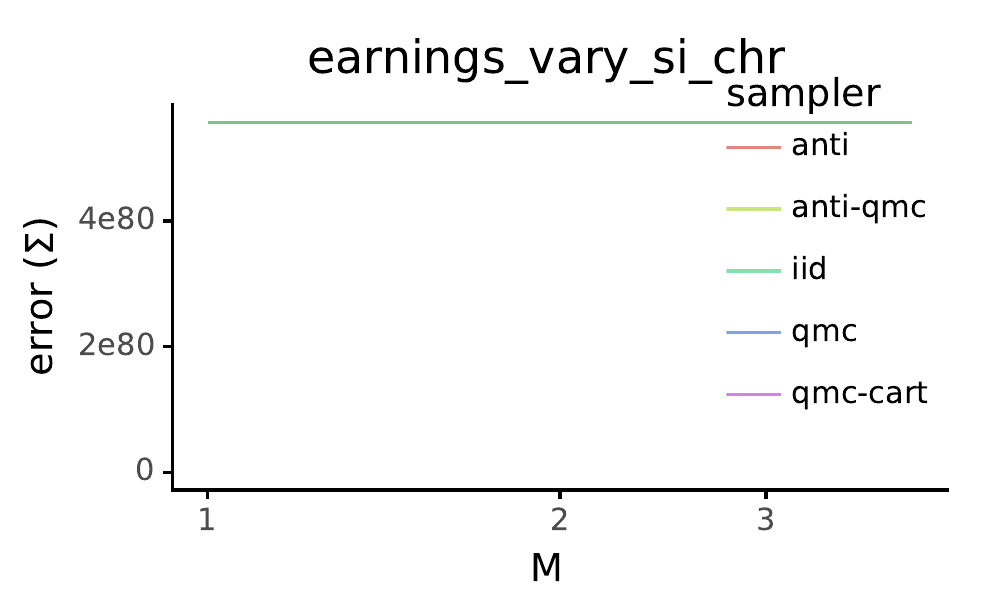}\includegraphics[width=0.33\columnwidth]{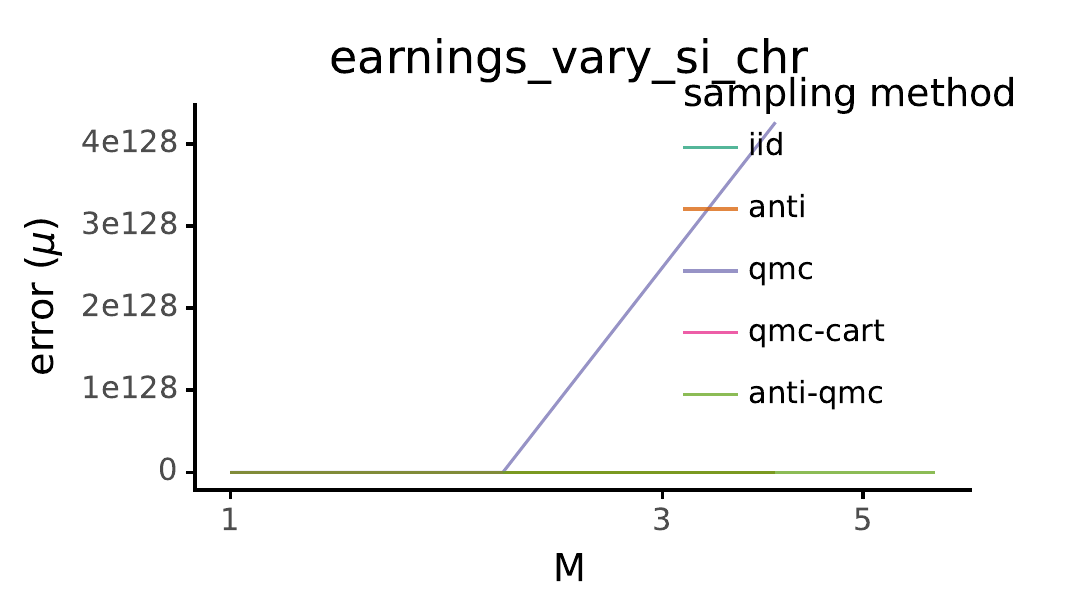}\linebreak{}

\caption{\textbf{Across all models, improvements in likelihood bounds correlate
strongly with improvements in posterior accuracy. Better sampling
methods can improve both.}}
\end{figure}

\begin{figure}
\includegraphics[width=0.33\columnwidth]{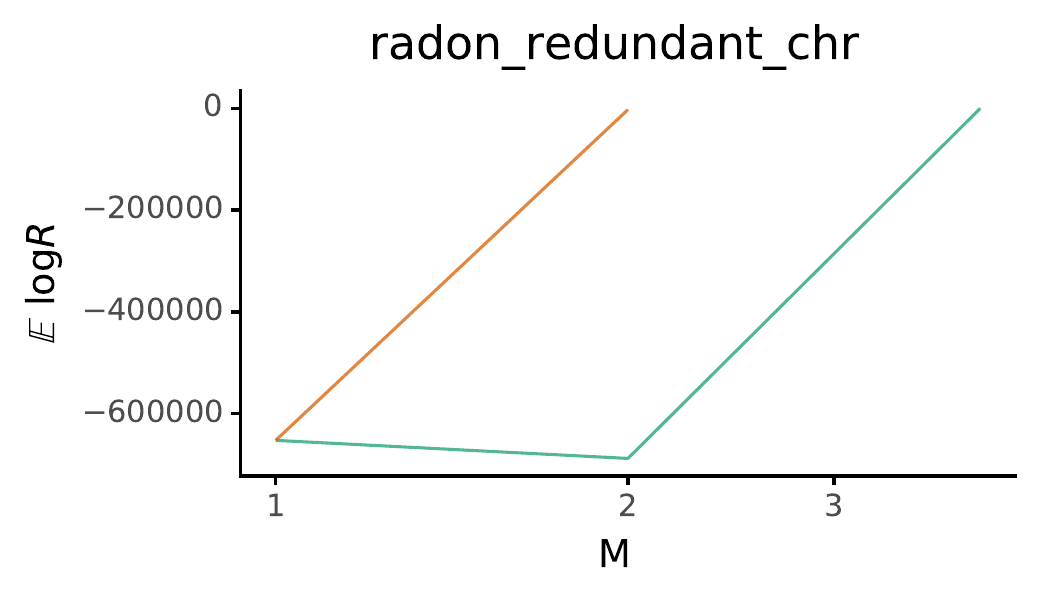}\includegraphics[width=0.33\columnwidth]{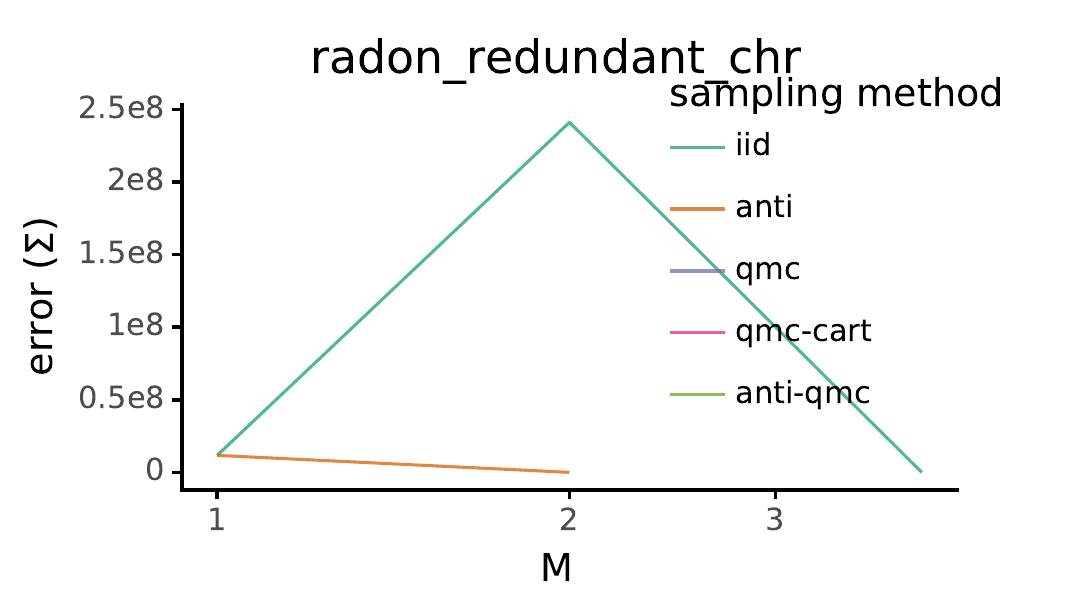}\includegraphics[width=0.33\columnwidth]{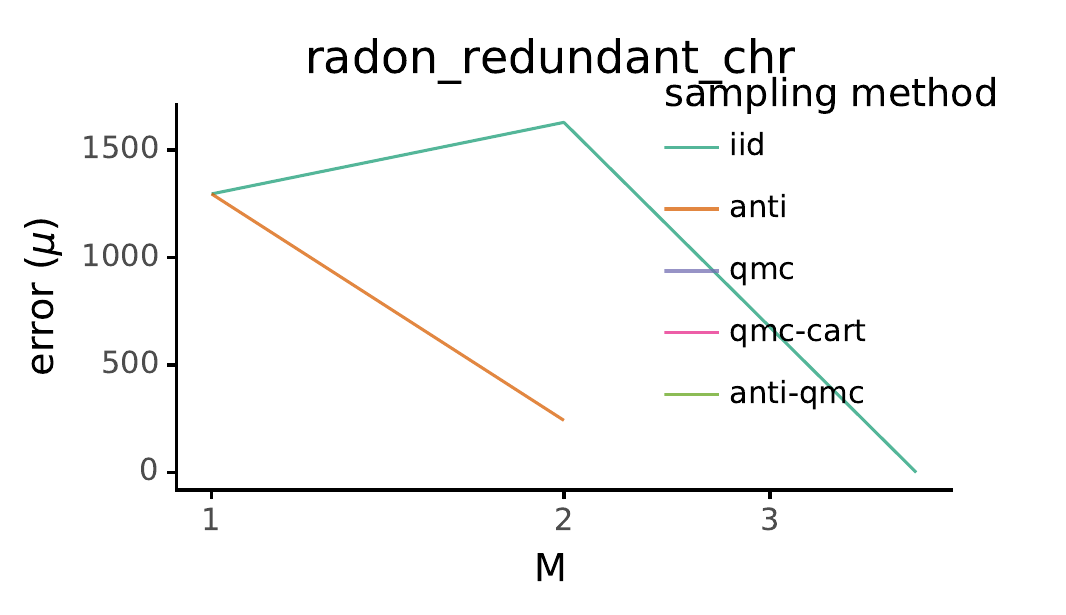}\linebreak{}

\includegraphics[width=0.33\columnwidth]{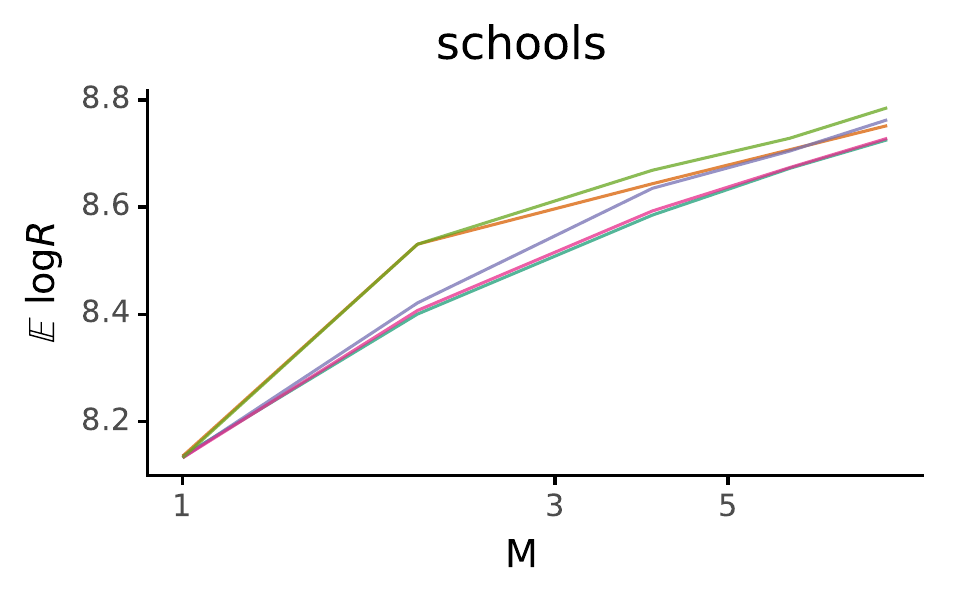}\includegraphics[width=0.33\columnwidth]{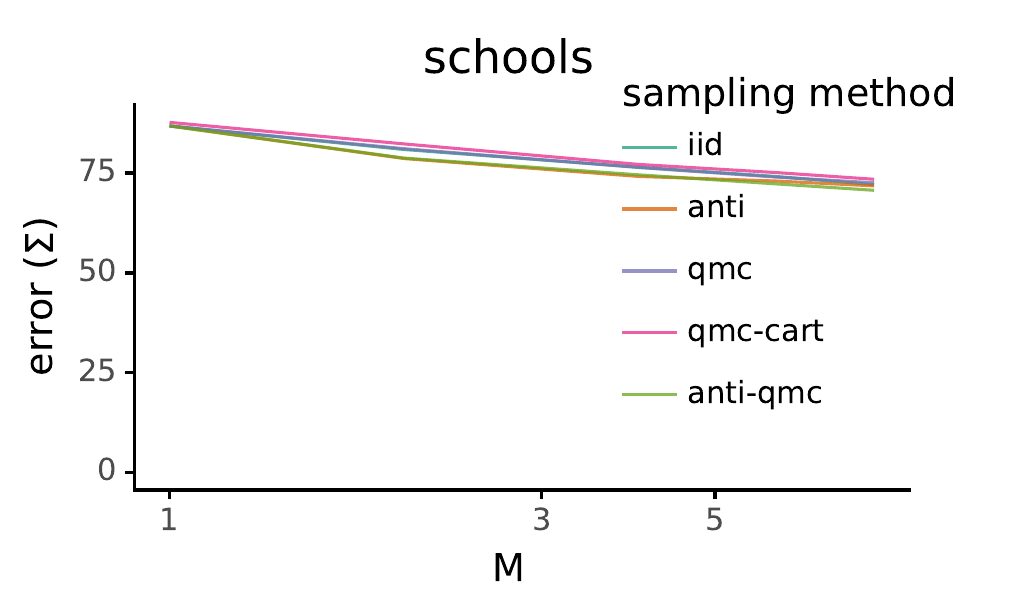}\includegraphics[width=0.33\columnwidth]{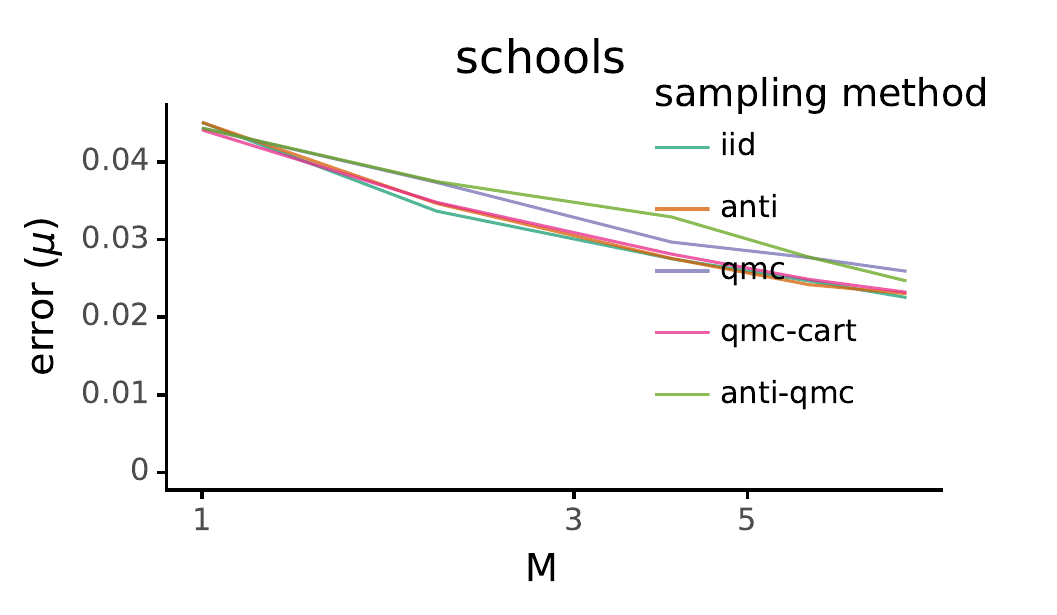}\linebreak{}

\includegraphics[width=0.33\columnwidth]{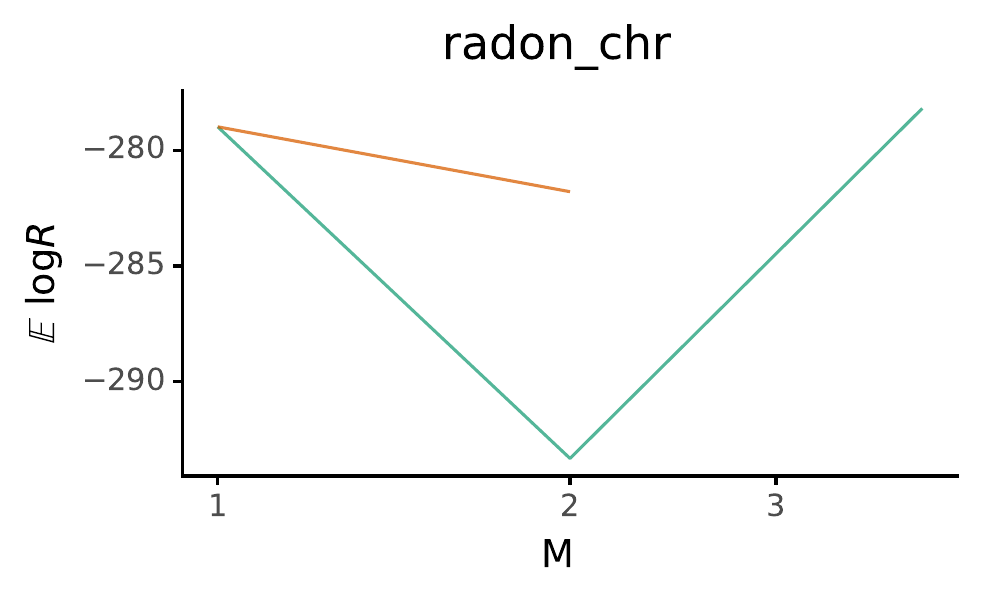}\includegraphics[width=0.33\columnwidth]{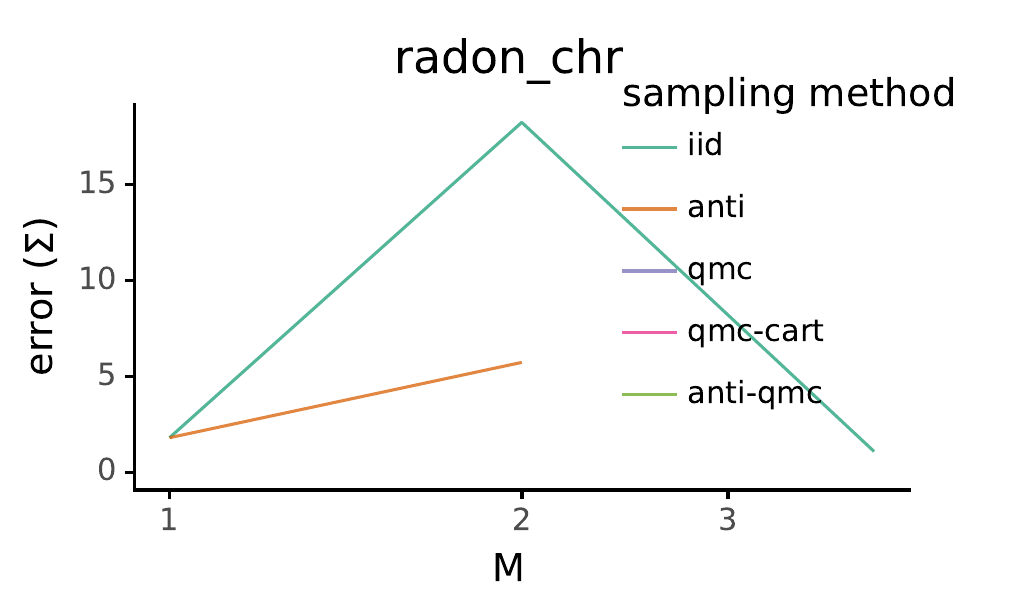}\includegraphics[width=0.33\columnwidth]{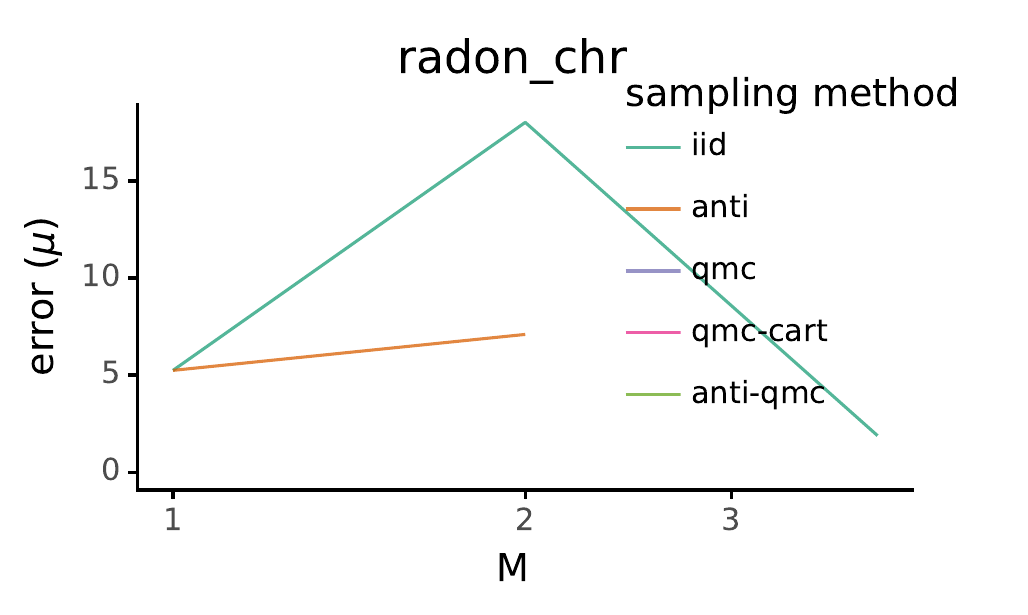}\linebreak{}

\includegraphics[width=0.33\columnwidth]{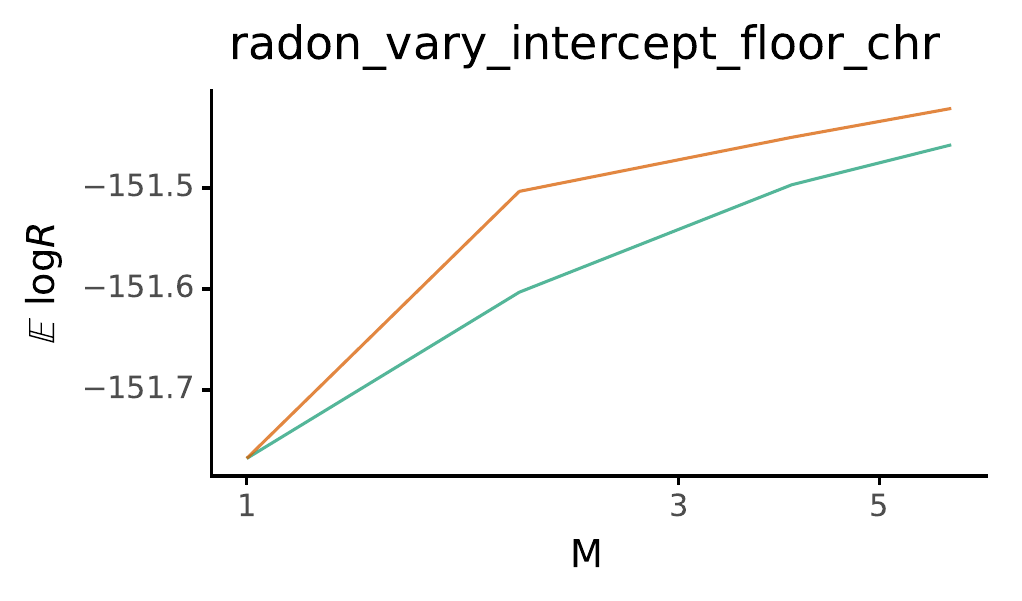}\includegraphics[width=0.33\columnwidth]{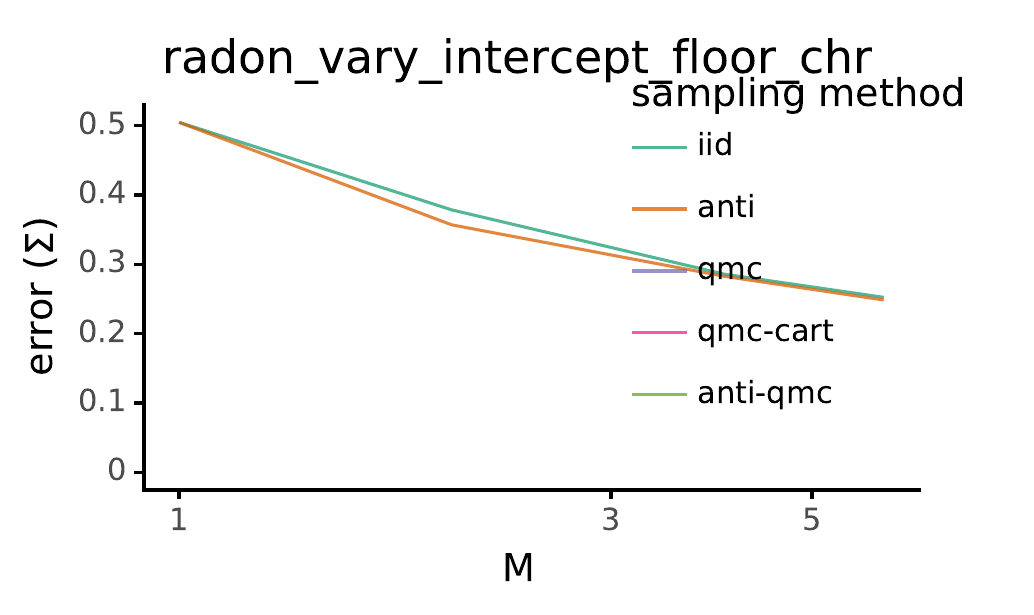}\includegraphics[width=0.33\columnwidth]{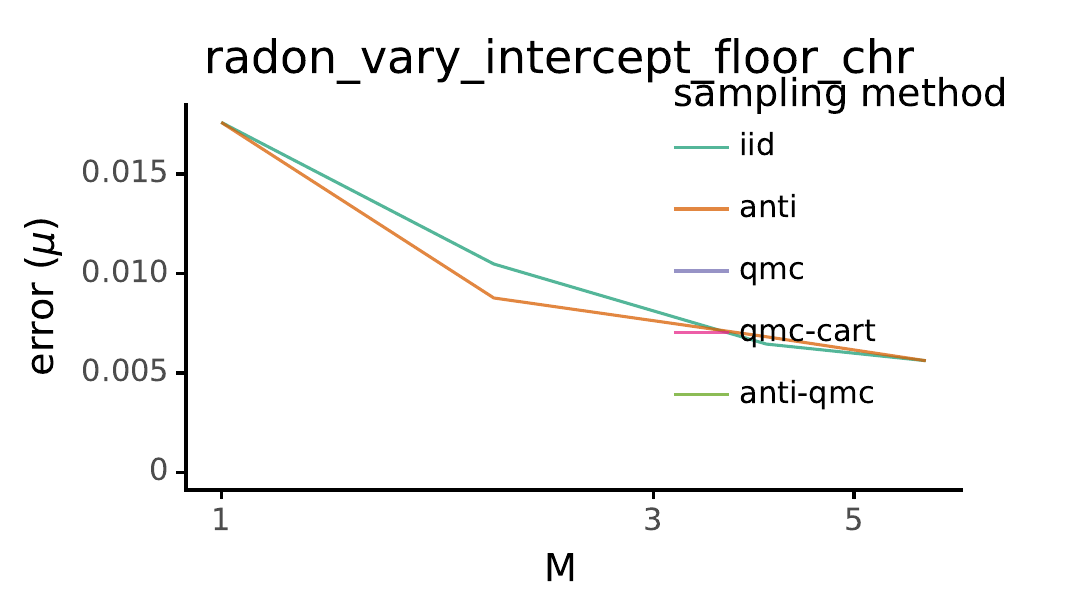}\linebreak{}

\includegraphics[width=0.33\columnwidth]{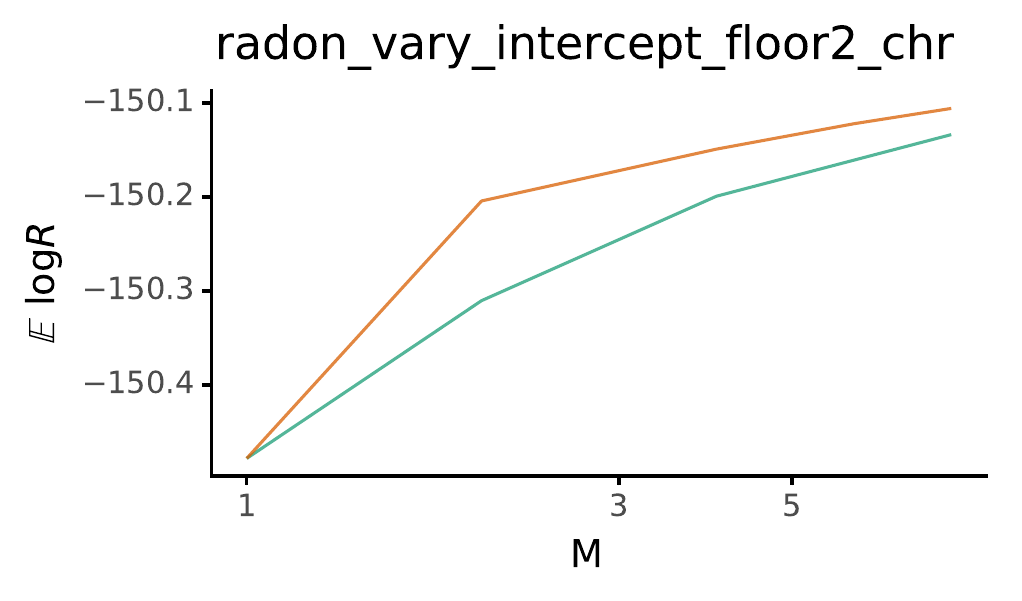}\includegraphics[width=0.33\columnwidth]{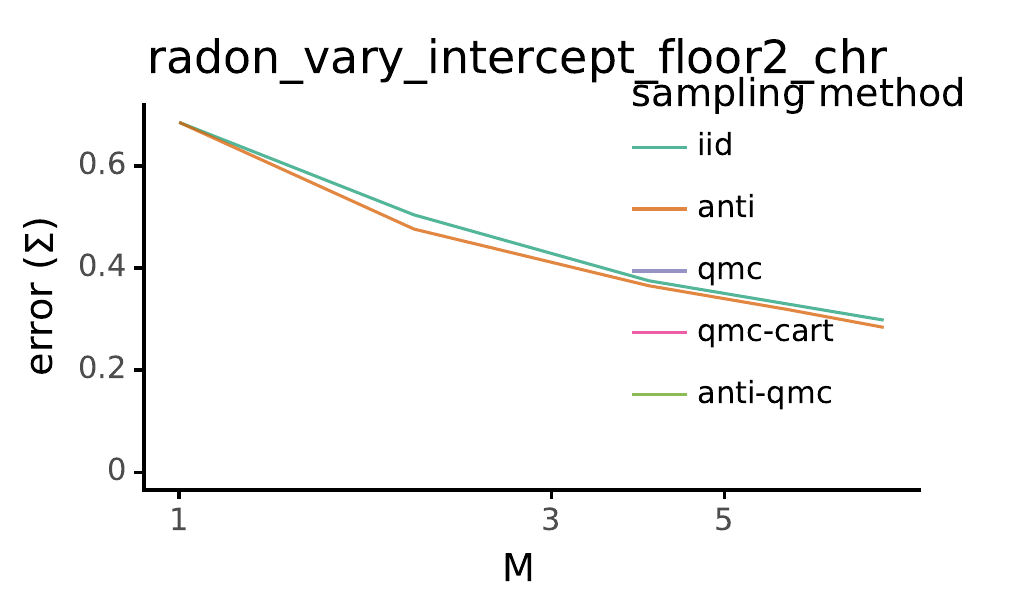}\includegraphics[width=0.33\columnwidth]{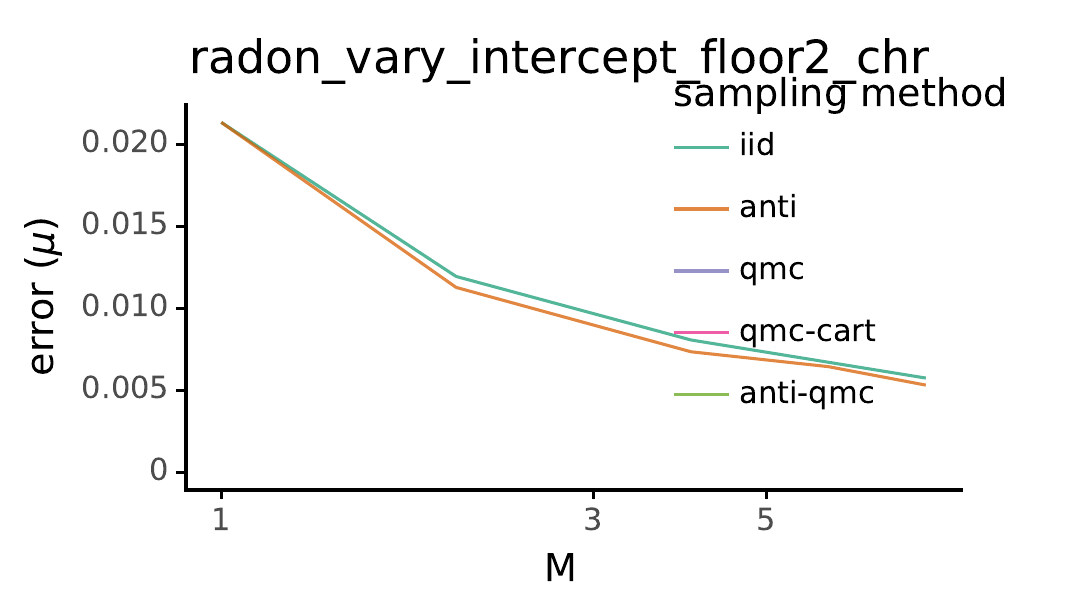}\linebreak{}

\includegraphics[width=0.33\columnwidth]{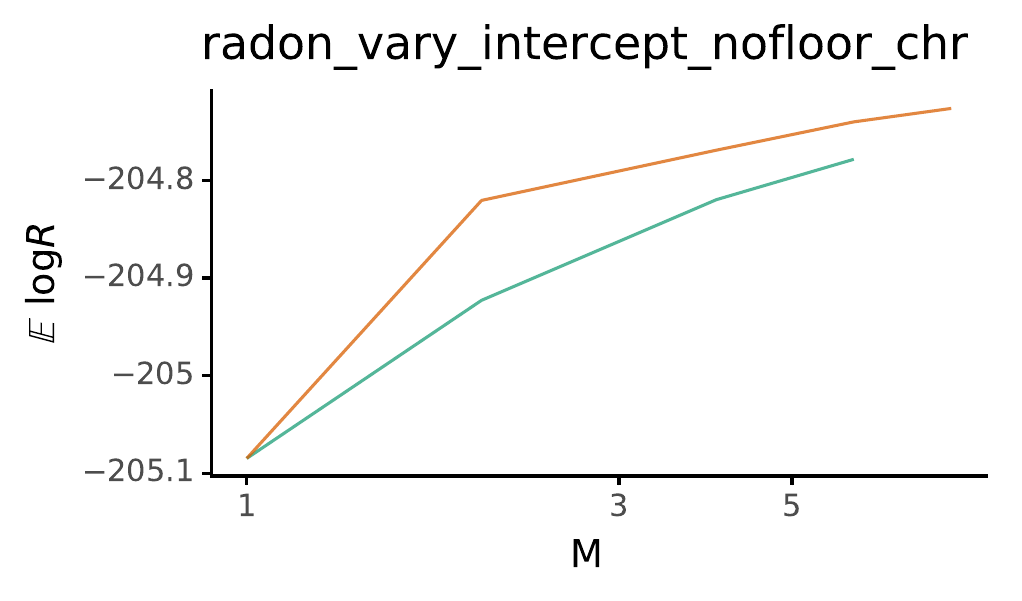}\includegraphics[width=0.33\columnwidth]{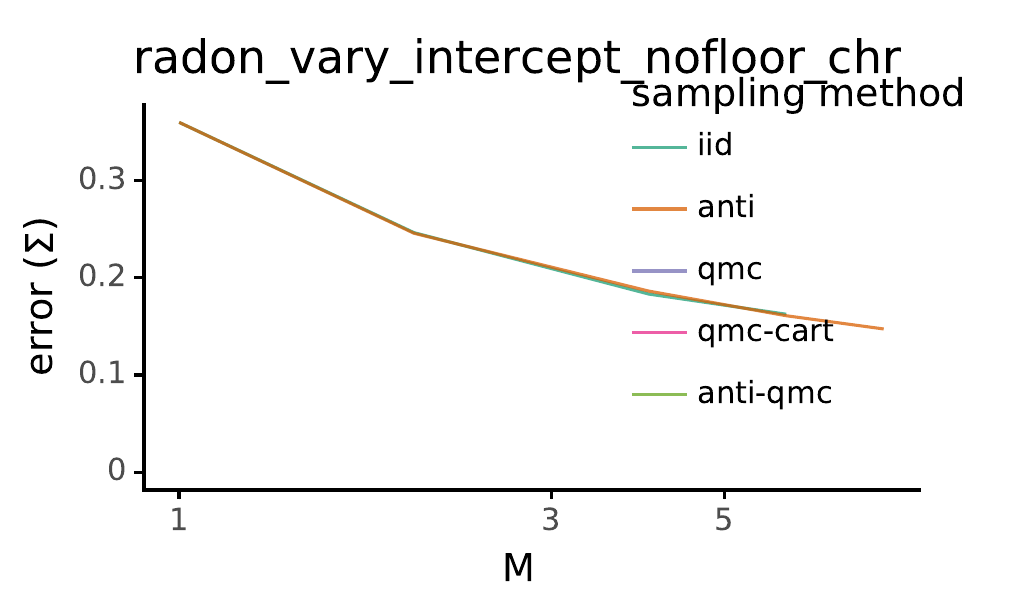}\includegraphics[width=0.33\columnwidth]{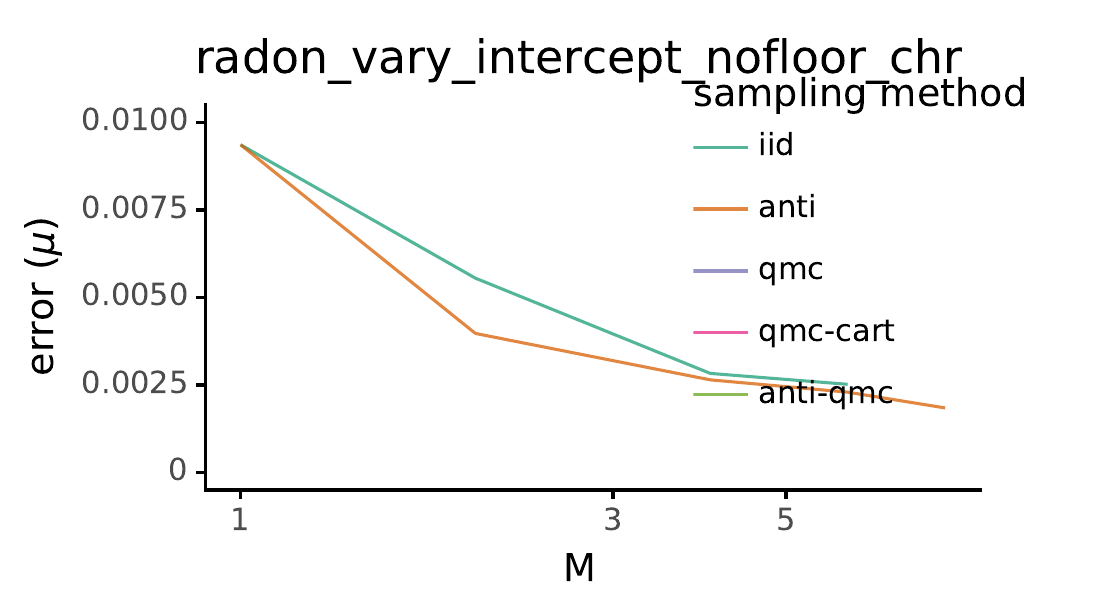}\linebreak{}

\includegraphics[width=0.33\columnwidth]{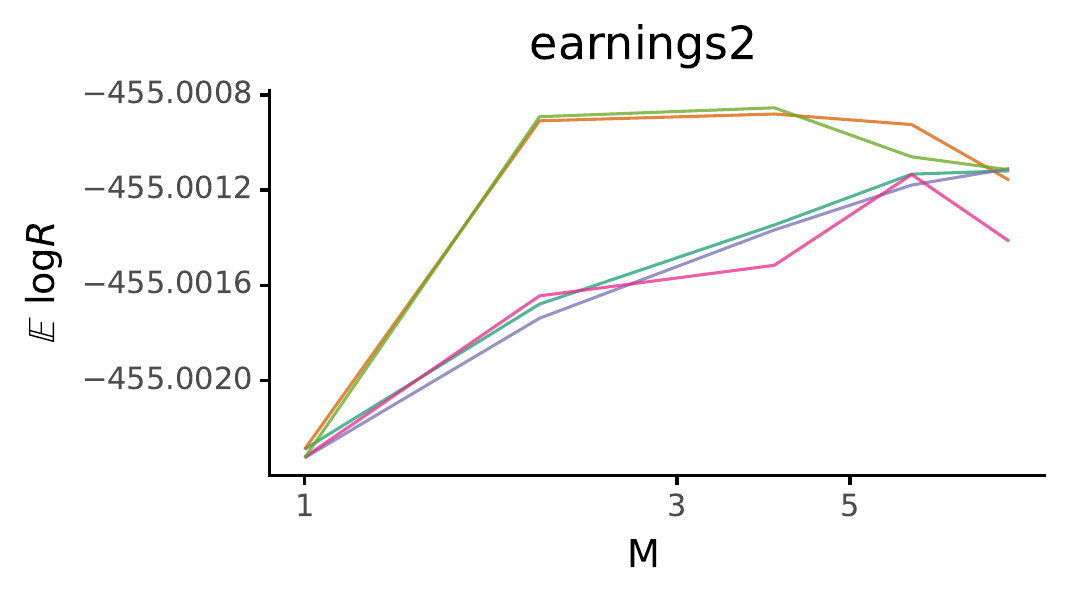}\includegraphics[width=0.33\columnwidth]{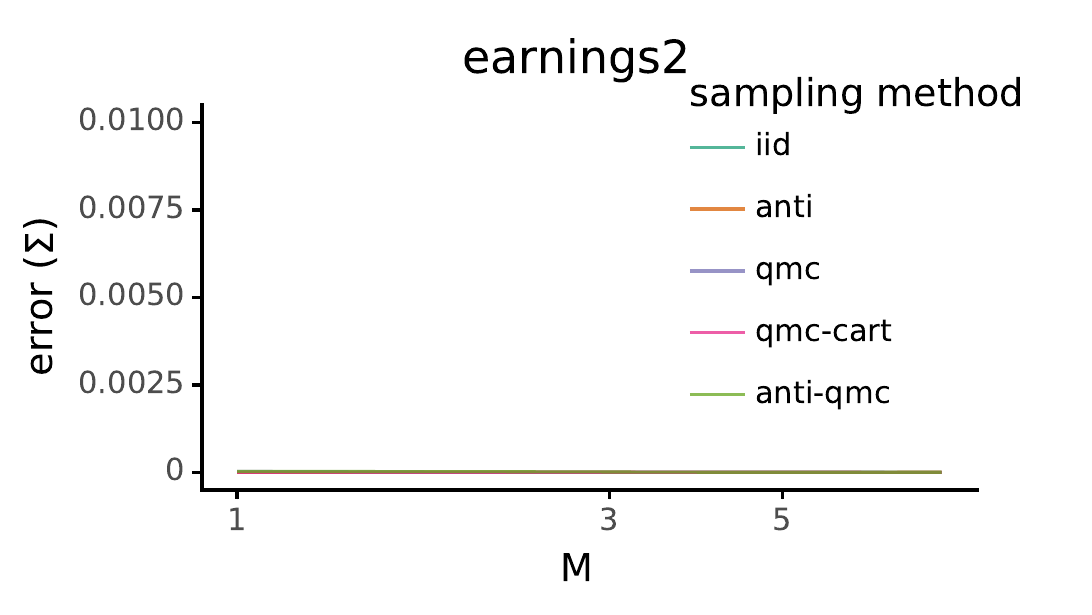}\includegraphics[width=0.33\columnwidth]{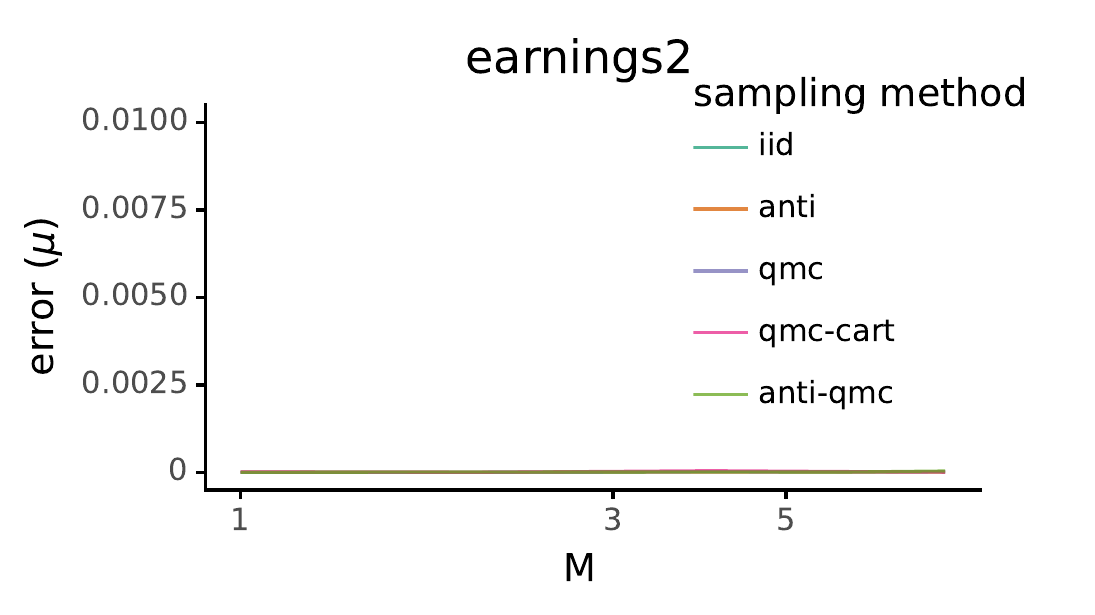}\linebreak{}

\caption{\textbf{Across all models, improvements in likelihood bounds correlate
strongly with improvements in posterior accuracy. Better sampling
methods can improve both.}}
\end{figure}

\begin{figure}
\includegraphics[width=0.33\columnwidth]{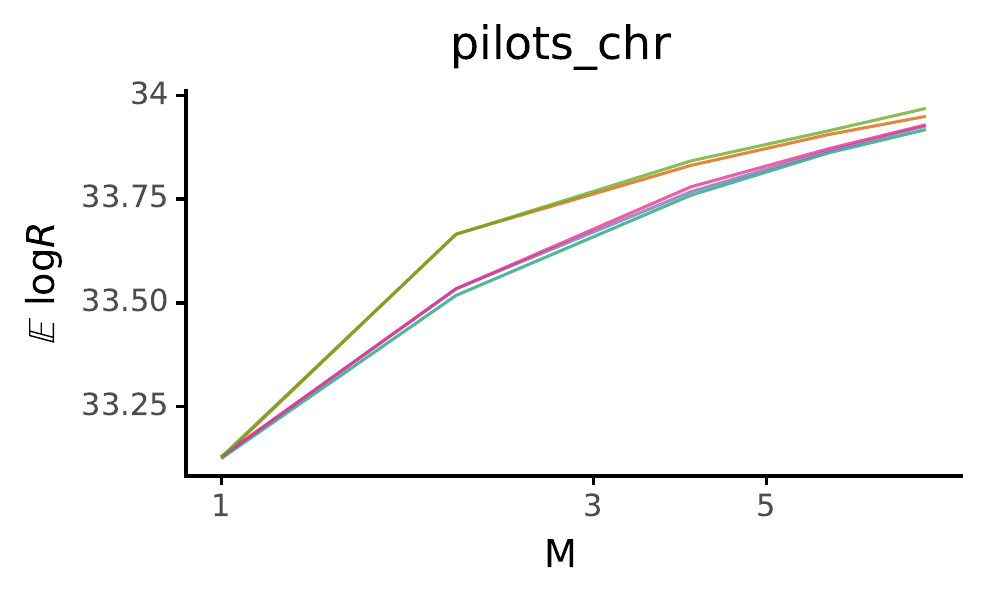}\includegraphics[width=0.33\columnwidth]{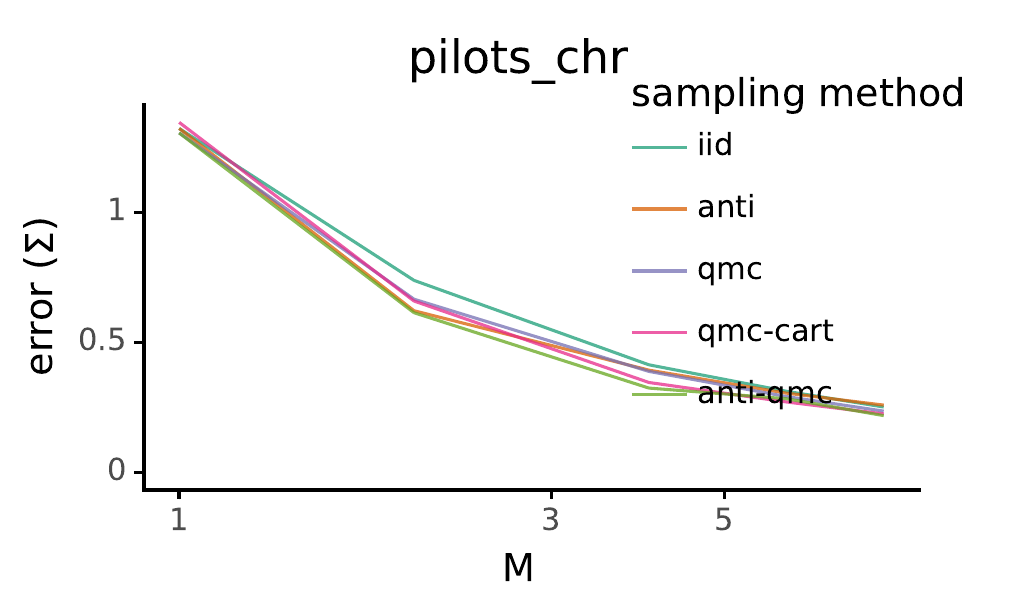}\includegraphics[width=0.33\columnwidth]{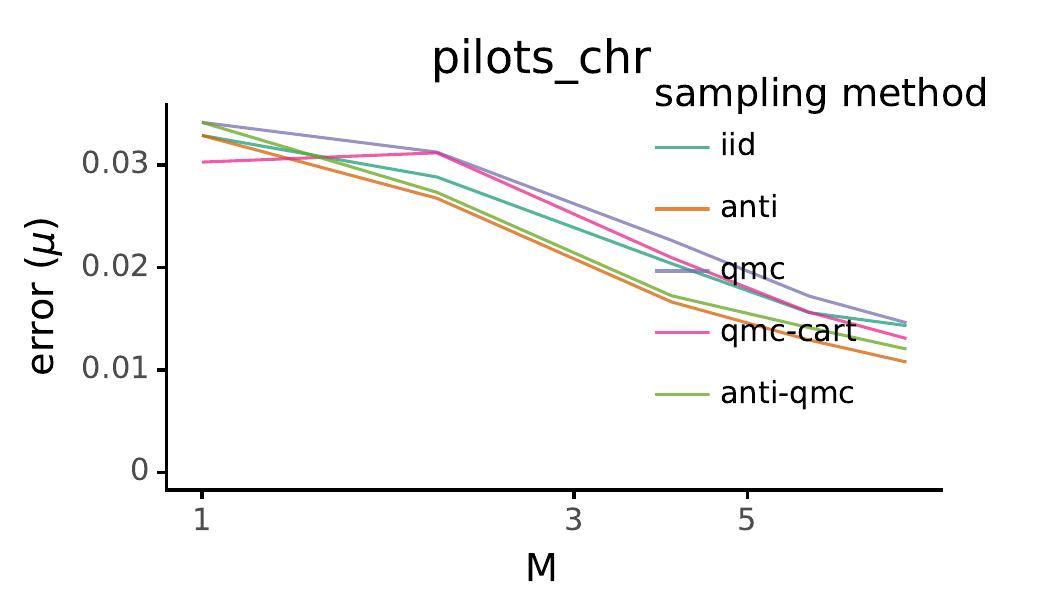}\linebreak{}

\includegraphics[width=0.33\columnwidth]{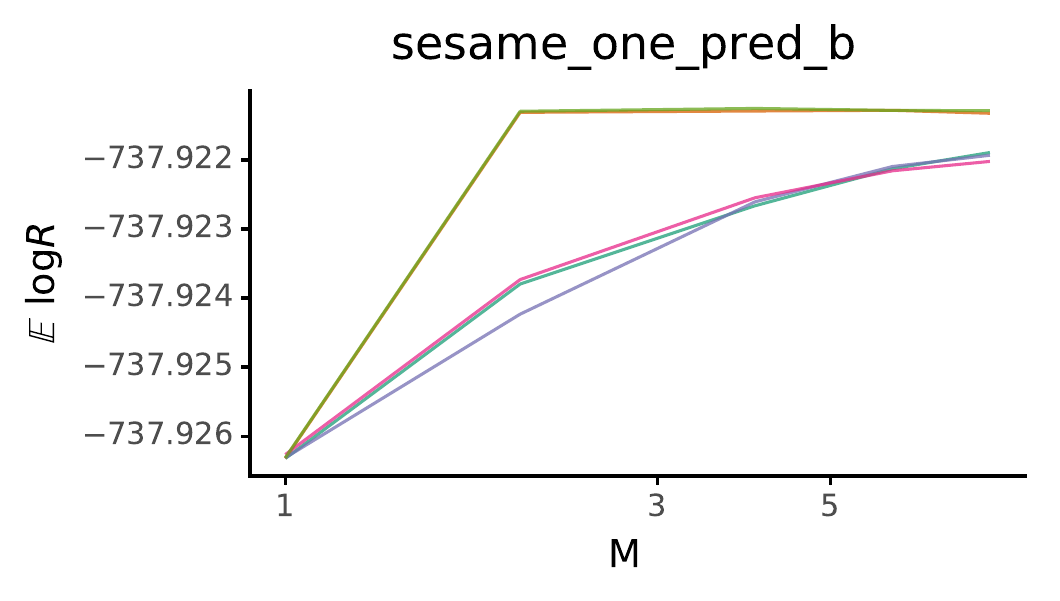}\includegraphics[width=0.33\columnwidth]{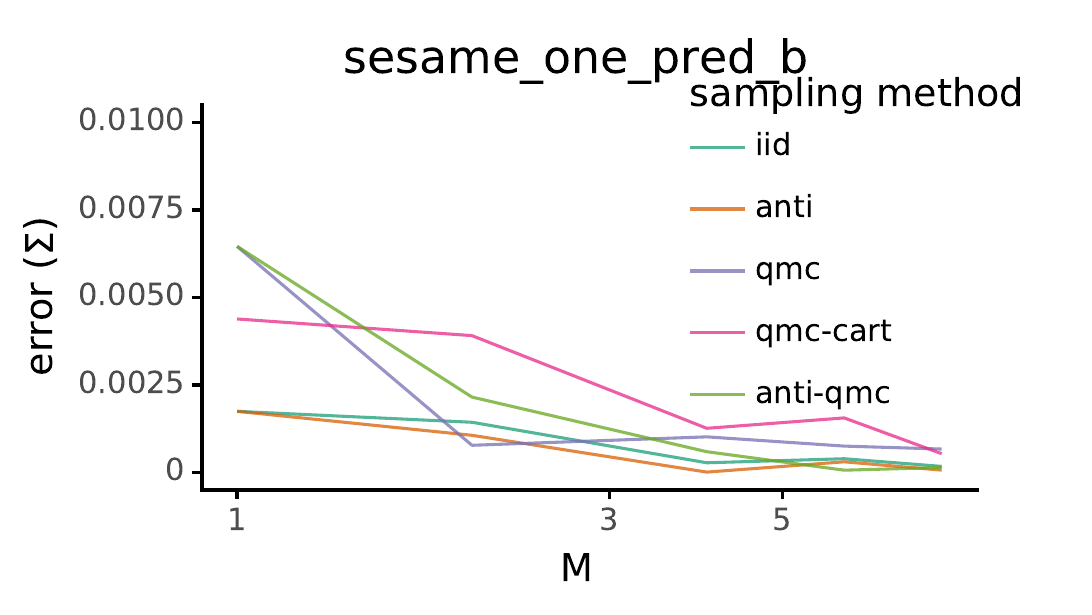}\includegraphics[width=0.33\columnwidth]{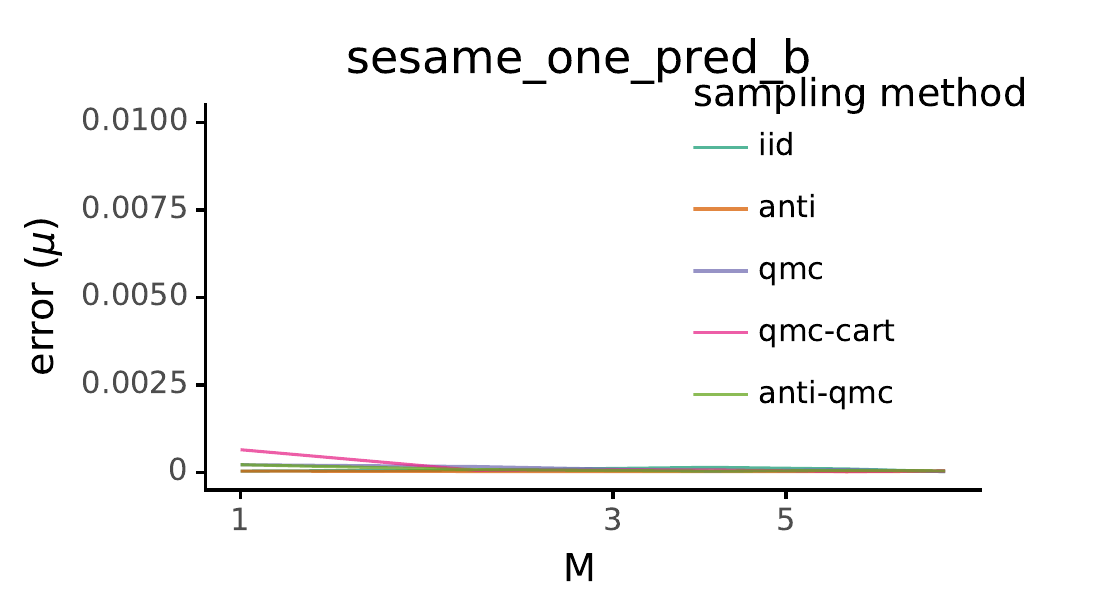}\linebreak{}

\includegraphics[width=0.33\columnwidth]{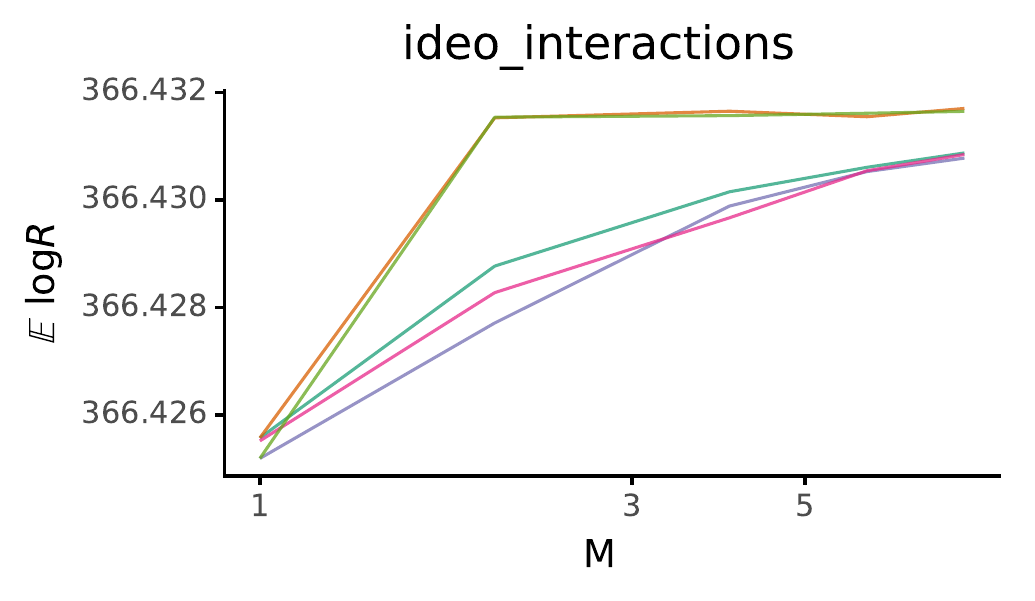}\includegraphics[width=0.33\columnwidth]{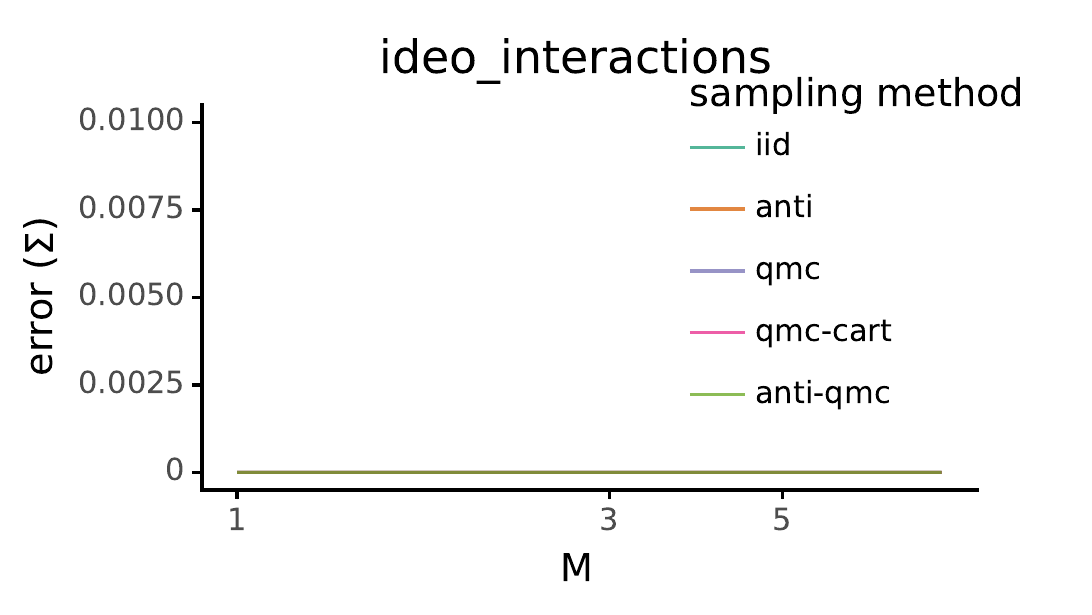}\includegraphics[width=0.33\columnwidth]{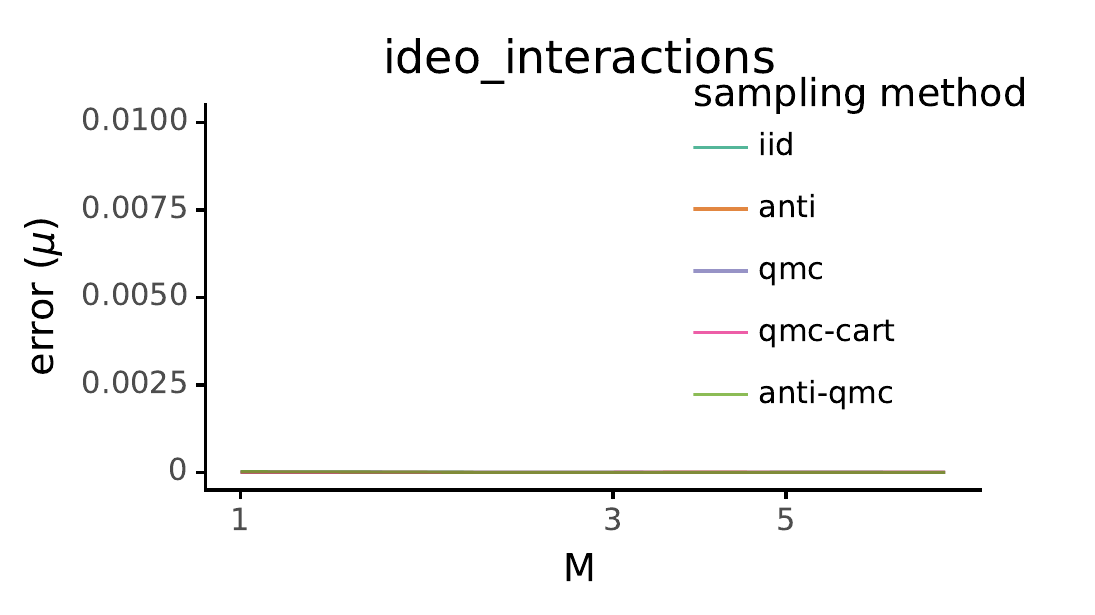}\linebreak{}

\includegraphics[width=0.33\columnwidth]{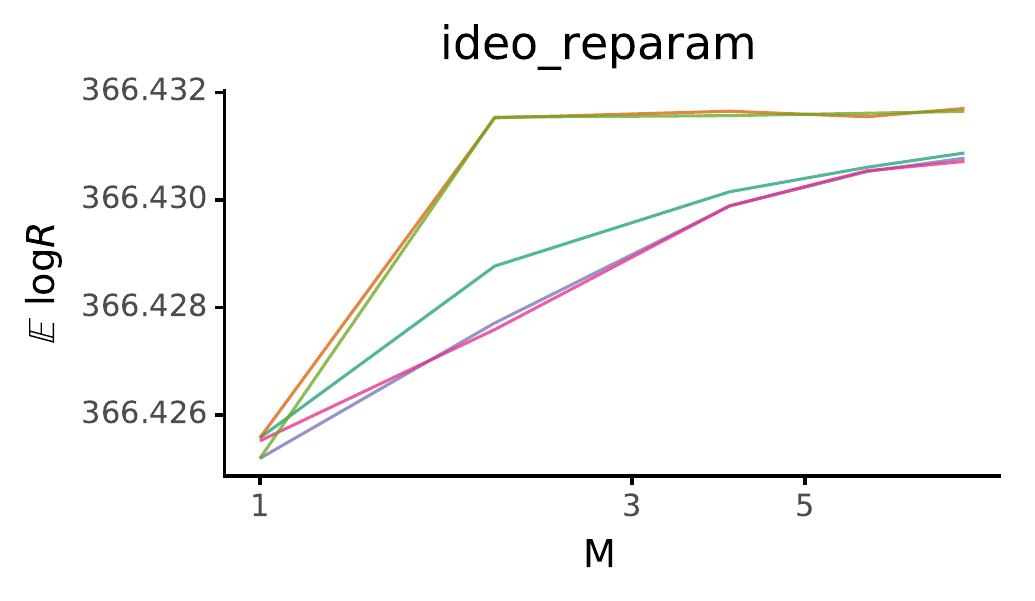}\includegraphics[width=0.33\columnwidth]{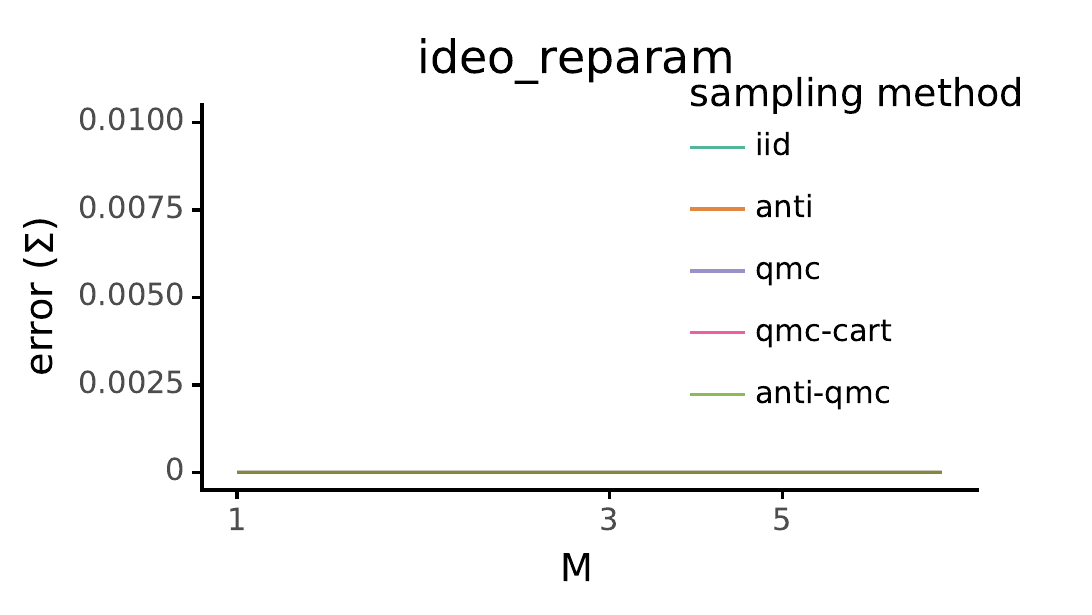}\includegraphics[width=0.33\columnwidth]{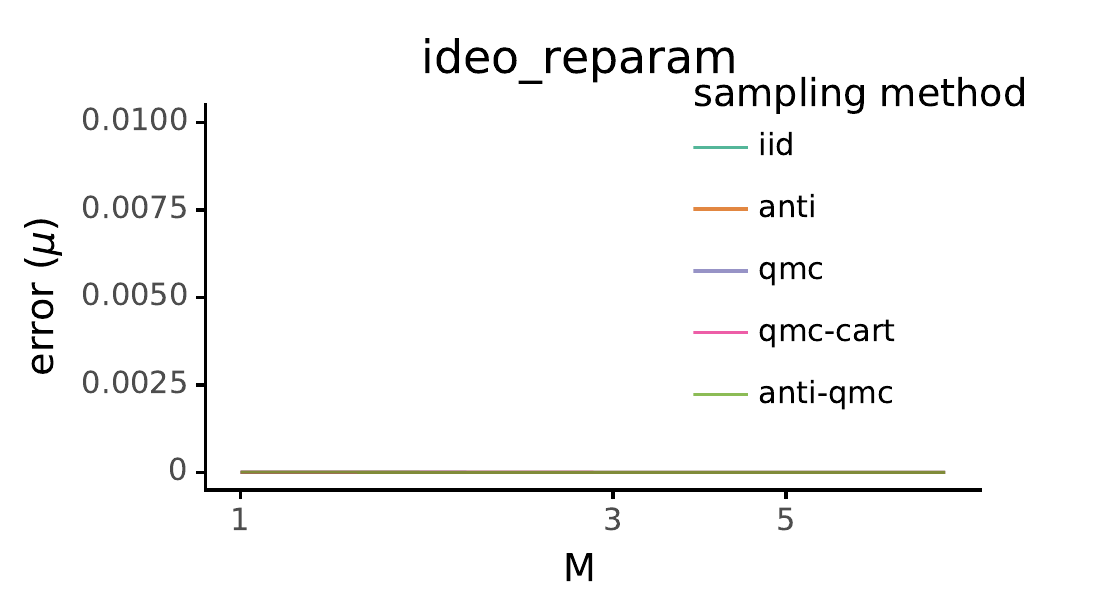}\linebreak{}

\includegraphics[width=0.33\columnwidth]{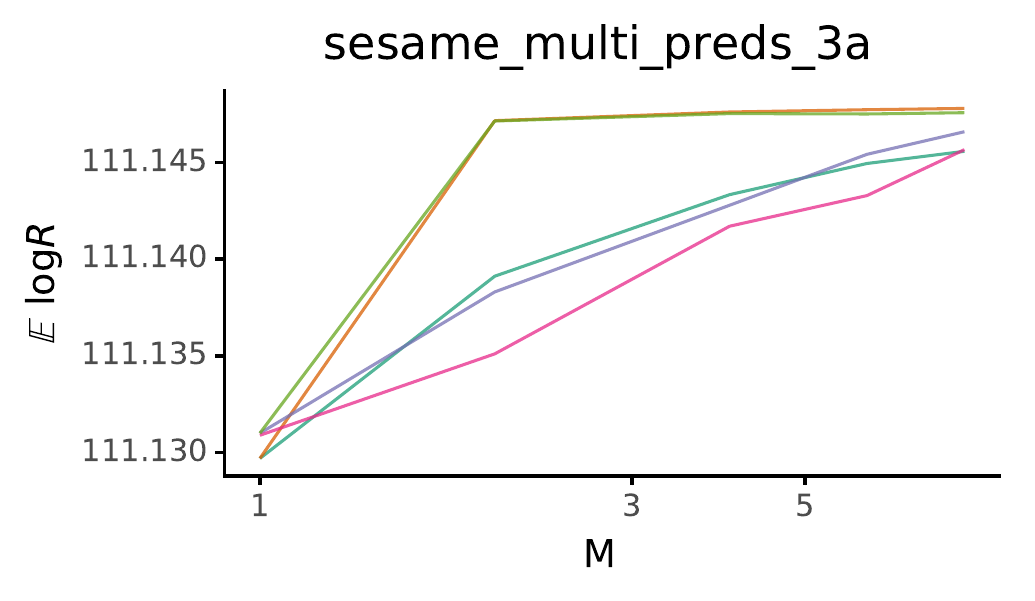}\includegraphics[width=0.33\columnwidth]{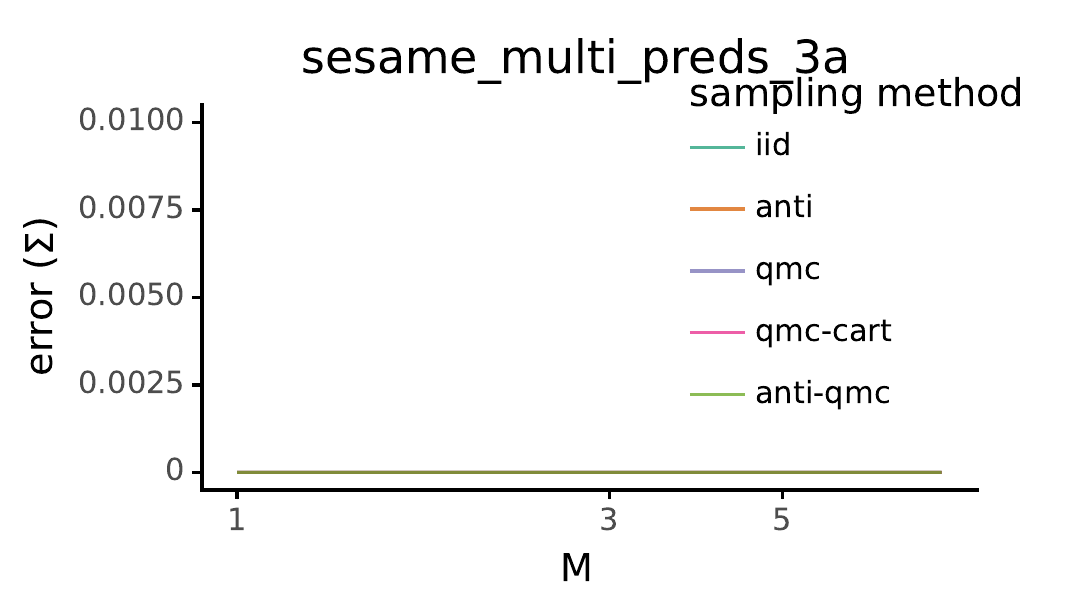}\includegraphics[width=0.33\columnwidth]{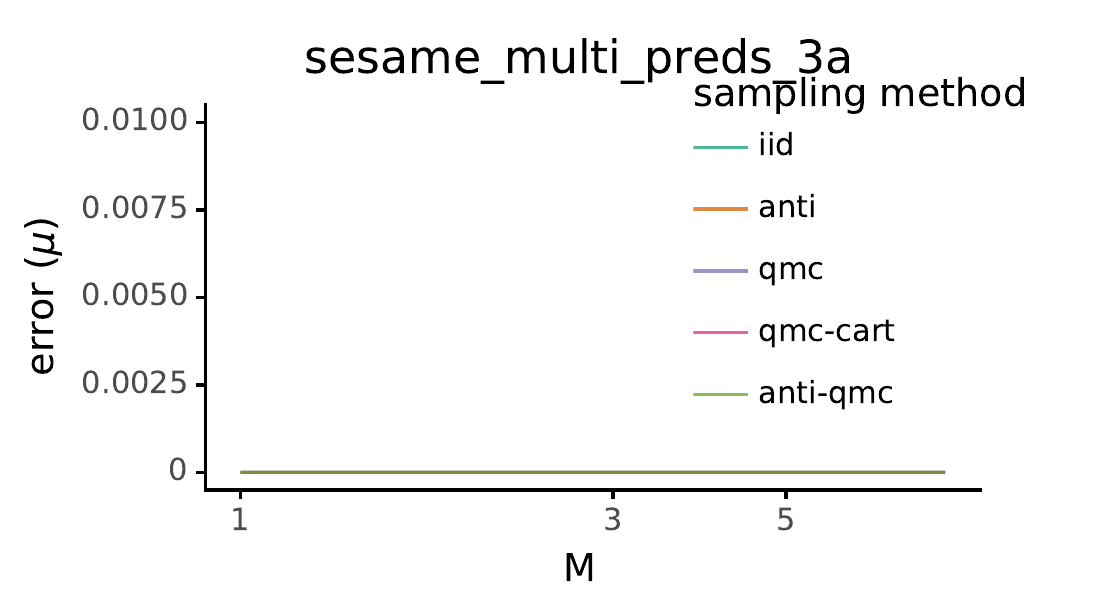}\linebreak{}

\includegraphics[width=0.33\columnwidth]{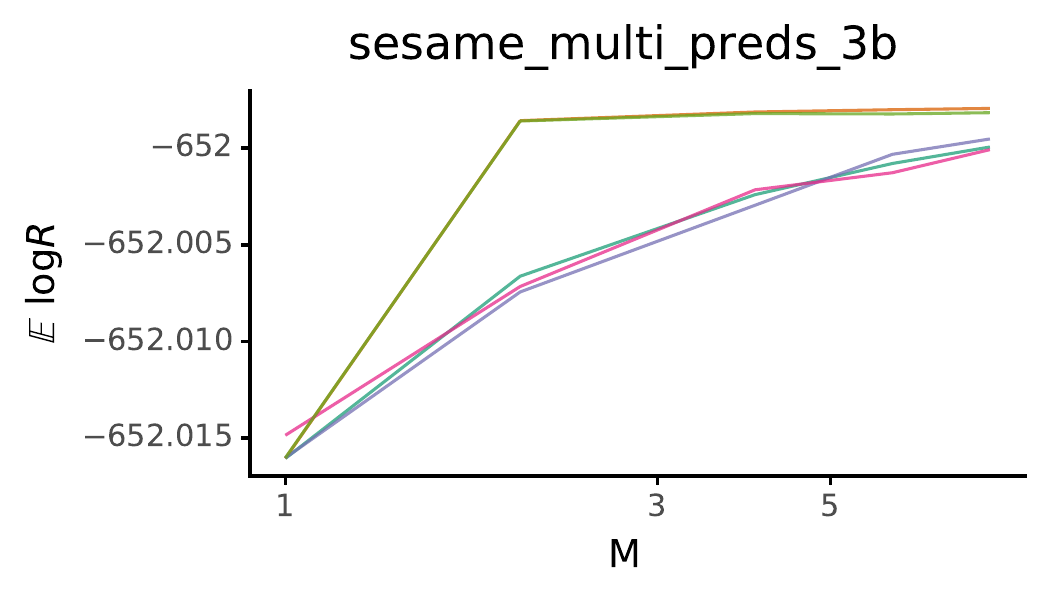}\includegraphics[width=0.33\columnwidth]{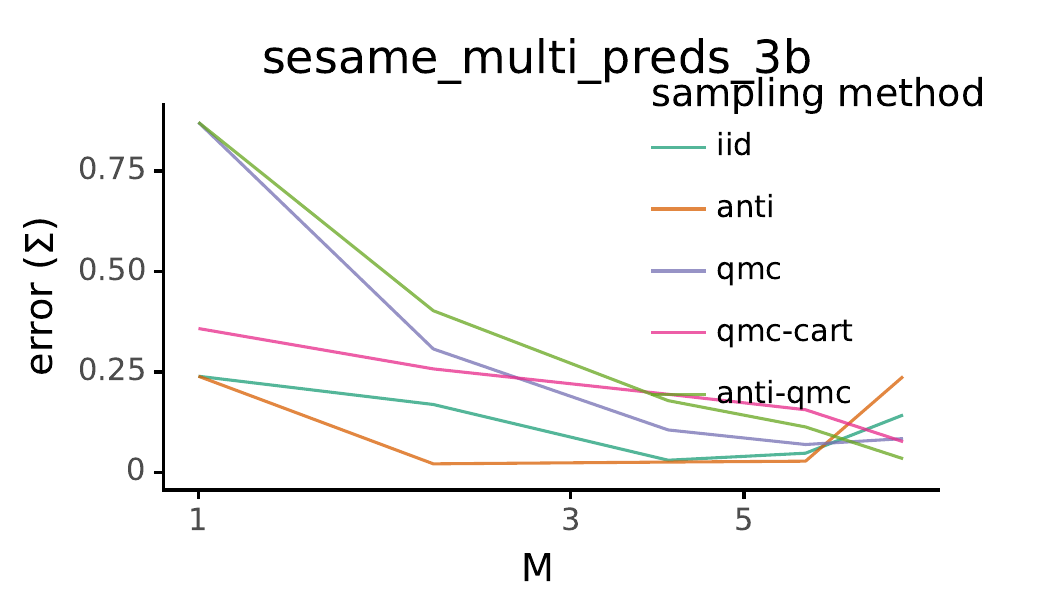}\includegraphics[width=0.33\columnwidth]{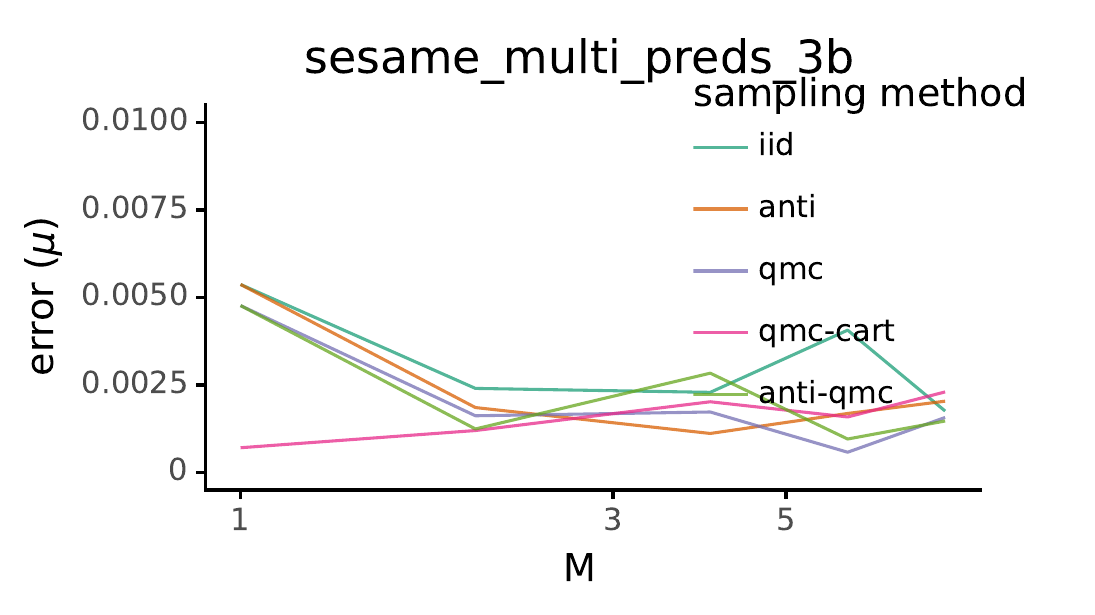}\linebreak{}

\includegraphics[width=0.33\columnwidth]{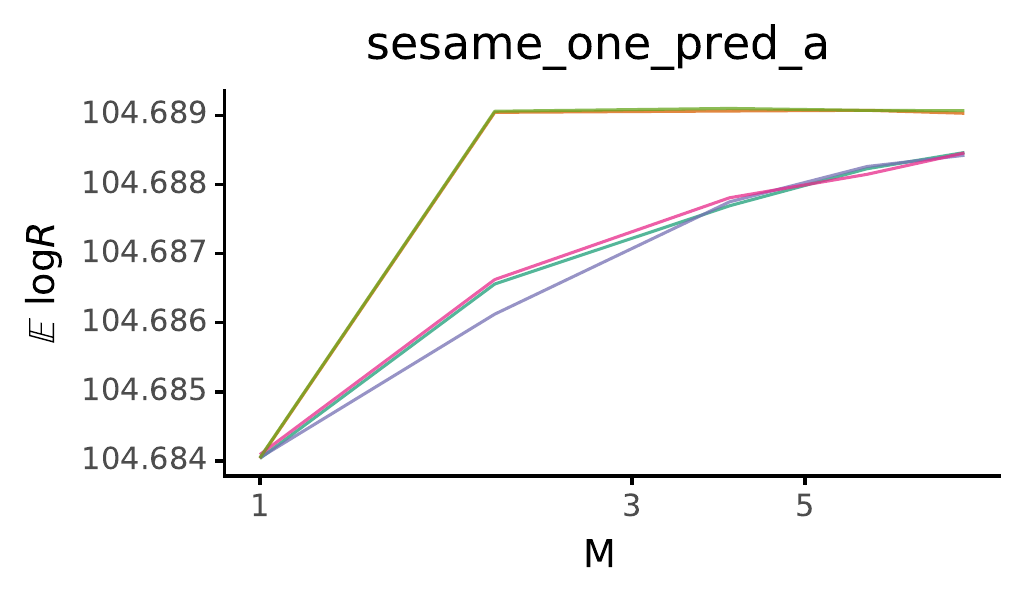}\includegraphics[width=0.33\columnwidth]{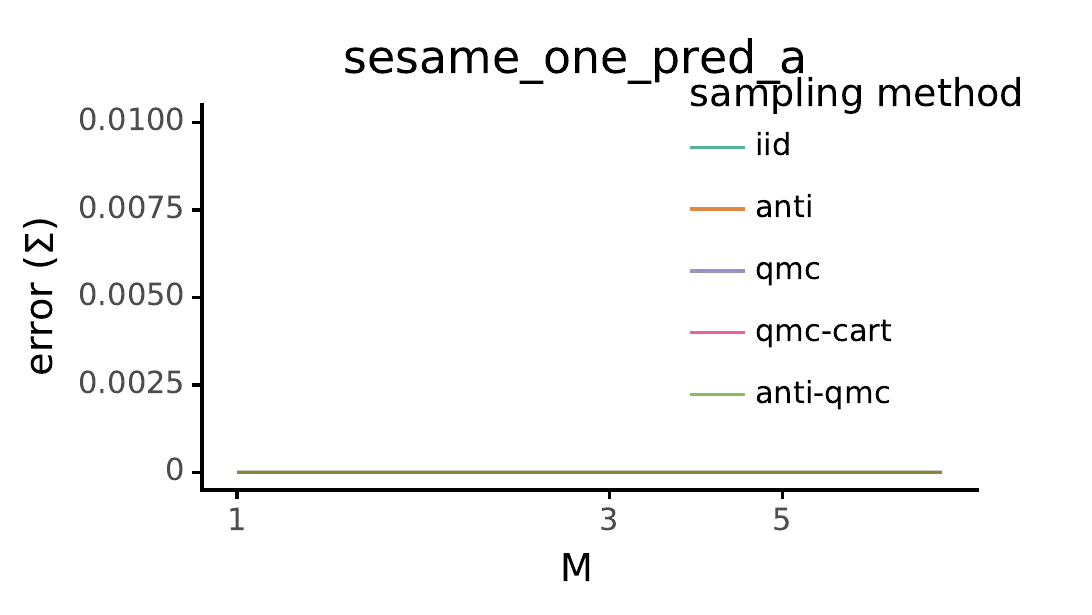}\includegraphics[width=0.33\columnwidth]{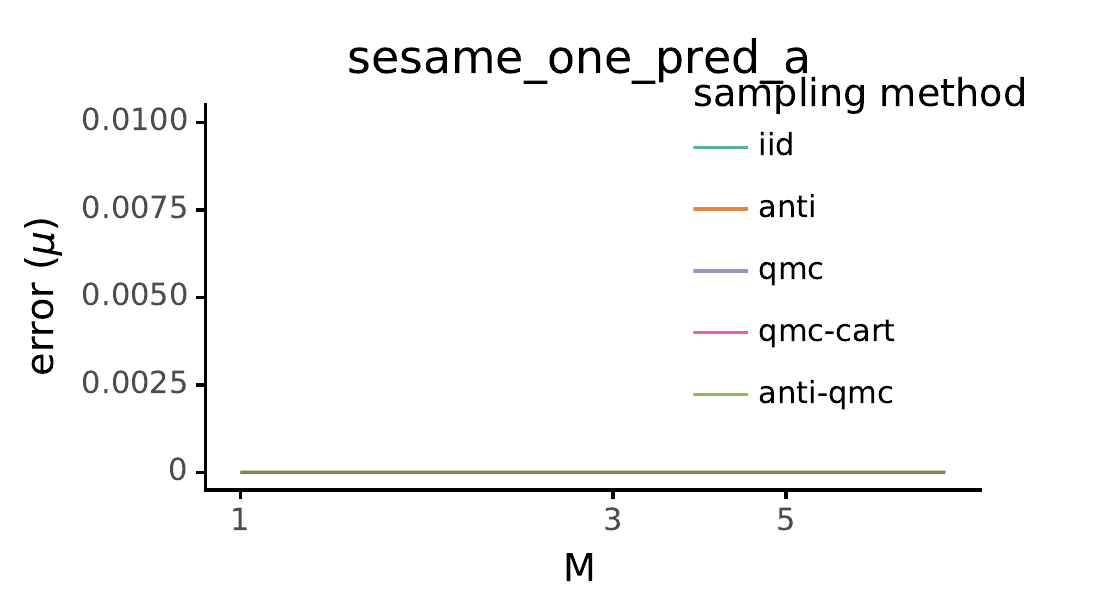}\linebreak{}

\caption{\textbf{Across all models, improvements in likelihood bounds correlate
strongly with improvements in posterior accuracy. Better sampling
methods can improve both.}}
\end{figure}

\begin{figure}
\includegraphics[width=0.33\columnwidth]{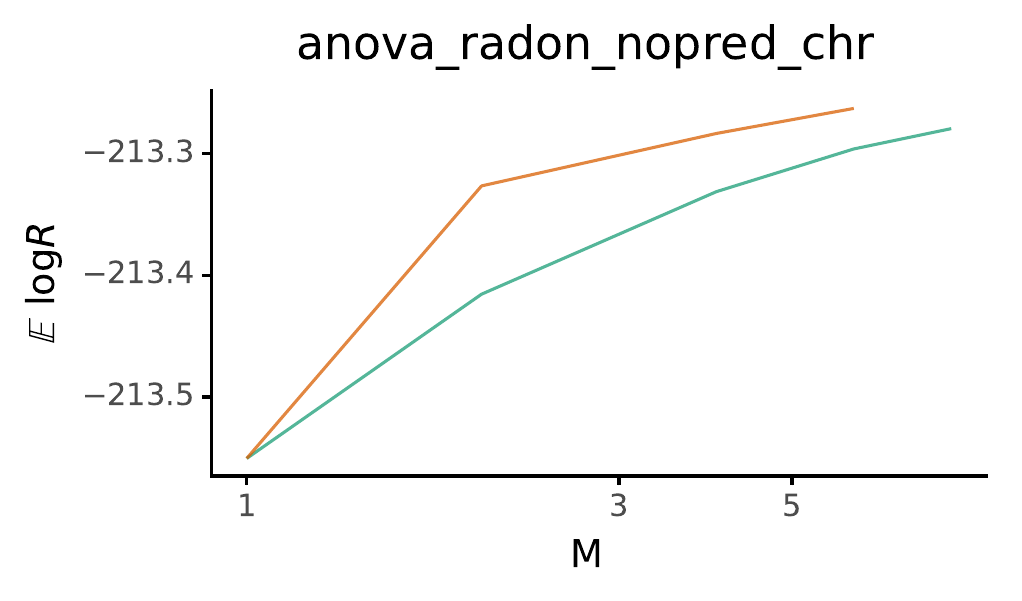}\includegraphics[width=0.33\columnwidth]{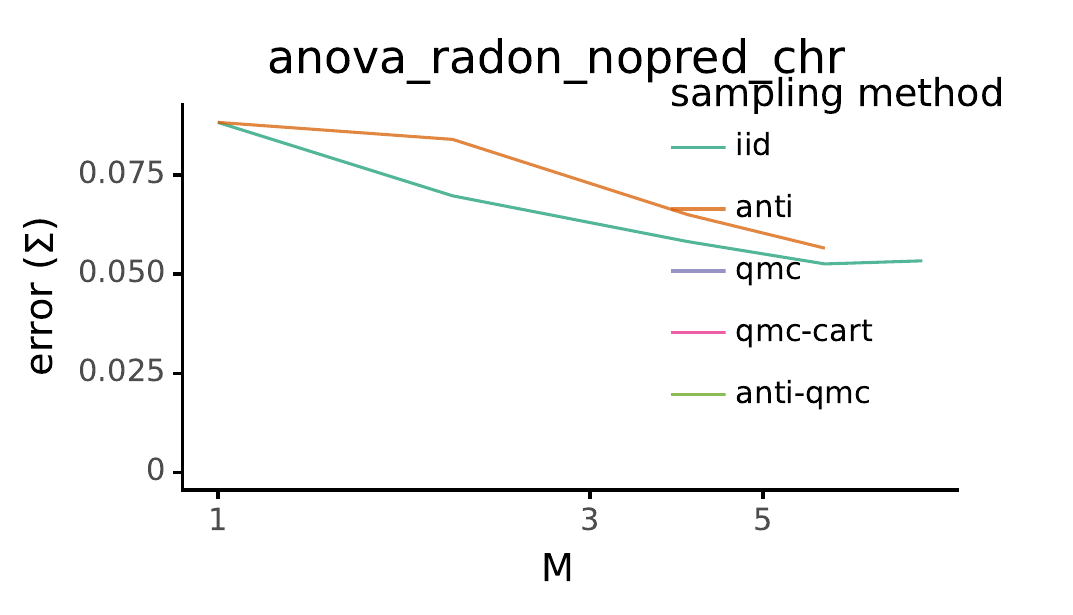}\includegraphics[width=0.33\columnwidth]{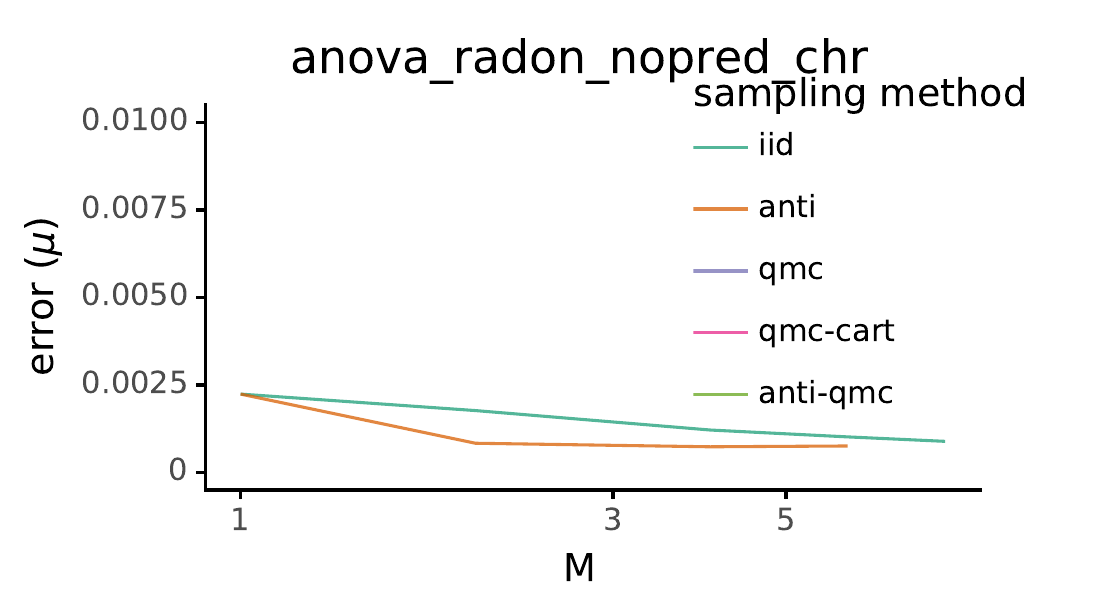}\linebreak{}

\includegraphics[width=0.33\columnwidth]{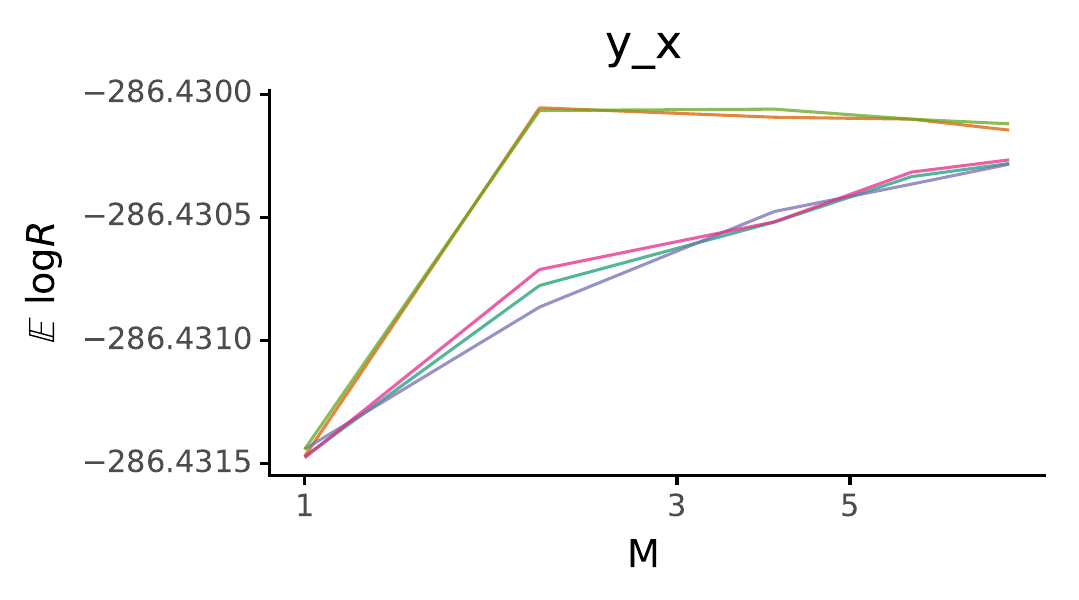}\includegraphics[width=0.33\columnwidth]{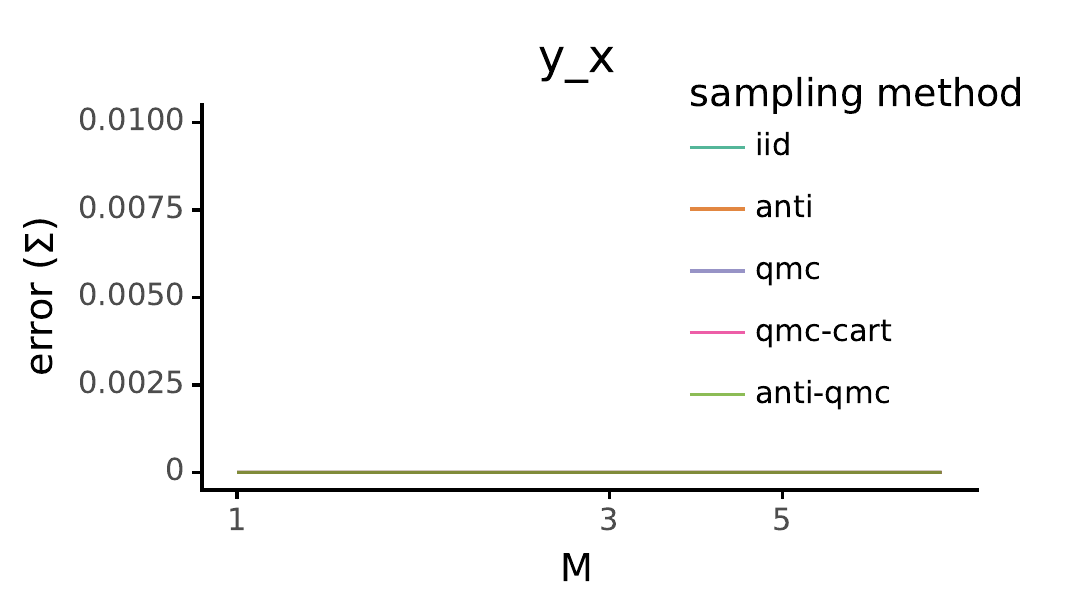}\includegraphics[width=0.33\columnwidth]{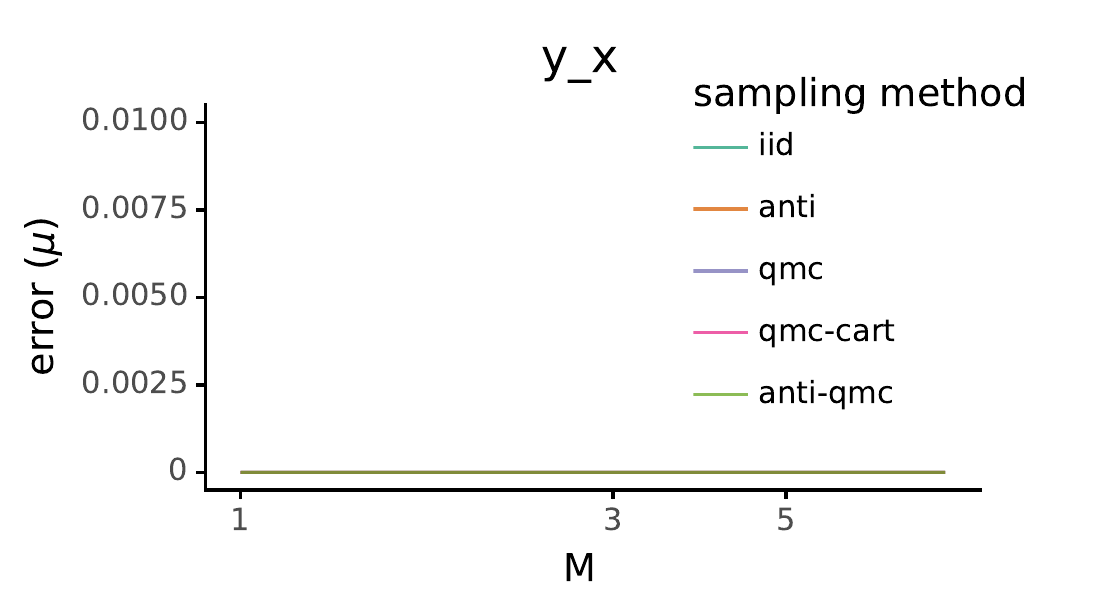}\linebreak{}

\includegraphics[width=0.33\columnwidth]{final_swarm_figs/stan64_1_elbos}\includegraphics[width=0.33\columnwidth]{final_swarm_figs/stan64_1_err_Sigma}\includegraphics[width=0.33\columnwidth]{final_swarm_figs/stan64_1_err_mu}\linebreak{}

\includegraphics[width=0.33\columnwidth]{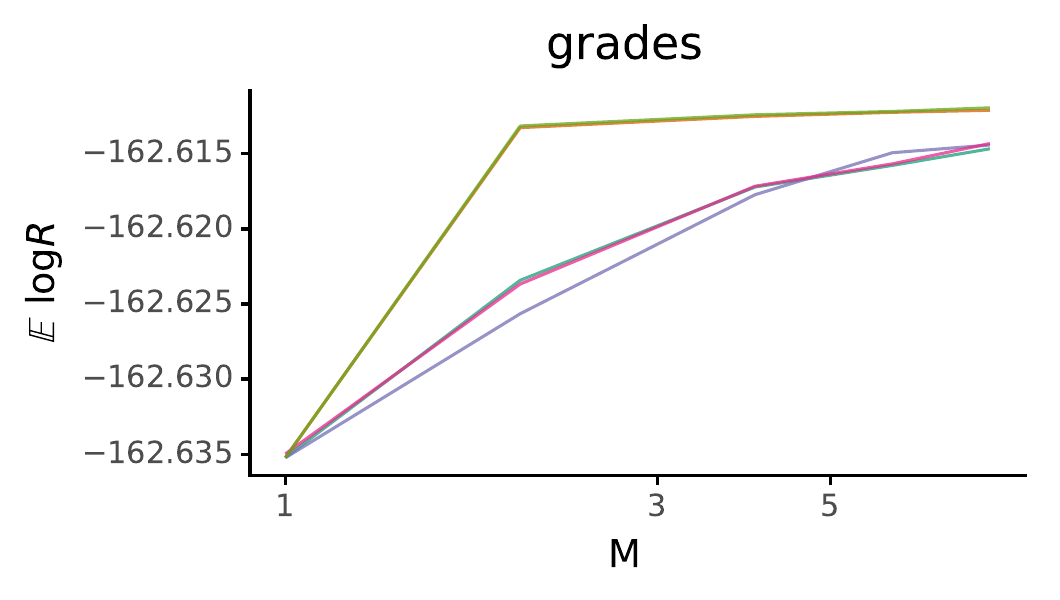}\includegraphics[width=0.33\columnwidth]{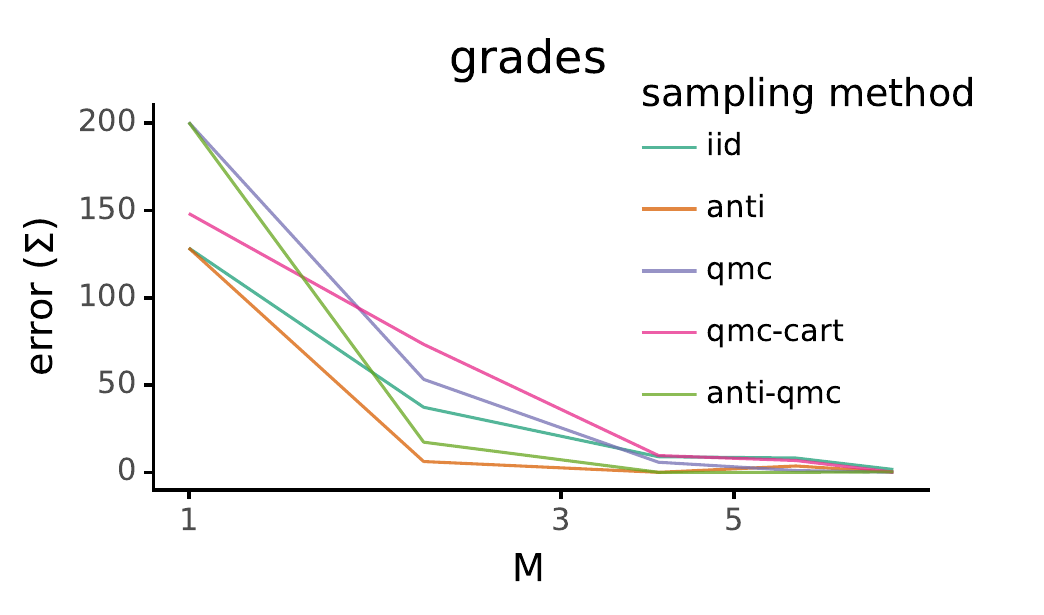}\includegraphics[width=0.33\columnwidth]{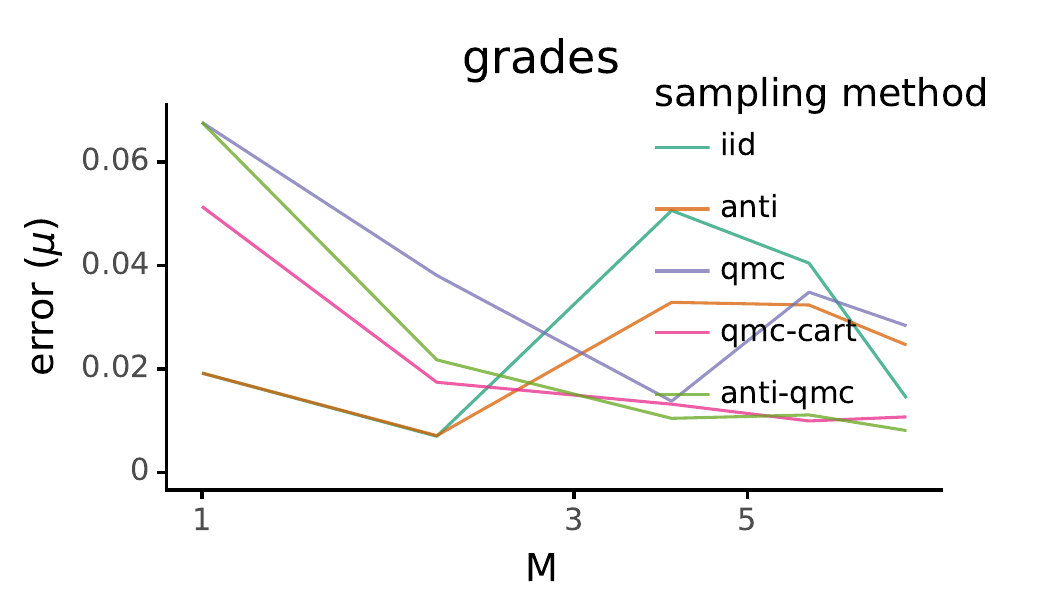}\linebreak{}

\includegraphics[width=0.33\columnwidth]{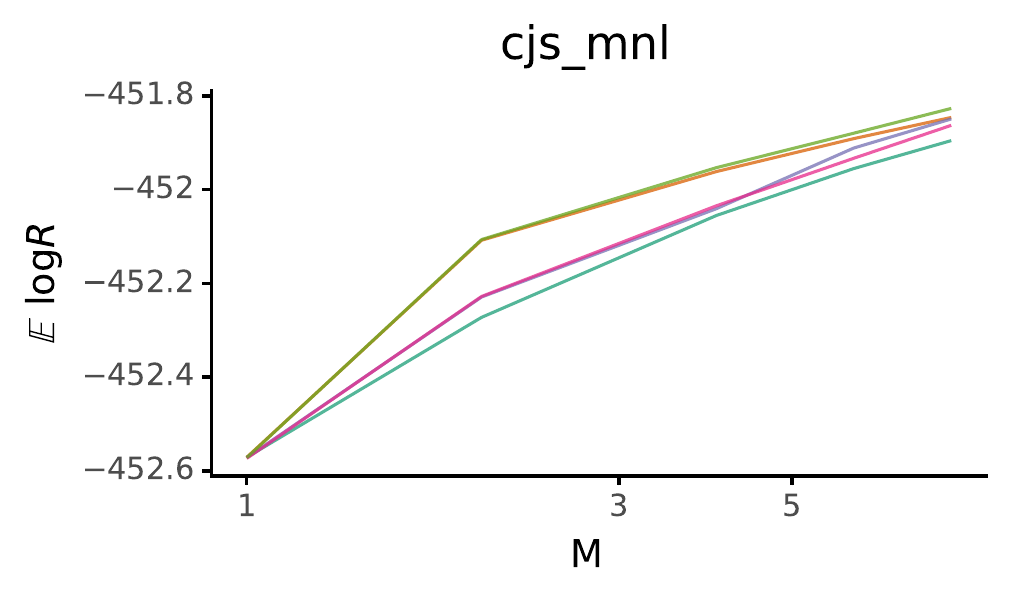}\includegraphics[width=0.33\columnwidth]{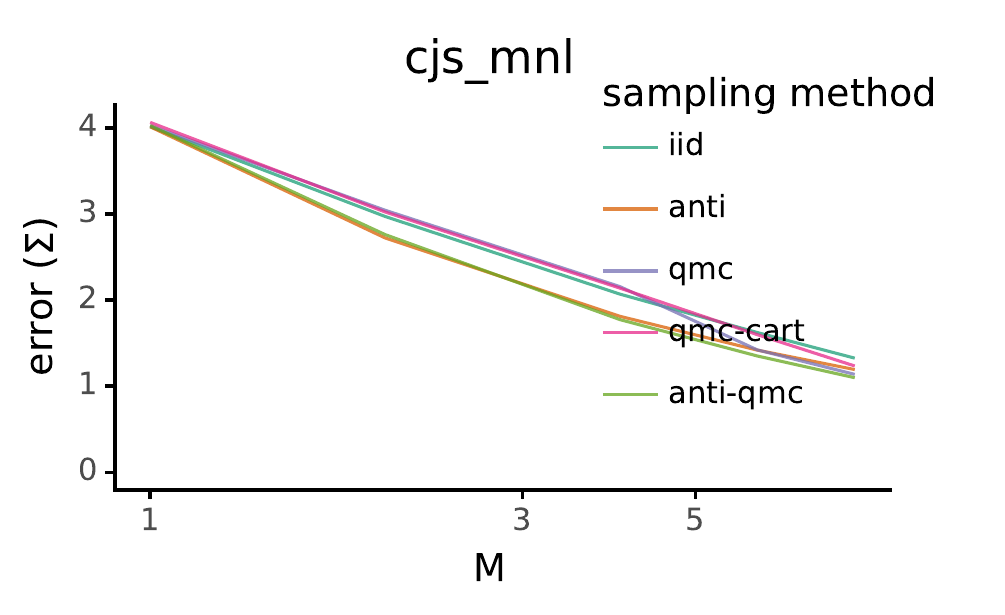}\includegraphics[width=0.33\columnwidth]{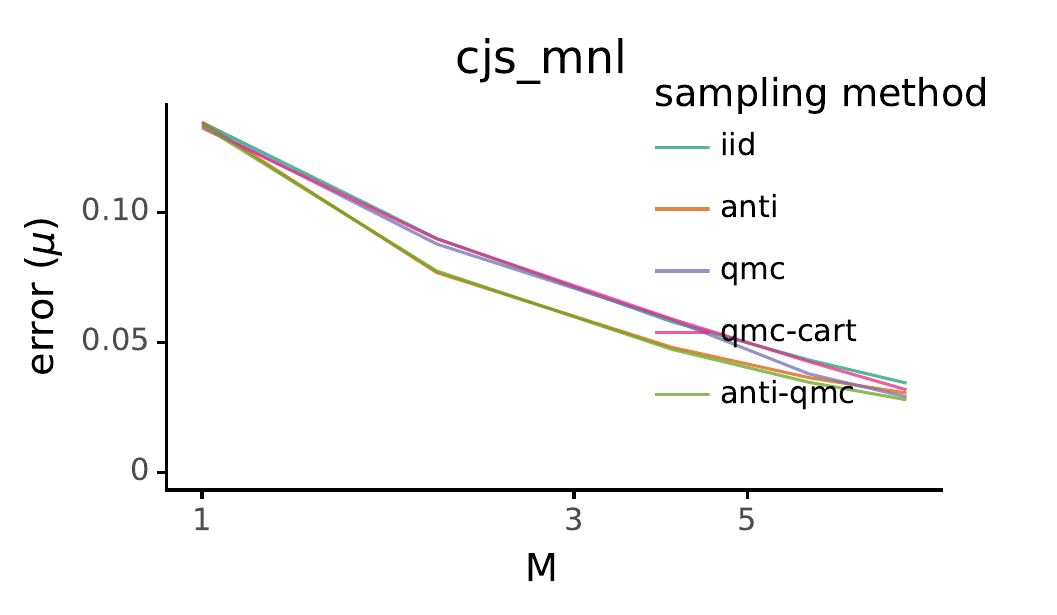}\linebreak{}

\includegraphics[width=0.33\columnwidth]{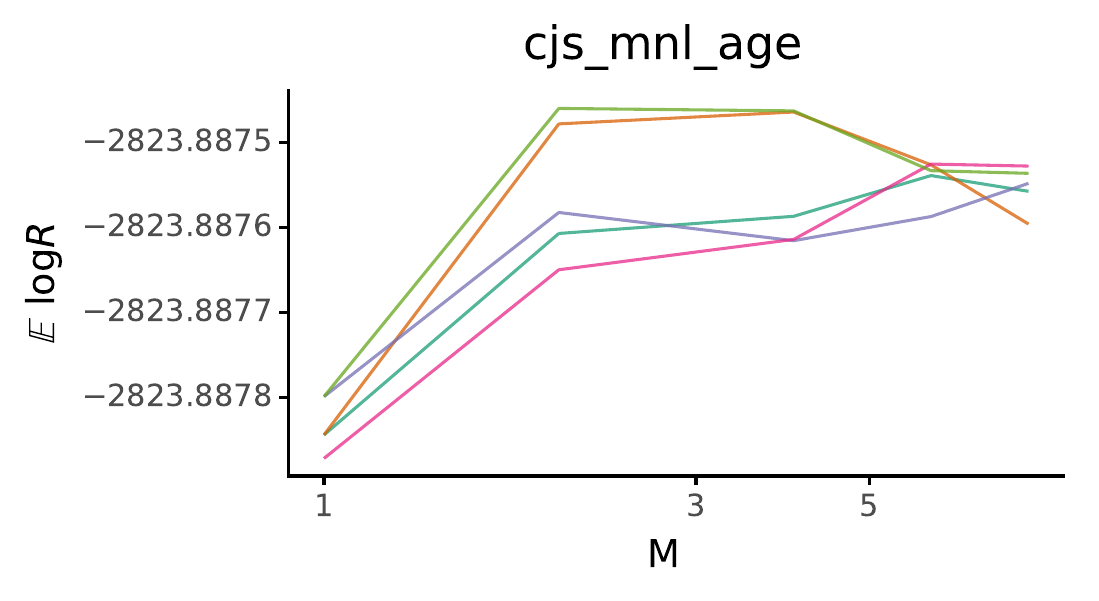}\includegraphics[width=0.33\columnwidth]{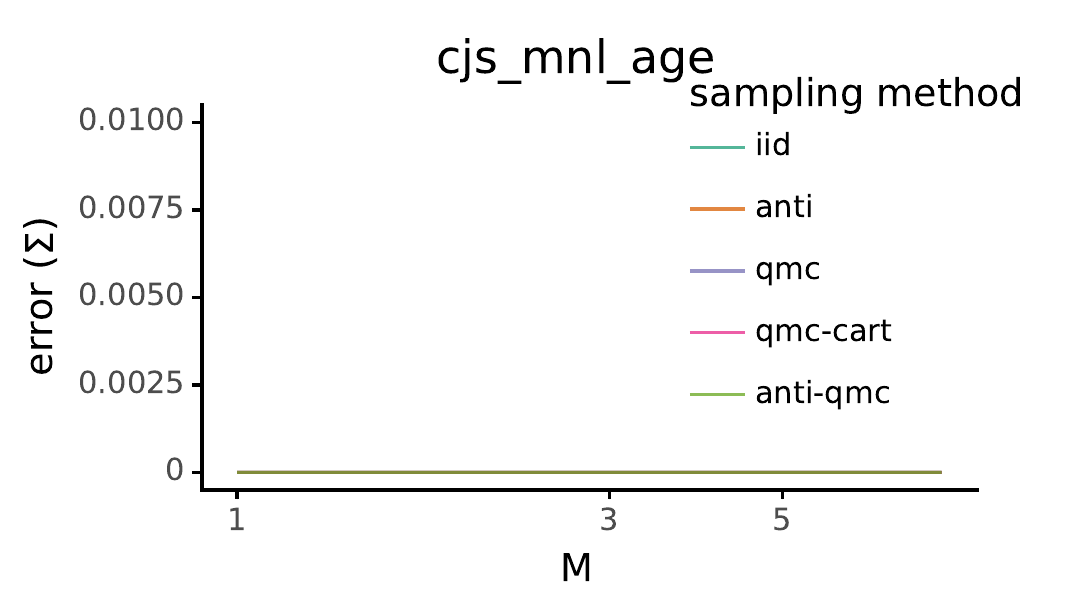}\includegraphics[width=0.33\columnwidth]{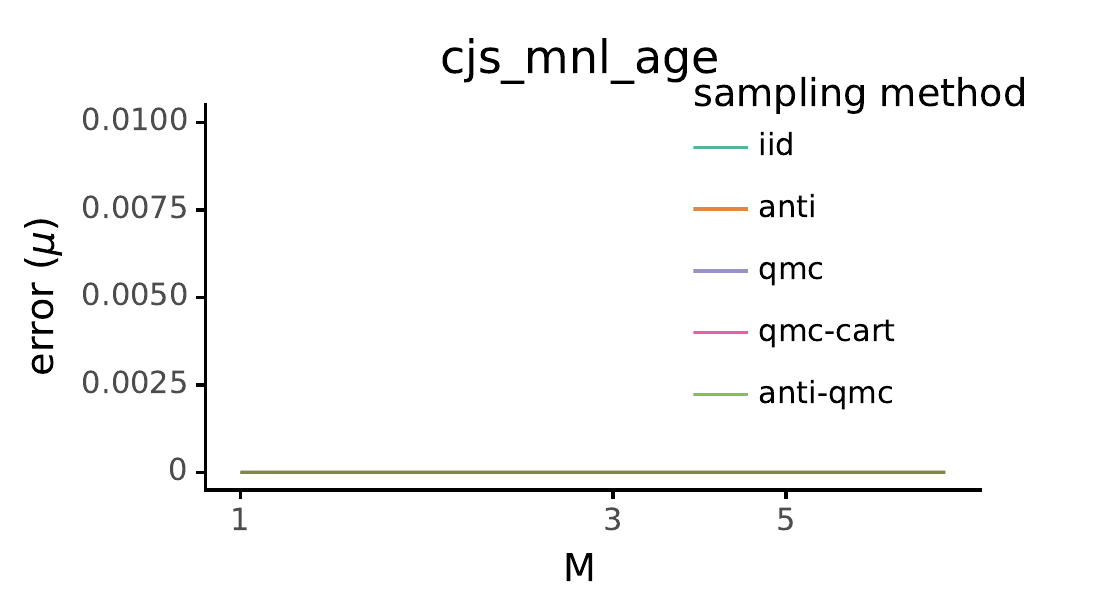}\linebreak{}

\includegraphics[width=0.33\columnwidth]{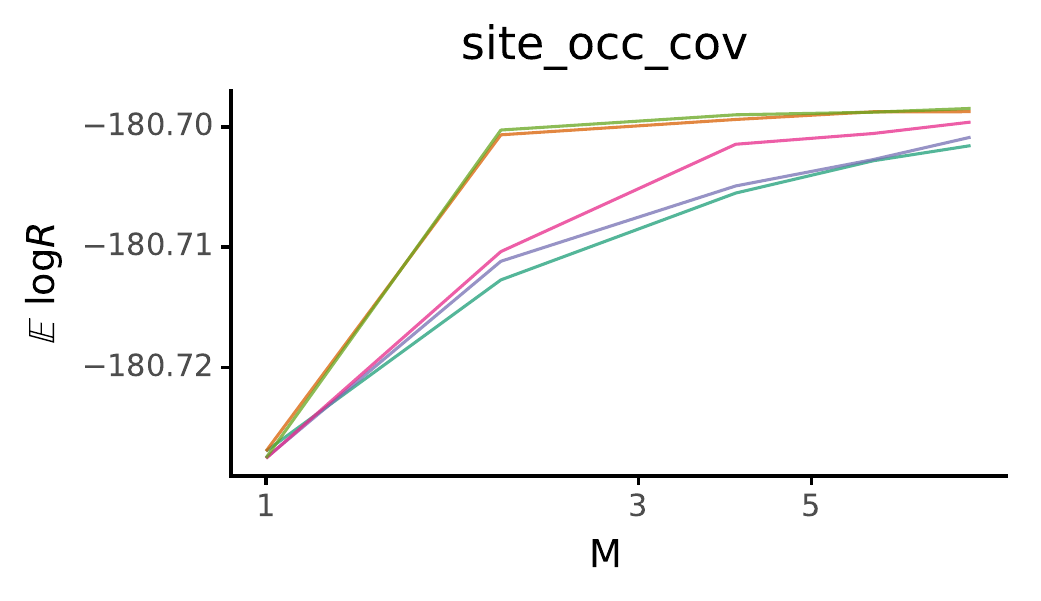}\includegraphics[width=0.33\columnwidth]{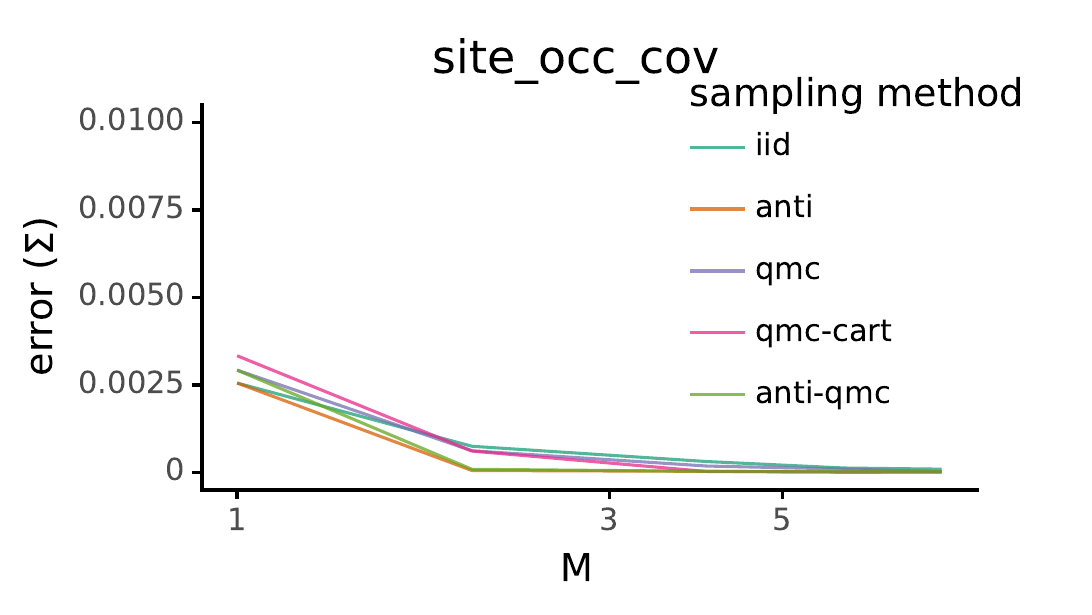}\includegraphics[width=0.33\columnwidth]{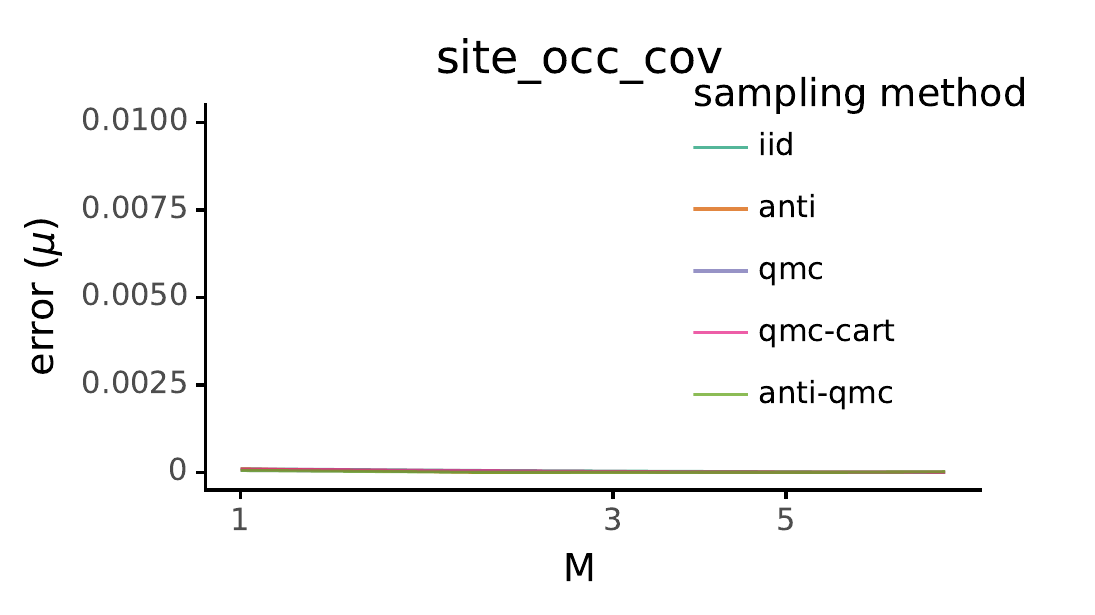}\linebreak{}

\caption{\textbf{Across all models, improvements in likelihood bounds correlate
strongly with improvements in posterior accuracy. Better sampling
methods can improve both.}}
\end{figure}

\begin{figure}
\includegraphics[width=0.33\columnwidth]{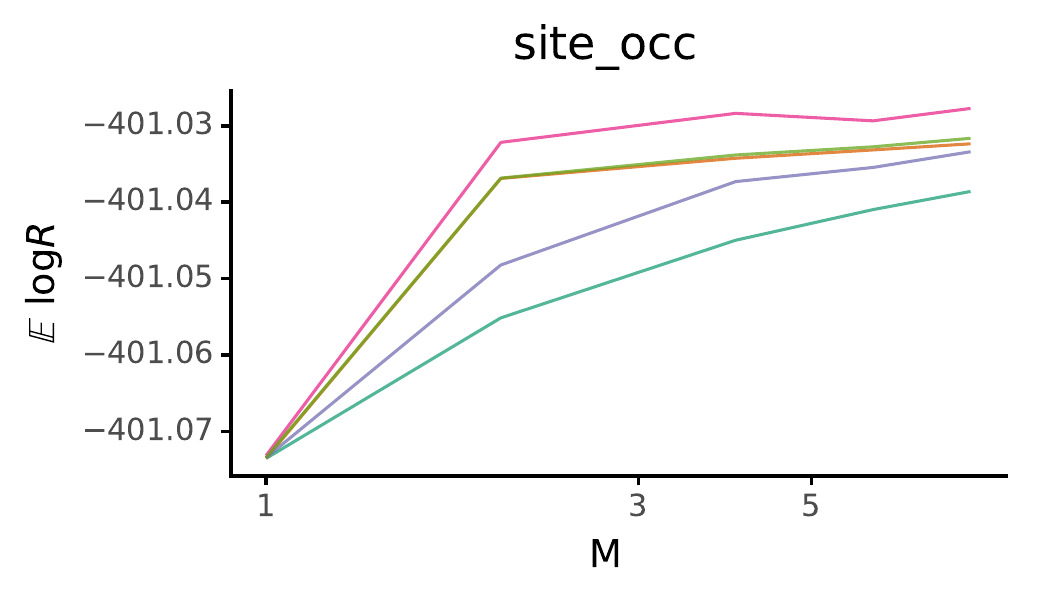}\includegraphics[width=0.33\columnwidth]{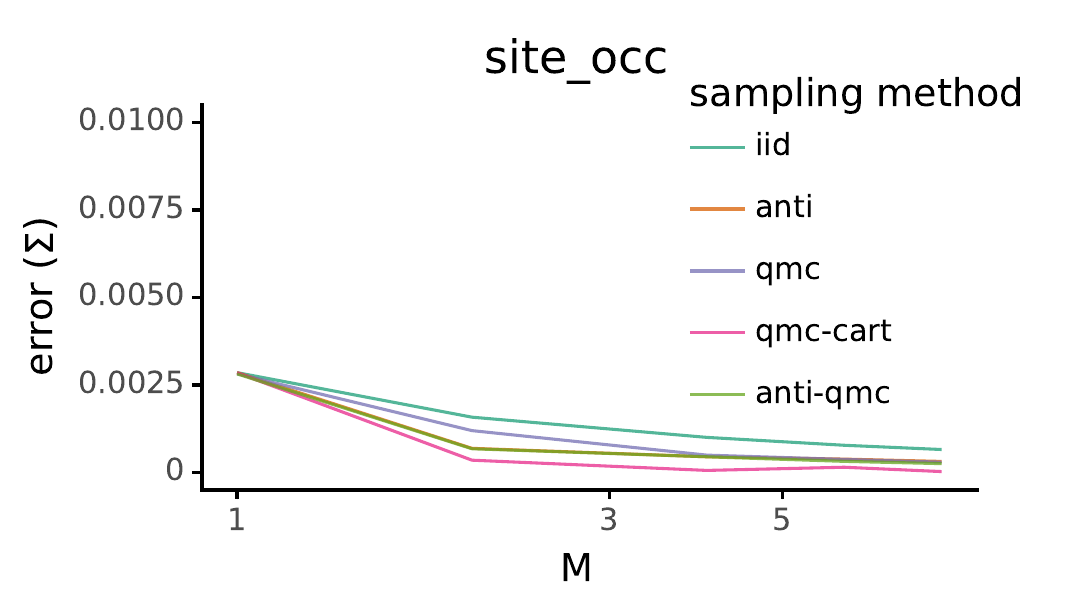}\includegraphics[width=0.33\columnwidth]{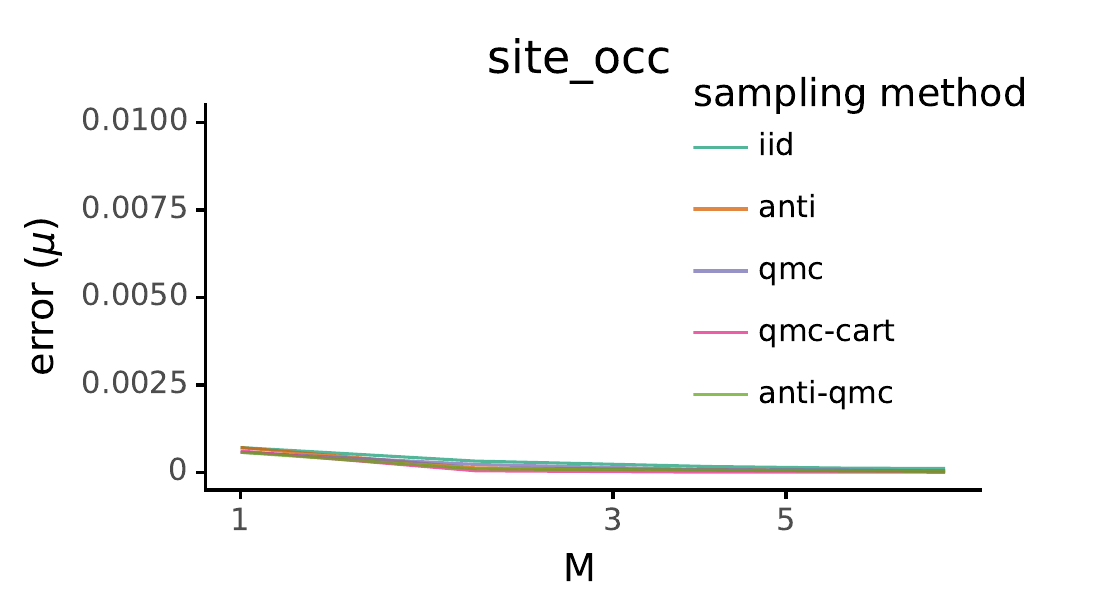}\linebreak{}

\includegraphics[width=0.33\columnwidth]{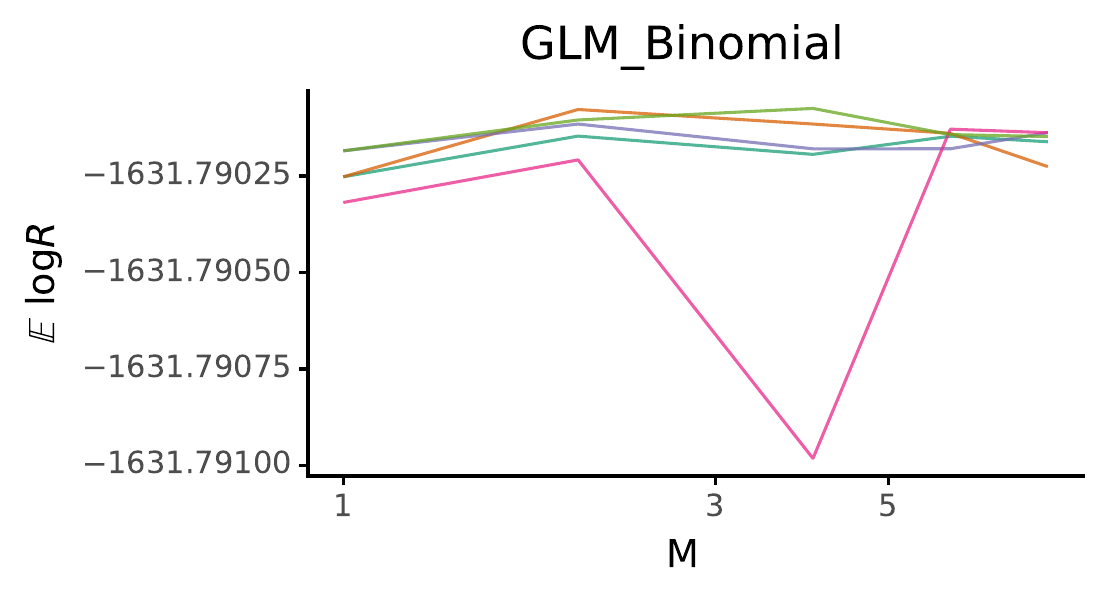}\includegraphics[width=0.33\columnwidth]{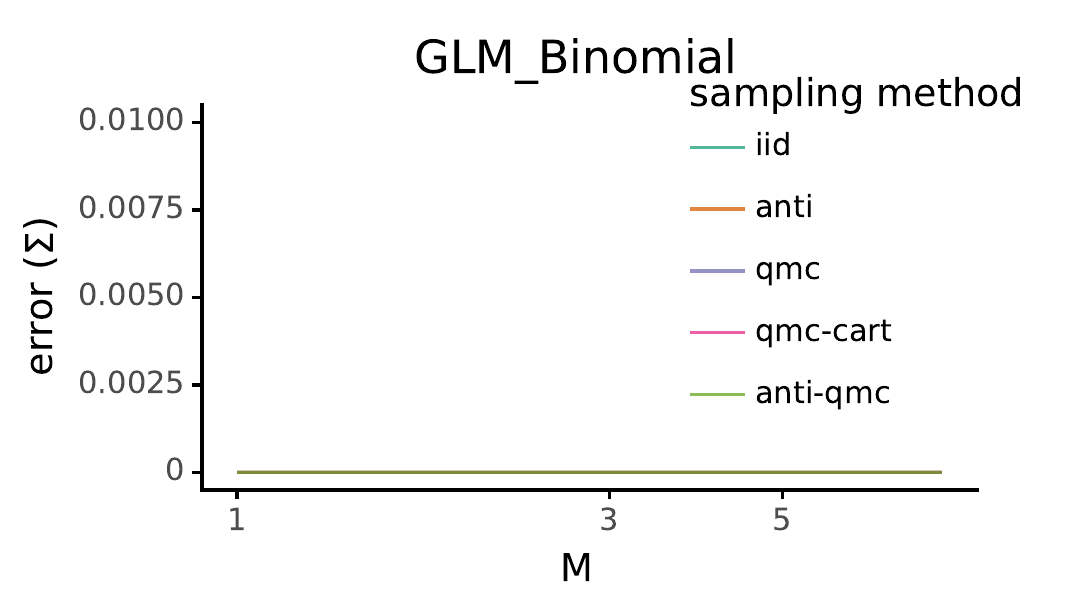}\includegraphics[width=0.33\columnwidth]{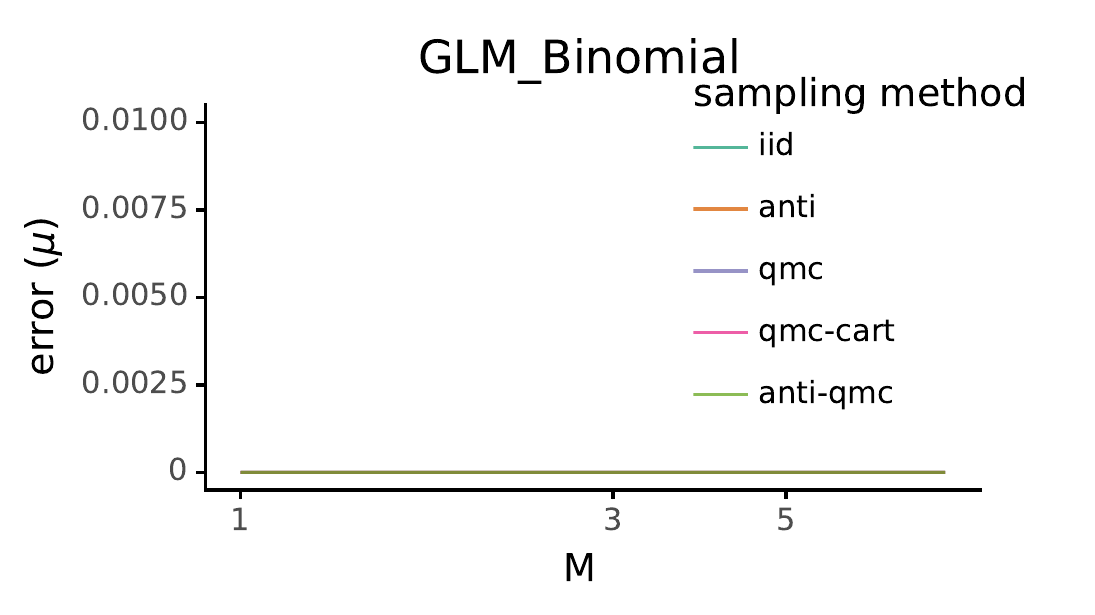}\linebreak{}

\includegraphics[width=0.33\columnwidth]{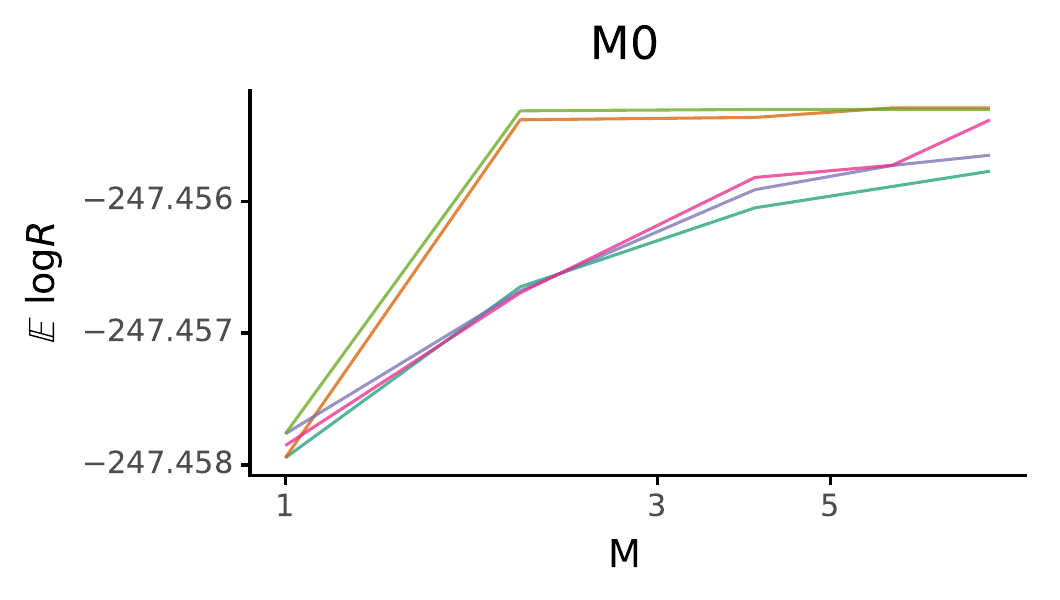}\includegraphics[width=0.33\columnwidth]{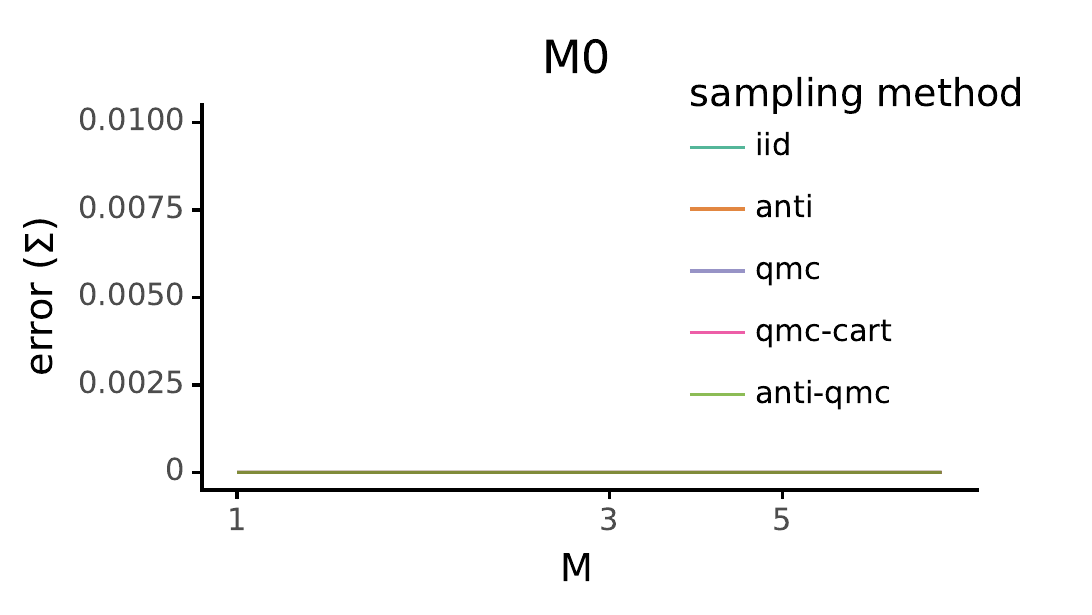}\includegraphics[width=0.33\columnwidth]{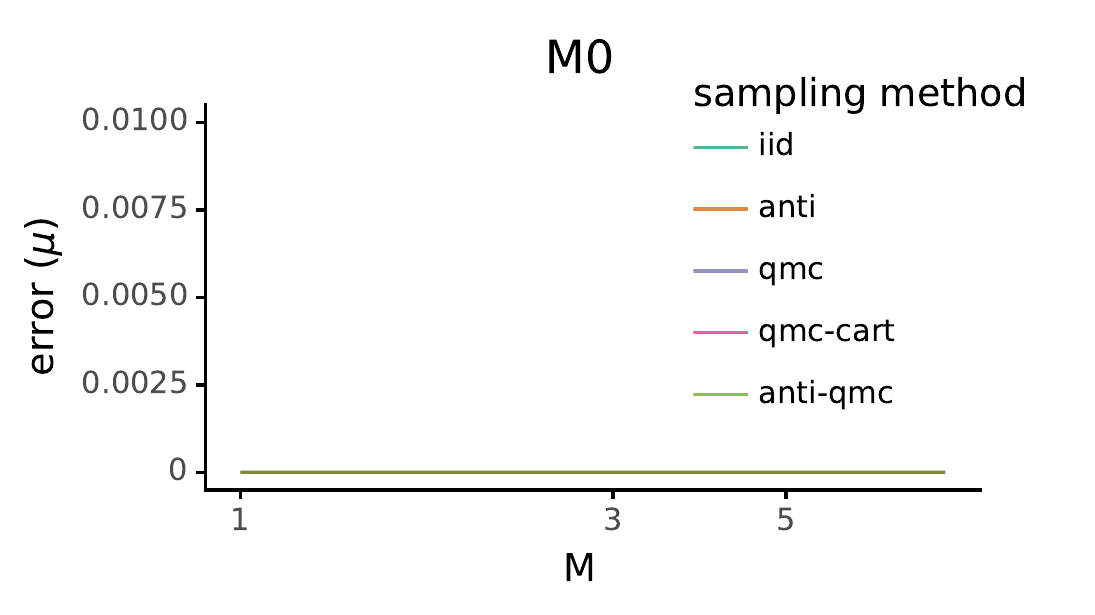}\linebreak{}

\includegraphics[width=0.33\columnwidth]{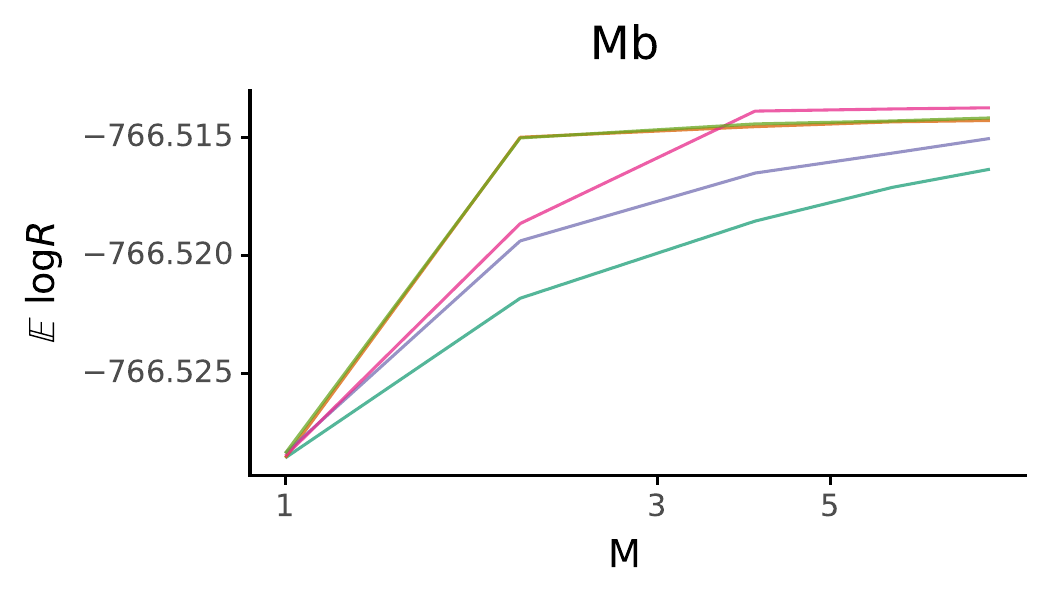}\includegraphics[width=0.33\columnwidth]{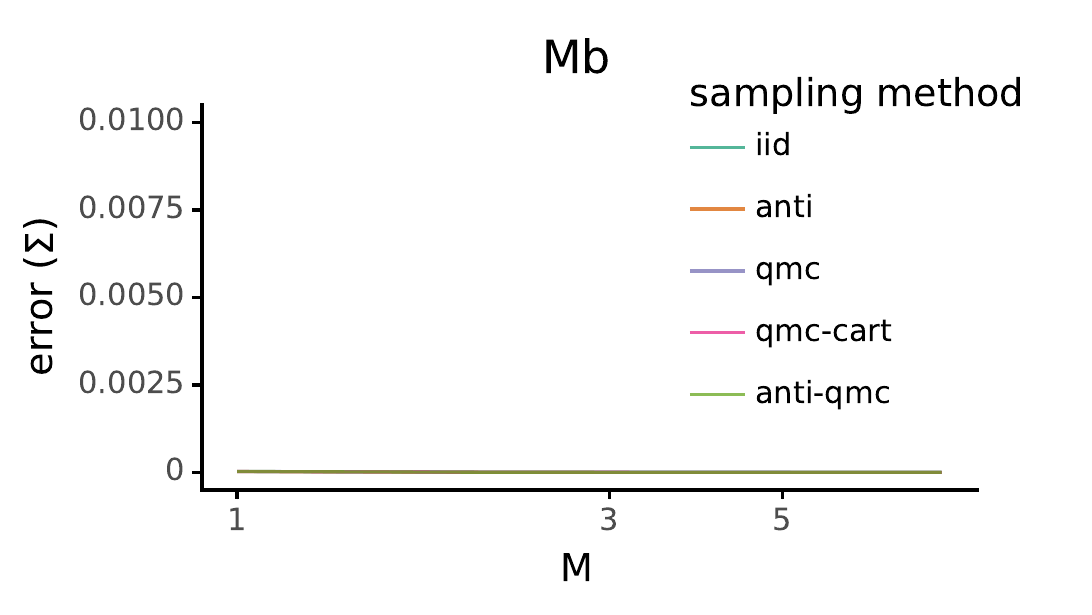}\includegraphics[width=0.33\columnwidth]{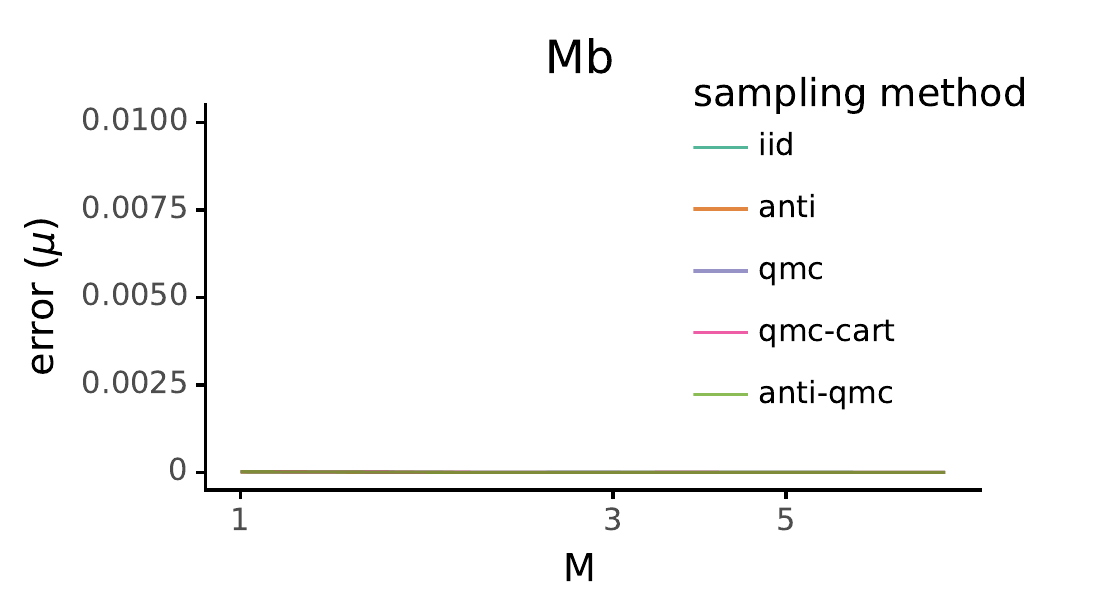}\linebreak{}

\includegraphics[width=0.33\columnwidth]{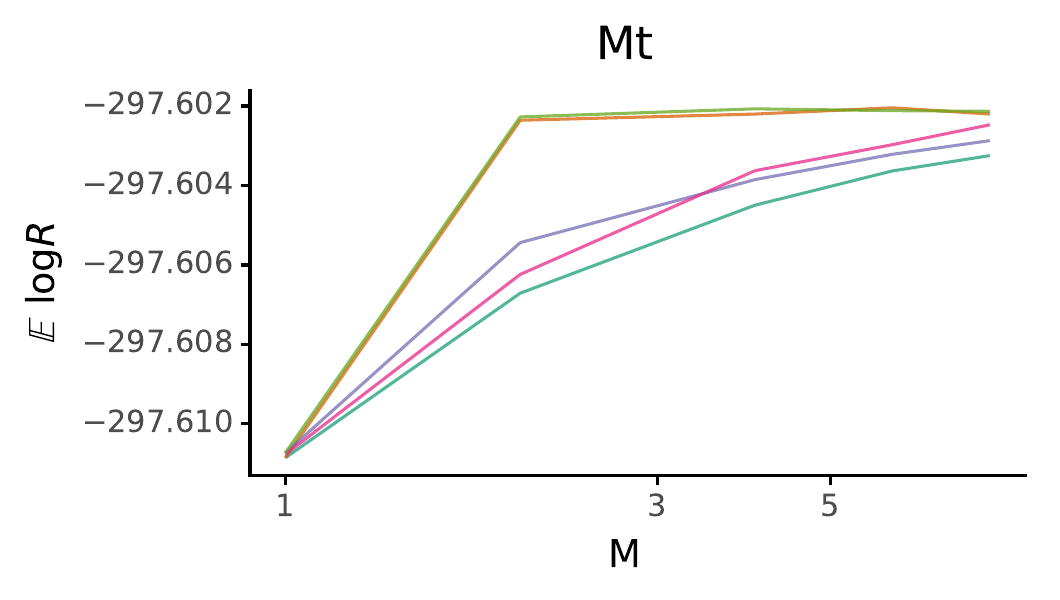}\includegraphics[width=0.33\columnwidth]{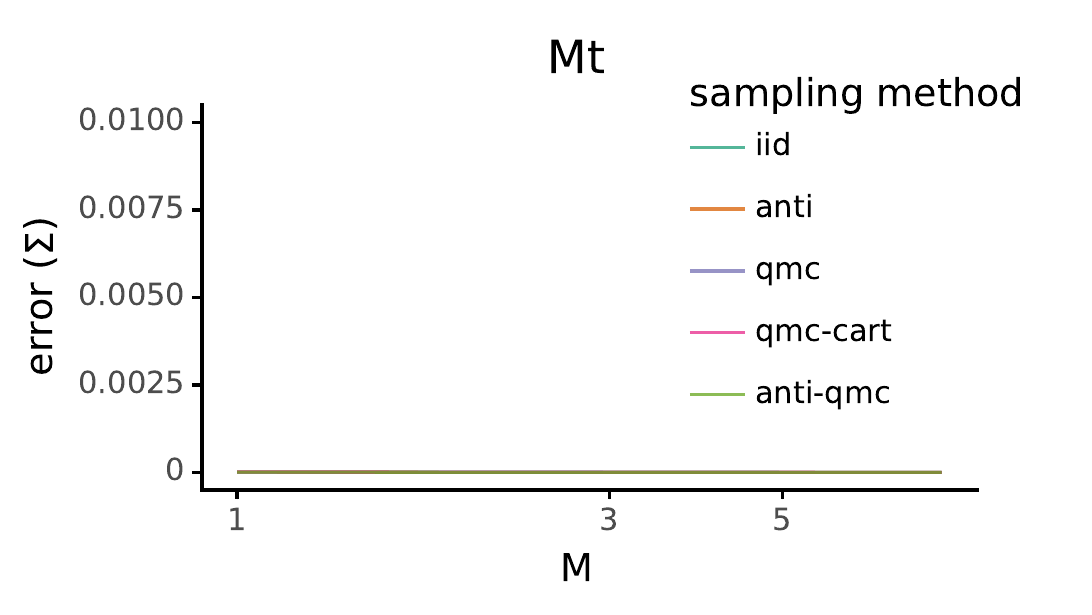}\includegraphics[width=0.33\columnwidth]{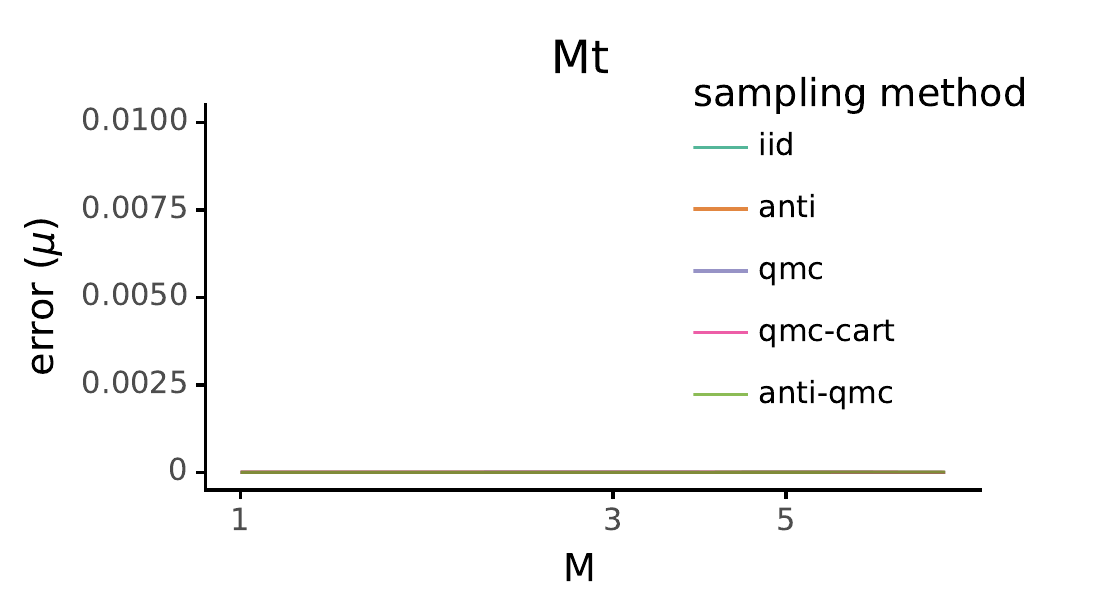}\linebreak{}

\includegraphics[width=0.33\columnwidth]{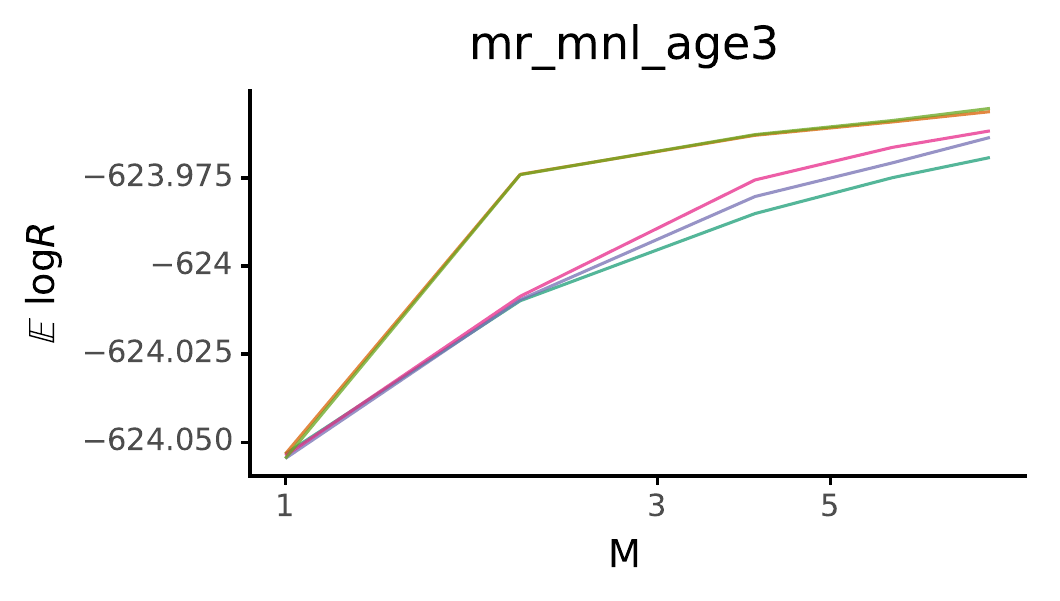}\includegraphics[width=0.33\columnwidth]{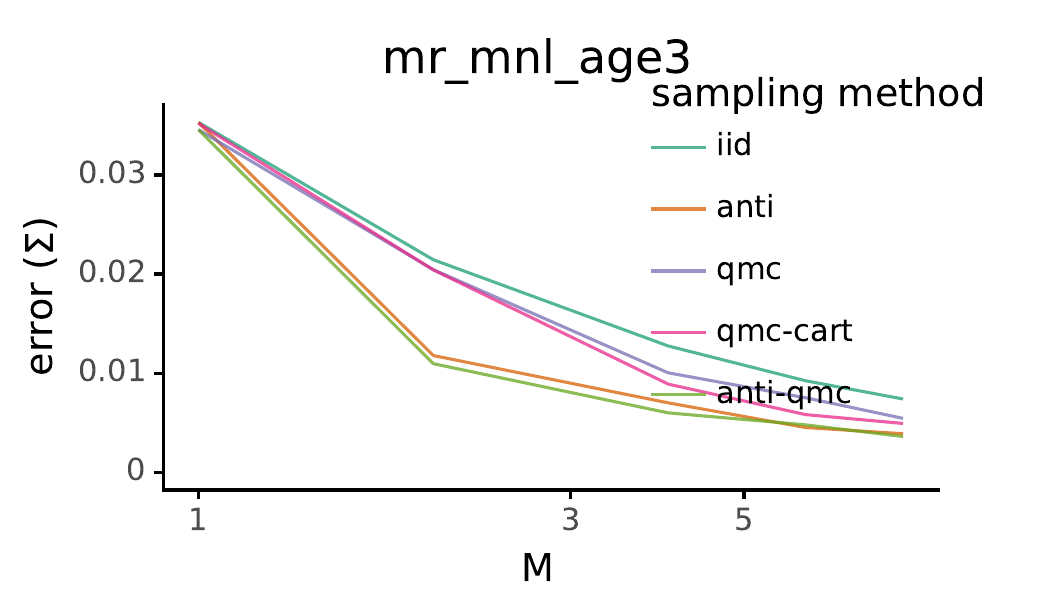}\includegraphics[width=0.33\columnwidth]{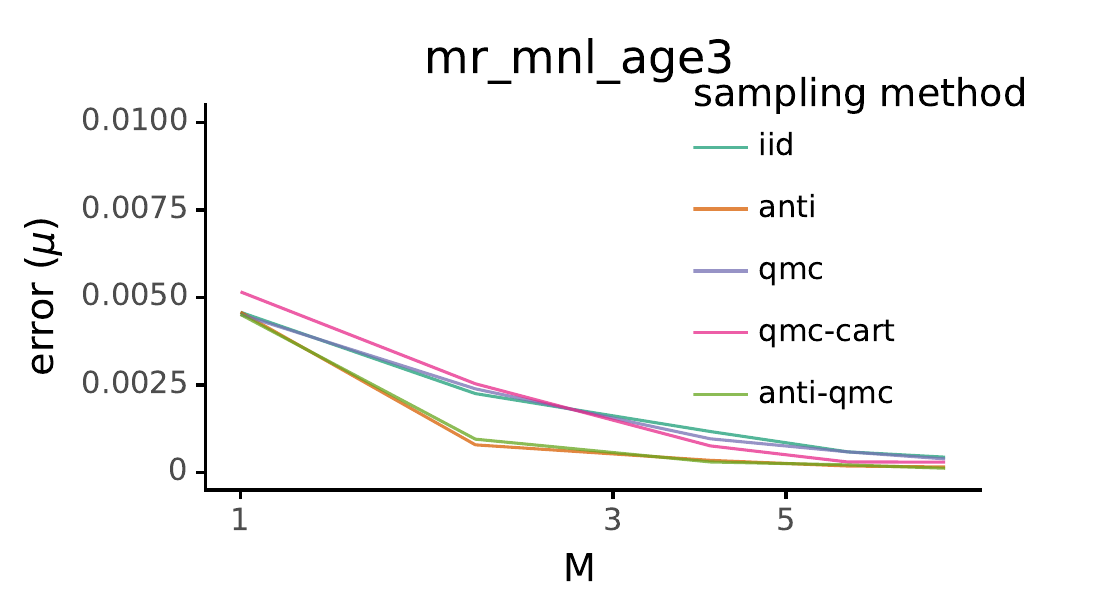}\linebreak{}

\includegraphics[width=0.33\columnwidth]{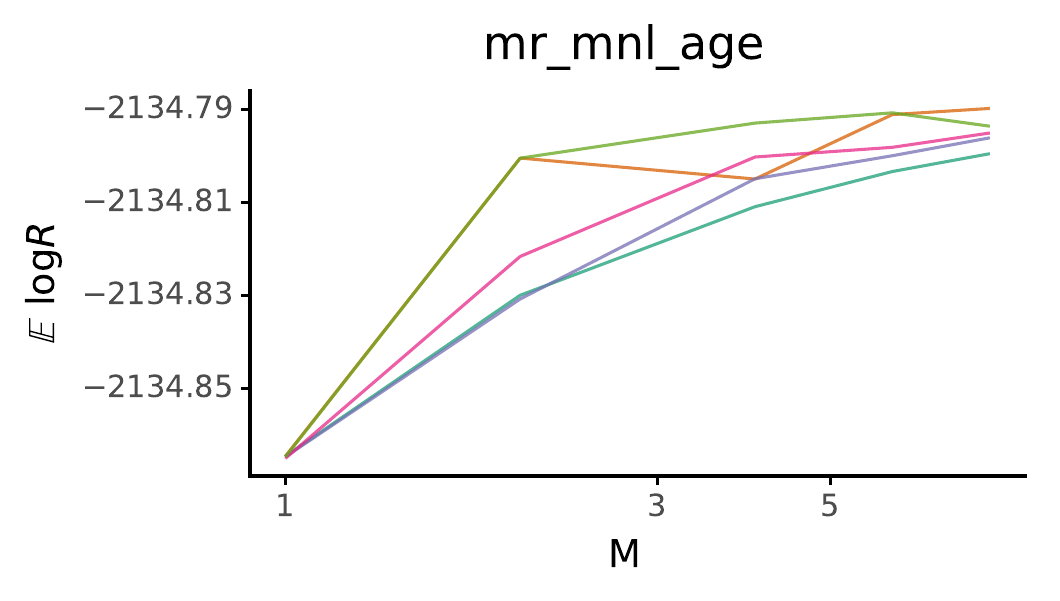}\includegraphics[width=0.33\columnwidth]{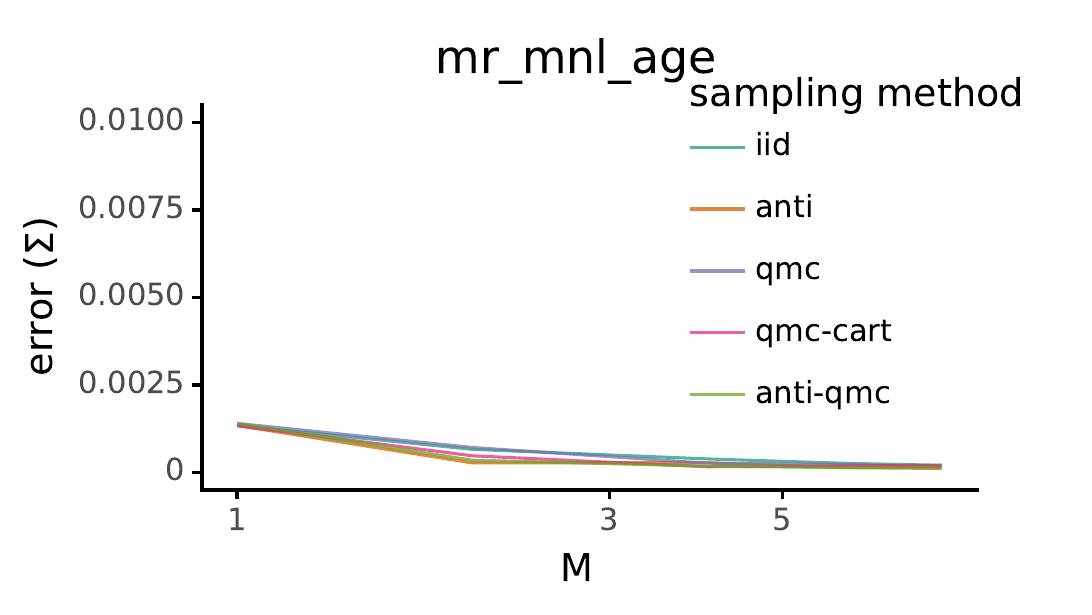}\includegraphics[width=0.33\columnwidth]{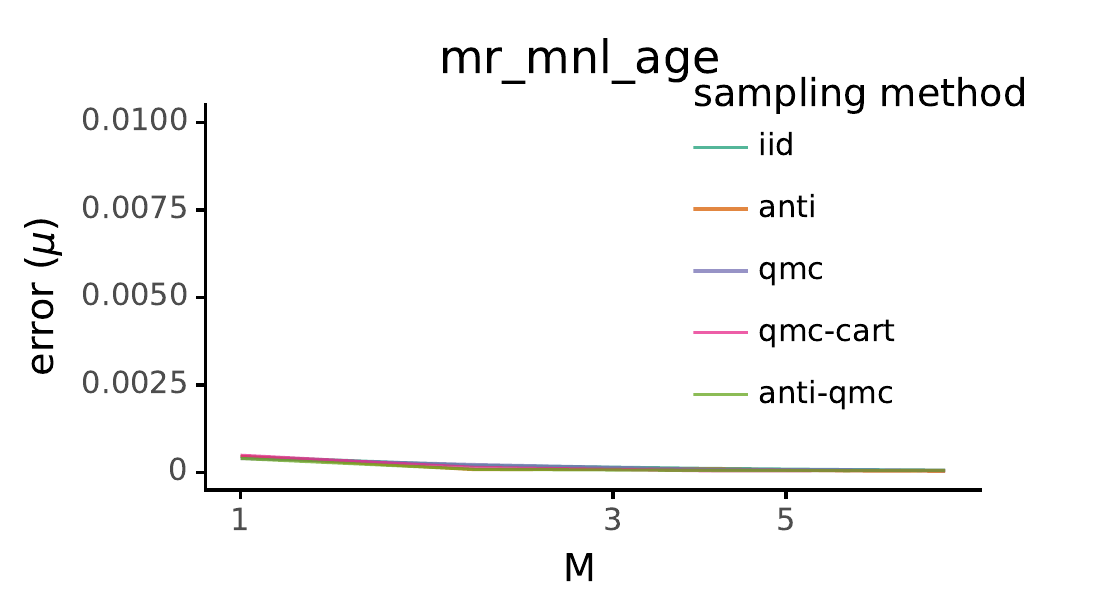}\linebreak{}

\caption{\textbf{Across all models, improvements in likelihood bounds correlate
strongly with improvements in posterior accuracy. Better sampling
methods can improve both.}}
\end{figure}

\begin{figure}
\includegraphics[width=0.33\columnwidth]{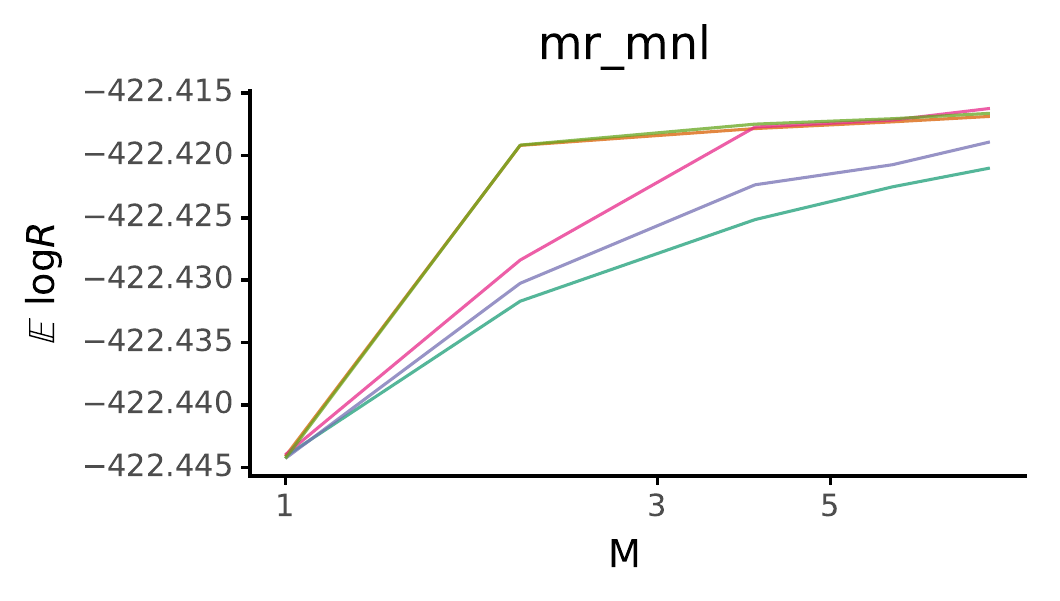}\includegraphics[width=0.33\columnwidth]{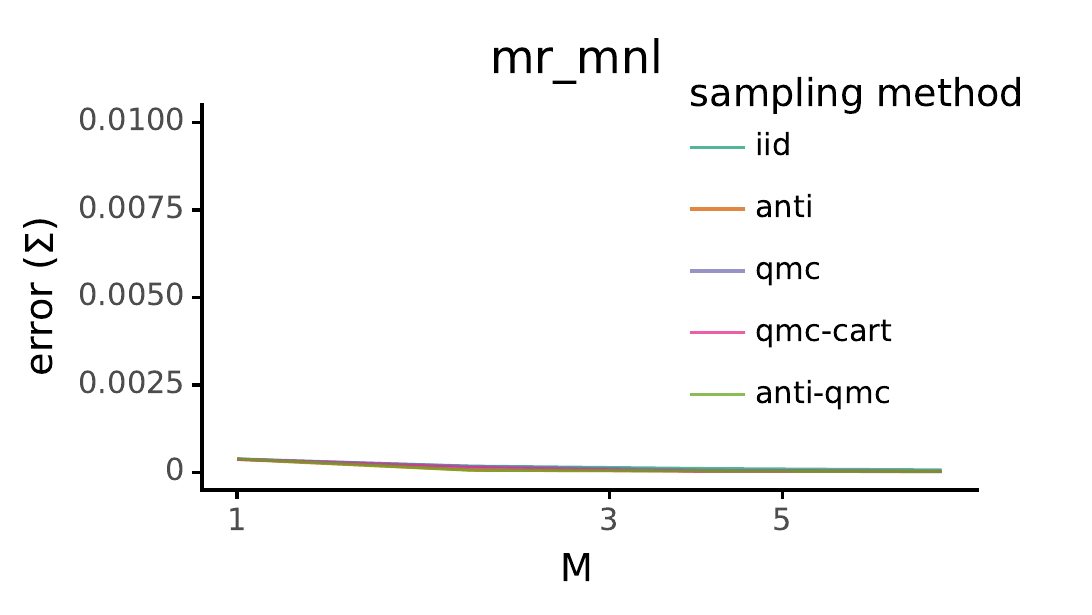}\includegraphics[width=0.33\columnwidth]{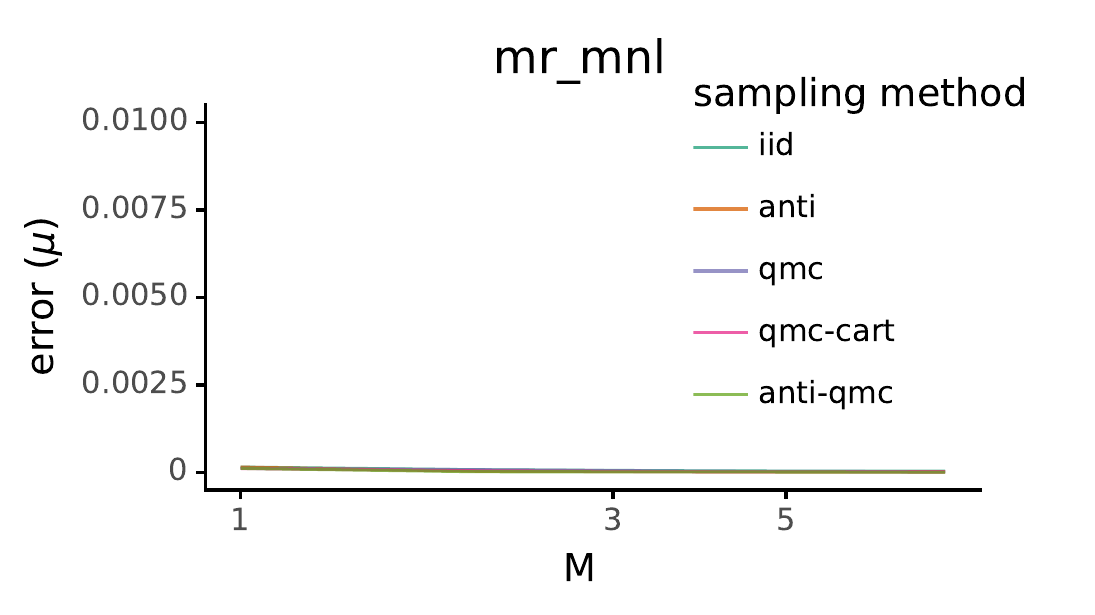}\linebreak{}

\includegraphics[width=0.33\columnwidth]{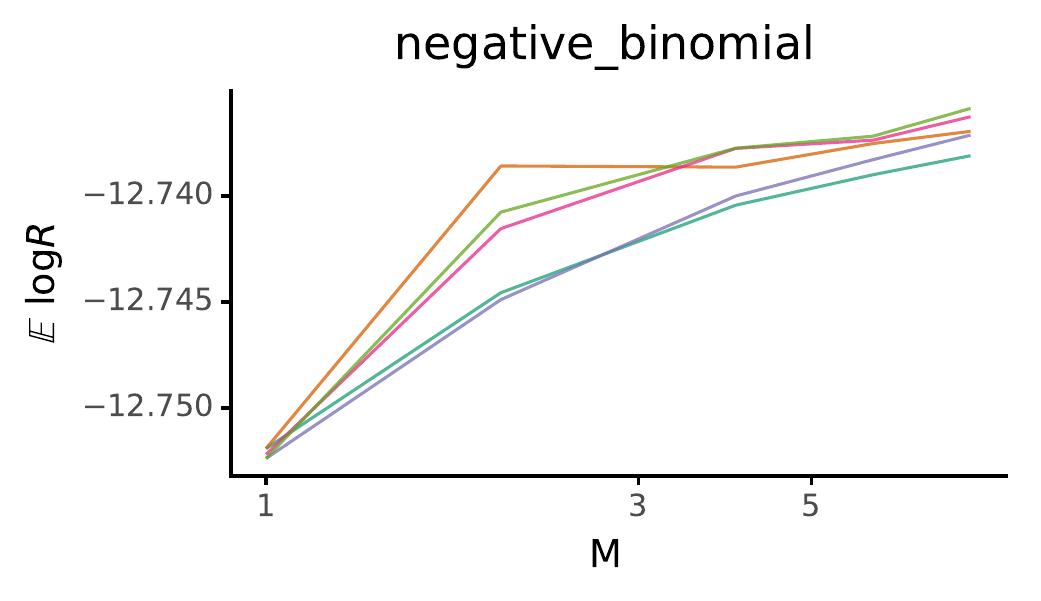}\includegraphics[width=0.33\columnwidth]{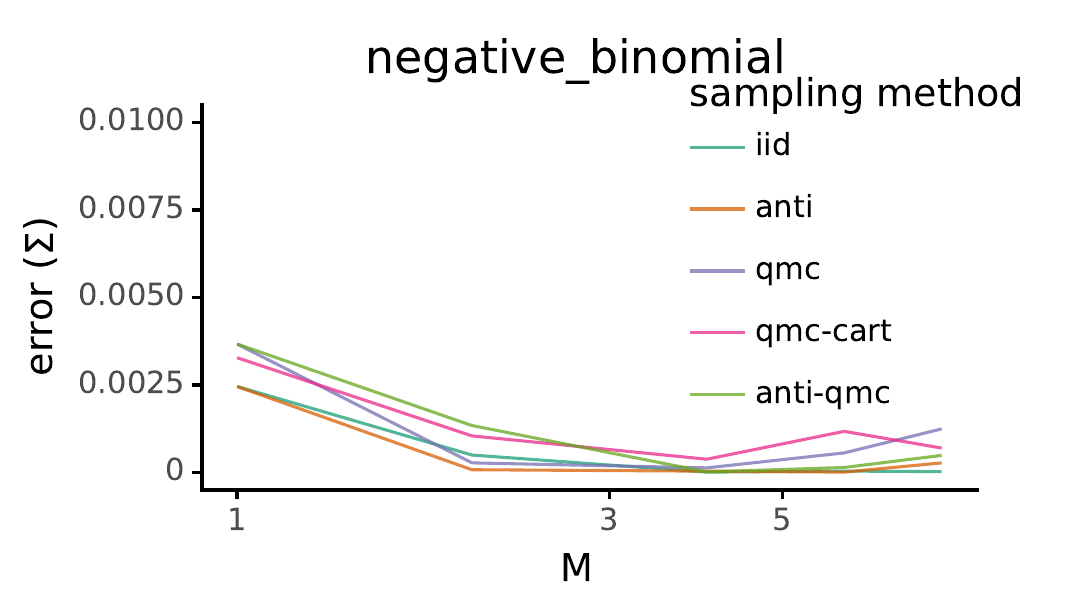}\includegraphics[width=0.33\columnwidth]{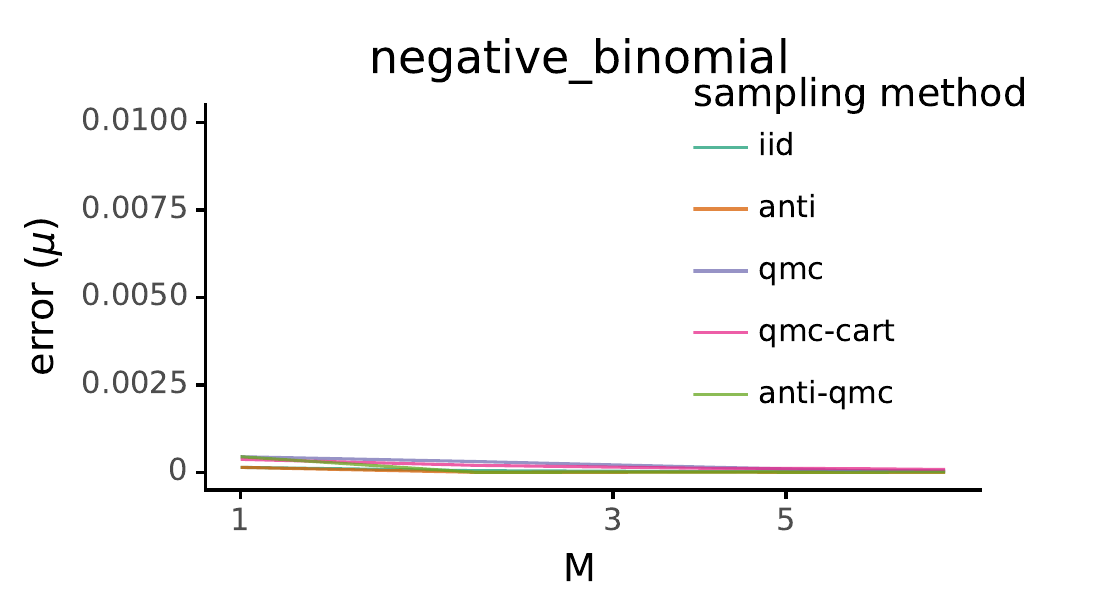}\linebreak{}

\includegraphics[width=0.33\columnwidth]{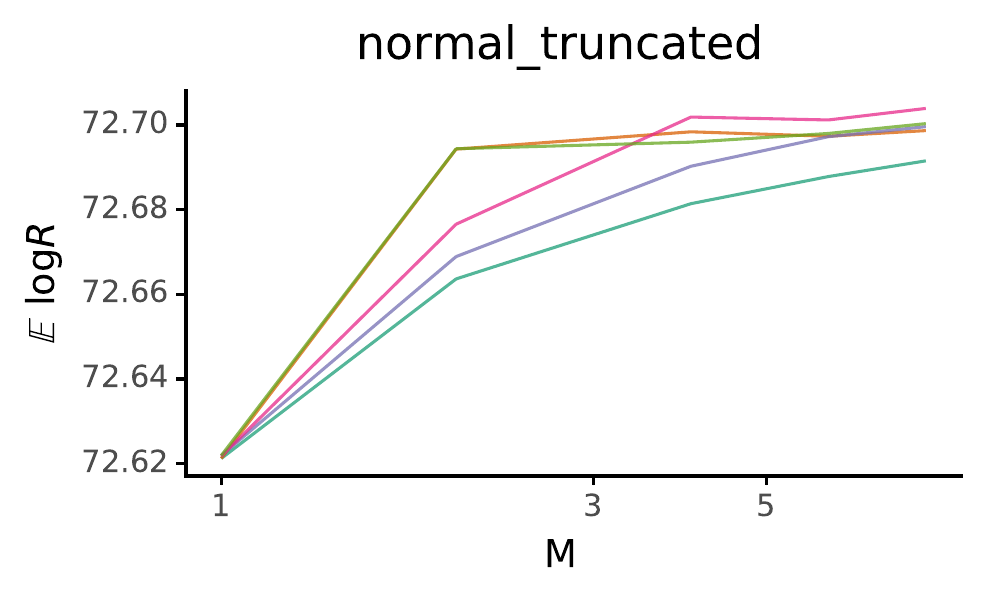}\includegraphics[width=0.33\columnwidth]{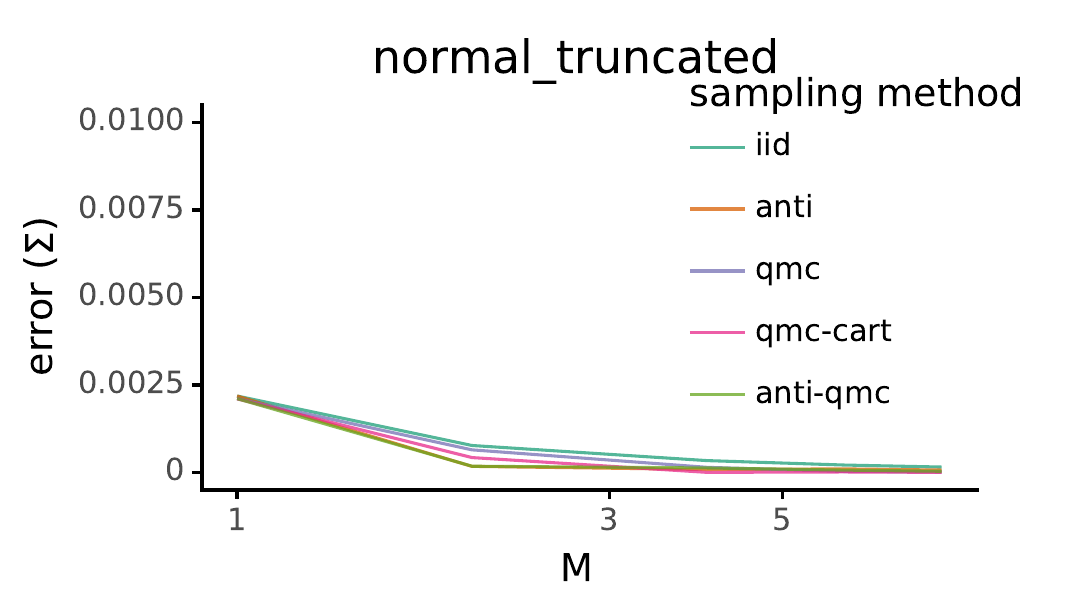}\includegraphics[width=0.33\columnwidth]{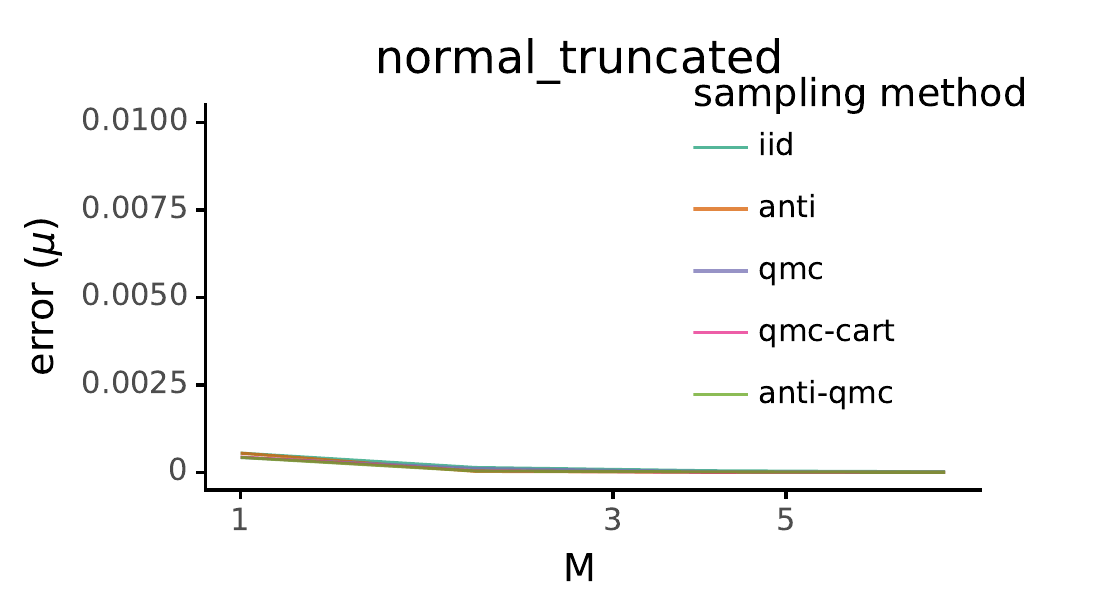}\linebreak{}

\includegraphics[width=0.33\columnwidth]{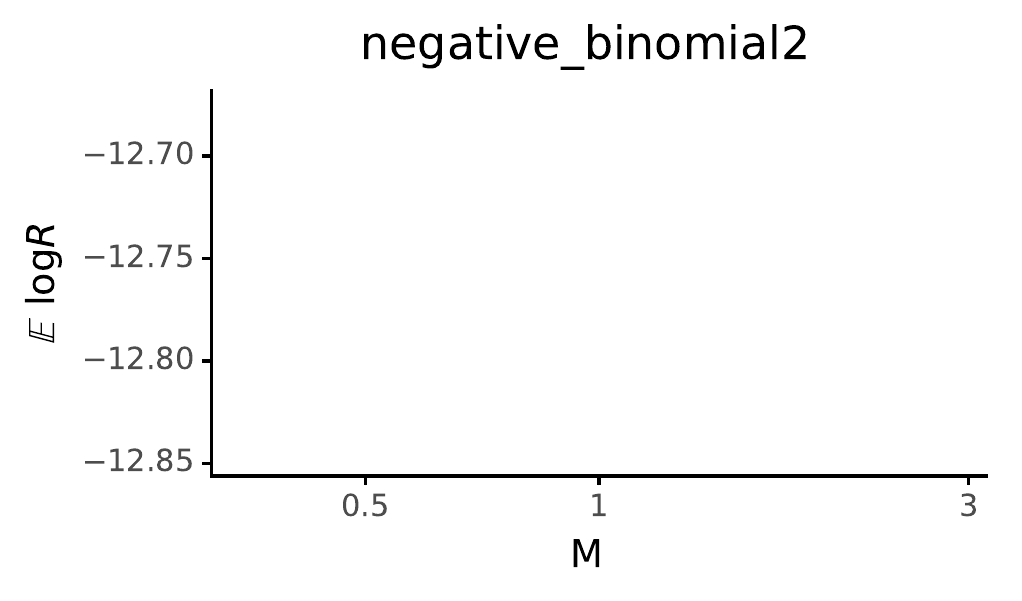}\includegraphics[width=0.33\columnwidth]{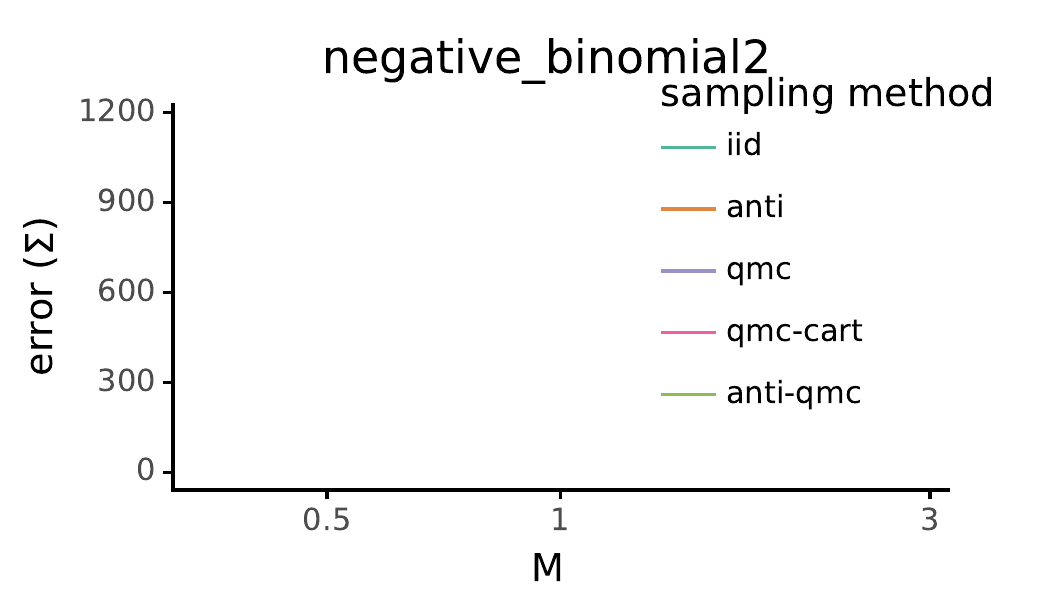}\includegraphics[width=0.33\columnwidth]{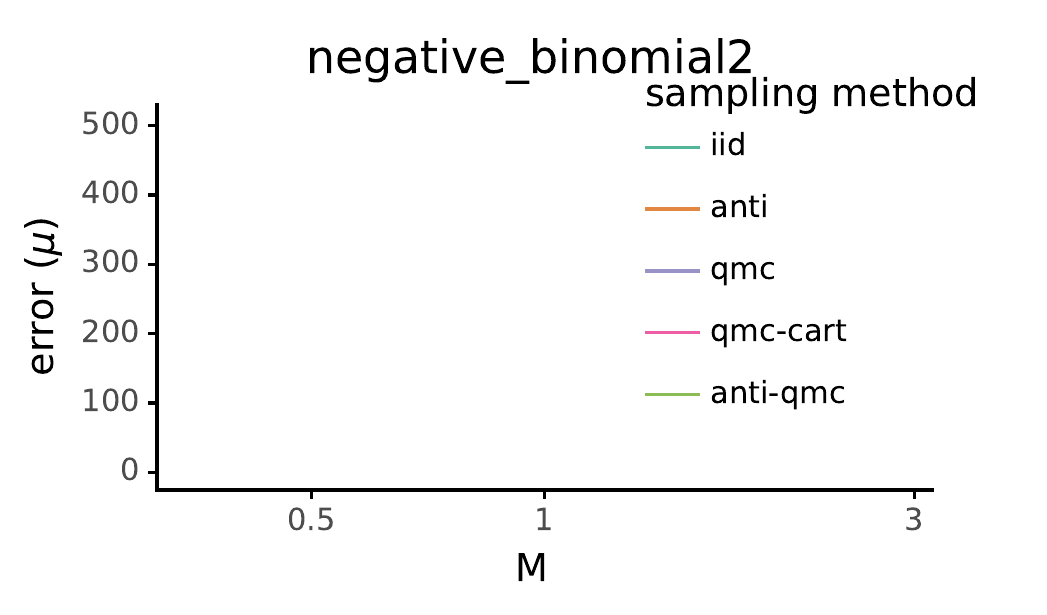}\linebreak{}

\includegraphics[width=0.33\columnwidth]{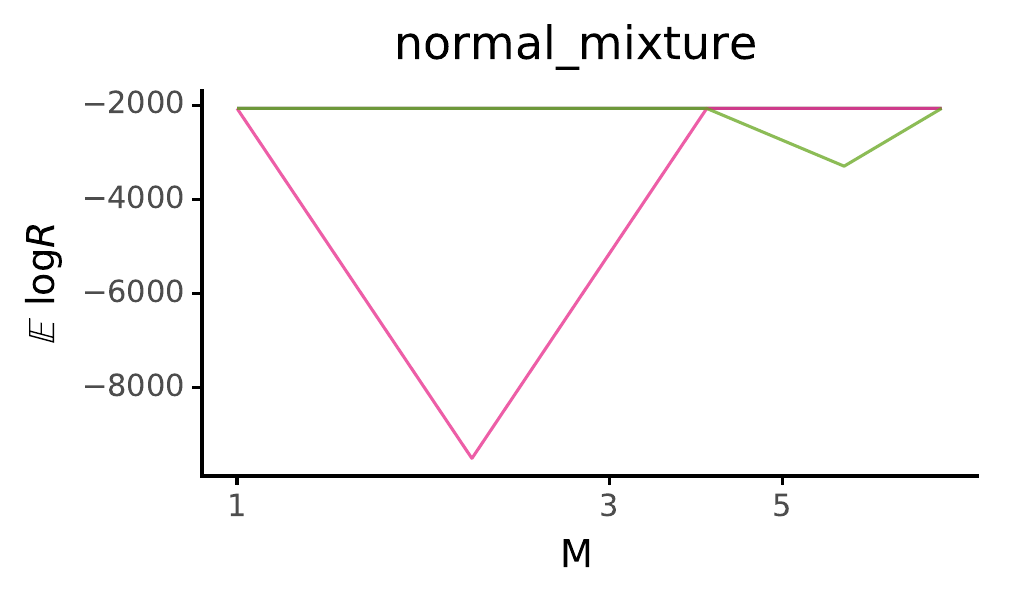}\includegraphics[width=0.33\columnwidth]{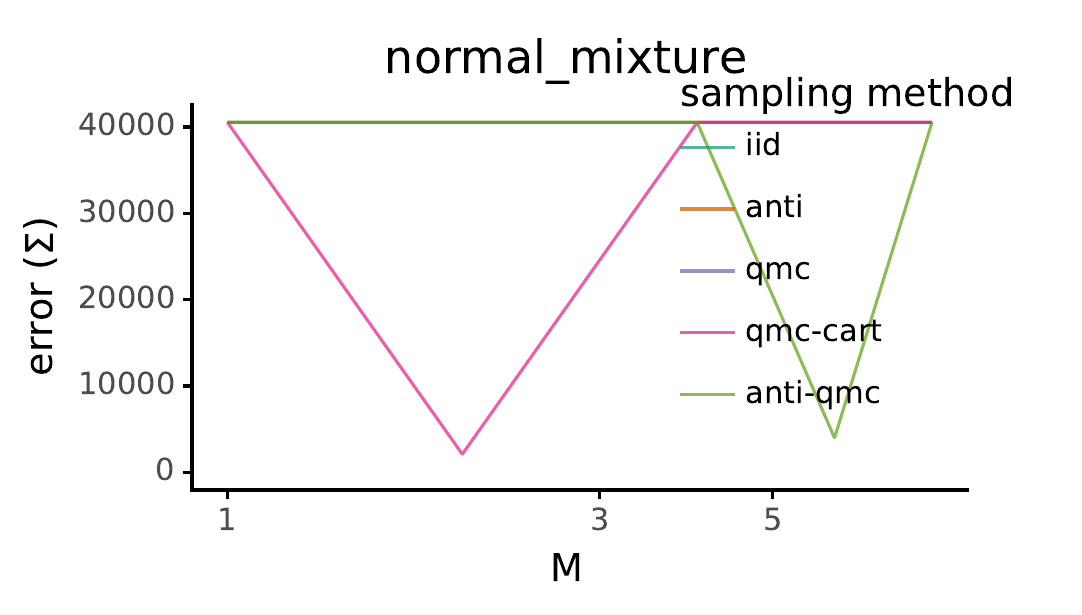}\includegraphics[width=0.33\columnwidth]{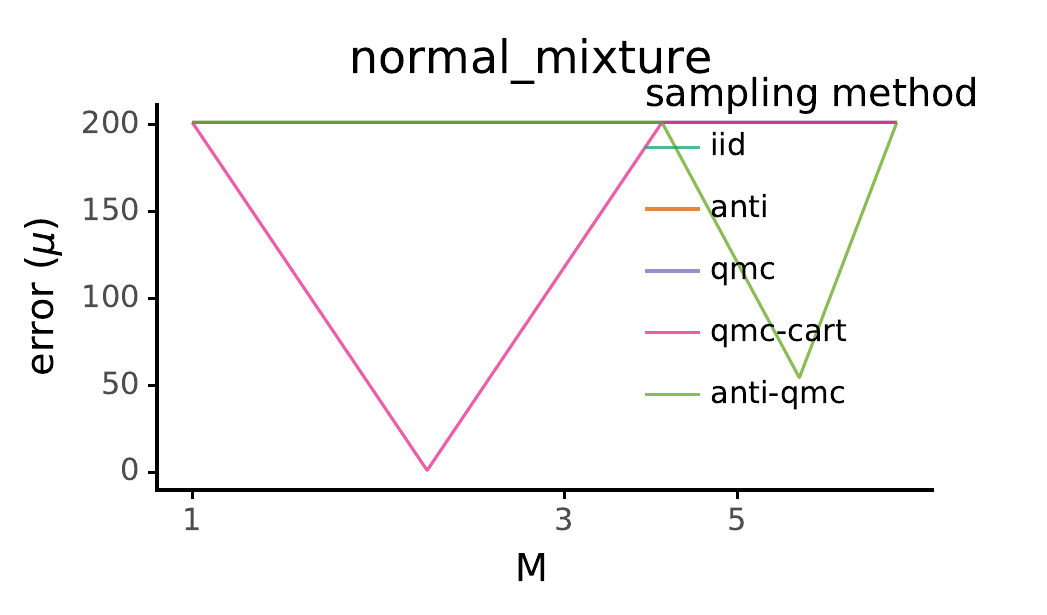}\linebreak{}

\includegraphics[width=0.33\columnwidth]{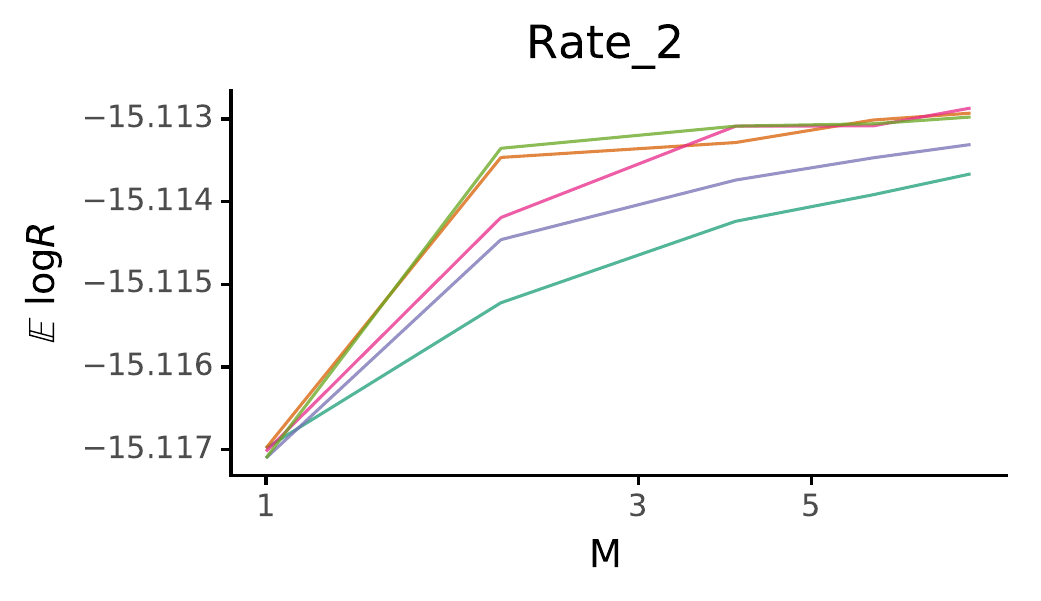}\includegraphics[width=0.33\columnwidth]{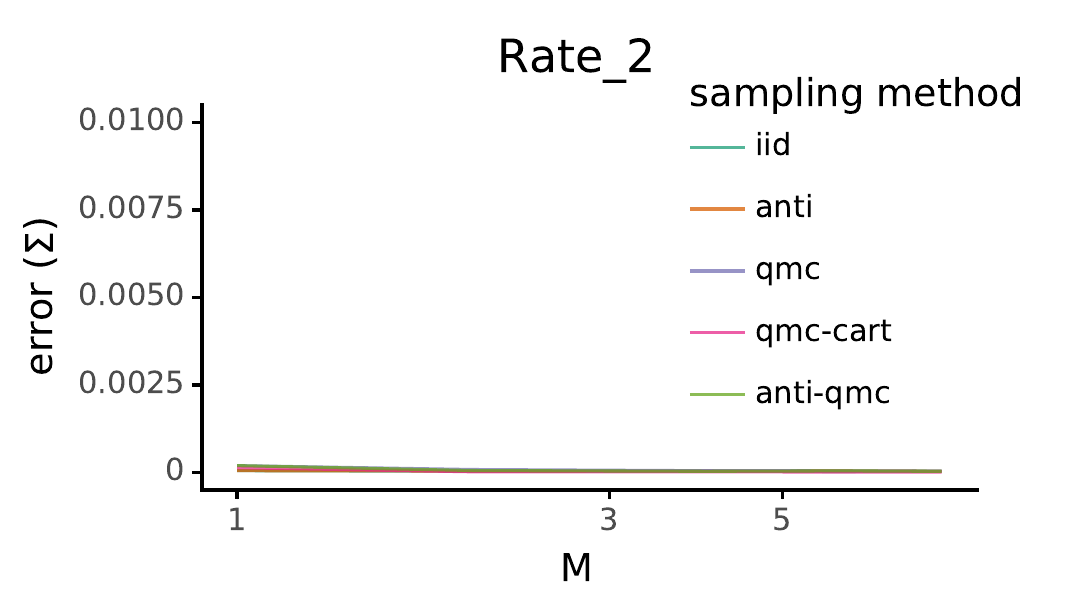}\includegraphics[width=0.33\columnwidth]{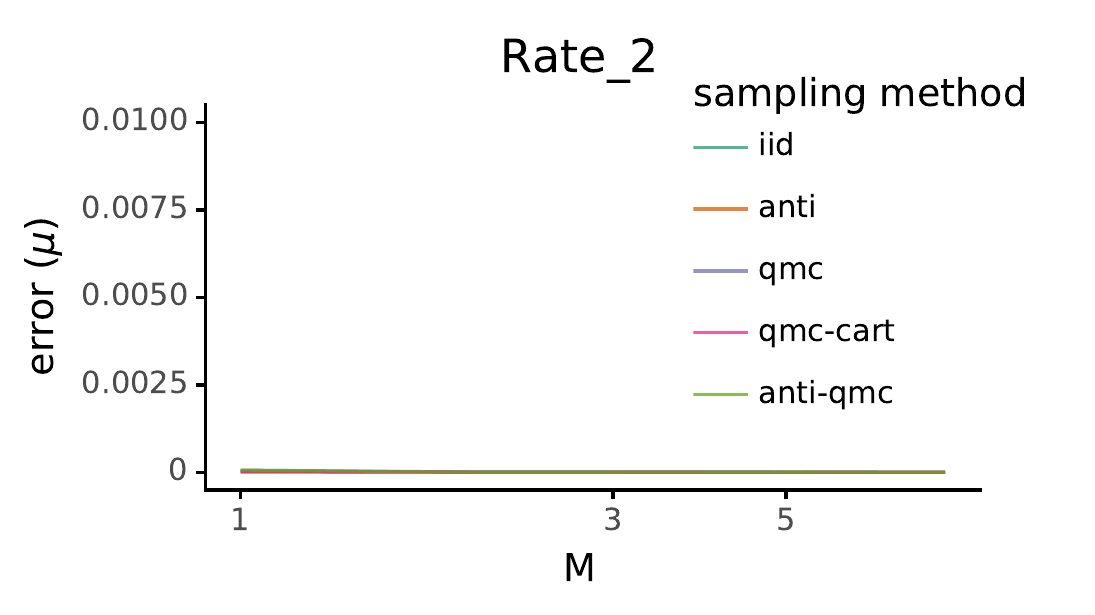}\linebreak{}

\includegraphics[width=0.33\columnwidth]{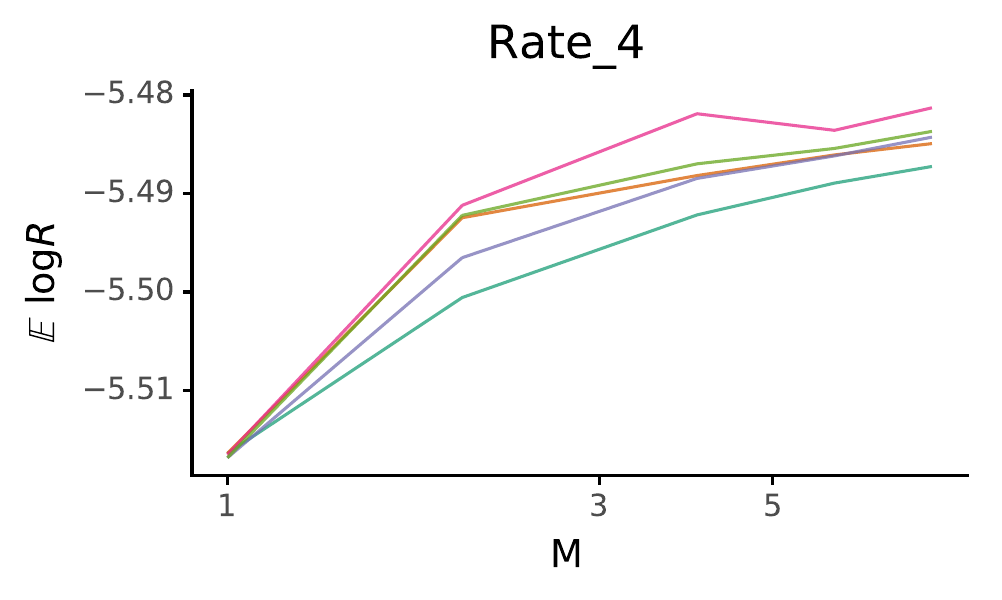}\includegraphics[width=0.33\columnwidth]{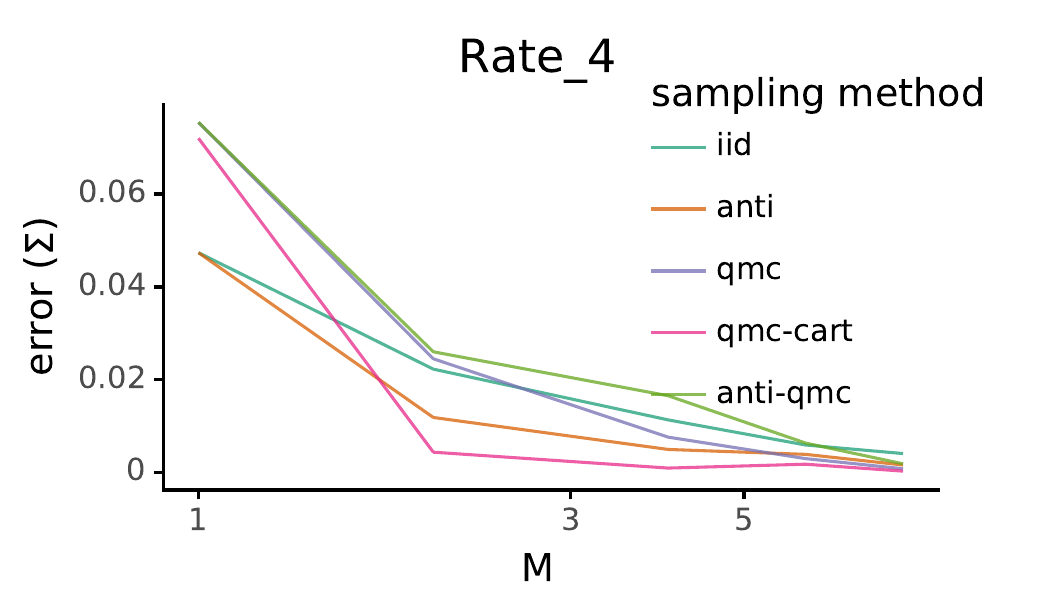}\includegraphics[width=0.33\columnwidth]{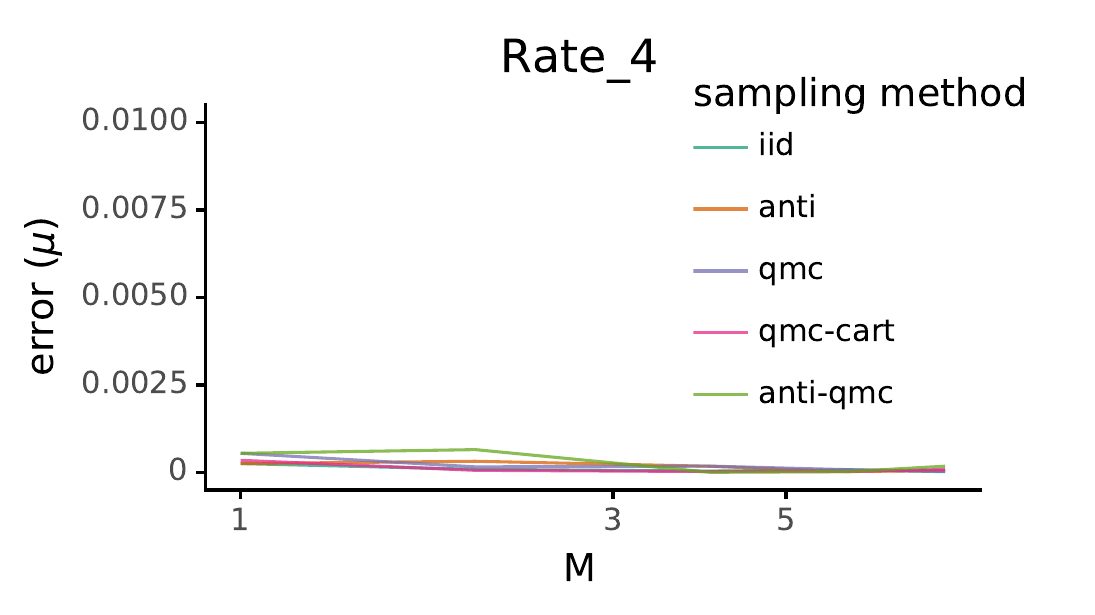}\linebreak{}

\caption{\textbf{Across all models, improvements in likelihood bounds correlate
strongly with improvements in posterior accuracy. Better sampling
methods can improve both.}}
\end{figure}

\begin{figure}
\includegraphics[width=0.33\columnwidth]{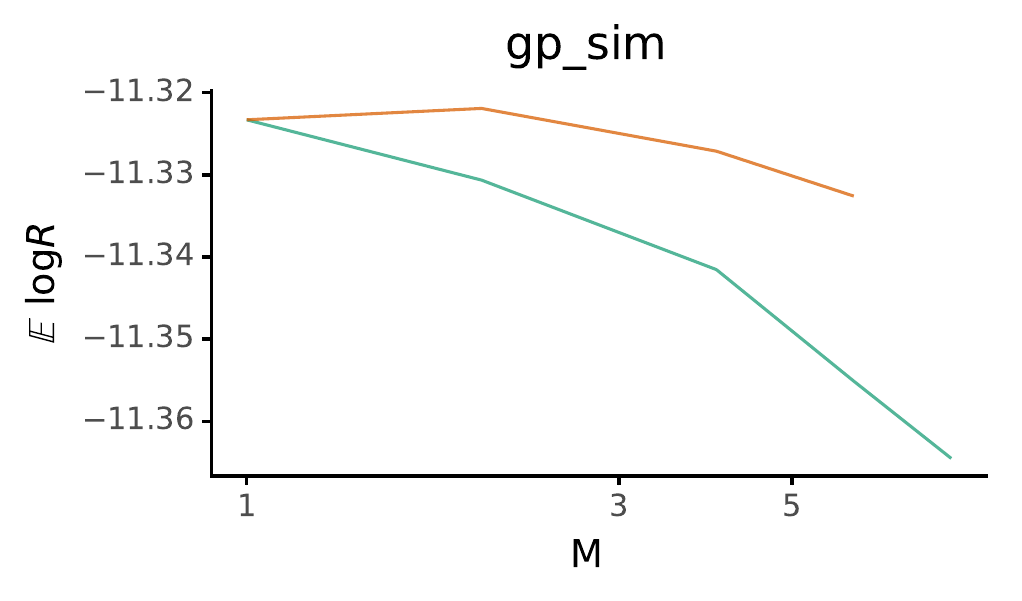}\includegraphics[width=0.33\columnwidth]{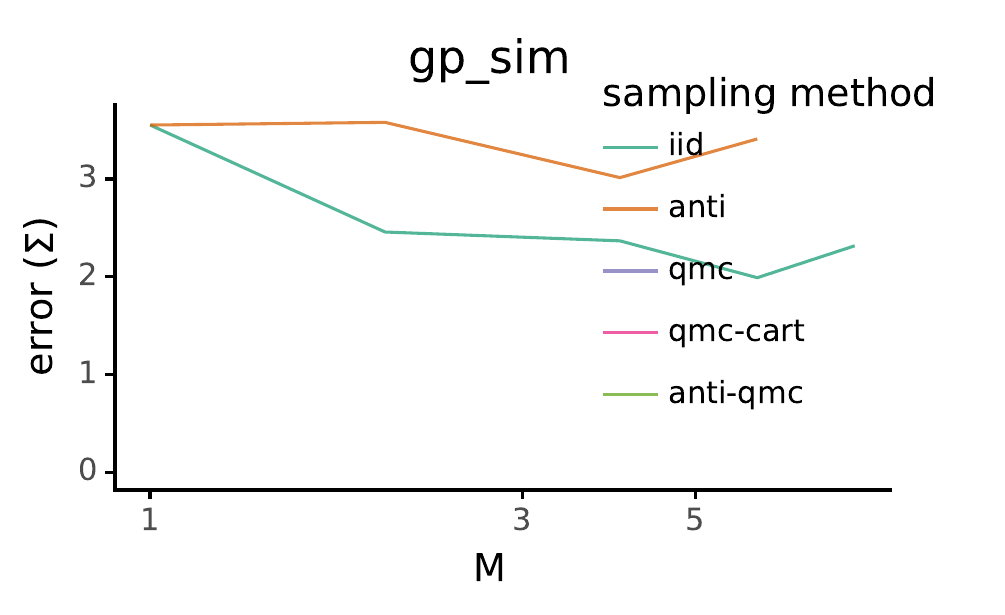}\includegraphics[width=0.33\columnwidth]{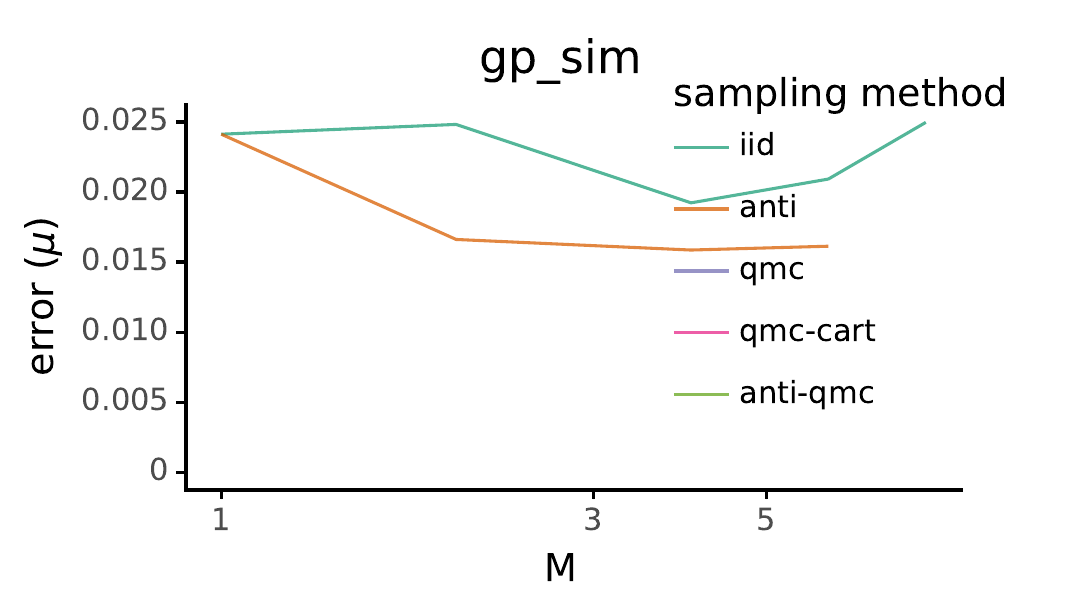}\linebreak{}

\includegraphics[width=0.33\columnwidth]{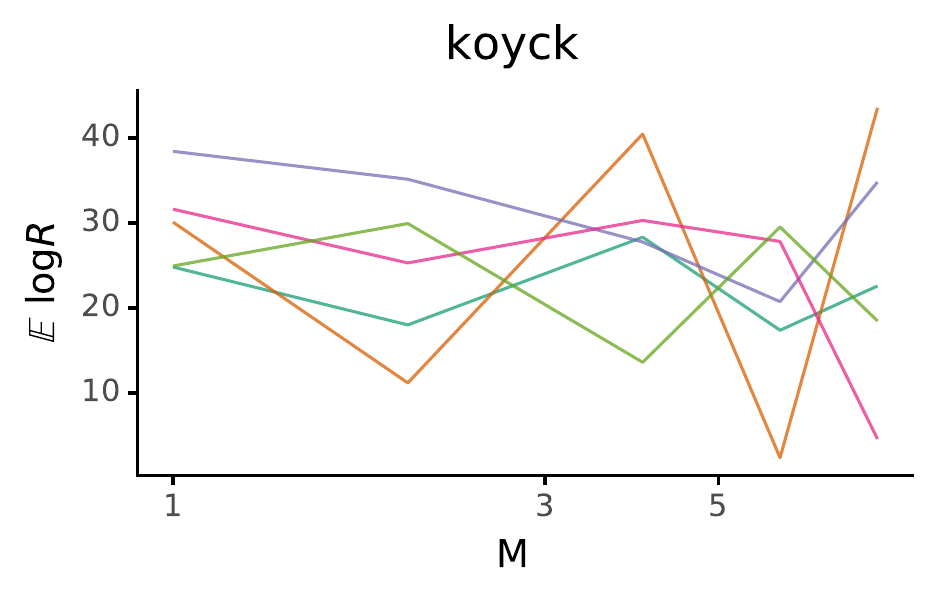}\includegraphics[width=0.33\columnwidth]{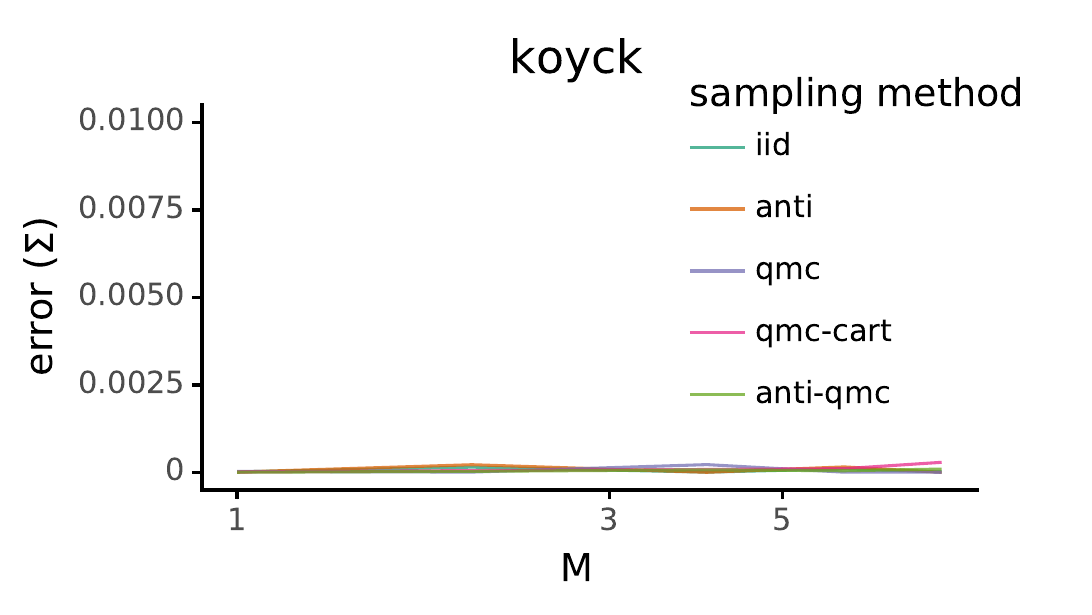}\includegraphics[width=0.33\columnwidth]{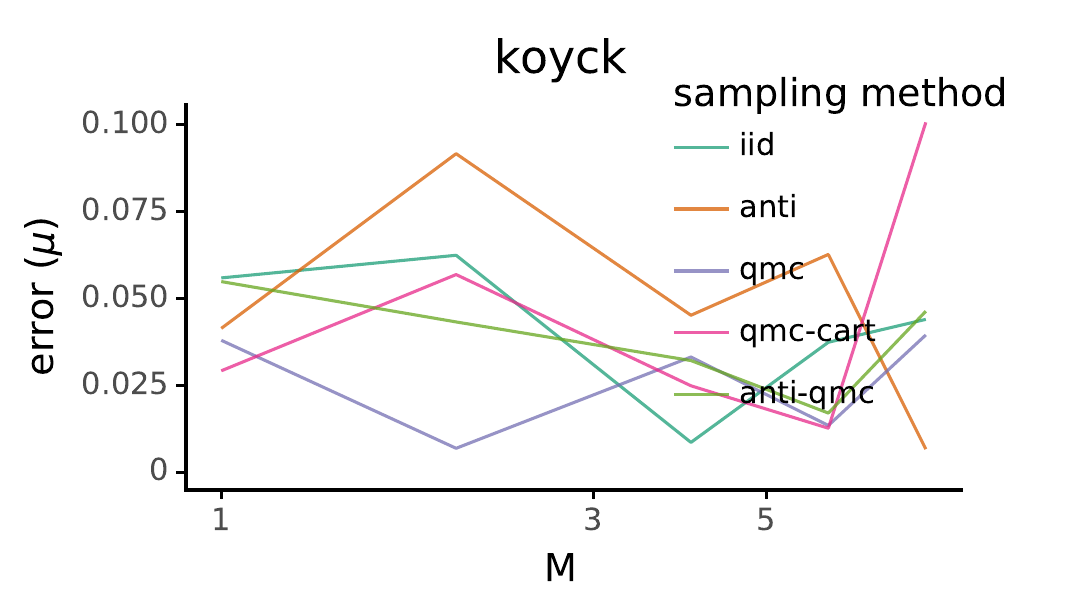}\linebreak{}

\includegraphics[width=0.33\columnwidth]{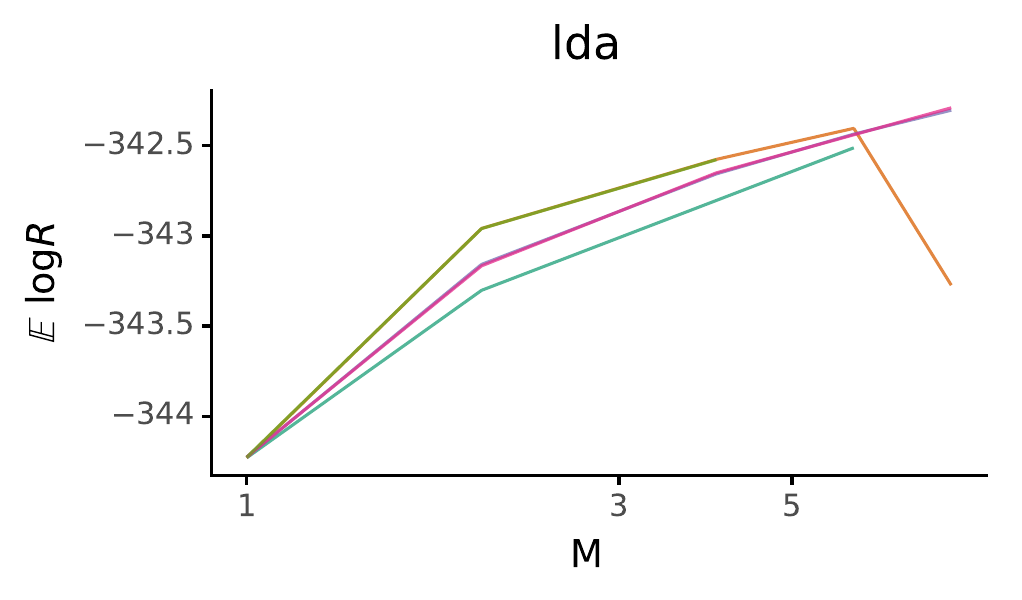}\includegraphics[width=0.33\columnwidth]{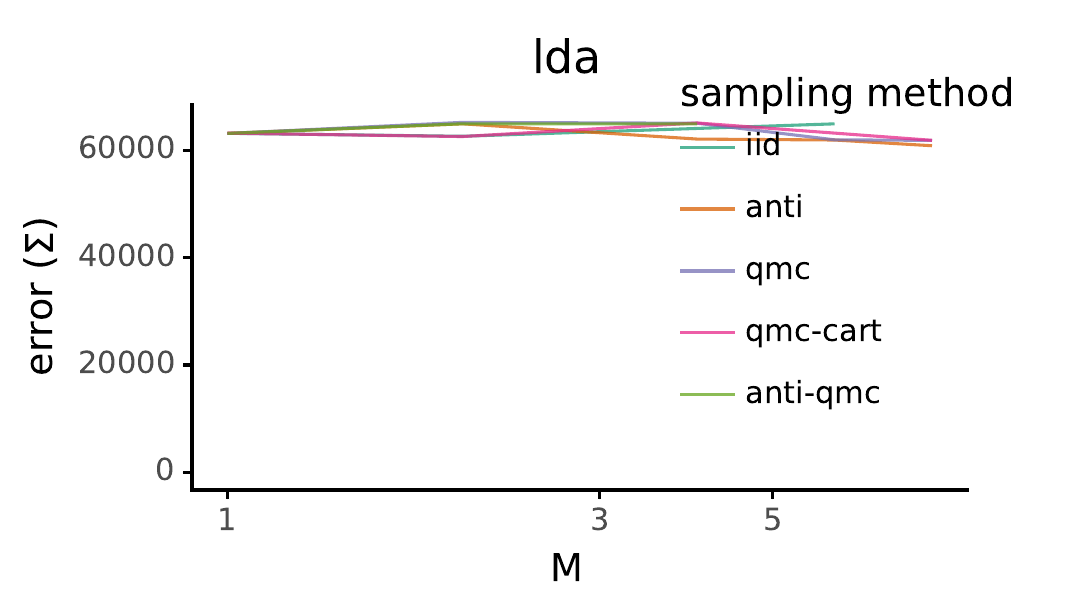}\includegraphics[width=0.33\columnwidth]{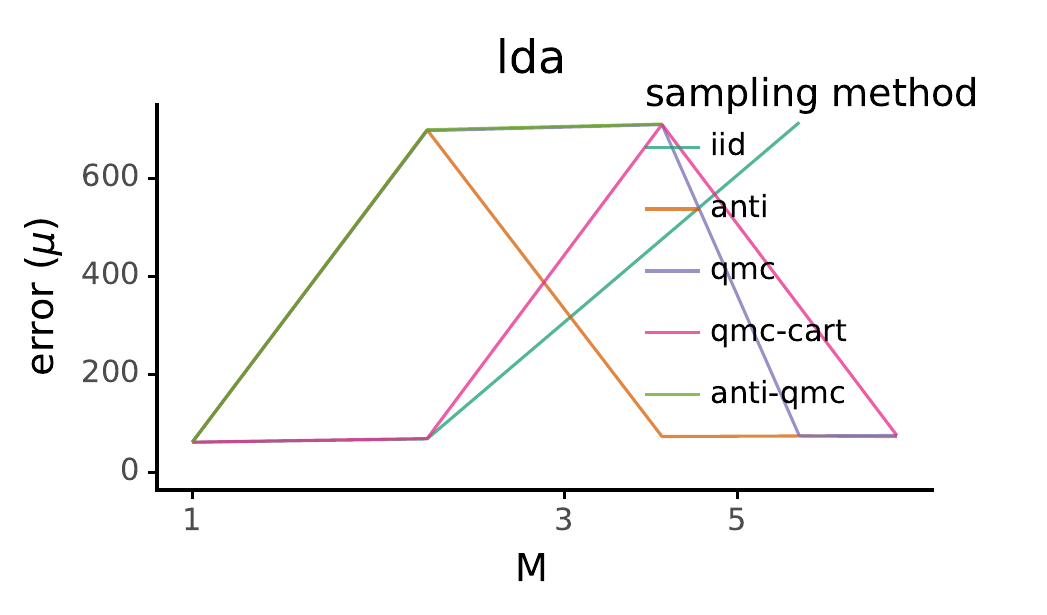}\linebreak{}

\includegraphics[width=0.33\columnwidth]{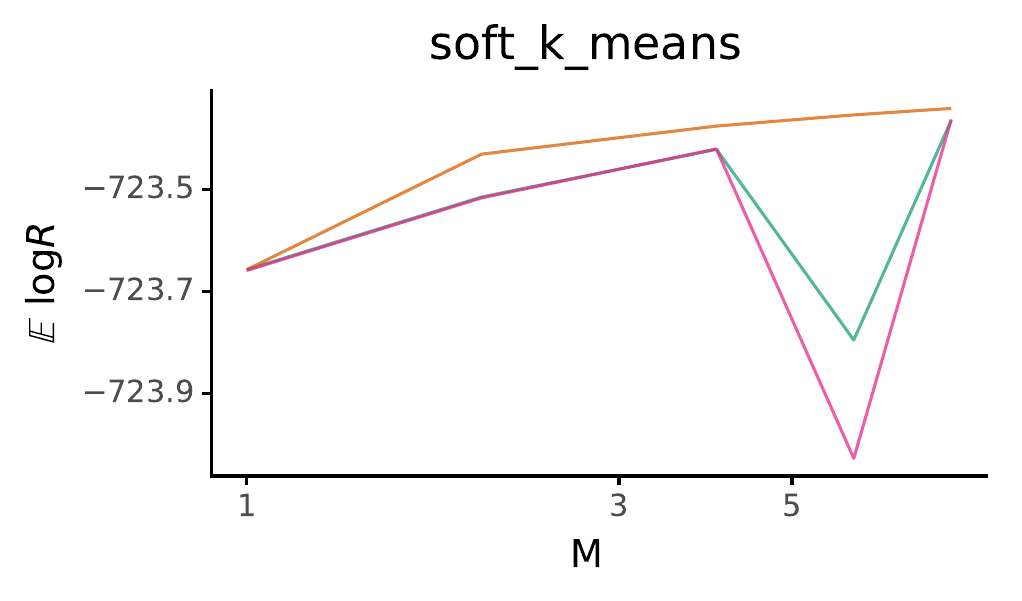}\includegraphics[width=0.33\columnwidth]{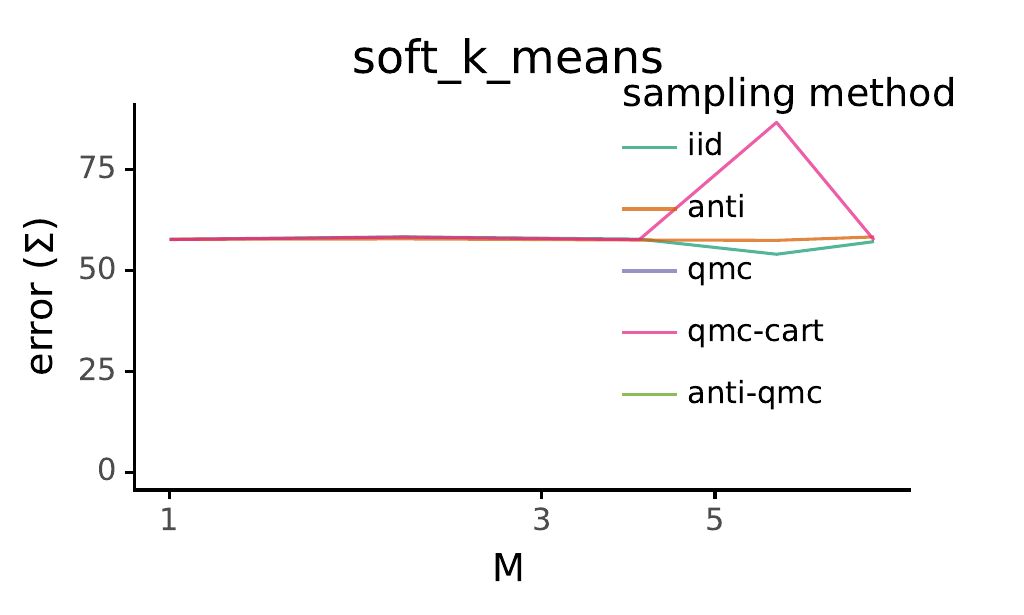}\includegraphics[width=0.33\columnwidth]{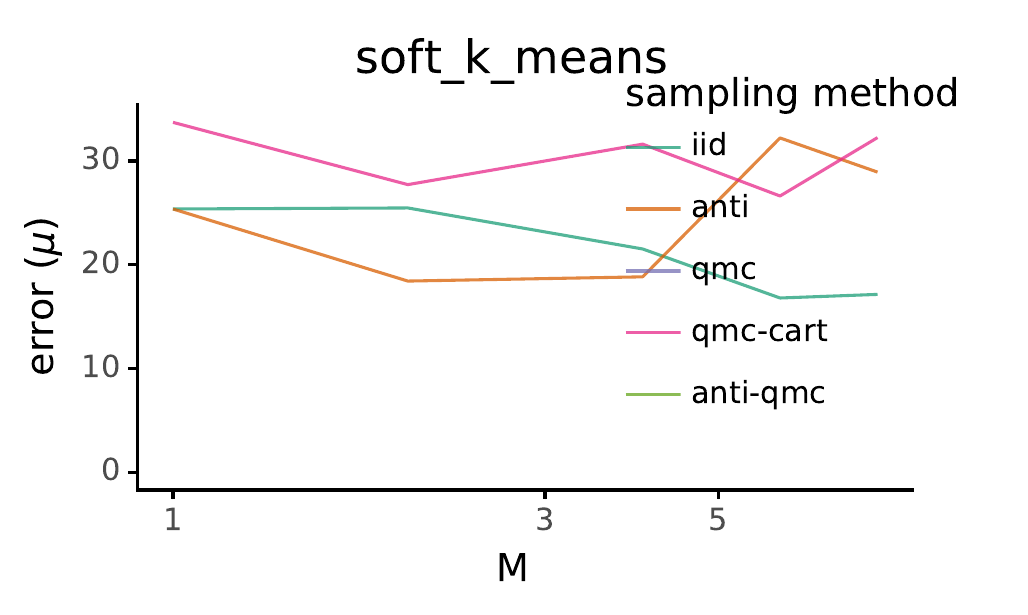}\linebreak{}

\includegraphics[width=0.33\columnwidth]{final_swarm_figs/stan123_1_elbos}\includegraphics[width=0.33\columnwidth]{final_swarm_figs/stan123_1_err_Sigma}\includegraphics[width=0.33\columnwidth]{final_swarm_figs/stan123_1_err_mu}\linebreak{}

\includegraphics[width=0.33\columnwidth]{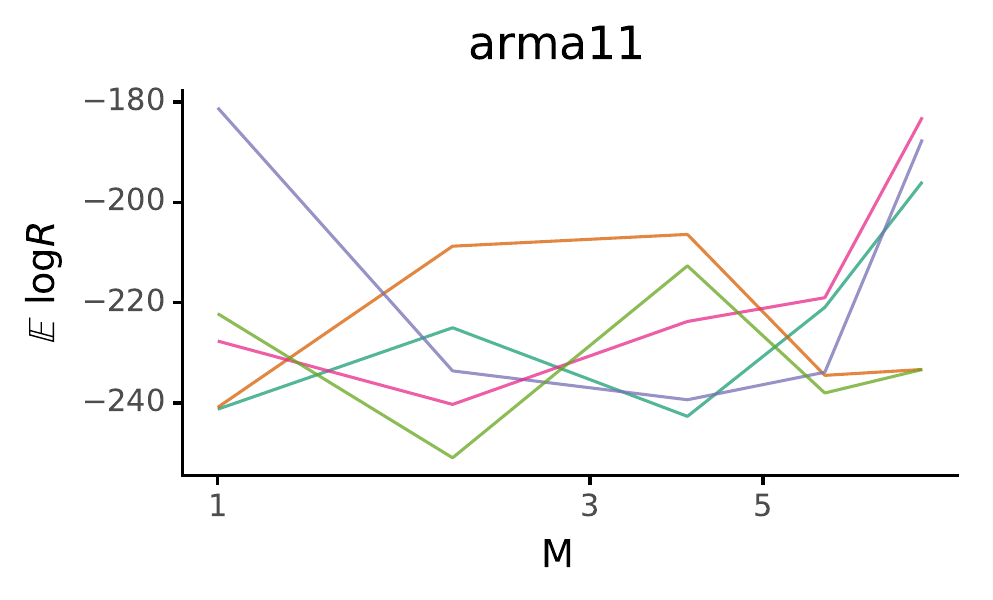}\includegraphics[width=0.33\columnwidth]{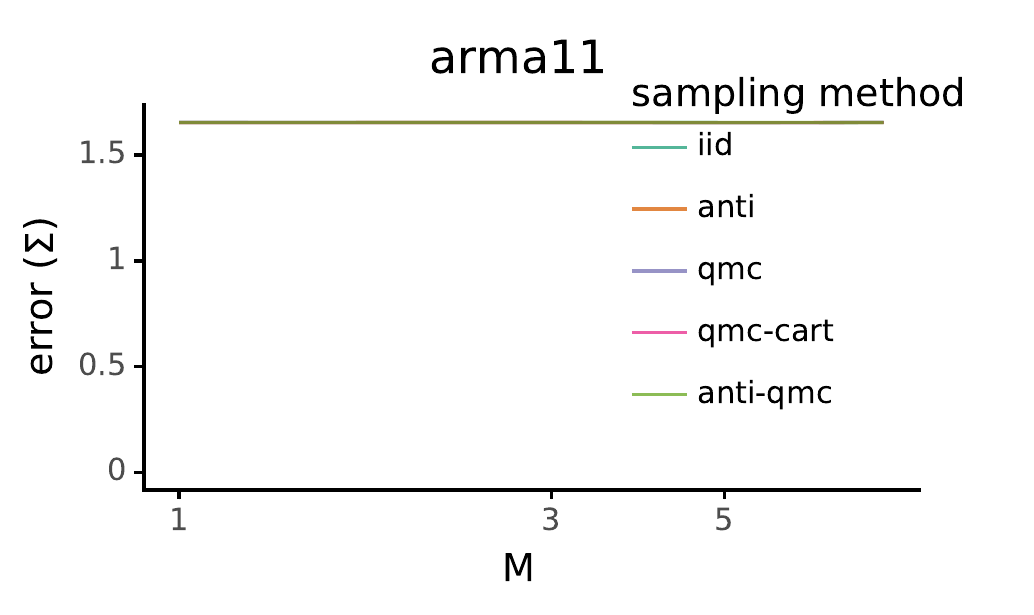}\includegraphics[width=0.33\columnwidth]{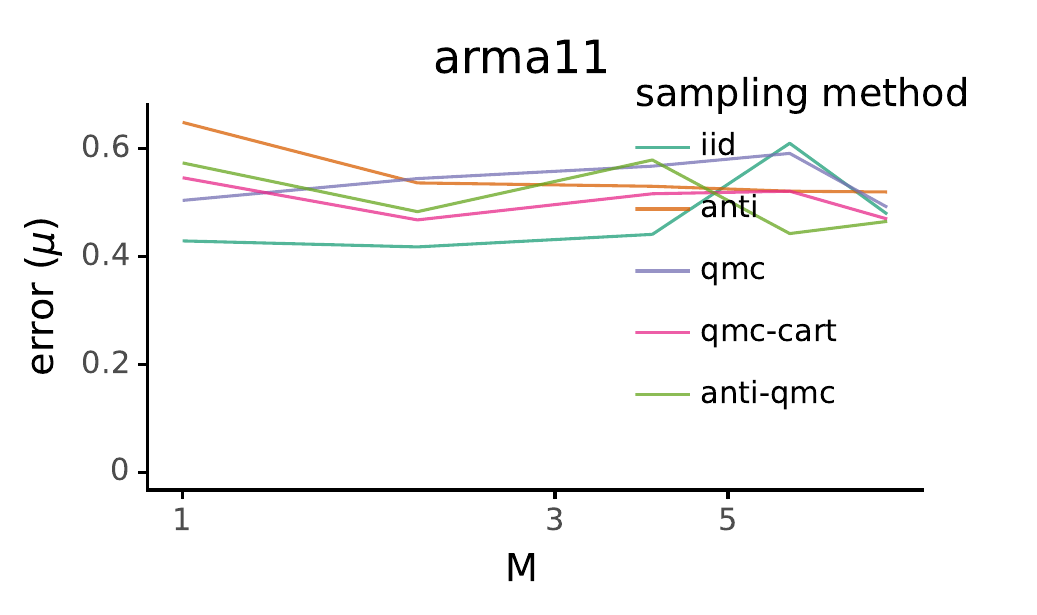}\linebreak{}

\includegraphics[width=0.33\columnwidth]{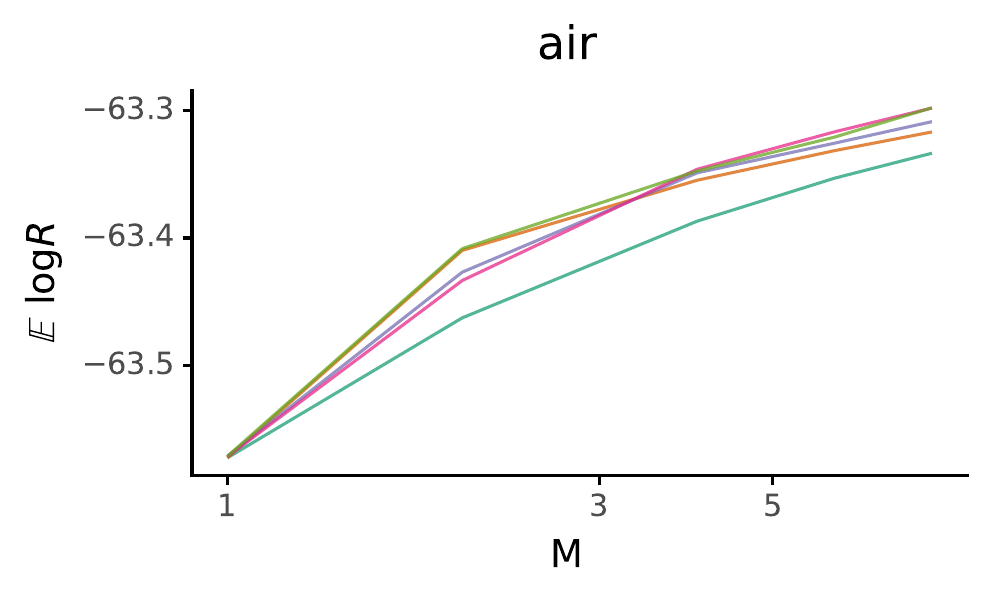}\includegraphics[width=0.33\columnwidth]{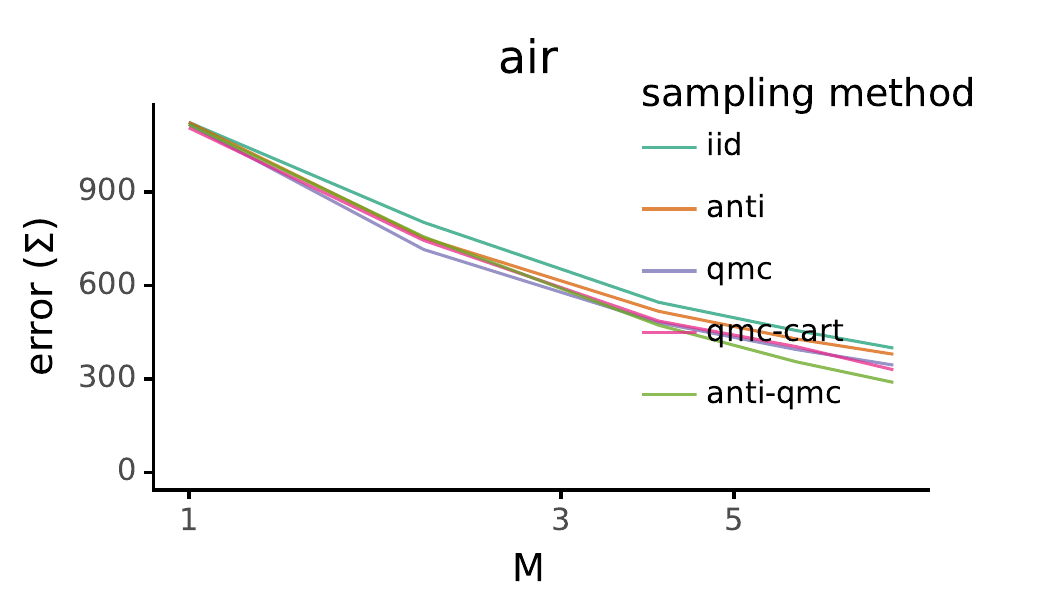}\includegraphics[width=0.33\columnwidth]{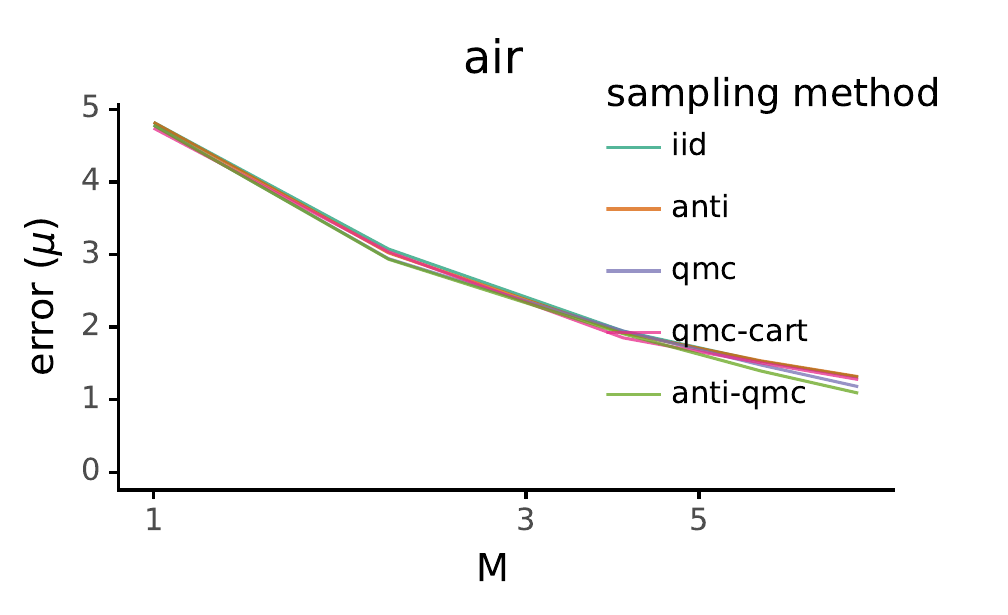}\linebreak{}

\caption{\textbf{Across all models, improvements in likelihood bounds correlate
strongly with improvements in posterior accuracy. Better sampling
methods can improve both.}}
\end{figure}

\begin{figure}
\includegraphics[width=0.33\columnwidth]{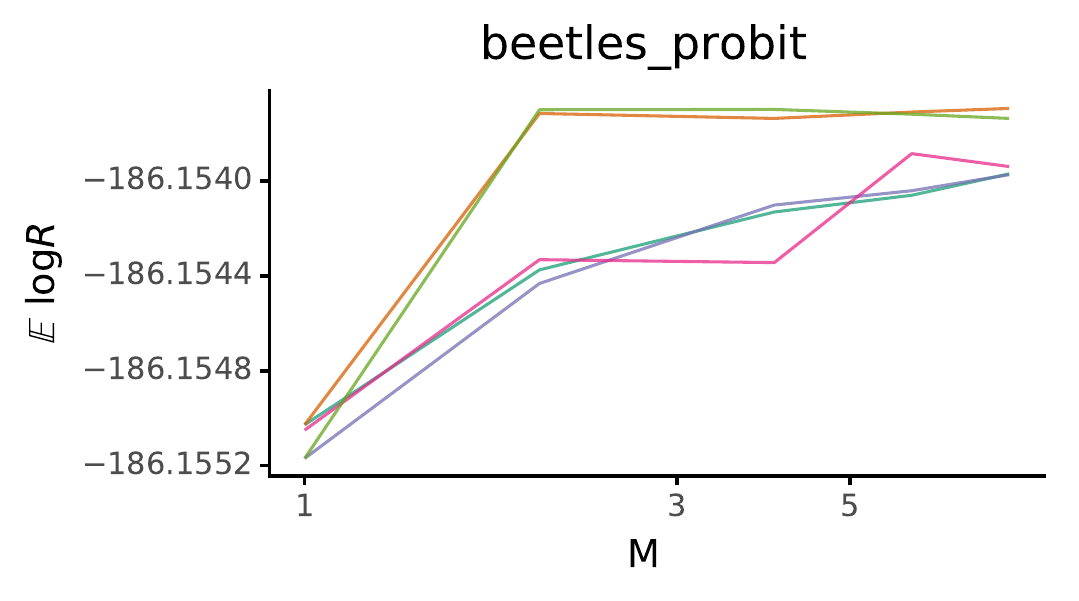}\includegraphics[width=0.33\columnwidth]{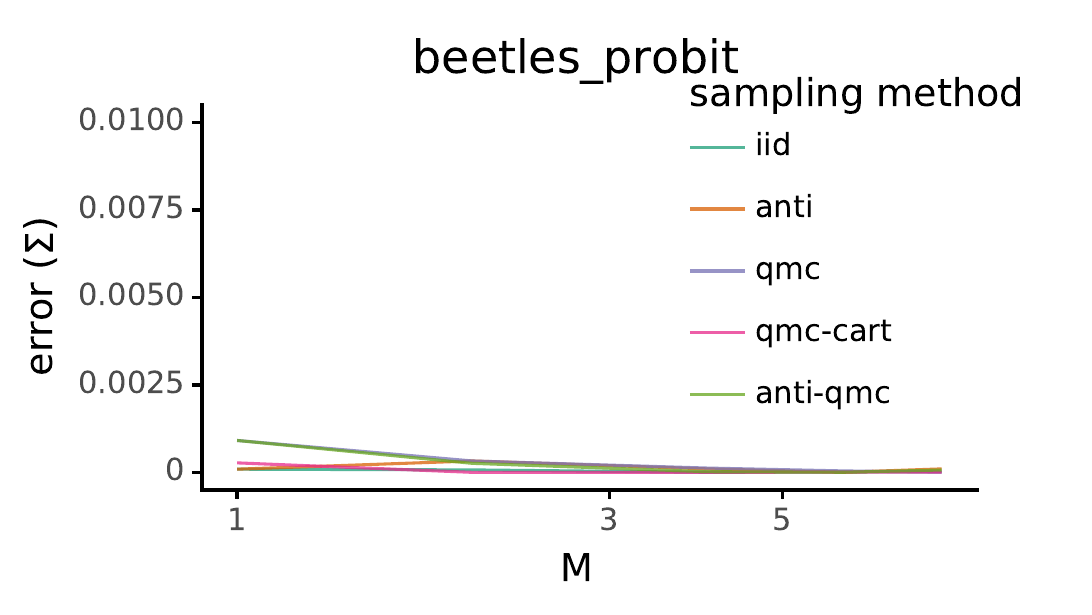}\includegraphics[width=0.33\columnwidth]{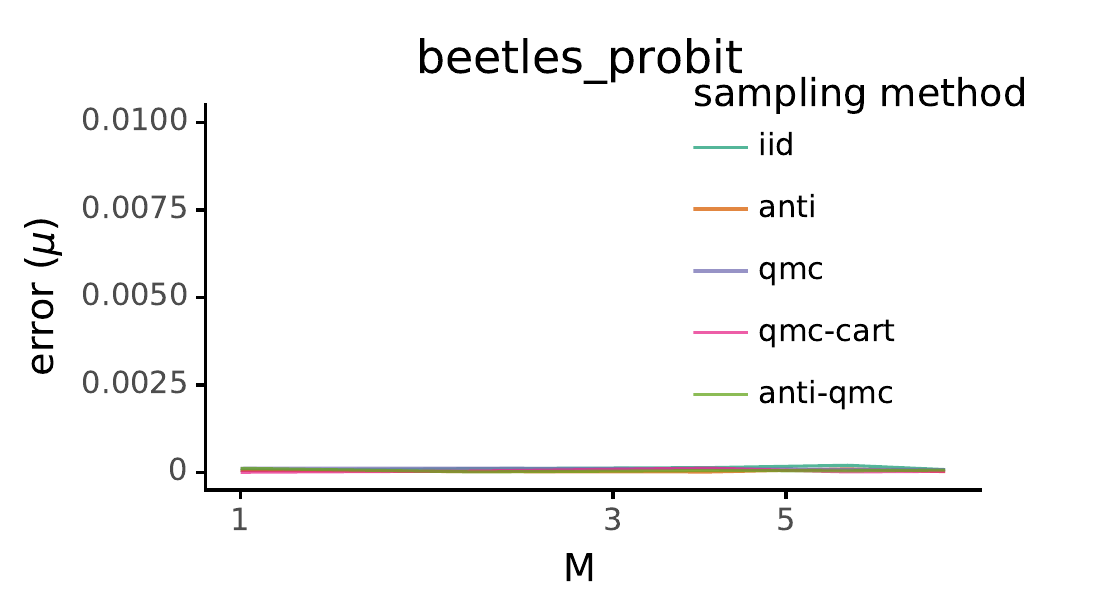}\linebreak{}

\includegraphics[width=0.33\columnwidth]{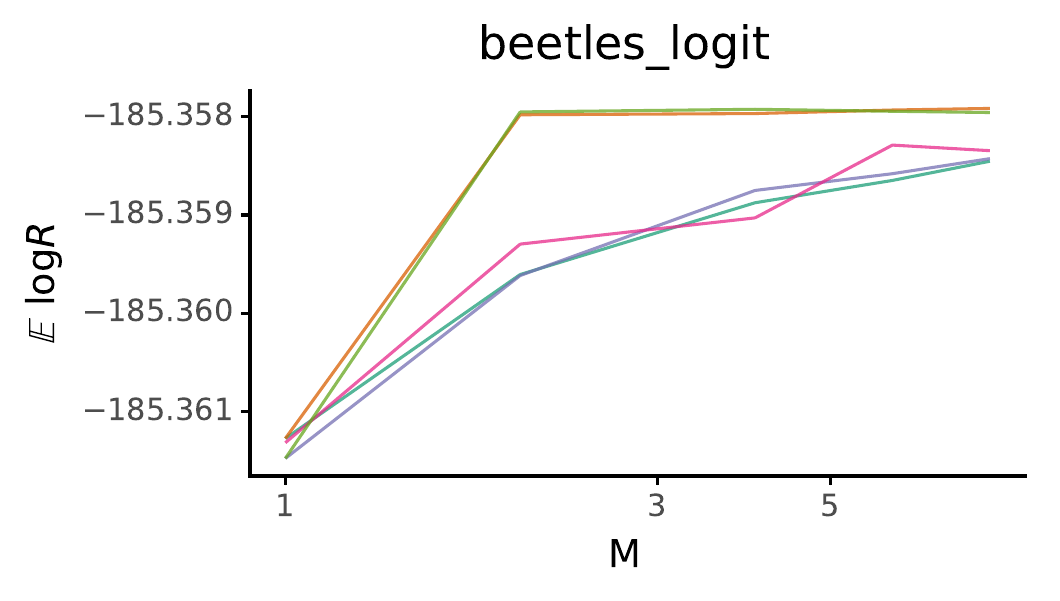}\includegraphics[width=0.33\columnwidth]{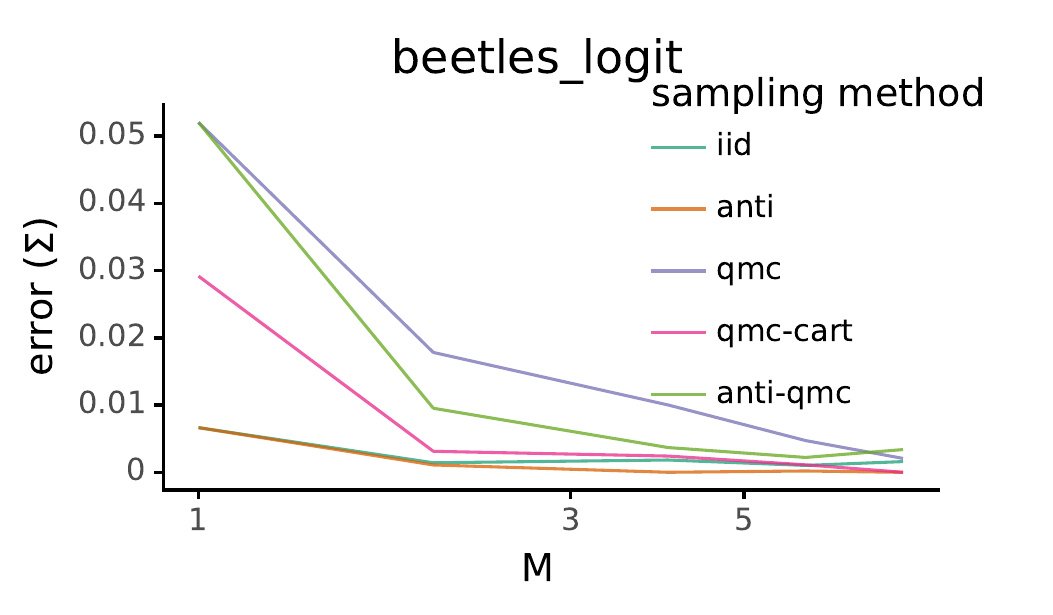}\includegraphics[width=0.33\columnwidth]{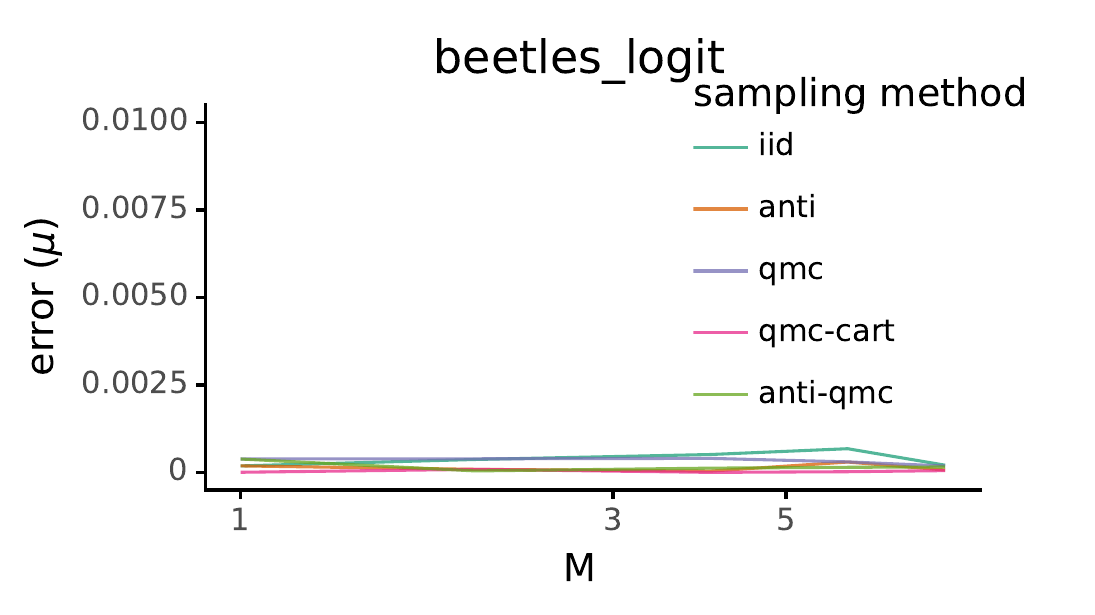}\linebreak{}

\includegraphics[width=0.33\columnwidth]{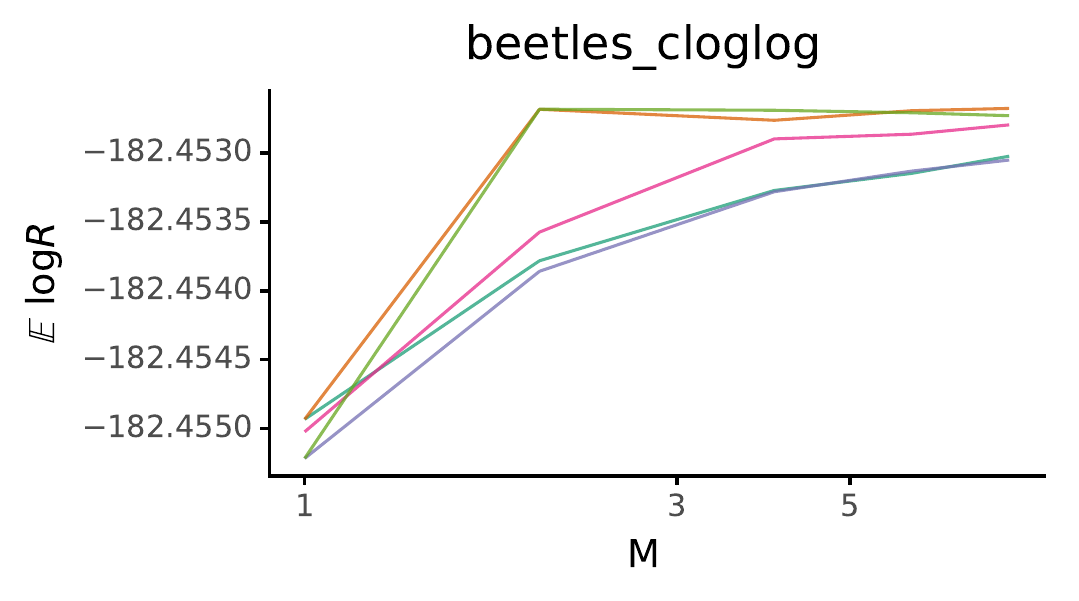}\includegraphics[width=0.33\columnwidth]{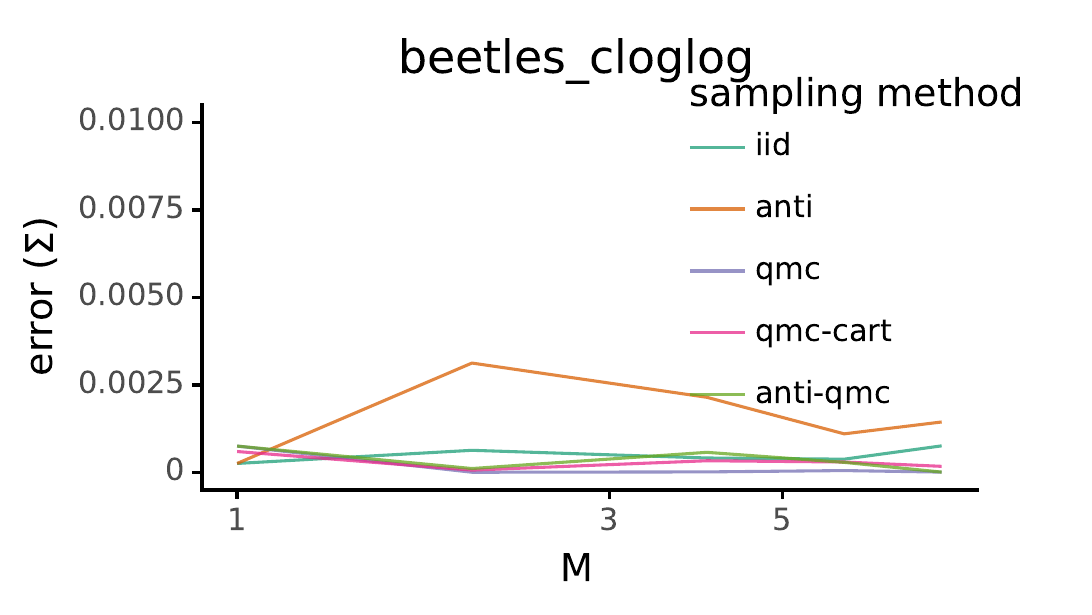}\includegraphics[width=0.33\columnwidth]{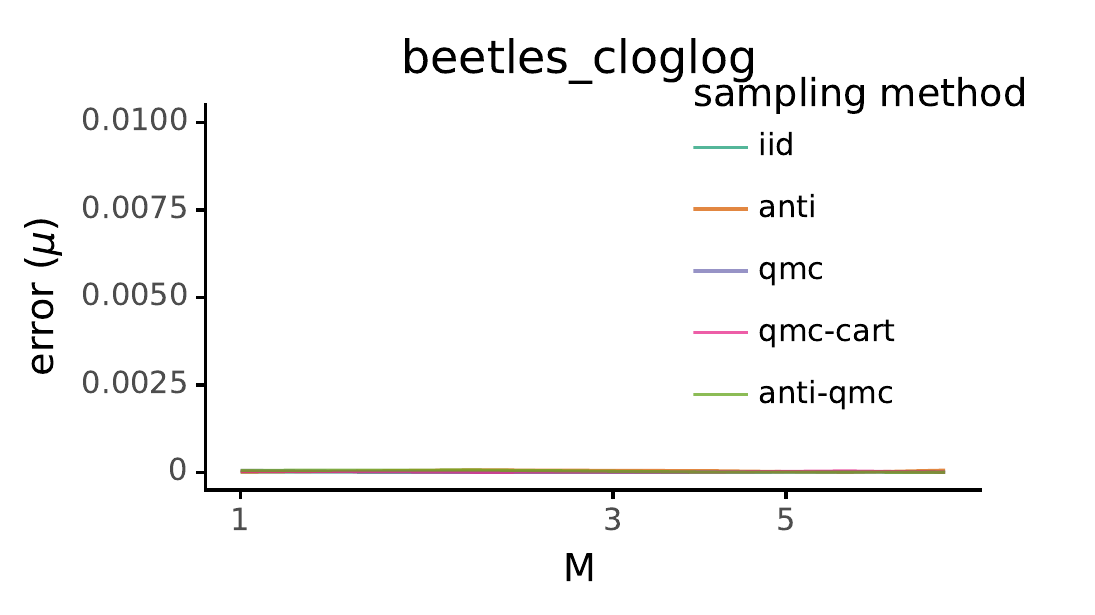}\linebreak{}

\includegraphics[width=0.33\columnwidth]{final_swarm_figs/stan140_1_elbos}\includegraphics[width=0.33\columnwidth]{final_swarm_figs/stan140_1_err_Sigma}\includegraphics[width=0.33\columnwidth]{final_swarm_figs/stan140_1_err_mu}\linebreak{}

\includegraphics[width=0.33\columnwidth]{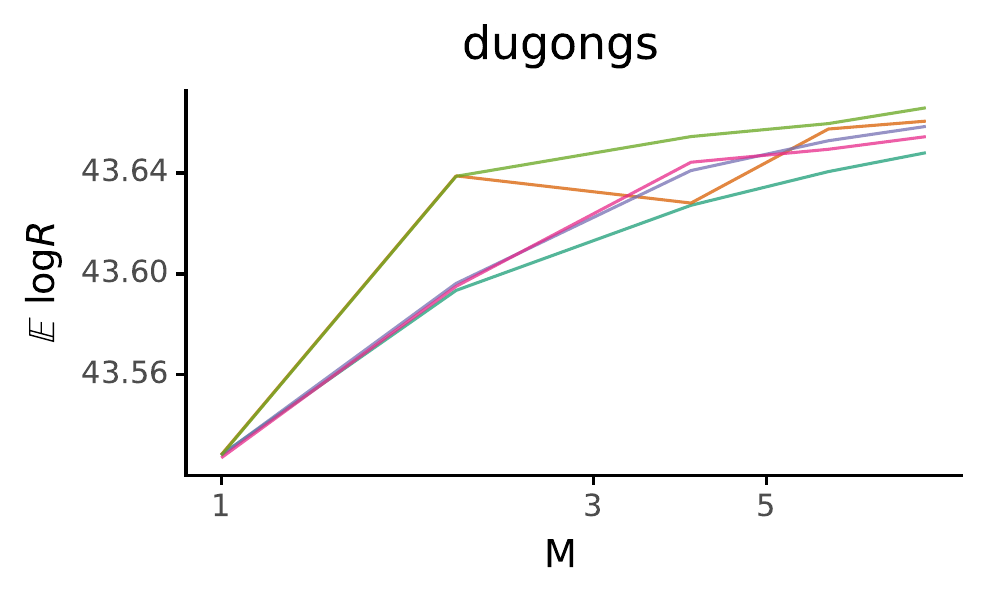}\includegraphics[width=0.33\columnwidth]{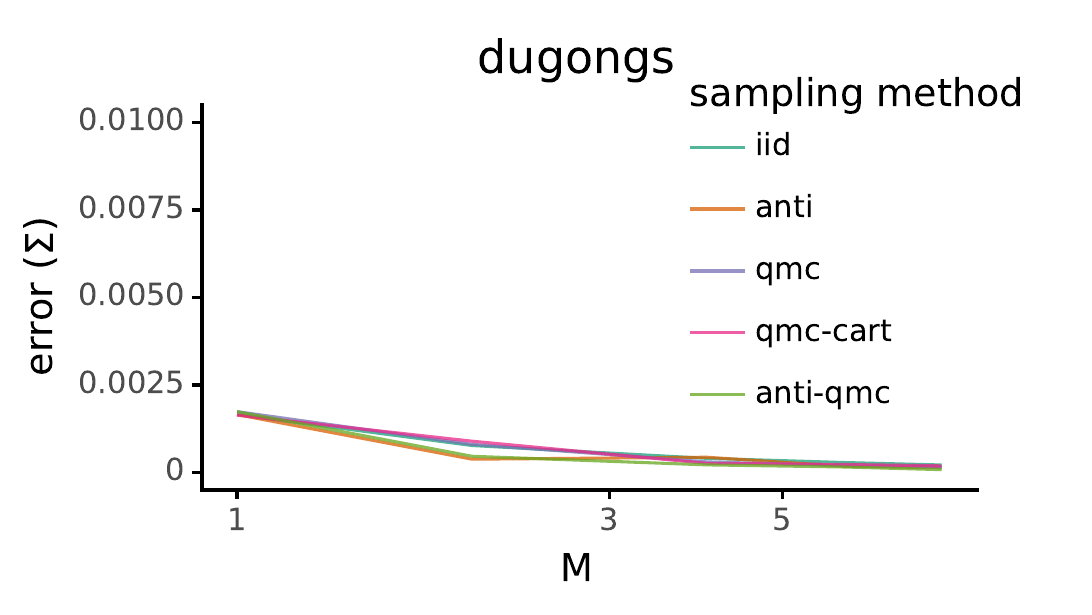}\includegraphics[width=0.33\columnwidth]{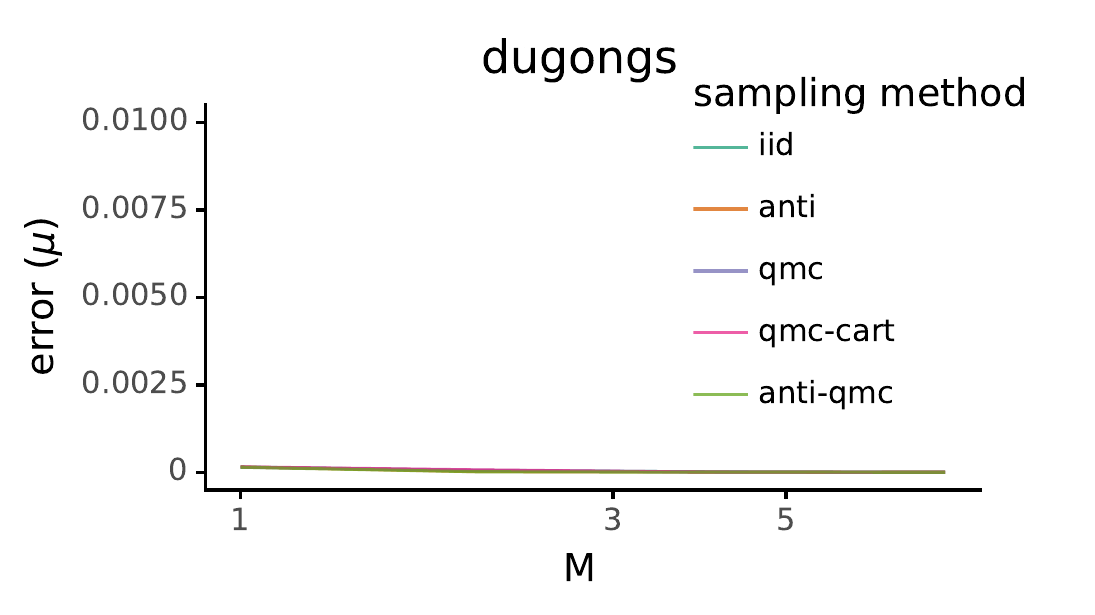}\linebreak{}

\includegraphics[width=0.33\columnwidth]{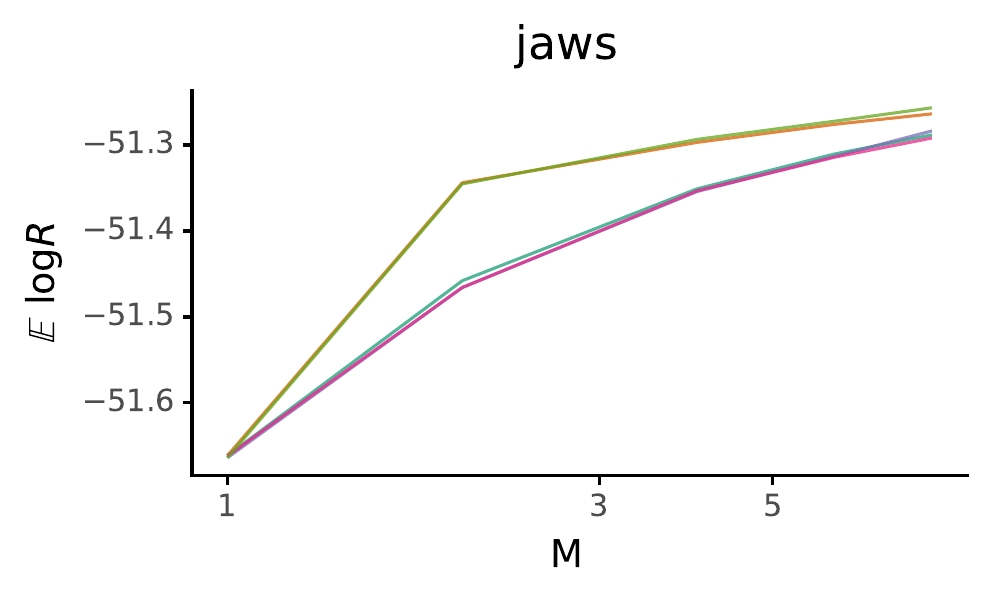}\includegraphics[width=0.33\columnwidth]{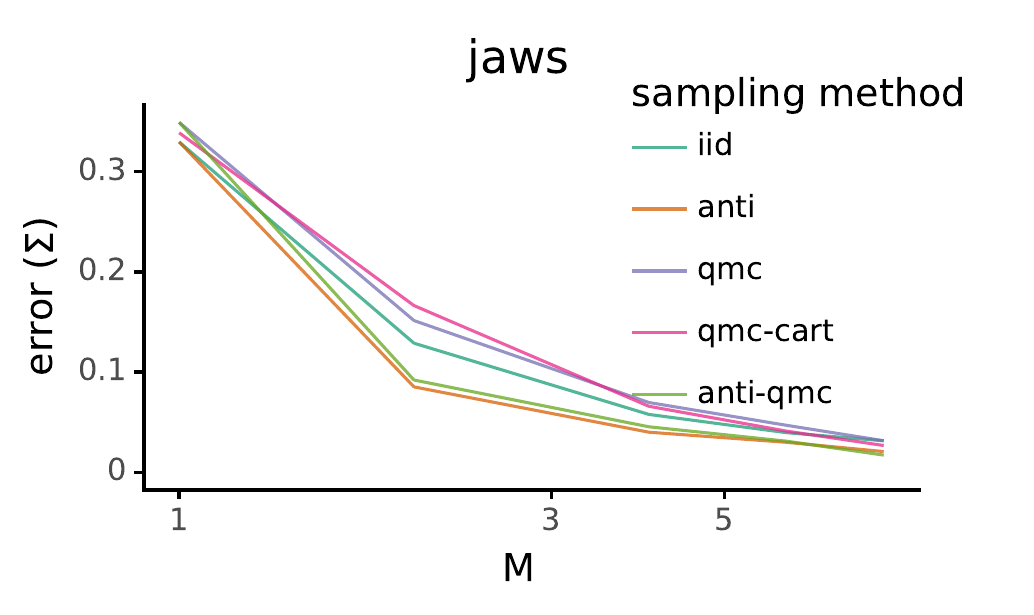}\includegraphics[width=0.33\columnwidth]{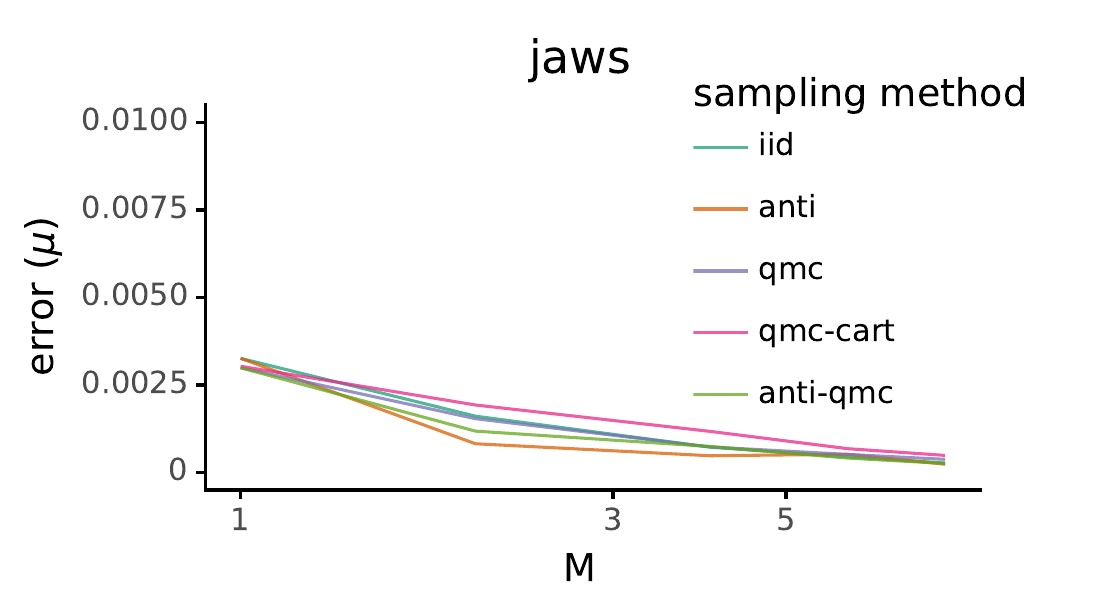}\linebreak{}

\includegraphics[width=0.33\columnwidth]{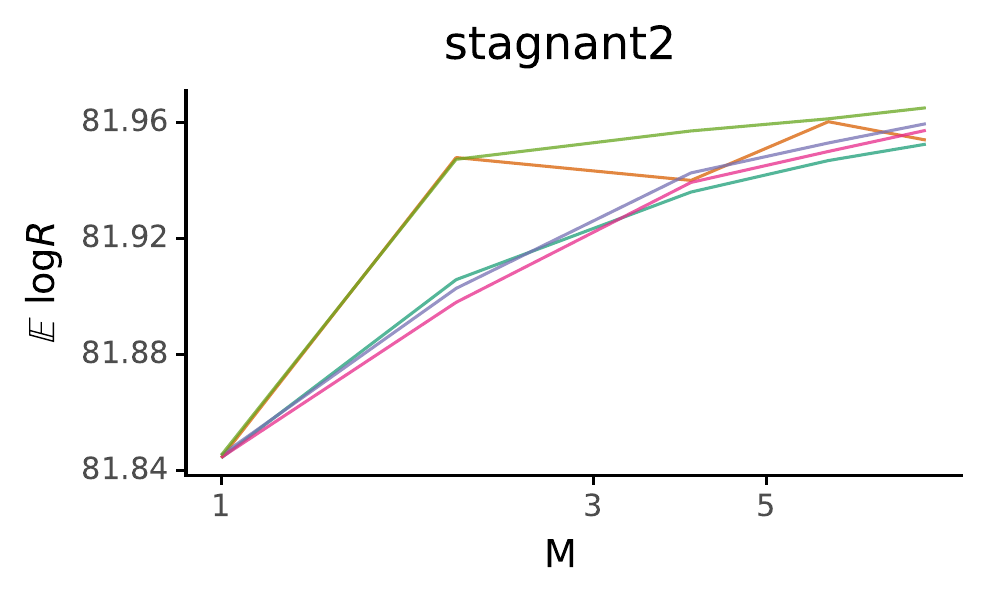}\includegraphics[width=0.33\columnwidth]{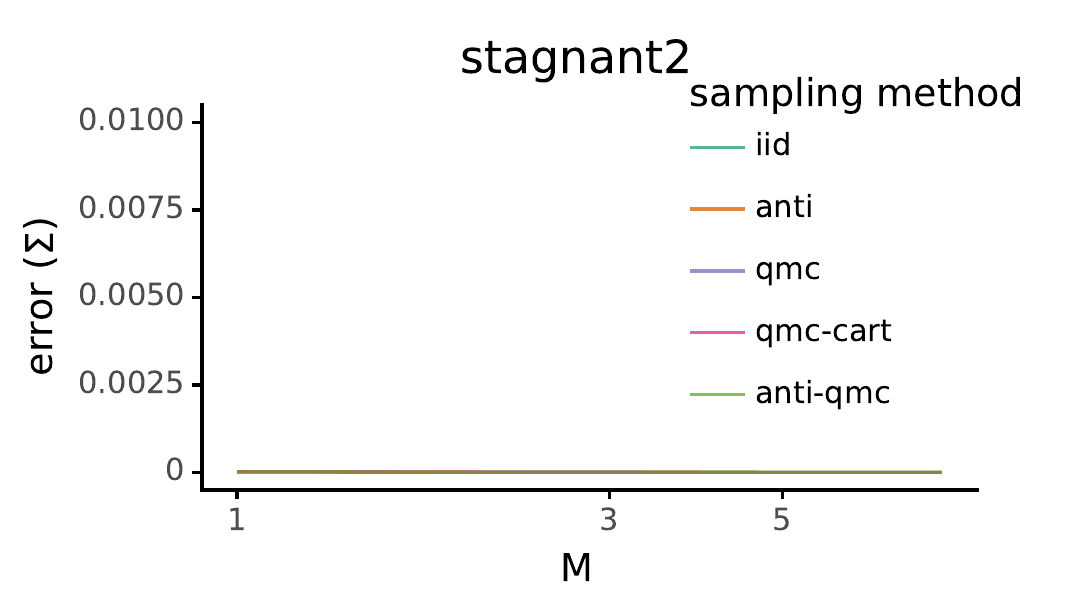}\includegraphics[width=0.33\columnwidth]{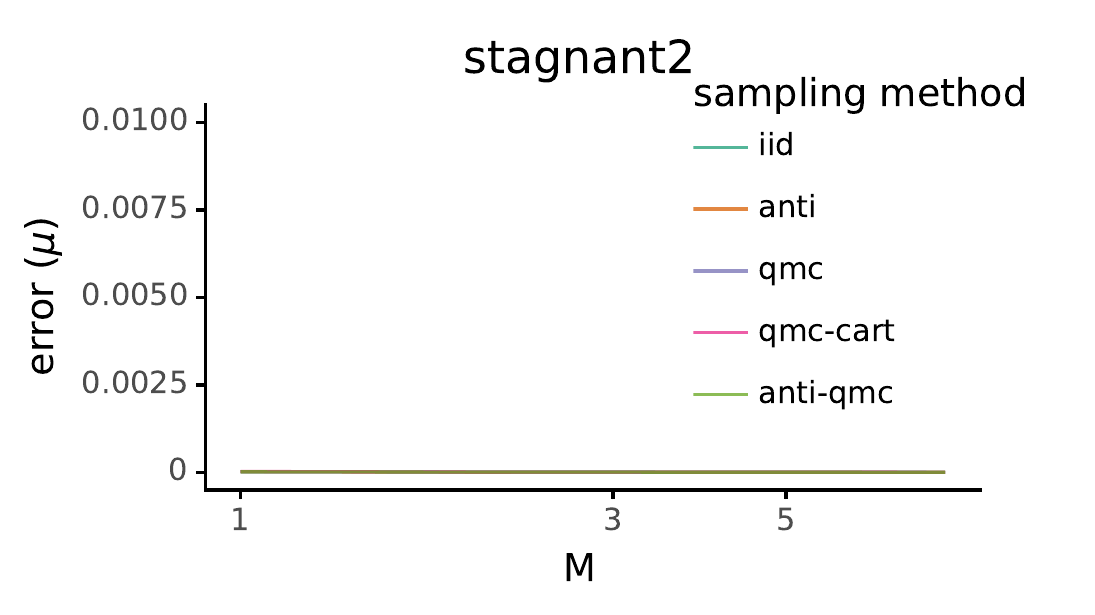}\linebreak{}

\caption{\textbf{Across all models, improvements in likelihood bounds correlate
strongly with improvements in posterior accuracy. Better sampling
methods can improve both.}}
\end{figure}

\begin{figure}
\includegraphics[width=0.33\columnwidth]{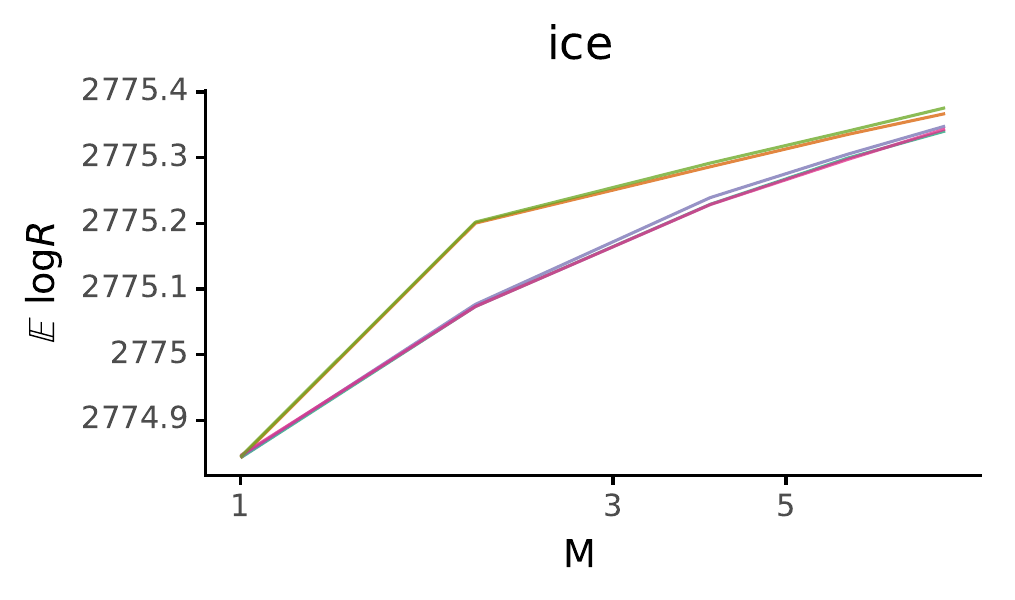}\includegraphics[width=0.33\columnwidth]{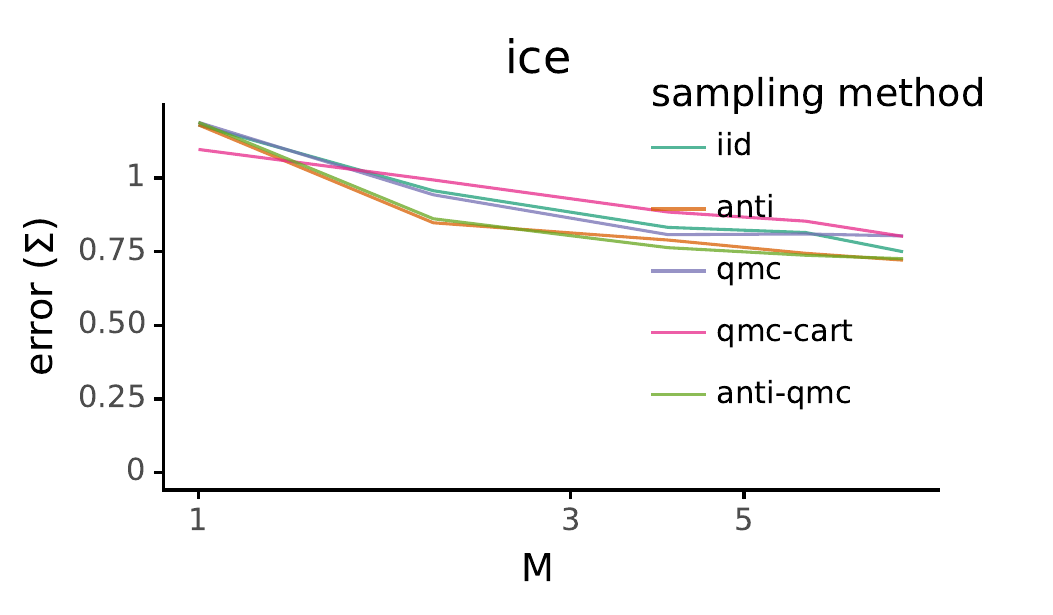}\includegraphics[width=0.33\columnwidth]{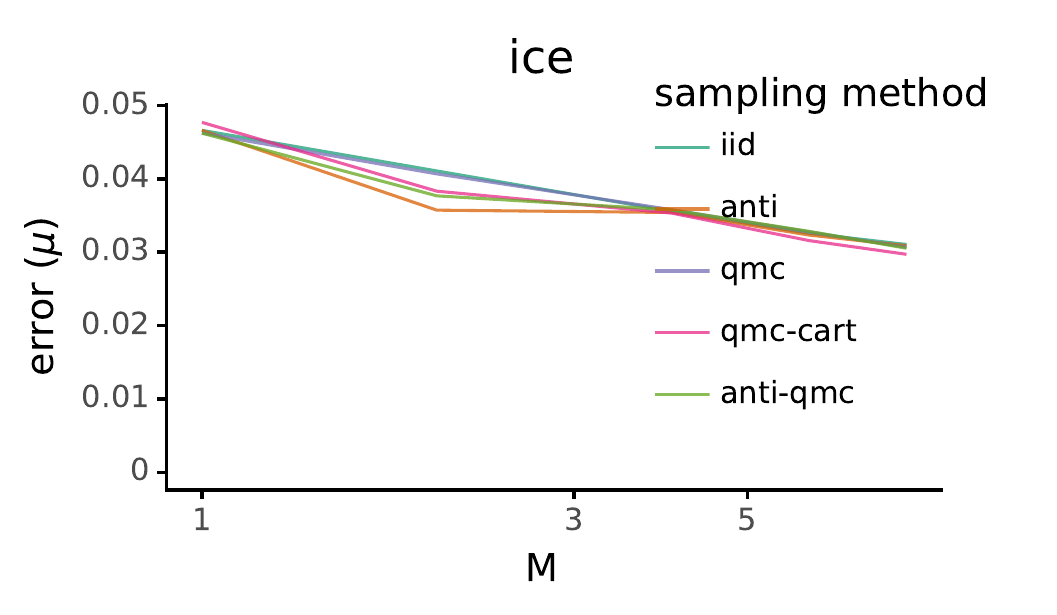}\linebreak{}

\includegraphics[width=0.33\columnwidth]{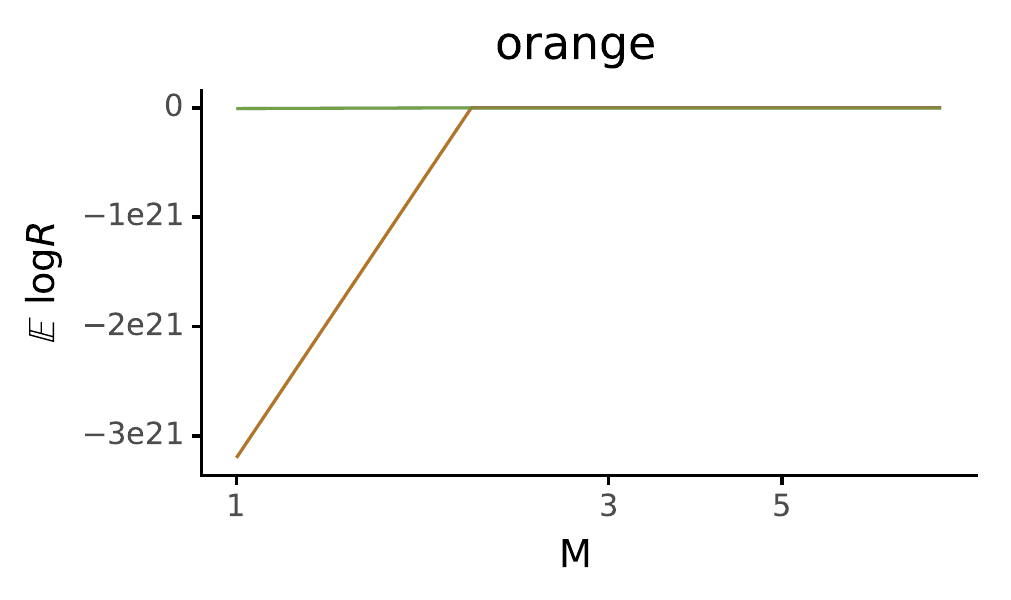}\includegraphics[width=0.33\columnwidth]{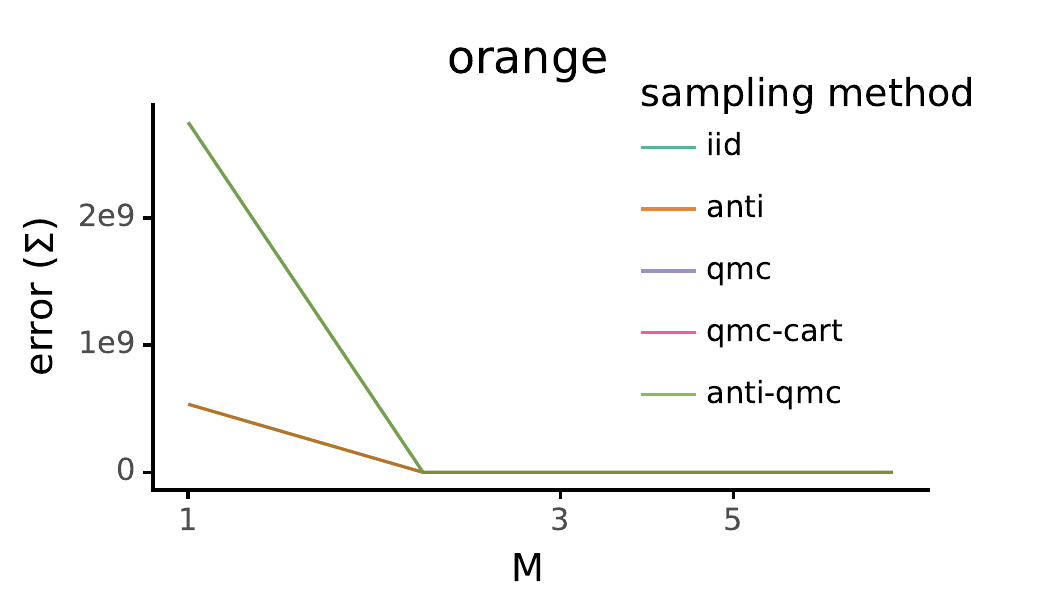}\includegraphics[width=0.33\columnwidth]{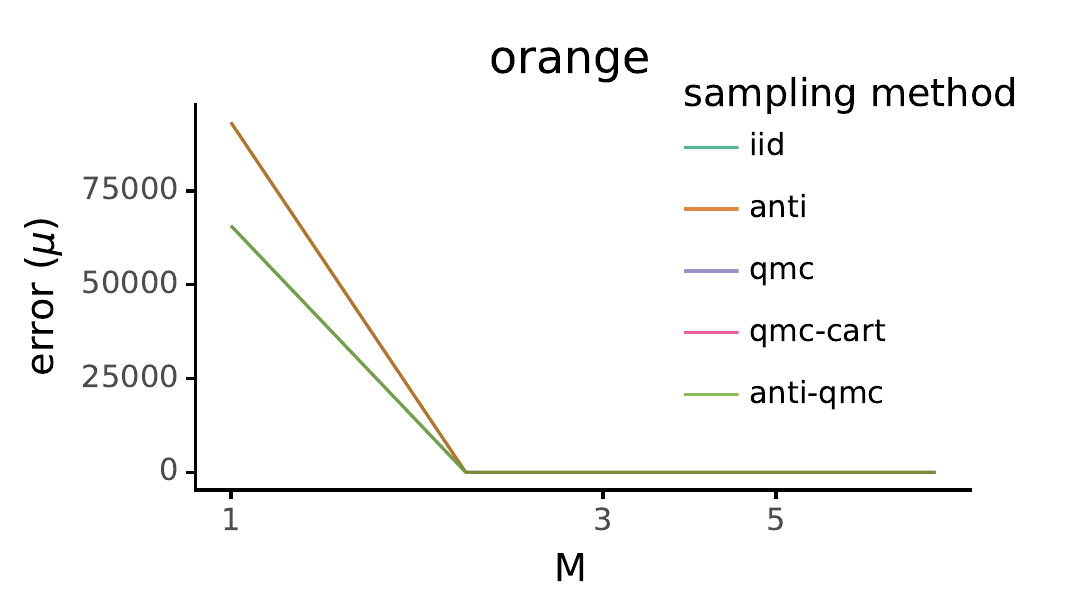}\linebreak{}

\includegraphics[width=0.33\columnwidth]{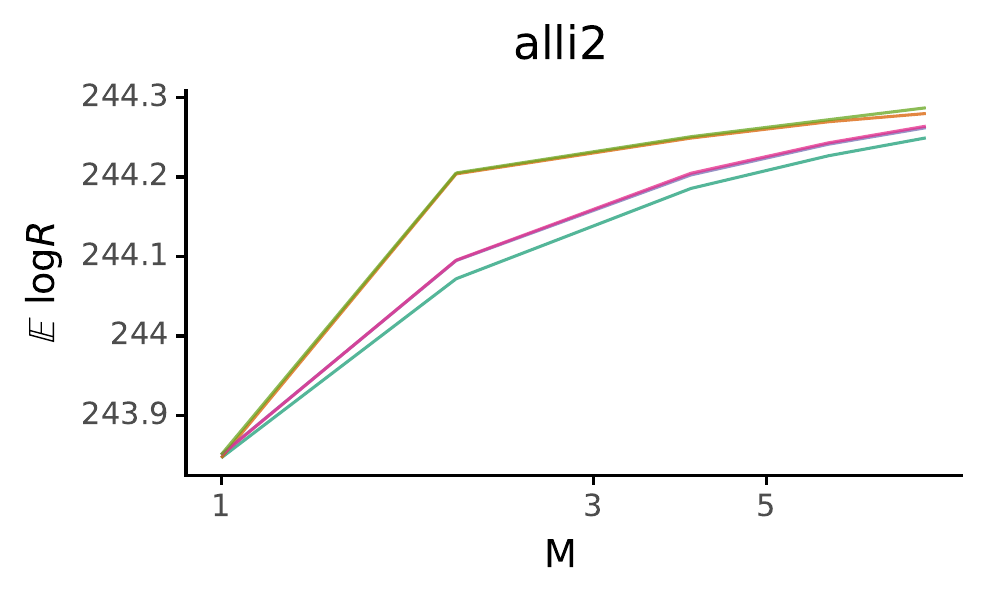}\includegraphics[width=0.33\columnwidth]{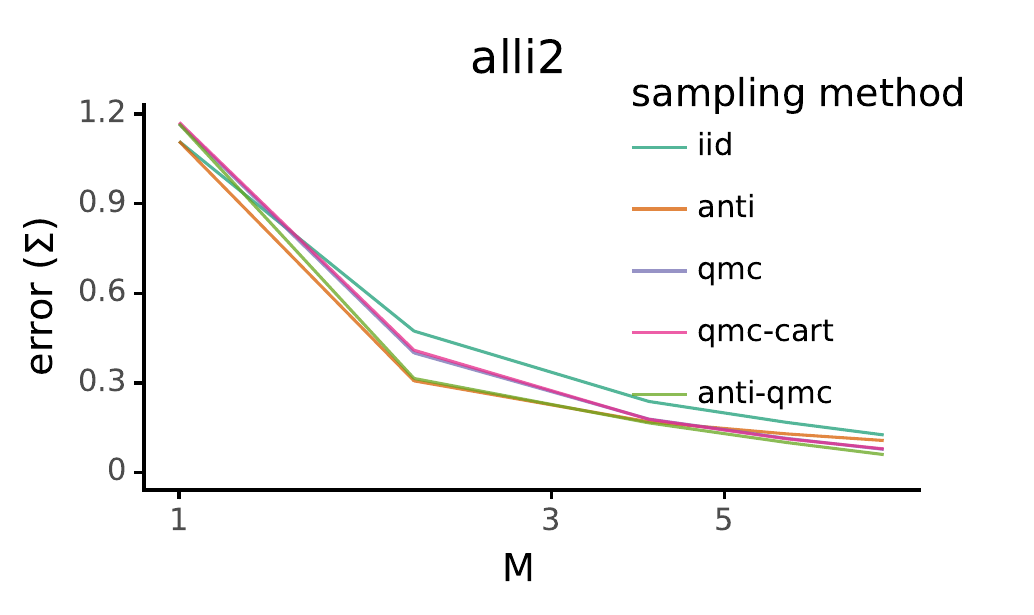}\includegraphics[width=0.33\columnwidth]{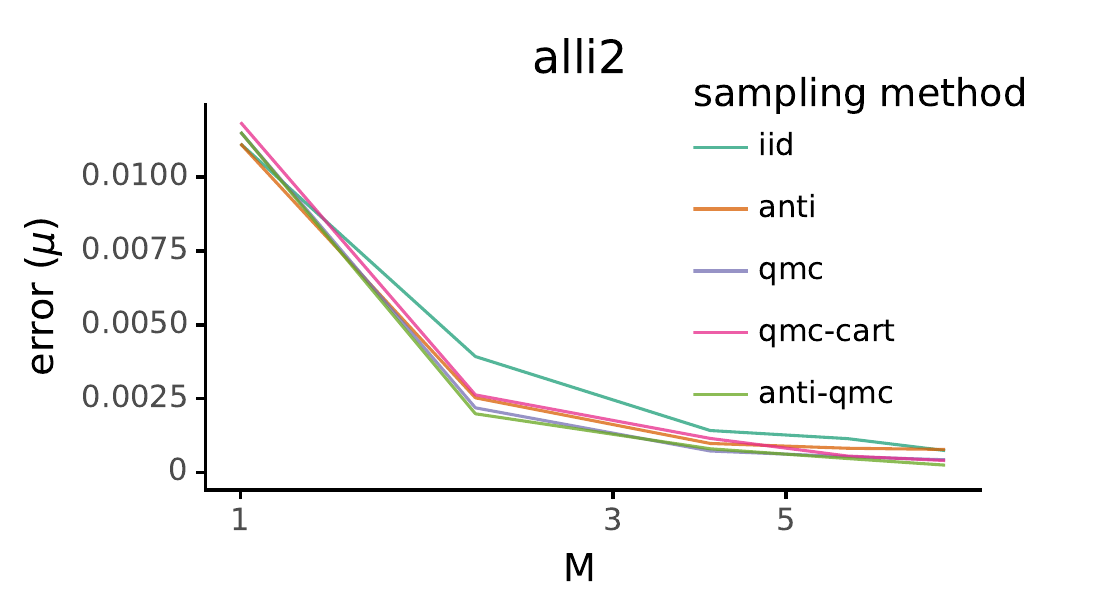}\linebreak{}

\includegraphics[width=0.33\columnwidth]{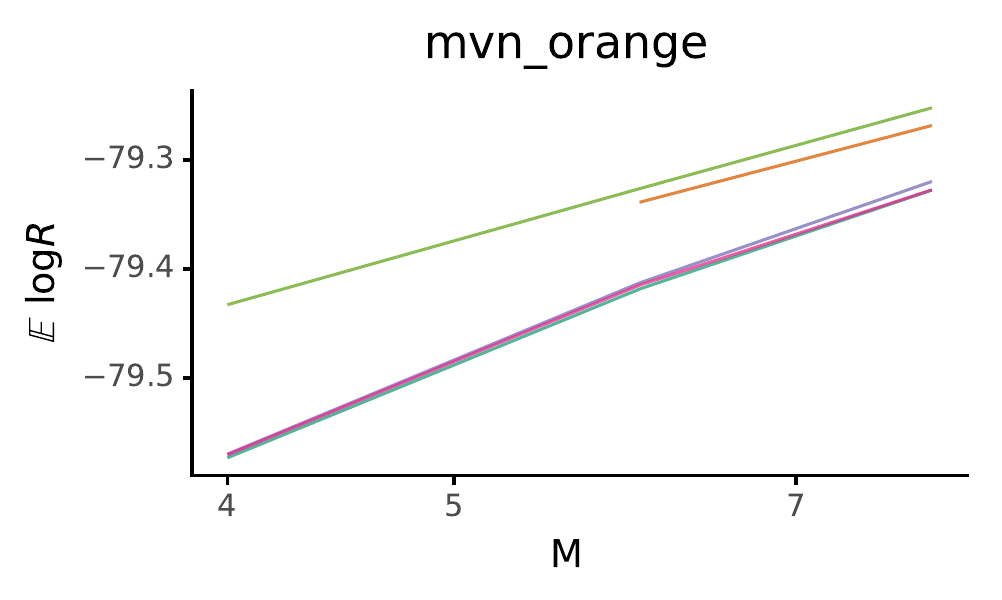}\includegraphics[width=0.33\columnwidth]{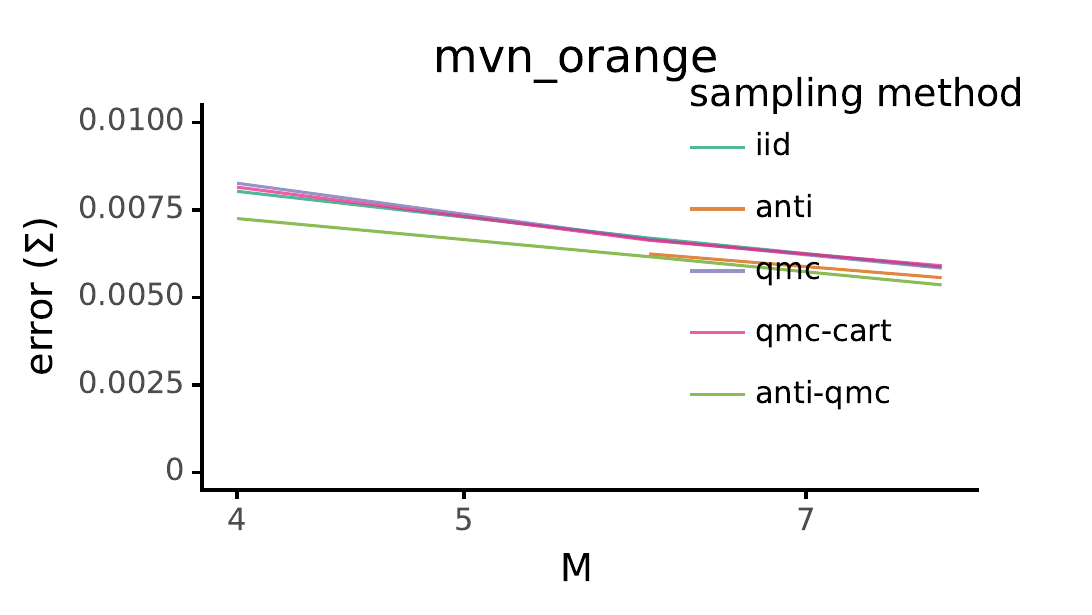}\includegraphics[width=0.33\columnwidth]{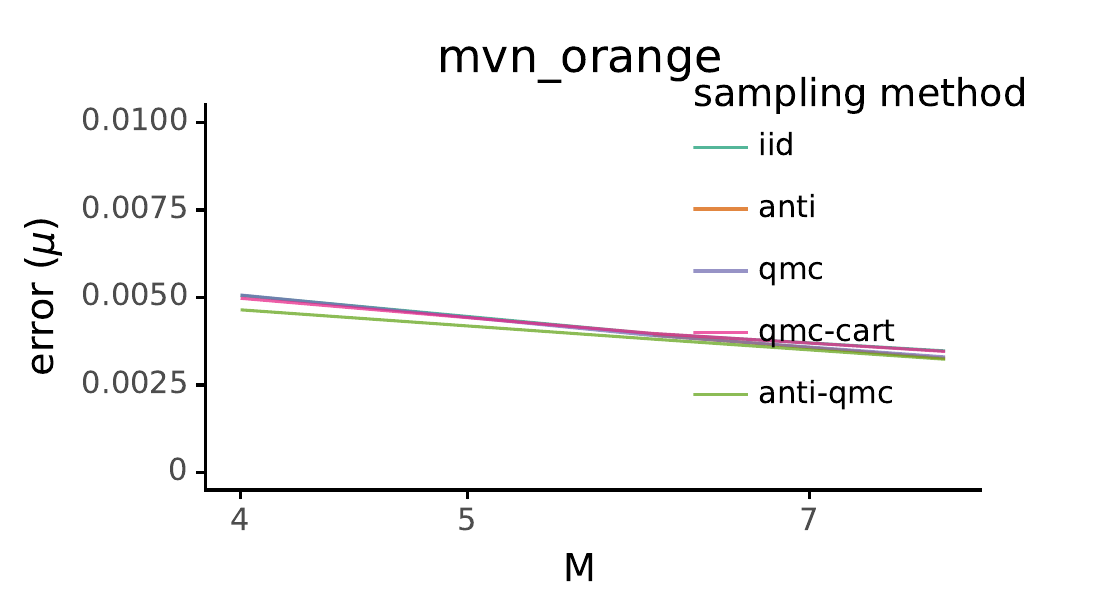}\linebreak{}

\caption{\textbf{Across all models, improvements in likelihood bounds correlate
strongly with improvements in posterior accuracy. Better sampling
methods can improve both.}}
\end{figure}

\end{document}